\documentclass{article}
    \PassOptionsToPackage{numbers, compress}{natbib}
 \usepackage[preprint]{neurips_2026}

\usepackage[utf8]{inputenc} % allow utf-8 input
\usepackage[T1]{fontenc}    % use 8-bit T1 fonts
\usepackage{amsmath,amsfonts,bm}
\usepackage{amssymb}
\usepackage{amsthm}
\usepackage{mathtools}

\usepackage{xcolor}         % colors
\definecolor{navy}{RGB}{65, 105, 225} 
\usepackage[pagebackref,breaklinks,colorlinks,linkcolor=red,citecolor=navy,urlcolor=navy]{hyperref}       % hyperlinks
\usepackage[capitalize,noabbrev]{cleveref}

\theoremstyle{plain}
\newtheorem{theorem}{Theorem}[section]

\newtheorem{proposition}[theorem]{Proposition}
\newtheorem{lemma}[theorem]{Lemma}

\theoremstyle{definition}
\newtheorem{definition}[theorem]{Definition}
\newtheorem{assumption}[theorem]{Assumption}
\theoremstyle{remark}
\newtheorem{remark}[theorem]{Remark}

\crefname{theorem}{Theorem}{theorems}
\crefname{property}{Property}{properties}
\crefname{proposition}{Proposition}{propositions}
\crefname{lemma}{Lemma}{lemmas}
\crefname{corollary}{Corollary}{corollaries}
\crefname{definition}{Definition}{definitions}
\crefname{assumption}{Assumption}{assumptions}
\crefname{remark}{Remark}{remarks}
\crefname{conjecture}{Conjecture}{conjectures}

\providecommand{\eg}{{\sl e.g.}}
\providecommand{\ie}{{\sl i.e.}}

\def\Figref#1{Fig.~\ref{#1}}
\def\Tabref#1{Tab.~\ref{#1}}
\def\Eqref#1{Eq.~(\ref{#1})}

\def\eqref#1{equation~\ref{#1}}
\def\1{\bm{1}}

\DeclareMathAlphabet{\mathsfit}{\encodingdefault}{\sfdefault}{m}{sl}
\SetMathAlphabet{\mathsfit}{bold}{\encodingdefault}{\sfdefault}{bx}{n}

\usepackage{graphicx, lipsum}
\usepackage{url}            % simple URL typesetting
\usepackage{booktabs}       % professional-quality tables
\usepackage{amsfonts}       % blackboard math symbols
\usepackage{nicefrac}       % compact symbols for 1/2, etc.
\usepackage{microtype}      % micro typography
\usepackage{soul}
\usepackage{enumitem}
\usepackage{pifont}
\usepackage{soul}
\usepackage{array}
\usepackage{colortbl}
 \usepackage[normalem]{ulem}
\usepackage{graphicx}
\usepackage{cancel}
\usepackage{subcaption}
\usepackage{adjustbox}
\usepackage{makeidx}
\usepackage{multirow}
\usepackage{wrapfig}
\usepackage{lipsum} 
\usepackage{caption}
\usepackage{overpic}
\usepackage{amsmath,amsfonts,amssymb,amsthm,mathrsfs}
\usepackage{mathtools}
\usepackage{algorithm}
\usepackage{algpseudocode}
\usepackage{fontawesome5}
\usepackage{multicol}
\usepackage{indentfirst}
\usepackage{tikz}
\usepackage{stfloats}
\usepackage{cprotect}
\usepackage{comment}
\usepackage{etoc}
\usepackage[most]{tcolorbox}
\etocdepthtag.toc{mtchapter}
\etocsettagdepth{mtchapter}{subsection}
\etocsettagdepth{mtappendix}{none}
\makeatletter

\usepackage{smartdiagram}
\smartdiagramset{back arrow disabled=true, text width=2.0cm, module x sep=3.5, font=\fontsize{8pt}{10pt}\selectfont}
\definecolor{mygray}{gray}{0.95}
\usepackage[most]{tcolorbox}
\newtcolorbox{mybox}[2][]{%
  attach boxed title to top center = {yshift=-8pt},
  colback      = black,
  colframe     = black,
  fonttitle    = \bfseries,
  colbacktitle = white,
  title        = #2,#1,
  enhanced,
}

\title{Elucidating Representation Degradation Problem in Diffusion Model Training}

\author{
Zhipeng Yao \textsuperscript{1} $^\star$ \quad 
Dazhou Li \textsuperscript{3} $^\star$ \quad 
Zitong Zhang \textsuperscript{1} \quad 
Durude Mahee \textsuperscript{1} \quad 
Fan Zhu \textsuperscript{1} \\
\textbf{
Wenbin Zhang \textsuperscript{4} \quad
Xinwei He \textsuperscript{5} \quad
Yeying Jin \textsuperscript{2} $^{\dagger}$ \quad
Rui Yu \textsuperscript{1} $^{\dagger}$
}\\
\textsuperscript{1}University of Louisville \quad
\textsuperscript{2}National University of Singapore \quad \textsuperscript{4}Florida International University \\
\textsuperscript{3}Shenyang University of Chemical Technology \quad
\textsuperscript{5}Huazhong Agricultural University \\
\href{https://github.com/LilYau350/Elucidated-Representation-Diffusion}{\faGithub\ https://github.com/LilYau350/Elucidated-Representation-Diffusion}
}

\makeatletter
\renewcommand{\@noticestring}{%
  \footnotesize
Preprint.\hspace{0.5em}\textsuperscript{$\star$}~Equal contribution.\hspace{0.5em}\textsuperscript{$\dagger$}~Equal advising.\hspace{0.5em}Contact:~{\tt\small yiucp@outlook.com, rui.yu@louisville.edu}
}
\makeatother

\begin{document}
\maketitle
% \blfootnote{\textsuperscript{$\dagger$}\, Equal advising.}
\vspace{-0.5em}
\begin{abstract}
Diffusion models have achieved remarkable success, yet their training remains inefficient due to a severe optimization bottleneck, which we term Representation Degradation. As noise levels increase, the outputs of the trained model exhibit progressive structural distortion, which can destabilize training and impair generation quality. Our analysis suggests that this instability is driven by mismatched target recoverability, which is associated with Neural Tangent Kernel (NTK) spectral weakening and effective low-rank behavior. To address this, we propose Elucidated Representation Diffusion (ERD), a plug-and-play framework that dynamically reallocates optimization effort according to effective recoverability. By stabilizing representation learning without external supervision, ERD accelerates convergence and achieves strong empirical performance across diffusion backbones.
\end{abstract}

\section{Introduction}
\label{sec:intro}

Diffusion models~\cite{sohl2015deep, ho2020denoising, song2020score} have become a dominant paradigm for generative modeling~\cite{li2022diffusion, ho2022video, kong2020diffwave, zhou20213d}. As the field increasingly shifts toward scalable Transformer-based architectures~\cite{peebles2023scalable, bao2023all}, improving training efficiency has become a central challenge. Existing methods often accelerate training by heuristically reallocating the optimization effort across noise levels~\cite{hang2023efficient, karras2022elucidating, wang2024closer, zheng2024non, hang2024improved, zheng2024beta}, but they remain largely agnostic to the quality of the internal features learned. More recently, Representation Alignment (REPA)~\cite{yu2024representation} has shown that aligning diffusion features with pre-trained priors can substantially accelerate training. This empirical success raises a fundamental question: 

\begin{tcolorbox}[colback=gray!4!white, colframe=gray!60!black, boxrule=0.8pt, arc=1.5mm, left=10pt, right=10pt, top=8pt, bottom=8pt]
\begin{center}
    \textit{``what intrinsic bottleneck makes standard diffusion training inefficient in the first place?''}
\end{center}
\end{tcolorbox}

To answer this question, we empirically examine how the output of the trained model evolves across noise levels. We find that although the predicted geometry aligns well with the target distribution at low noise levels, it becomes increasingly distorted and eventually collapses under high noise. As illustrated in \Figref{fig:main_fig}, the predicted distribution, when projected onto a 2D plane, preserves meaningful structure at low noise but loses it severely at high noise. We refer to this phenomenon as \emph{Representation Degradation}~\cite{gao2019representation}. These observations reveal an intrinsic instability in diffusion training, demonstrating that as noise increases, the model becomes progressively less able to preserve semantically meaningful structure.

To explain this phenomenon, we analyze diffusion training in the Neural Tangent Kernel (NTK) regime as a continuous-time, noise-conditioned regression process~\cite{jacot2018neural, kingma2023understanding}. Our analysis shows that optimization at each noise level is jointly shaped by three factors: the allocation of optimization effort, the local kernel geometry induced by corrupted inputs, and the recoverability of the target itself. As noise increases, corrupted inputs lose informative structure, which weakens the local NTK spectrum and reduces learnability along semantically meaningful directions. Only a small fraction of the target remains effectively recoverable within finite training time, while standard training may still allocate substantial optimization effort to these weakly recoverable regions~\cite{choi2022perception}. This mismatch is consistent with spectral degeneration~\cite{rahaman2019spectral}, effective low-rank behavior, and contributes to representation degradation~\cite{jing2021understanding} in the extreme-noise regime.

\begin{figure}[t]
\vspace{-1.5em}
\centering
\begin{tikzpicture}[scale=1, every node/.style={sloped, allow upside down}]
    \draw[->, thick] (-5.5,4) -- (5.5,4) node[midway, above] {Given \(\bm{x}_t\) input network, where \(t \in 0 \to T\)};
\end{tikzpicture}	

\vspace{1.0em}
% Reduce both inner and outer separation between tikz and image content
\begin{minipage}{0.9\textwidth}
    % Image Prediction Group
    \begin{subfigure}[b]{0.48\textwidth} 
        \centering
        \includegraphics[width=0.48\textwidth]{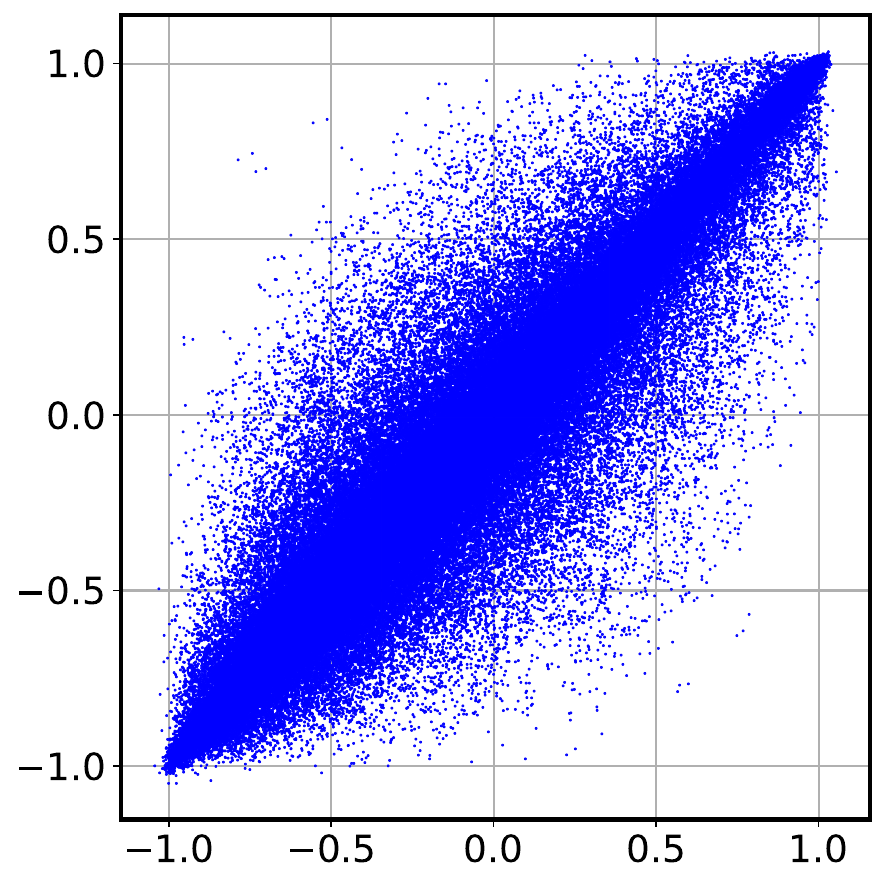}
        \includegraphics[width=0.48\textwidth]{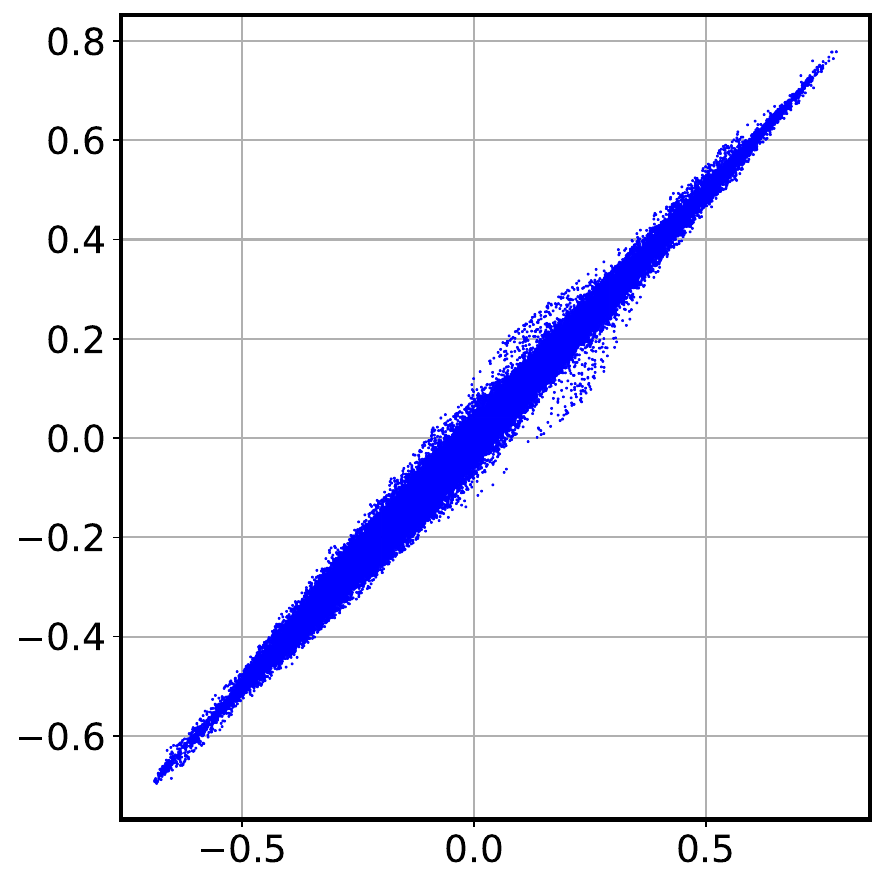}
        % \includegraphics[width=0.98\textwidth]{Figures/output/image.pdf}
        % \vspace{-0.3em}
        \caption{$x_\theta$-prediction.}
        \label{fig:img}
    \end{subfigure}
    \hfill
    % Noise Prediction Group
    \begin{subfigure}[b]{0.48\textwidth} 
        \centering
        \includegraphics[width=0.48\textwidth]{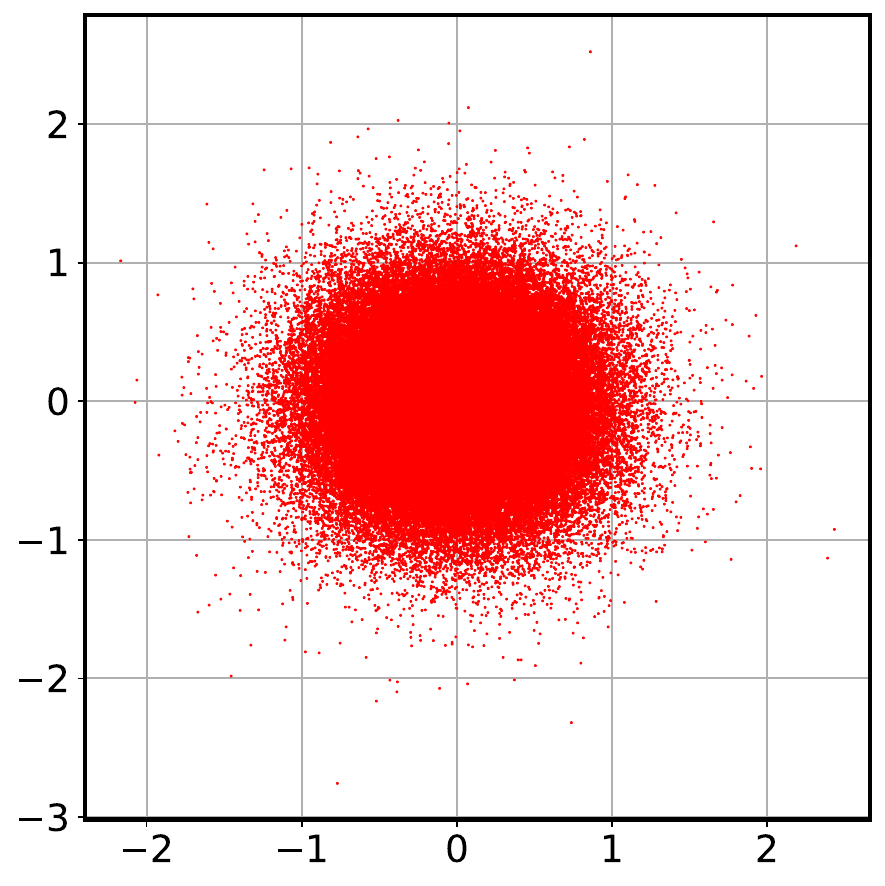}
        \includegraphics[width=0.48\textwidth]{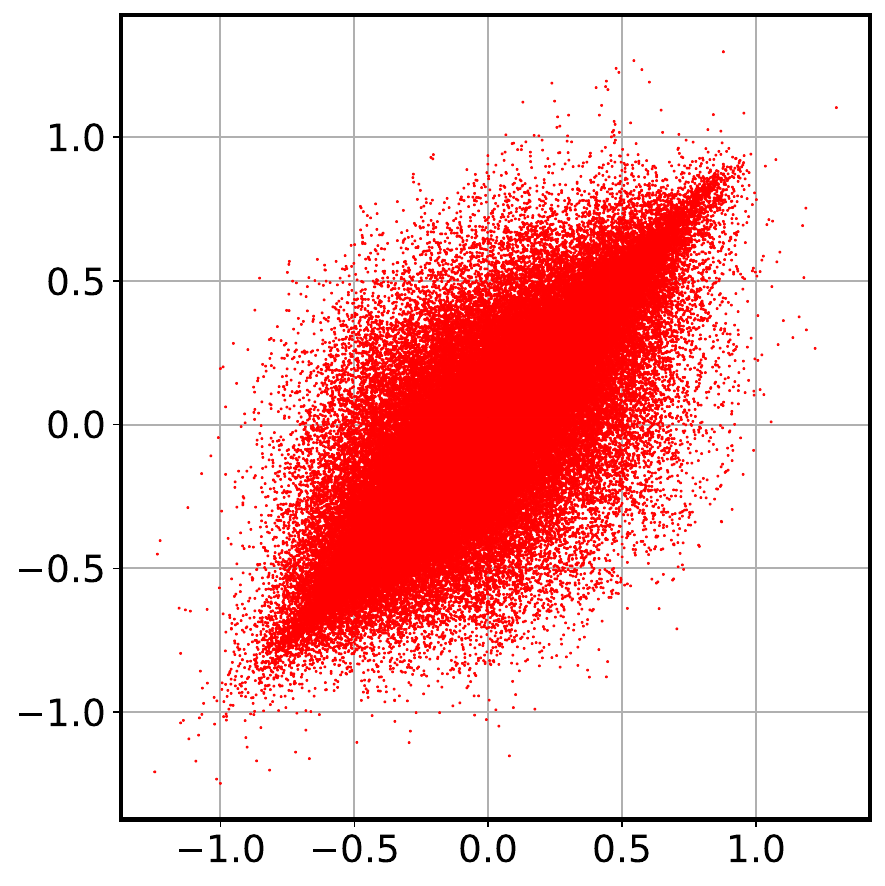}
        % \vspace{-0.3em}
        \caption{$\epsilon_\theta$-prediction.}
        \label{fig:noise}
    \end{subfigure}
\end{minipage}  
% \vspace{-0.2em}
\caption{We visualized pre-trained model predictions as forward diffusion progressively corrupts the inputs. \Figref{fig:img} shows that the image distribution becomes increasingly stretched and flattened, while \Figref{fig:noise} shows the predicted noise shifting from an isotropic to an anisotropic distribution.}% Also, the predictions are numerically lower than the target.}
\vspace{-1.5em}
\label{fig:main_fig}
\end{figure}

These findings identify representation degradation as an intrinsic optimization bottleneck rather than a purely architectural limitation. To address it, we propose \textbf{Elucidated Representation Diffusion (ERD)}, a training framework that reallocates optimization effort according to target recoverability without relying on external alignment networks. Specifically, ERD adopts a target-adaptive weighting rule that emphasizes noise levels with stronger recoverable signal and downweights regions that receive excessive optimization mass despite limited learnability. In this way, ERD improves optimization stability, mitigates representation degradation, and accelerates convergence. Our main contributions are summarized as follows:

\vspace{-0.5em}
\begin{itemize}[label=\textbullet, leftmargin=1em, itemsep=0pt]%[leftmargin=0.2in]
    \item \textbf{Representation Degradation:} We identify \emph{Representation Degradation} in diffusion training, showing that model outputs undergo substantial structural distortion and collapse as noise increases, reflecting an instability in the learned representations.
    \item \textbf{Optimization Analysis:} We develop an NTK-based framework showing how allocation, local kernel geometry, and target recoverability jointly induce spectral degeneration and effective low-rank collapse in high-noise regimes.
    \item \textbf{Elucidated Representation Diffusion:} We propose an architecture-agnostic training framework that derives a target-adaptive weighting rule from recoverable signal at each noise level, improving optimization stability and accelerating diffusion training without external priors.
\end{itemize}
\section{Preliminaries}
\label{sec:preliminaries}

In this section, we introduce the diffusion formulation in the log-signal-to-noise ratio (log-SNR) domain and the continuous-time Neural Tangent Kernel framework used in subsequent analysis.

\subsection{Diffusion Models in the Log-SNR Domain}
\label{subsec:prelim_diffusion}

Let $x_0 \sim q(x_0)$ be a data sample in $\mathbb{R}^d$. The forward diffusion process generates latent variables through the transition kernel $q(x_t \mid x_0)=\mathcal{N}(x_t;\alpha_t x_0,\sigma_t^2 I)$. We reparameterize the process by the log-SNR $\lambda \triangleq \log\!\left(\frac{\alpha_t^2}{\sigma_t^2}\right)$. We assume that the diffusion schedule is chosen so that the map $t \mapsto \lambda_t$ is monotone on the interval of interest, allowing $t$ and $\lambda$ to be used interchangeably. Under this parameterization, the corrupted variable can be written as $x_\lambda=\alpha(\lambda)x_0+\sigma(\lambda)\epsilon$, where $\epsilon\sim\mathcal{N}(0,I)$.

We consider a generalized target of the form $y_\lambda = c_x(\lambda)x_0 + c_\epsilon(\lambda)\epsilon$, which subsumes standard parameterizations such as $x_\theta$-prediction, $\epsilon_\theta$-prediction, and $v_\theta$-prediction. Following the continuous-time variational formulation of diffusion training~\cite{kingma2021variational,kingma2023understanding}, we consider the corresponding weighted training objective
\begin{equation}
\mathcal{L}_{\mathrm{ELBO}}(\theta)=\int_{\lambda_{\min}}^{\lambda_{\max}} M(\lambda)\left(\frac{1}{2}\mathbb{E}_{x_0,\epsilon}\!\left[\|f_\theta(x_\lambda,\lambda)-y_\lambda\|_2^2\right]\right)\mathrm{d}\lambda,
\label{eq:main_elbo_objective}
\end{equation}
where $f_\theta$ is parameterized by $\theta \in \mathbb{R}^P$, and $M(\lambda)\triangleq w(\lambda)p(\lambda)\left|\frac{\mathrm{d}t}{\mathrm{d}\lambda}\right|$ denotes the effective allocation measure induced by the loss weighting, the sampling distribution over noise levels, and the schedule Jacobian. Throughout the paper, $M(\lambda)$ serves as a compact description of how optimization effort is distributed across the log-SNR domain.

\subsection{Continuous-Time Dynamics and the Neural Tangent Kernel}
\label{subsec:prelim_ntk}

We study \Eqref{eq:main_elbo_objective} from a continuous-time optimization perspective. Let $\tau \ge 0$ denote training time. Under gradient flow, the parameters evolve according to $\frac{\mathrm{d}\theta(\tau)}{\mathrm{d}\tau}=-\nabla_\theta \mathcal{L}_{\mathrm{ELBO}}(\theta)$.

Because diffusion models use a shared-parameter architecture across all noise levels, optimization at one noise level generally interacts with optimization at others. This motivates describing the training process on the joint input--noise space rather than treating each noise level independently. For this purpose, we introduce the matrix-valued joint Neural Tangent Kernel (NTK)
\begin{equation}
\Theta\big((x,\lambda),(x',\lambda')\big)=\big(\nabla_\theta f_\theta(x,\lambda)\big)\big(\nabla_\theta f_\theta(x',\lambda')\big)^\top \in \mathbb{R}^{d\times d},
\end{equation}
where $\nabla_\theta f_\theta(x,\lambda)\in\mathbb{R}^{d\times P}$ denotes the Jacobian of the network output with respect to the parameters.

This joint kernel acts as a linear operator on $\mathbb{R}^d$ and captures both coupling across output dimensions and cross-noise interactions induced by parameter sharing. In particular, it provides a convenient framework for describing how training signals at different noise levels interact along a shared optimization trajectory.

In the overparameterized regime~\cite{jacot2018neural}, the NTK perspective offers a useful approximation for analyzing training dynamics. In the following sections, we will build on this viewpoint to study how optimization allocation, input corruption, and target structure jointly influence diffusion training across noise levels.

\section{Elucidating Representation Degradation}
\label{sec:elucidating_degradation}

To explain standard diffusion training inefficiency, we study overparameterized continuous-time dynamics. By connecting NTK dynamics with the global ELBO, we formalize target-dependent \emph{recoverability mismatch}: optimization effort may be over-allocated to weakly recoverable signals. Under our surrogate dynamics, this mismatch weakens contraction and introduces Bayes-noise forcing, providing a mechanism for representation degradation. Proofs are in \cref{apdx:sec:foundations,apdx:sec:ntk_dynamics,apdx:sec:elbo_dynamics,apdx:sec:spectral_degradation}.

\subsection{Neural Tangent Kernel Dynamics in Diffusion Models}
\label{subsec:ntk_dynamics}

We first derive continuous-time dynamics on the joint input--noise space. Under the assumptions in \cref{apdx:sec:foundations}, let $\tau$ denote training time and $m$ the width scaling parameter. The time-dependent joint NTK is denoted by $\Theta_{\tau,m}\big((x,\lambda),(x',\lambda')\big)$, and the residual by $e_{\tau,m}(x,\lambda)\triangleq f_{\tau,m}(x,\lambda)-y_\lambda$.

\begin{proposition}[Finite-Width Joint Dynamics]
\label{prop:joint_ntk_dynamics_main}
Under the population gradient flow associated with the joint regression objective, the residual $e_{\tau,m}(x,\lambda)$ follows the dynamics:
\begin{align}
\frac{\partial}{\partial\tau}e_{\tau,m}(x,\lambda) &= -\mathbb{E}_{\lambda'\sim\rho,\;x'\sim q_{\lambda'}}\!\left[\Theta_{\tau,m}\big((x,\lambda),(x',\lambda')\big)e_{\tau,m}(x',\lambda')\right].
\label{eq:joint_residual_ode_main}
\end{align}
\end{proposition}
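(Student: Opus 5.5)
The plan is a direct chain-rule computation. The work is to write the joint regression objective in a form whose gradient produces the joint NTK contracted against the residual, and then to fix the normalization of the measure $\rho$ so that the constants match \Eqref{eq:joint_residual_ode_main}.

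\textbf{Rewriting the objective.} I would first express $\mathcal{L}_{\mathrm{ELBO}}$ from \Eqref{eq:main_elbo_objective} as a population least-squares loss over a joint distribution on $(x,\lambda)$. Assuming $M$ is integrable on $[\lambda_{\min},\lambda_{\max}]$, I would define the normalized density $\rho(\lambda)\triangleq M(\lambda)/Z$ with $Z=\int M$. Let $q_\lambda$ be the law of $x_\lambda$, and condition the target on the input, which is the convention in \cref{apdx:sec:foundations}. With these choices the objective becomes
\begin{equation*}
\mathcal{L}=\tfrac{Z}{2}\,\mathbb{E}_{\lambda\sim\rho}\,\mathbb{E}_{x\sim q_\lambda}\big[\|e_{\tau,m}(x,\lambda)\|_2^2\big],
\end{equation*}
up to a term that does not depend on $\theta$ (the Bayes variance). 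The constant $Z$ can be absorbed into a rescaling of training time $\tau$. This gives exactly the expectation structure that appears in the statement.

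\textbf{Gradient flow and chain rule.} Since $y_\lambda$ does not depend on $\theta$, the gradient of the loss is
\begin{equation*}
\nabla_\theta\mathcal{L}=\mathbb{E}_{\lambda'\sim\rho,\,x'\sim q_{\lambda'}}\big[(\nabla_\theta f(x',\lambda'))^\top e(x',\lambda')\big].
\end{equation*}
Passing the gradient inside the expectation requires dominated convergence. I would justify this with the appendix assumptions: local Lipschitz continuity of $f$ in $\theta$, integrable Jacobians, and finite second moments of $y_\lambda$. For any fixed $(x,\lambda)$, the chain rule then gives
\begin{equation*}
\partial_\tau e(x,\lambda)=\nabla_\theta f(x,\lambda)\,\dot\theta=-\nabla_\theta f(x,\lambda)\,\nabla_\theta\mathcal{L}.
\end{equation*}
Substituting the expression for $\nabla_\theta\mathcal{L}$ and moving the fixed matrix $\nabla_\theta f(x,\lambda)$ inside the expectation yields $\nabla_\theta f(x,\lambda)(\nabla_\theta f(x',\lambda'))^\top=\Theta_{\tau,m}$. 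This is precisely \Eqref{eq:joint_residual_ode_main}. The identity holds at any finite width $m$, because no linearization or infinite-width limit is used.

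\textbf{Main obstacle.} The delicate step is the target. The residual $f-y_\lambda$ depends on $\epsilon$, not only on $x_\lambda$, so treating it as a function $e(x,\lambda)$ is ambiguous. I would handle this by writing $y_\lambda=\bar y(x_\lambda,\lambda)+\xi$, where $\bar y=\mathbb{E}[y_\lambda\mid x_\lambda]$. The cross term vanishes because $\mathbb{E}[\xi\mid x_\lambda]=0$, and the remaining constant does not affect the gradient. I would then interpret $e$ as $f-\bar y$, or equivalently note that the gradient only ever sees the conditional expectation of the residual. A secondary check is that the flow is well posed, so that $\theta(\tau)$ exists and is differentiable; this follows from local Lipschitzness of $\nabla_\theta\mathcal{L}$ via Picard–Lindelöf.
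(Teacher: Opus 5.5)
Your argument is correct and matches the paper's proof: differentiate the population loss under the expectation, insert the gradient-flow equation, apply the chain rule at a fixed $(x,\lambda)$, recognize $\nabla_\theta f_{\tau,m}(x,\lambda)\big(\nabla_\theta f_{\tau,m}(x',\lambda')\big)^\top$ as $\Theta_{\tau,m}$, and use $\partial_\tau e=\partial_\tau f$ because the target does not depend on $\tau$. The paper posits the joint objective $\frac12\mathbb{E}_{\lambda\sim\rho,\,x\sim q_\lambda}\|f_{\theta,m}(x,\lambda)-y(x,\lambda)\|_2^2$ directly, with a given sampling law $\rho$ and a deterministic target $y(x,\lambda)$, so your normalization $\rho=M/Z$ with time rescaling and your Bayes-centering of $y_\lambda$ are not needed there, though both are valid (the tower property does show that the gradient only sees $\mathbb{E}[y_\lambda\mid x_\lambda]$).
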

Thus, residual evolution at noise level $\lambda$ is governed by a global expectation over all training noise levels $\lambda'$. Through the joint NTK, optimization signals from different noise regimes interact along the same parameter trajectory, so updates from one regime can influence residuals elsewhere. This makes explicit that fixed-noise training cannot, in general, be treated as an isolated subsystem under shared-parameter diffusion training.

\begin{figure}[t]
    %\vspace{-1em}
    \centering
    \includegraphics[width=\linewidth]{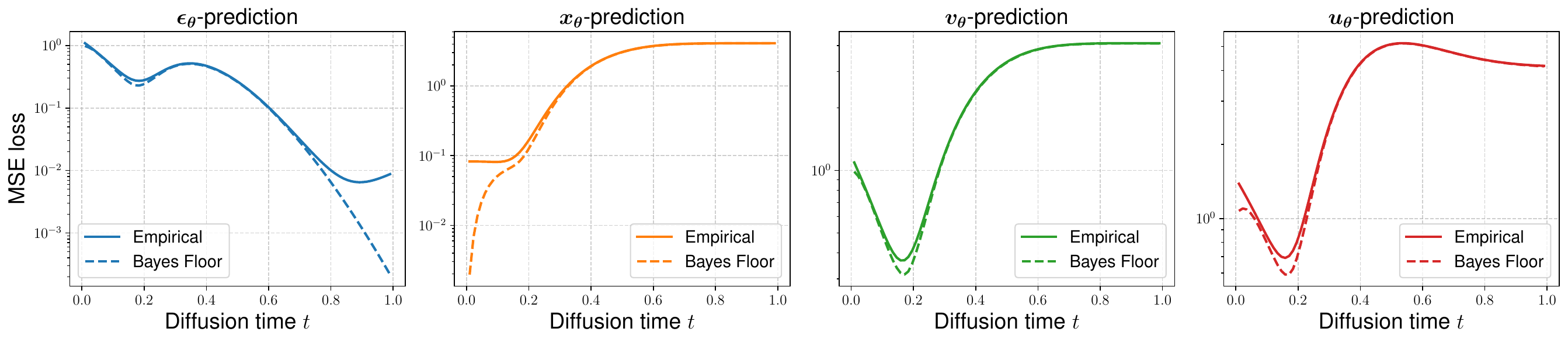}
    \vspace{-1.0em}
    \cprotect\caption{Empirical MSE versus the Bayes optimal error (Bayes floor) across diffusion time $t$ for different prediction targets\footnotemark. The Bayes floor is given by the conditional expectation $\mathbb{E}[y \mid x_t]$. The gap between the empirical loss and the Bayes floor reflects the learnability of each parameterization.}
    \label{fig:bayes-floor}
    \vspace{-1em}
\end{figure}

\subsection{Evidence Lower Bound Dynamics}
\label{subsec:elbo_dynamics}

We next lift the joint NTK dynamics to the global ELBO. For fixed $\lambda$, let $f_\lambda^\star(x)\triangleq \mathbb E[y_\lambda\mid x_\lambda=x]$ denote the Bayes predictor. Under the diffusion parameterization and the assumptions in \cref{apdx:sec:foundations}, the decomposition below is exact at the population level.

\begin{lemma}[ELBO Bayes Decomposition]
\label{lem:elbo_bayes_decomposition_main}
For each fixed log-SNR level $\lambda$, the local weighted ELBO density admits an exact orthogonal decomposition $\ell_{\tau,m}(\lambda) = \ell_\lambda^\star + \bar{\ell}_{\tau,m}(\lambda)$, where $\ell_\lambda^\star \triangleq \frac{1}{2}M(\lambda)\mathbb{E}\!\left[\|f_\lambda^\star(x_\lambda)-y_\lambda\|_2^2\right]$ is the irreducible Bayes floor, and $\bar{\ell}_{\tau,m}(\lambda) \triangleq \frac{1}{2}M(\lambda)\|f_{\tau,m}(\cdot,\lambda)-f_\lambda^\star(\cdot)\|_{L^2(q_\lambda)}^2$ is the optimizable excess density.
\end{lemma}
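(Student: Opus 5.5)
The plan is the standard Pythagorean decomposition of squared loss around the conditional expectation. First we define the local weighted ELBO density as $\ell_{\tau,m}(\lambda)\triangleq \frac12 M(\lambda)\,\mathbb{E}_{x_0,\epsilon}\bigl[\|f_{\tau,m}(x_\lambda,\lambda)-y_\lambda\|_2^2\bigr]$, so that $\mathcal{L}_{\mathrm{ELBO}}=\int \ell_{\tau,m}(\lambda)\,\mathrm{d}\lambda$ by \Eqref{eq:main_elbo_objective}. Since $M(\lambda)$ is a deterministic nonnegative scalar at fixed $\lambda$, it factors out. It therefore suffices to prove the unweighted identity
\begin{equation*}
\mathbb{E}\bigl[\|f(x_\lambda)-y_\lambda\|_2^2\bigr]=\mathbb{E}\bigl[\|f_\lambda^\star(x_\lambda)-y_\lambda\|_2^2\bigr]+\|f-f_\lambda^\star\|_{L^2(q_\lambda)}^2
\end{equation*}
for $f=f_{\tau,m}(\cdot,\lambda)$, where $q_\lambda$ is the marginal law of $x_\lambda$.

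Next we insert $\pm f_\lambda^\star(x_\lambda)$ and expand the square. This gives the two squared terms above plus the cross term $2\,\mathbb{E}\bigl[\langle f(x_\lambda)-f_\lambda^\star(x_\lambda),\,f_\lambda^\star(x_\lambda)-y_\lambda\rangle\bigr]$. The first factor is $\sigma(x_\lambda)$-measurable at fixed $\tau,m,\lambda$. So we condition on $x_\lambda$ via the tower property: the cross term becomes $2\,\mathbb{E}\bigl[\langle f(x_\lambda)-f_\lambda^\star(x_\lambda),\,f_\lambda^\star(x_\lambda)-\mathbb{E}[y_\lambda\mid x_\lambda]\rangle\bigr]=0$ by the definition of $f_\lambda^\star$. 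This is exactly the $L^2$ orthogonality of $y_\lambda-f_\lambda^\star(x_\lambda)$ to the closed subspace of square-integrable functions of $x_\lambda$, which justifies the word ``orthogonal'' in the statement.

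The only genuine obstacle is integrability, which is needed so that the expansion and the conditional-expectation step are legitimate and no term is $\infty-\infty$. We would invoke the assumptions of \cref{apdx:sec:foundations}: $x_0$ has finite second moment and $f_{\tau,m}(\cdot,\lambda)\in L^2(q_\lambda)$. Since $y_\lambda=c_x(\lambda)x_0+c_\epsilon(\lambda)\epsilon$ with finite coefficients at fixed $\lambda$, we get $y_\lambda\in L^2$. Jensen's inequality then gives $f_\lambda^\star\in L^2(q_\lambda)$, and Cauchy--Schwarz makes the cross term absolutely integrable.

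We would also note that $f_\lambda^\star$ is well defined as a $q_\lambda$-a.e. function, which holds because $\sigma(\lambda)>0$ makes $q_\lambda$ have a density. The Bayes-floor term does not depend on $\tau$ or $m$, which justifies calling it irreducible. Multiplying the identity by $\frac12 M(\lambda)$ completes the proof.
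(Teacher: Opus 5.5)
Your proposal is correct and follows the paper's proof essentially step for step. You split the residual as $r_{\tau,m}+\xi_\lambda$, expand the square, kill the cross term via $\mathbb{E}[\xi_\lambda\mid x_\lambda]=0$ and the tower property, then factor out $M(\lambda)$. Your integrability discussion is a useful addition, since the paper simply folds it into its regularity assumption; the appeal to $q_\lambda$ having a density is unnecessary, though, because a measurable version of $\mathbb{E}[y_\lambda\mid x_\lambda]$ always exists by the Doob--Dynkin lemma.
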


Thus only $\bar{\ell}_{\tau,m}(\lambda)$ is optimizable, while $\ell_\lambda^\star$ is parameter-independent. Equivalently, the population MSE decomposes into the irreducible Bayes error $\mathbb{E}[\|y_t-\mathbb{E}[y_t\mid x_t]\|_2^2]$ and an optimizable excess term. As shown in \Figref{fig:bayes-floor}, the Bayes floor lower-bounds the achievable prediction error, and the gap to the empirical loss reflects the remaining optimizable component. In poorly recoverable regimes, the optimizable excess can be small relative to the irreducible term, so the local ELBO becomes dominated by target-dependent Bayes error; under the sample-path surrogate below, this irreducible component appears as the forcing term $\xi_\tau$. %In poorly recoverable regimes, this gap shrinks and the local ELBO becomes dominated by irreducible error; under the sample-path surrogate below, this irreducible component appears as the forcing term $\xi_\tau$.

To characterize the excess dynamics, we adopt the continuous-time sample-path surrogate of SGD from \cref{apdx:sec:elbo_dynamics}, with trajectory $\tau\mapsto(x_{\lambda_\tau},\lambda_\tau)$ and Bayes fluctuation $\xi_\tau\triangleq f_{\lambda_\tau}^\star(x_{\lambda_\tau})-y_{\lambda_\tau}$. Let $r_{\tau,m}(x,\lambda)\triangleq f_{\tau,m}(x,\lambda)-f_\lambda^\star(x)$ denote the Bayes-centered residual.

\footnotetext{$\epsilon$ and $x$ denote the noise and data, $v$ denotes the velocity~\cite{salimans2022progressive}, and $u$ denotes the flow matching vector field~\cite{lipman2022flow}.}

\begin{theorem}[Local Excess Dynamics]
\label{thm:exact_excess_dynamics_main}
Under the continuous-time surrogate dynamics, the local excess density satisfies the exact differential identity:
\begin{align}
\frac{\mathrm d}{\mathrm d\tau}\bar{\ell}_{\tau,m}(\lambda)
&=-M(\lambda)\Big\langle r_{\tau,m}(\cdot,\lambda),M(\lambda_\tau)\Theta_{\tau,m}\big((\cdot,\lambda),(x_{\lambda_\tau},\lambda_\tau)\big)r_{\tau,m}(x_{\lambda_\tau},\lambda_\tau)\Big\rangle_{L^2(q_\lambda)} \nonumber\\
&\quad -M(\lambda)\Big\langle r_{\tau,m}(\cdot,\lambda),M(\lambda_\tau)\Theta_{\tau,m}\big((\cdot,\lambda),(x_{\lambda_\tau},\lambda_\tau)\big)\xi_\tau\Big\rangle_{L^2(q_\lambda)}.
\label{eq:exact_local_excess_derivative_main}
\end{align}
\end{theorem}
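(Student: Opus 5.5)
The plan is to differentiate the excess density directly and substitute the parameter velocity prescribed by the sample-path surrogate. By \cref{lem:elbo_bayes_decomposition_main},
\[
\bar{\ell}_{\tau,m}(\lambda)=\tfrac12 M(\lambda)\|r_{\tau,m}(\cdot,\lambda)\|_{L^2(q_\lambda)}^2 .
\]
Three quantities do not depend on $\tau$: the Bayes predictor $f_\lambda^\star$, the marginal $q_\lambda$, and $M(\lambda)$. Hence $\partial_\tau r_{\tau,m}=\partial_\tau f_{\tau,m}$. Differentiating under the integral, which is justified by the regularity assumptions of \cref{apdx:sec:foundations} (dominated convergence with square-integrable Jacobians), gives
\[
\frac{\mathrm d}{\mathrm d\tau}\bar{\ell}_{\tau,m}(\lambda)=M(\lambda)\big\langle r_{\tau,m}(\cdot,\lambda),\,\partial_\tau f_{\tau,m}(\cdot,\lambda)\big\rangle_{L^2(q_\lambda)} .
\]

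Next I would write out the surrogate dynamics. Under the continuous-time sample-path surrogate of SGD, at training time $\tau$ the parameters follow the single-sample gradient of the weighted loss at the sampled point $(x_{\lambda_\tau},\lambda_\tau)$, with target $y_{\lambda_\tau}$ and weight $M(\lambda_\tau)$:
\[
\dot\theta_\tau=-M(\lambda_\tau)\,\nabla_\theta f_{\tau,m}(x_{\lambda_\tau},\lambda_\tau)^\top\big(f_{\tau,m}(x_{\lambda_\tau},\lambda_\tau)-y_{\lambda_\tau}\big).
\]
The chain rule then gives
\[
\partial_\tau f_{\tau,m}(x,\lambda)=\nabla_\theta f_{\tau,m}(x,\lambda)\,\dot\theta_\tau=-M(\lambda_\tau)\,\Theta_{\tau,m}\big((x,\lambda),(x_{\lambda_\tau},\lambda_\tau)\big)\big(f_{\tau,m}(x_{\lambda_\tau},\lambda_\tau)-y_{\lambda_\tau}\big).
\]
This is simply the pointwise, single-sample version of \cref{prop:joint_ntk_dynamics_main}, with the expectation over $(\lambda',x')$ replaced by the sampled path point.

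The key algebraic step is to center the sampled residual at the Bayes predictor:
\[
f_{\tau,m}(x_{\lambda_\tau},\lambda_\tau)-y_{\lambda_\tau}=r_{\tau,m}(x_{\lambda_\tau},\lambda_\tau)+\xi_\tau ,
\]
which holds by adding and subtracting $f^\star_{\lambda_\tau}(x_{\lambda_\tau})$. Substituting this into the inner product and using linearity of $\Theta_{\tau,m}$ as an operator on $\mathbb{R}^d$ splits the derivative into exactly the two terms of \eqref{eq:exact_local_excess_derivative_main}: a contraction term driven by $r$ and a forcing term driven by $\xi_\tau$. Both keep the prefactors $M(\lambda)$ and $M(\lambda_\tau)$.

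The main obstacle is not the algebra but making precise the sense in which the surrogate is differentiable. A genuine SGD sample path has $(x_{\lambda_\tau},\lambda_\tau)$ jumping in $\tau$, so $\dot\theta_\tau$ exists only piecewise or as a measurable drift. I would follow the convention of \cref{apdx:sec:elbo_dynamics}: treat the sample path as a given measurable trajectory, interpret the ODE in the Carathéodory sense, and obtain the identity for almost every $\tau$. Under that convention the identity is exact, with no expectation taken over the sampled point, which is why $\xi_\tau$ survives as a forcing term rather than averaging to zero. Finally, I would note a consistency check: because $\mathbb{E}[\xi_\tau\mid x_{\lambda_\tau},\lambda_\tau]=0$, averaging over $\epsilon$ given $x_{\lambda_\tau}$ removes the second term and recovers the population dynamics of \cref{prop:joint_ntk_dynamics_main}.
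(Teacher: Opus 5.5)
Your proposal is correct and follows the paper's proof essentially step for step. In both, the chain rule through the surrogate parameter velocity gives $\partial_\tau f_{\tau,m}(x,\lambda)=-M(\lambda_\tau)\Theta_{\tau,m}\big((x,\lambda),(x_{\lambda_\tau},\lambda_\tau)\big)\big(r_{\tau,m}(x_{\lambda_\tau},\lambda_\tau)+\xi_\tau\big)$. Since $f^\star_\lambda$ does not depend on $\tau$, $\partial_\tau r_{\tau,m}=\partial_\tau f_{\tau,m}$, and then $\frac{\mathrm d}{\mathrm d\tau}\bar{\ell}_{\tau,m}(\lambda)=M(\lambda)\langle r_{\tau,m}(\cdot,\lambda),\partial_\tau r_{\tau,m}(\cdot,\lambda)\rangle_{L^2(q_\lambda)}$.

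The differences are only cosmetic. You add and subtract $f^\star_{\lambda_\tau}(x_{\lambda_\tau})$ after applying the chain rule, whereas the paper builds that split into the surrogate ODE itself via its stochastic-gradient Bayes decomposition. Your Carath\'eodory, almost-every-$\tau$ reading is extra care that the paper's formal surrogate leaves unstated.
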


The first term contracts the Bayes-centered residual, whereas the second injects Bayes forcing from the irreducible target component through the shared-parameter trajectory. To obtain an explicit long-time bound, we impose the pathwise coercivity condition
\begin{align}
M(\lambda)\Big\langle r_{u,m}(\cdot,\lambda),M(\lambda_u)\Theta_{u,m}\big((\cdot,\lambda),(x_{\lambda_u},\lambda_u)\big)r_{u,m}(x_{\lambda_u},\lambda_u)\Big\rangle_{L^2(q_\lambda)}
\ge \mu_u(\lambda)\,\bar{\ell}_{u,m}(\lambda).
\label{eq:main_text_coercivity_condition}
\end{align}
Here $\mu_u(\lambda)$ acts as an effective contraction rate under the surrogate dynamics. For $\beta>0$, define $\Gamma_{\tau,s}(\lambda)\triangleq\exp\!\left(-\int_s^\tau(\mu_u(\lambda)-\beta)\,\mathrm du\right)$ and $F_s(\lambda)\triangleq M(\lambda_s)\Theta_{s,m}\big((\cdot,\lambda),(x_{\lambda_s},\lambda_s)\big)\xi_s$. Integrating \Eqref{eq:exact_local_excess_derivative_main} under \Eqref{eq:main_text_coercivity_condition} yields:

\begin{theorem}[Non-asymptotic Degradation Bound]
\label{thm:integral_bound_main}
For any evaluation time $\tau \ge 0$, the optimizable excess density is bounded by:
\begin{align}
\bar{\ell}_{\tau,m}(\lambda)
&\le \Gamma_{\tau,0}(\lambda)\bar{\ell}_{0,m}(\lambda) + \frac{M(\lambda)}{2\beta}\int_0^\tau \Gamma_{\tau,s}(\lambda)\big\|F_s(\lambda)\big\|_{L^2(q_\lambda)}^2\,\mathrm ds.
\label{eq:integral_bound_main}
\end{align}
\end{theorem}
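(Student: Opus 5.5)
Starting from the exact identity of \cref{thm:exact_excess_dynamics_main}, we reduce the claim to a scalar differential inequality for $\bar{\ell}_{\tau,m}(\lambda)$, which we then close with Grönwall's lemma. The contraction term is handled directly by the coercivity condition \Eqref{eq:main_text_coercivity_condition}: the first term on the right of \Eqref{eq:exact_local_excess_derivative_main} is at most $-\mu_\tau(\lambda)\bar{\ell}_{\tau,m}(\lambda)$.

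For the forcing term we use Cauchy--Schwarz in $L^2(q_\lambda)$ and then Young's inequality with parameter $\beta$. Recall $F_\tau(\lambda)=M(\lambda_\tau)\Theta_{\tau,m}((\cdot,\lambda),(x_{\lambda_\tau},\lambda_\tau))\xi_\tau$. Then
\begin{align*}
-M(\lambda)\big\langle r_{\tau,m}(\cdot,\lambda),F_\tau(\lambda)\big\rangle_{L^2(q_\lambda)}
&\le M(\lambda)\|r_{\tau,m}(\cdot,\lambda)\|_{L^2(q_\lambda)}\|F_\tau(\lambda)\|_{L^2(q_\lambda)}\\
&\le \frac{\beta M(\lambda)}{2}\|r_{\tau,m}(\cdot,\lambda)\|^2_{L^2(q_\lambda)}+\frac{M(\lambda)}{2\beta}\|F_\tau(\lambda)\|^2_{L^2(q_\lambda)}.
\end{align*}
By \cref{lem:elbo_bayes_decomposition_main}, $\frac{1}{2}M(\lambda)\|r_{\tau,m}(\cdot,\lambda)\|^2_{L^2(q_\lambda)}=\bar{\ell}_{\tau,m}(\lambda)$, so the first piece equals exactly $\beta\,\bar{\ell}_{\tau,m}(\lambda)$. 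This is why the Young split is weighted by $\beta$, and it reproduces the shift $\mu_u-\beta$ inside $\Gamma$.

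Combining the two bounds gives the linear inequality
\[
\frac{\mathrm d}{\mathrm d\tau}\bar{\ell}_{\tau,m}(\lambda)\le -(\mu_\tau(\lambda)-\beta)\,\bar{\ell}_{\tau,m}(\lambda)+\frac{M(\lambda)}{2\beta}\|F_\tau(\lambda)\|^2_{L^2(q_\lambda)}.
\]
We multiply by the integrating factor $\Gamma_{\tau,0}(\lambda)^{-1}=\exp(\int_0^\tau(\mu_u-\beta)\,\mathrm du)$. The derivative of $\Gamma_{\tau,0}^{-1}\bar{\ell}_{\tau,m}$ is then bounded by $\Gamma_{\tau,0}^{-1}\frac{M(\lambda)}{2\beta}\|F_\tau\|^2$. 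Integrating from $0$ to $\tau$, multiplying back by $\Gamma_{\tau,0}$, and using $\Gamma_{\tau,0}\Gamma_{s,0}^{-1}=\Gamma_{\tau,s}$ yields \Eqref{eq:integral_bound_main}.

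The main obstacle is regularity, since the surrogate trajectory $\tau\mapsto(x_{\lambda_\tau},\lambda_\tau)$ may be only measurable, not continuous. We therefore work in integral form, treating $\bar{\ell}_{\tau,m}$ as absolutely continuous with the identity of \cref{thm:exact_excess_dynamics_main} holding for almost every $\tau$. We also need $\mu_u$ and $\|F_u\|^2$ to be locally integrable, so that $\Gamma$ and the forcing integral are finite. Under the assumptions of \cref{apdx:sec:foundations} (bounded NTK and square-integrable Bayes fluctuation), both conditions should hold, and the Grönwall step is then valid almost everywhere.
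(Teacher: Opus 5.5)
Your proposal is correct and follows the paper's proof step for step. The paper likewise applies Cauchy--Schwarz and Young with parameter $\beta$ to the forcing term, uses $\frac{1}{2}M(\lambda)\|r_{\tau,m}(\cdot,\lambda)\|_{L^2(q_\lambda)}^2=\bar{\ell}_{\tau,m}(\lambda)$ to produce $\beta\,\bar{\ell}_{\tau,m}(\lambda)$, invokes coercivity to reach the scalar inequality, and closes with the integrating factor $\Gamma_{\tau,0}(\lambda)^{-1}$ and the identity $\Gamma_{\tau,0}\Gamma_{s,0}^{-1}=\Gamma_{\tau,s}$.

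Your regularity discussion goes beyond the paper's purely formal argument, with two caveats. First, the Cauchy--Schwarz step silently uses $M(\lambda)\ge 0$. Second, the paper's assumptions do not include a bounded NTK, so local integrability of $\mu_u(\lambda)$ and $\|F_u(\lambda)\|_{L^2(q_\lambda)}^2$ should be stated as an explicit extra hypothesis rather than attributed to the paper's assumptions.
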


\begin{remark}[Rate interpretation]
\label{rem:rate_interpretation_main}
The bound in \cref{thm:integral_bound_main} yields a continuous-time rate interpretation. If, for a fixed evaluation level $\lambda$, the effective contraction rate satisfies $\mu_u(\lambda)\ge \mu(\lambda)>0$ and the Bayes-forcing term is uniformly bounded as $\|F_s(\lambda)\|_{L^2(q_\lambda)}^2\le B(\lambda)$, then for any $\beta\in(0,\mu(\lambda))$,
\begin{align}
\bar{\ell}_{\tau,m}(\lambda)
&\le e^{-(\mu(\lambda)-\beta)\tau}\bar{\ell}_{0,m}(\lambda)
+\frac{M(\lambda)B(\lambda)}{2\beta(\mu(\lambda)-\beta)}
\left(1-e^{-(\mu(\lambda)-\beta)\tau}\right).
\label{eq:rate_interpretation_main}
\end{align}
Thus the transient excess contracts at rate $O(e^{-(\mu(\lambda)-\beta)\tau})$ toward a forcing-controlled neighborhood of size $O\!\left(M(\lambda)B(\lambda)/[\beta(\mu(\lambda)-\beta)]\right)$. Reaching accuracy $\varepsilon$ above this neighborhood requires training time $O\!\left((\mu(\lambda)-\beta)^{-1}\log(\bar{\ell}_{0,m}(\lambda)/\varepsilon)\right)$. Hence recoverability mismatch slows optimization via small $\mu(\lambda)$ and enlarges the residual neighborhood via large $M(\lambda)$ or $B(\lambda)$.
\end{remark}

This rate view shows that persistent Bayes forcing can control the remaining error even after the initial excess contracts, motivating the recoverability analysis below.

\subsection{Recoverability Mismatch and Representation Degradation}
\label{subsec:recoverability_mismatch}

The balance between contraction and forcing varies across both noise levels and prediction targets, yielding \emph{recoverability mismatch}: \(M(\lambda)\) may remain large where \(f_\lambda^\star(x_\lambda)=\mathbb E[y_\lambda\mid x_\lambda]\) captures only a small recoverable component of the target. Since representation quality is closely tied to posterior estimation quality~\cite{li2025understanding}, such mismatch can degrade learned features. 

For targets whose recoverable component depends on the data signal, the extreme-noise regime makes \(x_\lambda\) weakly informative about that component, so the local ELBO may become dominated by target-dependent Bayes error. Combining this regime with \cref{thm:integral_bound_main} gives:

\begin{theorem}[Degradation under Recoverability Mismatch]
\label{thm:degradation_main}
Under the sample-path surrogate, the pathwise coercivity condition underlying \cref{thm:integral_bound_main}, and the target-dependent recoverability-loss regime described above, the local ELBO density is bounded by three competing components:
\begin{align}
\ell_{\tau,m}(\lambda)
&\le \underbrace{\ell_\lambda^\star}_{\substack{\text{Target-dependent} \\ \text{Bayes Floor}}} + \underbrace{\Gamma_{\tau,0}(\lambda)\bar{\ell}_{0,m}(\lambda)}_{\text{Weakened Contraction}} + \underbrace{\frac{M(\lambda)}{2\beta}\int_0^\tau \Gamma_{\tau,s}(\lambda)\big\|F_s(\lambda)\big\|_{L^2(q_\lambda)}^2\,\mathrm ds}_{\text{Continuous Cross-Noise Contamination}}.
\label{eq:quantitative_local_degradation_bound_main}
\end{align}
This decomposes the local ELBO into an irreducible Bayes floor, a contracted initial excess term, and cross-noise forcing from shared-parameter training.
\end{theorem}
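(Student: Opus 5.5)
The bound is the sum of \cref{lem:elbo_bayes_decomposition_main} and \cref{thm:integral_bound_main}; the recoverability-loss regime only affects how the terms are read, not the inequality.

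\textbf{Step 1 (split off the Bayes floor).} Fix $\lambda$ and apply \cref{lem:elbo_bayes_decomposition_main} to write
\[
\ell_{\tau,m}(\lambda)=\ell_\lambda^\star+\bar{\ell}_{\tau,m}(\lambda).
\]
This is an exact orthogonal identity: the cross term $\mathbb E[\langle f_{\tau,m}-f_\lambda^\star,\,f_\lambda^\star-y_\lambda\rangle]$ vanishes by the tower property, because $f_{\tau,m}(x_\lambda,\lambda)-f_\lambda^\star(x_\lambda)$ is $x_\lambda$-measurable. So $\ell_\lambda^\star$ passes through unchanged, and only the excess $\bar{\ell}_{\tau,m}(\lambda)$ needs to be bounded.

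\textbf{Step 2 (bound the excess).} I would take \cref{thm:integral_bound_main} as given. For completeness, here is how it follows.
\begin{itemize}
\item Start from the identity in \cref{thm:exact_excess_dynamics_main}. By the coercivity condition \Eqref{eq:main_text_coercivity_condition}, the contraction term is at most $-\mu_\tau(\lambda)\bar{\ell}_{\tau,m}(\lambda)$.
\item For the forcing term, use Cauchy--Schwarz in $L^2(q_\lambda)$ and then Young's inequality with parameter $\beta$. Since $\|r_{\tau,m}(\cdot,\lambda)\|^2=2\bar{\ell}_{\tau,m}(\lambda)/M(\lambda)$, this gives
\[
M(\lambda)\,\|r\|\,\|F_\tau\| \le \beta\,\bar{\ell}_{\tau,m}(\lambda)+\frac{M(\lambda)}{2\beta}\|F_\tau(\lambda)\|^2 .
\]
\item Combining the two gives the linear differential inequality
\[
\tfrac{\mathrm d}{\mathrm d\tau}\bar{\ell}\le-(\mu_\tau-\beta)\bar{\ell}+\tfrac{M(\lambda)}{2\beta}\|F_\tau\|^2 .
\]
\item Apply Grönwall with integrating factor $\Gamma_{\tau,0}(\lambda)^{-1}$ and integrate from $0$ to $\tau$. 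This yields \Eqref{eq:integral_bound_main}.
\end{itemize}

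\textbf{Step 3 (combine and interpret).} Add the floor from Step 1 to the bound from Step 2. The result is exactly \Eqref{eq:quantitative_local_degradation_bound_main}, and every step is an inequality in the right direction.

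The three terms are then labelled as follows. The Bayes floor $\ell^\star_\lambda$ depends on the target through $c_x,c_\epsilon$ and grows when $x_\lambda$ is weakly informative. $\Gamma_{\tau,0}$ is the contracted initial excess. The integral collects $F_s$, which transports the Bayes fluctuation $\xi_s$ from other noise levels $\lambda_s$ through the joint NTK $\Theta_{s,m}((\cdot,\lambda),(x_{\lambda_s},\lambda_s))$, so it is cross-noise contamination. The recoverability-loss regime is used only to justify calling the contraction ``weakened'': small $\mu_u(\lambda)$ means $\Gamma$ decays slowly. It is not needed for the inequality.

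The only real obstacle is rigor in Step 2. The surrogate trajectory $\tau\mapsto(x_{\lambda_\tau},\lambda_\tau)$ is random, so $\bar{\ell}_{\tau,m}(\lambda)$ may not be differentiable pathwise, and Grönwall must be applied pathwise under whatever regularity \cref{apdx:sec:elbo_dynamics} provides. I would do this by treating $\mu_u$ and $F_u$ as measurable, locally integrable functions of $u$ along each path, and using the integral (absolutely continuous) form of Grönwall. Beyond that, the proof is bookkeeping.
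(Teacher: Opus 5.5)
Your proposal is correct and matches the paper's proof, which also substitutes the integral bound of \cref{thm:integral_bound_main} into the exact Bayes decomposition of \cref{lem:elbo_bayes_decomposition_main}. The integral bound is derived there, as you do, via Cauchy--Schwarz, Young's inequality with parameter $\beta$, coercivity, and an integrating factor, and the paper likewise uses the extreme-noise/recoverability regime only to interpret the three terms, not to establish the inequality; your remark about pathwise regularity goes slightly beyond the paper, which treats the sample-path surrogate as a formal ODE.
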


Thus degradation is not a single-noise phenomenon; it is mediated by cross-noise interactions through \(F_s(\lambda)\). This bound shows why poorly recoverable regimes are hard to optimize: Bayes error can dominate, contraction weakens, and cross-noise forcing persists through the joint NTK.

We next connect this loss-level mechanism to representation learning. Under \(f_\theta(x,\lambda)=W_\theta h_\theta(x,\lambda)\), the same Bayes-noise component enters the representation-parameter gradient.

\begin{figure}[t]
    \centering
    \includegraphics[width=\linewidth]{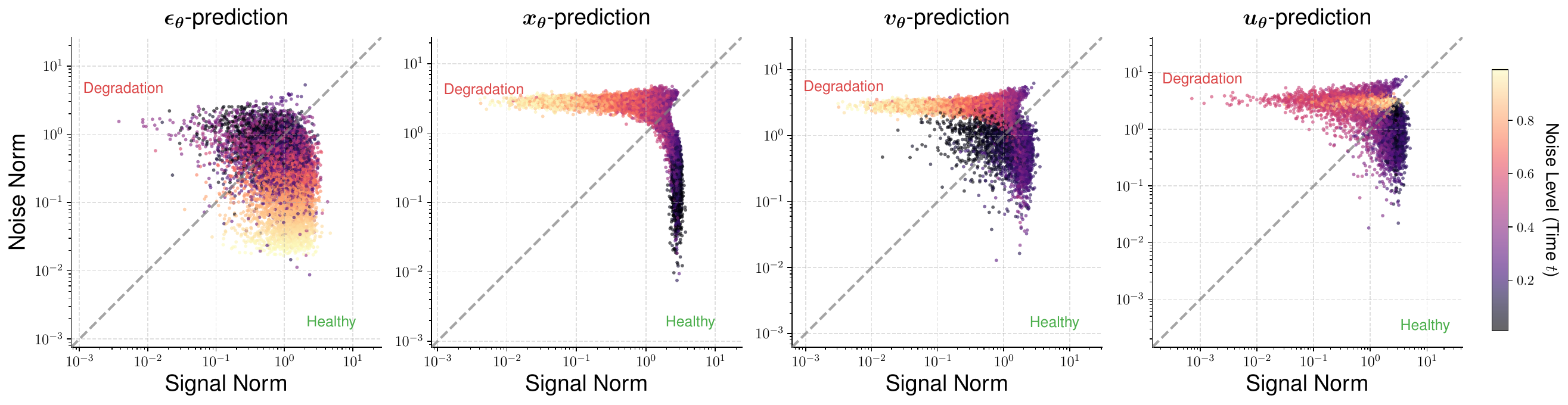}
    \vspace{-1.0em}
    \caption{Signal--noise decomposition across diffusion time $t$. The horizontal axis shows the recoverable signal norm $\|\mathbb{E}[y_\lambda \mid x_t]\|_2$, and the vertical axis shows the irreducible Bayes-noise norm $\|y_\lambda-\mathbb{E}[y_\lambda \mid x_t]\|_2$. Larger vertical values indicate stronger Bayes-noise contamination.}
    \label{fig:gradient-decomposition}
    \vspace{-1em}
\end{figure}

\begin{proposition}[Bayes-Noise Contamination of Representation Gradients]
\label{prop:rep_gradient_main}
The stochastic gradient with respect to the representation parameters $\vartheta$ exactly decomposes as:
\begin{align}
\nabla_{\vartheta}\widehat{\ell}(\theta) &= M(\lambda)\big(\nabla_{\vartheta}h_{\theta}(x_\lambda,\lambda)\big)^\top W_{\theta}^\top r_{\theta}(x_\lambda,\lambda) + M(\lambda)\big(\nabla_{\vartheta}h_{\theta}(x_\lambda,\lambda)\big)^\top W_{\theta}^\top \xi_\lambda,
\label{eq:rep_gradient_main}
\end{align}
where $r_{\theta}$ is the recoverable Bayes-centered residual (signal), and $\xi_\lambda$ is the irreducible Bayes noise.
\end{proposition}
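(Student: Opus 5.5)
The plan is to start from the single-sample stochastic loss at a fixed noise level, $\widehat{\ell}(\theta)=\tfrac12 M(\lambda)\|f_\theta(x_\lambda,\lambda)-y_\lambda\|_2^2$, with $(x_0,\epsilon,\lambda)$ drawn as in the sample-path surrogate of \cref{apdx:sec:elbo_dynamics}. The key observation is that the raw residual splits exactly around the Bayes predictor $f_\lambda^\star$:
\begin{align}
f_\theta(x_\lambda,\lambda)-y_\lambda=\big(f_\theta(x_\lambda,\lambda)-f_\lambda^\star(x_\lambda)\big)+\big(f_\lambda^\star(x_\lambda)-y_\lambda\big)=r_\theta(x_\lambda,\lambda)+\xi_\lambda .
\end{align}
This is a pointwise algebraic identity. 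It requires only that $f_\lambda^\star(x)=\mathbb E[y_\lambda\mid x_\lambda=x]$ be well defined and finite, which holds under the square-integrability assumptions of \cref{apdx:sec:foundations} and was already used in \cref{lem:elbo_bayes_decomposition_main}.

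Next, I would differentiate $\widehat{\ell}$ with respect to the representation parameters $\vartheta$, treating $W_\theta$ as depending only on the head parameters. This is the natural reading of the factorization $f_\theta=W_\theta h_\theta$. By the chain rule,
\begin{align}
\nabla_\vartheta\widehat{\ell}(\theta)=M(\lambda)\big(\nabla_\vartheta f_\theta(x_\lambda,\lambda)\big)^\top\big(f_\theta(x_\lambda,\lambda)-y_\lambda\big),\qquad \nabla_\vartheta f_\theta=W_\theta\nabla_\vartheta h_\theta .
\end{align}
The weight $M(\lambda)$ is parameter-independent, so it passes through the derivative unchanged. Transposing gives $(\nabla_\vartheta h_\theta)^\top W_\theta^\top$. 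Substituting the residual split from the first step and using linearity of the map $v\mapsto M(\lambda)(\nabla_\vartheta h_\theta)^\top W_\theta^\top v$ then yields \Eqref{eq:rep_gradient_main} directly.

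The main point needing care is a bookkeeping one. The target $f_\lambda^\star$ and the noise $\xi_\lambda$ must carry no $\vartheta$-dependence, which holds because both are defined from the data distribution alone. It also matters which parameters sit in $W_\theta$. If $W$ shares parameters with $h$, an extra term $(\nabla_\vartheta W_\theta\, h_\theta)^\top(\cdot)$ appears. I would therefore state explicitly that $\vartheta$ and the head parameters are disjoint blocks. Even in the shared case, that extra term multiplies the same residual $r_\theta+\xi_\lambda$, so the signal/noise split would survive.

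As a sanity remark, I would also note that $\mathbb E[\xi_\lambda\mid x_\lambda]=0$. Since the Jacobian factor is $x_\lambda$-measurable, the second term therefore has zero conditional mean: it contributes variance but no bias to the gradient. This links the proposition to the Bayes-noise forcing term $F_s$ in \cref{thm:exact_excess_dynamics_main}.
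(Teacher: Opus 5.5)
Your proposal is correct and follows essentially the same route as the paper. You differentiate the sampled loss $\tfrac12 M(\lambda)\|f_\theta(x_\lambda,\lambda)-y_\lambda\|_2^2$, apply $\nabla_\vartheta f_\theta=W_\theta\nabla_\vartheta h_\theta$ under the hypothesis that $\vartheta$ affects $h_\theta$ but leaves $W_\theta$ fixed, which the paper also assumes explicitly, and substitute the pointwise split $f_\theta-y_\lambda=r_\theta+\xi_\lambda$; the paper simply performs the split after the chain rule rather than before.
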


When $\|r_{\theta}(x_\lambda,\lambda)\|_2 \ll \|\xi_\lambda\|_2$, the $\xi_\lambda$-term dominates \Eqref{eq:rep_gradient_main}. As shown in \Figref{fig:gradient-decomposition}, severe mismatch regimes, \eg, high noise for $x_0$-prediction or low noise for $\epsilon$-prediction, exhibit Bayes-noise norms far larger than recoverable signal norms. Thus, gradients there can be dominated by irreducible noise, allowing large $M(\lambda)$ to interfere with representations learned in more recoverable regimes.

Finally, we refine the contraction behavior mode by mode using a local fixed-noise spectral surrogate at evaluation level \(\lambda\). The trajectory is assumed to remain in the NTK regime, and the frozen fixed-noise kernel on \(L^2(q_\lambda;\mathbb R^d)\) admits an orthonormal eigensystem \(\{\kappa_j^\lambda\}_{j\ge1}\).

\begin{theorem}[Spectral Local ELBO Bound]
\label{thm:spectral_elbo_main}
Under a fixed-noise surrogate approximation, the local ELBO density satisfies the following mode-wise bound for any constant $\gamma>0$:
\begin{equation}
\!\ell_{\tau,m}(\lambda) \!\le\! \ell_\lambda^\star \!+\! \frac{M(\lambda)}{2}\!\sum_{j\ge 1}\!e^{-(2M(\lambda)\kappa_j^\lambda-\gamma)\tau}|a_{0,j}^\lambda|^2 \!+\! \frac{M(\lambda)}{2\gamma}\!\sum_{j\ge 1}\!\int_0^\tau \!\! e^{-(2M(\lambda)\kappa_j^\lambda-\gamma)(\tau-s)}|\eta_{s,j}^\lambda|^2 \mathrm ds
\label{eq:spectral_local_elbo_bound_main}
\end{equation}
where $a_{0,j}^\lambda$ and $\eta_{s,j}^\lambda$ are the modal coefficients of the initial residual and Bayes forcing $\xi_s$, respectively.
\end{theorem}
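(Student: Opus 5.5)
We plan to proceed mode by mode, using the ELBO Bayes decomposition of \cref{lem:elbo_bayes_decomposition_main} so that only the excess density $\bar\ell_{\tau,m}(\lambda)=\tfrac12 M(\lambda)\|r_{\tau,m}(\cdot,\lambda)\|^2_{L^2(q_\lambda)}$ has to be bounded. Under the fixed-noise surrogate, the sample-path dynamics of the Bayes-centered residual at level $\lambda$ are replaced by the linear, frozen-kernel evolution
\begin{equation*}
\partial_\tau r_\tau = -M(\lambda)\,\mathcal K_\lambda\, (r_\tau + \xi_\tau),
\end{equation*}
where $\mathcal K_\lambda$ is the frozen fixed-noise NTK operator on $L^2(q_\lambda;\mathbb R^d)$ with orthonormal eigenpairs $(\kappa_j^\lambda,\phi_j^\lambda)$. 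This is the specialization of \cref{thm:exact_excess_dynamics_main} in which $\lambda_\tau\equiv\lambda$ and $\Theta_{\tau,m}$ is frozen.

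We expand $r_\tau=\sum_j a_{\tau,j}^\lambda\phi_j^\lambda$ and define $\eta_{\tau,j}^\lambda$ as the modal coefficient of the forcing. The normalization of $\eta$ is chosen to absorb the factor $M(\lambda)\kappa_j^\lambda$; this convention should be checked against the appendix definition. The ODE then decouples into scalar equations. Parseval gives $\bar\ell_{\tau,m}(\lambda)=\tfrac{M(\lambda)}{2}\sum_j|a_{\tau,j}^\lambda|^2$, so it suffices to bound each $|a_{\tau,j}^\lambda|^2$ and sum.

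For a single mode, we differentiate $|a_{\tau,j}|^2$ and obtain
\begin{equation*}
\tfrac{\mathrm d}{\mathrm d\tau}|a_{\tau,j}|^2 = -2M\kappa_j|a_{\tau,j}|^2 - 2\,\mathrm{Re}\big(\bar a_{\tau,j}\eta_{\tau,j}\big).
\end{equation*}
We then apply Young's inequality to the cross term, $2|a\eta|\le \gamma|a|^2+\gamma^{-1}|\eta|^2$, which yields
\begin{equation*}
\tfrac{\mathrm d}{\mathrm d\tau}|a_{\tau,j}|^2 \le -(2M\kappa_j-\gamma)|a_{\tau,j}|^2+\gamma^{-1}|\eta_{\tau,j}|^2.
\end{equation*}
Grönwall's inequality, with integrating factor $e^{(2M\kappa_j-\gamma)\tau}$, gives
\begin{equation*}
|a_{\tau,j}|^2\le e^{-(2M\kappa_j-\gamma)\tau}|a_{0,j}|^2+\gamma^{-1}\int_0^\tau e^{-(2M\kappa_j-\gamma)(\tau-s)}|\eta_{s,j}|^2\,\mathrm ds.
\end{equation*}
This is the same mechanism as in \cref{thm:integral_bound_main}, now applied mode by mode. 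Multiplying by $M(\lambda)/2$, summing over $j$, and adding $\ell_\lambda^\star$ gives \Eqref{eq:spectral_local_elbo_bound_main}.

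We expect the main obstacle to be the rigorous justification of the surrogate reduction. Three points need care. First, the frozen kernel must be self-adjoint, positive semidefinite and compact on $L^2(q_\lambda;\mathbb R^d)$ so that the eigenexpansion exists; this is assumed. Second, interchanging summation and differentiation requires $\sum_j|a_{\tau,j}|^2<\infty$ uniformly on $[0,\tau]$, and convergence of $\sum_j|\eta_{s,j}|^2$ in $L^1$ in $s$, which we would handle by monotone convergence applied to the nonnegative right-hand sides. Third, the factor $2M\kappa_j$ and the definition of $\eta$ must match the appendix normalization of the kernel and forcing. If the forcing instead enters unscaled as $M\kappa_j\xi_j$, the same argument goes through with $\eta_{s,j}=M\kappa_j\xi_{s,j}$.
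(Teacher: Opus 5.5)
Your argument matches the paper's proof step for step: the modal ODE from the frozen fixed-noise operator, Young's inequality with parameter $\gamma$, the integrating factor $e^{(2M(\lambda)\kappa_j^\lambda-\gamma)\tau}$ applied mode by mode, Parseval, and finally the Bayes decomposition to add $\ell_\lambda^\star$. On the normalization you flagged, the paper postulates its surrogate rather than obtaining it by setting $\lambda_\tau\equiv\lambda$: the contraction is the averaged operator $-M(\lambda)\int\Theta_{0,m}((\cdot,\lambda),(x',\lambda))\,r_{\tau,m}(x',\lambda)\,q_\lambda(x')\,\mathrm dx'$, the forcing stays the pathwise cross-noise field $-M(\lambda_\tau)\Theta_{0,m}((\cdot,\lambda),(x_{\lambda_\tau},\lambda_\tau))\xi_\tau$, and $\eta_{\tau,j}^\lambda$ is defined as the modal coefficient of that whole field, so the modal ODE is simply $\tfrac{\mathrm d}{\mathrm d\tau}a_{\tau,j}^\lambda=-M(\lambda)\kappa_j^\lambda a_{\tau,j}^\lambda+\eta_{\tau,j}^\lambda$ with no $M\kappa_j$ rescaling.
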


This bound gives a mode-wise view of the same mechanism: modes with $2M(\lambda)\kappa_j^\lambda>\gamma$ contract rapidly, while small $\kappa_j^\lambda$ weakens contraction and increases sensitivity to forcing. As shown in \Figref{fig:ntk-analysis}, the empirical NTK spectrum weakens at high noise, and the joint NTK heatmaps reveal cross-noise coupling. Together with \cref{prop:rep_gradient_main}, these observations suggest that weakened spectral contraction and Bayes-noise-dominated gradients jointly contribute to representation degradation.

\begin{figure}[t]
    \centering
    \begin{subfigure}[t]{0.56\linewidth}
        \centering
        \includegraphics[width=\linewidth]{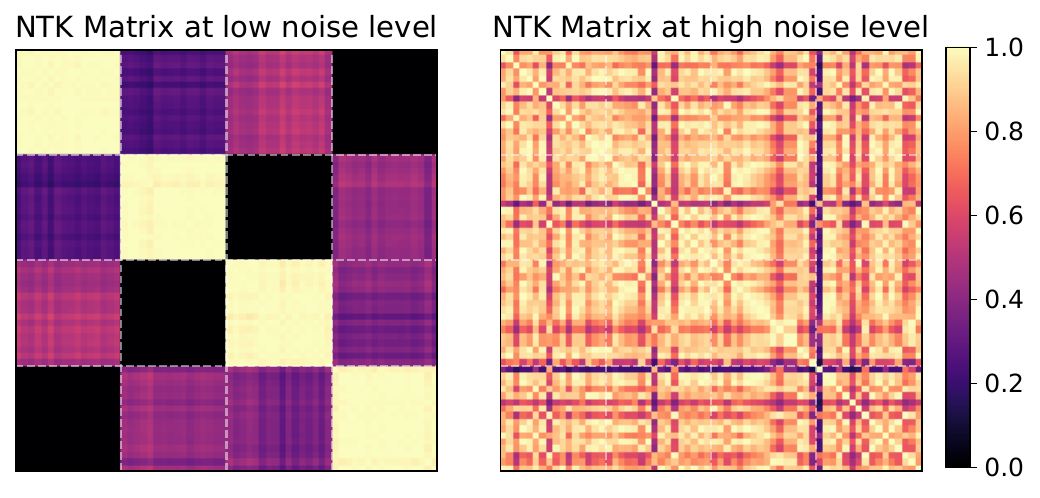}
        \vspace{-1.2em}
        \caption{NTK heatmaps}
        \label{fig:ntk-heatmaps}
    \end{subfigure}
    \hfill
    \begin{subfigure}[t]{0.42\linewidth}
        \centering
        \raisebox{-0.5em}{\includegraphics[width=\linewidth]{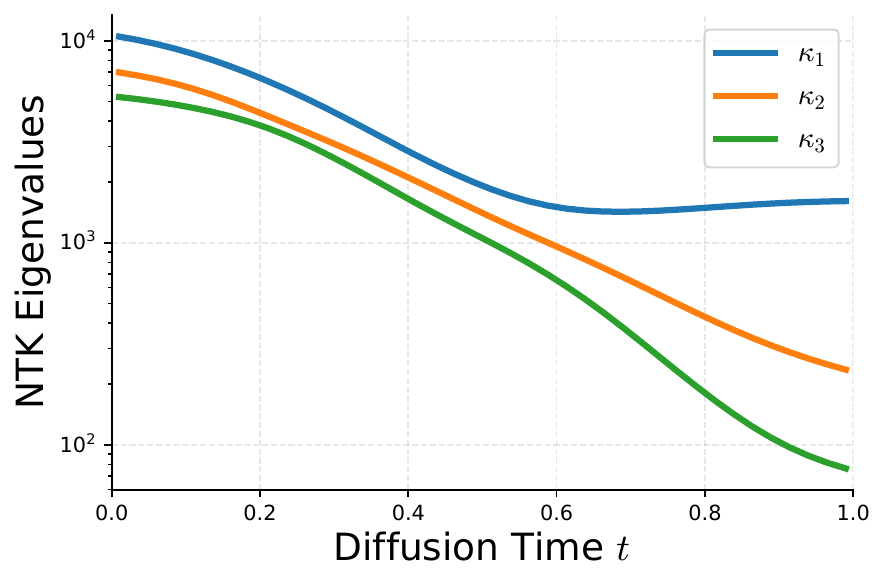}}
        \vspace{-1.2em}
        \caption{NTK spectrum}
        \label{fig:ntk-spectrum}
    \end{subfigure}
    \caption{NTK analysis across diffusion noise levels. 
    \Figref{fig:ntk-heatmaps} Joint NTK heatmaps visualize cross-noise coupling induced by shared parameters. 
    \Figref{fig:ntk-spectrum} The NTK spectrum characterizes mode-wise contraction strength and reveals weakened high-noise optimization modes.}
    \vspace{-1.0em}
    \label{fig:ntk-analysis}
\end{figure}

The analysis above motivates reallocating optimization effort away from poorly recoverable regimes, which leads to the target-adaptive weighting rule introduced next.

\subsection{Elucidated Representation Diffusion}
\label{subsec:elucidated_representation_diffusion}

Motivated by recoverability mismatch, we propose Elucidated Representation Diffusion (ERD), a target-adaptive weighting framework that reallocates optimization effort according to recoverable signal. ERD is designed to reduce Bayes-noise-dominated updates without external priors or architectural modifications.

Using the generalized parameterization from \cref{subsec:prelim_diffusion}, assume $x_0 \perp \epsilon$, $\mathbb{E}[x_0]=0$, $\mathrm{Cov}(x_0)=I$, and $\epsilon\sim\mathcal N(0,I)$. Existing efficient training methods often reweight losses using target energy, marginal SNR, or timestep allocation~\cite{choi2022perception,hang2023efficient,yu2023debias,zheng2024non,wang2024closer}. However, these criteria do not explicitly measure how much of each target component is expressed in the corrupted input. Since $x_0$ and $\epsilon$ enter $x_\lambda$ with amplitudes $\alpha(\lambda)$ and $\sigma(\lambda)$, the learnability of $y_\lambda$ depends on both target energy and input-side expression.

We define the channel-scaled effective target as $\widetilde{y}_\lambda \triangleq c_x(\lambda)\alpha(\lambda)x_0 + c_\epsilon(\lambda)\sigma(\lambda)\epsilon$. The \textbf{recoverability score} $\omega_y(\lambda)$ is its expected root-mean-square amplitude. Since $x_0$ and $\epsilon$ are independent and normalized, we obtain
\begin{align}
\omega_y(\lambda) = \sqrt{\frac{1}{d}\,\mathbb{E}\!\left[\|\widetilde{y}_\lambda\|_2^2\right]} = \sqrt{\left(c_x(\lambda)\alpha(\lambda)\right)^2 + \left(c_\epsilon(\lambda)\sigma(\lambda)\right)^2}.
\label{eq:effective_amplitude_score}
\end{align}
Here $\omega_y(\lambda)$ is a tractable component-wise proxy for target recoverability, rather than an exact posterior measure such as $\|\mathbb{E}[y_\lambda\mid x_\lambda]\|$. Unlike SNR- or target-energy-based weighting, ERD weights a target by the amplitude of components actually expressed in the corrupted input, so targets with similar energy can receive different weights when their recoverable components differ.

Under the standard continuous-time configuration where log-SNR is sampled uniformly and schedule-dependent factors are absorbed into the base measure, the effective allocation $M(\lambda)$ is controlled by the loss weight $w(\lambda)$. ERD sets $w_y^\star(\lambda) \propto \omega_y(\lambda)$ and normalizes it to preserve the average loss scale. For non-uniform base allocations, the same principle applies after accounting for the timestep sampler and schedule Jacobian.

The RMS form is sign-invariant: it measures the absolute recoverable contribution of each independent channel and avoids spurious cancellation between components with opposite signs. Substituting the target-specific coefficients $c_x(\lambda)$ and $c_\epsilon(\lambda)$ into \Eqref{eq:effective_amplitude_score} yields a unified weighting rule for canonical diffusion and flow-matching objectives.%~\cite{salimans2022progressive,lipman2022flow}.

Applying $w_y^\star(\lambda)$ reduces optimization mass where the effective recoverable signal is weak. This targets the mismatch in \cref{prop:rep_gradient_main}: when the recoverable residual is small relative to irreducible Bayes noise, reducing $M(\lambda)$ suppresses noise-dominated representation updates. ERD therefore alleviates recoverability mismatch and improves training efficiency without external supervision or architectural overhead.

\section{Experiments}
This section details the experimental setup for image generation, including datasets, architectures, and training configurations. We present ablation studies and method comparisons to demonstrate our approach's adaptability to different models and targets. Lastly, system-level evaluations assess the overall effectiveness of our method.

\begin{figure}[t]
    \centering
    \includegraphics[width=\linewidth]{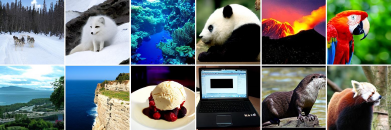}
    \caption{\textbf{Selected $256{\times}256$ samples.} We use a CFG scale of $4.0$ and $50$ EDM Heun steps.}
    \label{fig:random-samples}
    \vspace{-1.0em}
\end{figure}

\subsection{Experimental Setup}

% \sethlcolor{blue!20}\hl{\textbf{Datasets.}}~~
\textbf{Datasets.}
We evaluate image generation on ImageNet~\cite{Deng2009} and CelebA~\cite{liu2015deep}. For ImageNet, which contains over 1.3M images from 1,000 classes, we use the 256$\times$256 version and adopt Latent Diffusion Model (LDM) training~\cite{rombach2022high} for efficiency, encoding images into 32$\times$32$\times$4 latent representations with the Stable Diffusion VQ-VAE encoder~\cite{van2017neural}. For CelebA, which contains over 200K celebrity face images, we follow~\cite{song2020score} to crop and resize images to 64$\times$64 and train models in pixel space for unsupervised generation. 

% \sethlcolor{green!20}\hl{\textbf{Training.}}~~
\textbf{Training.}
Most experiments use DiT~\cite{peebles2023scalable} and U-ViT~\cite{bao2023all} as backbones optimized via Adam optimizer~\cite{kingma2014adam,loshchilov2017decoupled}, while ablations also include UNet~\cite{ronneberger2015u} under the ADM framework~\cite{dhariwal2021diffusion}. Unless otherwise specified, training is conducted in the LDM latent space~\cite{rombach2022high} with VQ-VAE encoding~\cite{van2017neural}; we use the DDPM framework~\cite{ho2020denoising} with $T=1000$ diffusion steps and apply Exponential Moving Average (EMA) with decay 0.9999~\cite{ho2020denoising}. 

% \sethlcolor{yellow!20}\hl{\textbf{Evaluation.}}~~
\textbf{Evaluation.}
We evaluate EMA models by generating 50,000 images, \ie, FID-50K. Performance is measured by FID~\cite{heusel2017gans} and sFID~\cite{nash2021generating}, where lower is better, and Inception Score (IS)~\cite{salimans2016improved}, where higher is better. We also report Precision and Recall~\cite{kynkaanniemi2019improved} for fidelity and coverage. For class-conditional generation, we use classifier-free sampling~\cite{ho2022classifier}; all image sampling adopts the Heun sampler from EDM~\cite{karras2022elucidating}.

\begin{figure}[ht]
\centering
    \begin{minipage}{0.48\textwidth}
    \centering
    \captionof{table}{Training strategy ablation of DiT~\cite{peebles2023scalable} in latent space~\cite{rombach2022high} without classifier-free guidance (CFG).}
    \label{tab:train_ablation}
    \begin{tabular}{lc}
        \toprule
        \textbf{Training Strategy} & \textbf{FID-50k}$\downarrow$ \\ 
        \midrule
        \textit{Baseline Improvement} \\
        DiT-S/2~\cite{peebles2023scalable} & 69.35 \\
        \textcolor{gray}{\sout{+ Ours}} & \textcolor{gray}{\sout{62.92}} \\   
        % + Ours & 62.92 \\
        \midrule
        \textit{Training Trick} \\
        DiT-S/2~\cite{peebles2023scalable} & 69.35 \\
        + Adam~\cite{kingma2014adam} $\beta_2=0.95$~\cite{ahmadian2023intriguing,yao2025reconstruction} & 68.77 \\
        + Cosine schedule~\cite{nichol2021improved} & 65.68 \\
        + Ours & 57.54 \\
        \bottomrule
    \end{tabular}
\end{minipage}\hfill
\begin{minipage}{0.45\textwidth}
    \centering
    \captionof{table}{FID results on CelebA 64$\times$64 from architectural ablations centered on UNet and DiT frameworks.}
    \label{tab:net_ablation}
    \begin{tabular}{lcc}
        \toprule
        Method & \#Params & FID$\downarrow$ \\
        \midrule
        \textit{Ablation on U-Net} \\
        DDIM~\cite{song2020denoising} & 79M & 3.26 \\
        Soft Truncation~\cite{song2020denoising} & 62M & 1.90 \\
        \textbf{UNet (Ours)}& 59M & \textbf{1.53} \\
        \midrule
        \textit{Ablation on Transformer} \\
        U-ViT-S~\cite{bao2023all} & 44M & 2.87 \\
        DiT-S & 33M & 3.00 \\
        % \textbf{ViT-S} & 43M & 2.78 \\
        % ViT-S (Min-SNR) & 43M & 2.55 \\        
        \textbf{DiT-S (Ours)} & 33M & \textbf{2.37} \\      
        % \textbf{ViT-S (Ours)} & 43M & \textbf{2.37} \\
        \bottomrule
    \end{tabular}
    \end{minipage}
\vspace{-1em}
\end{figure}

\subsection{Ablation Study}

We conduct ablation studies to evaluate the effects of model architecture and training strategy on image generation performance. For training-strategy ablations, we train DiT in latent space on ImageNet for 400K steps with a batch size of 256 and a fixed learning rate of $10^{-4}$, keeping other configurations consistent with the pixel-space setup. For architecture ablations, we train UNet (following the architecture configurations in~\cite{hang2023efficient}) and DiT in pixel space on CelebA with a batch size of 128. We optimize both models using Adam and a cosine noise schedule~\cite{nichol2021improved} over 500K training iterations. Specifically, we set the learning rate to $10^{-3}$ with betas $(0.9, 0.95)$ for DiT, and the learning rate to $10^{-4}$ with betas $(0.9, 0.999)$ for UNet~\cite{hang2023efficient}. \Tabref{tab:train_ablation} shows consistent improvements across different training strategies, while \Tabref{tab:net_ablation} shows our method remains effective across architectures.

\begin{figure}[ht]
    \centering
    \subfloat[$\epsilon_{\theta}$-prediction]{\includegraphics[width=0.33\textwidth]{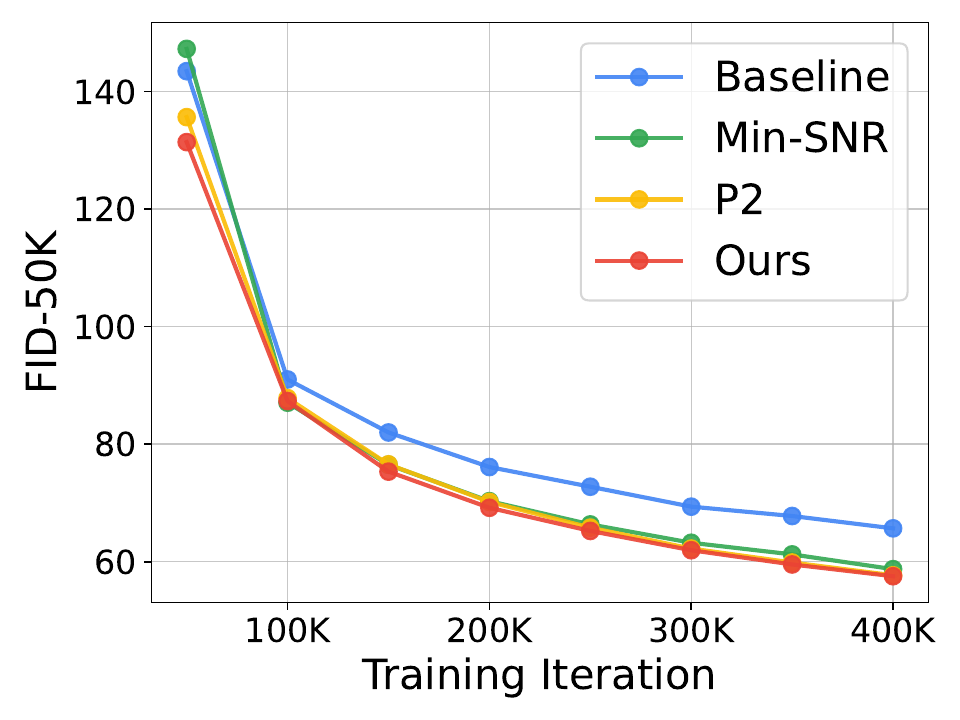}\vspace{-0.5em}} 
    \subfloat[$x_{\theta}$-prediction]{\includegraphics[width=0.33\textwidth]{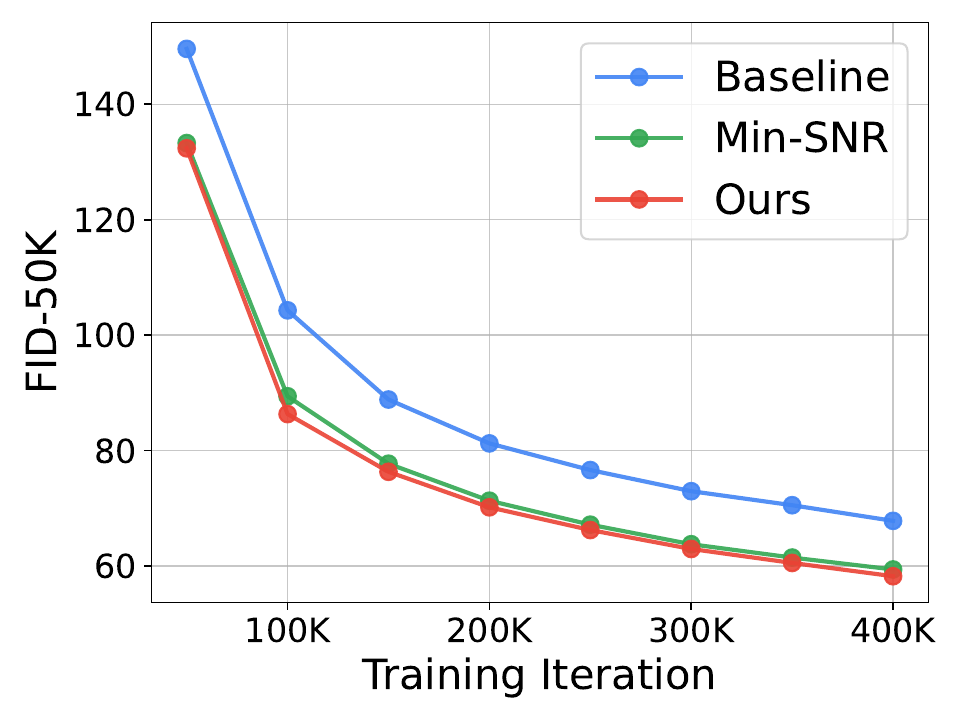}\vspace{-0.5em}}
    \subfloat[$v_{\theta}$-prediction]{\includegraphics[width=0.33\textwidth]{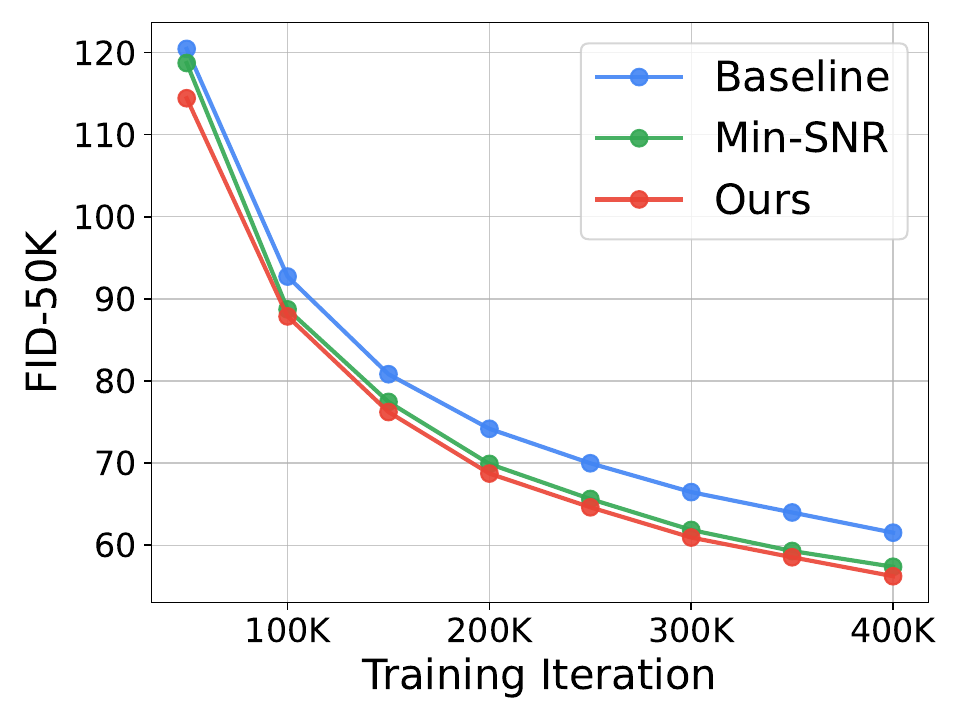}\vspace{-0.5em}}
    % \vspace{-0.5em}
    \caption{Comparing different loss weighting designs by predicting $\bm{\epsilon}_{\theta}$, $\bm{x}_{\theta}$ and $\bm{v}_{\theta}$ on DiT-S.}
    \label{fig:comparison}
    \vspace{-1em}
\end{figure}

\subsection{Comparison with Other Methods}
Referring to \Tabref{tab:train_ablation}, we configured the cosine learning rate scheduler with the Adam optimizer's betas set to $(0.9, 0.95)$, and conducted the training with a batch size of 256 over 400K steps, ensuring robust convergence and optimization. We evaluated different prediction objectives, $\epsilon_{\theta}$, $x_{\theta}$, and $v_{\theta}$~\cite{salimans2022progressive}, by comparing the Baseline, Min-SNR~\cite{hang2023efficient}, P2~\cite{choi2022perception}, and our proposed weighting method for training DiT-S on ImageNet. As illustrated in \Figref{fig:comparison}, our method consistently achieves the lowest FID-50K scores across all prediction objectives, demonstrating superior performance over other weighting methods throughout the training iterations.

\subsection{System Level Comparison}
We thoroughly evaluate our method on both DiT and U-ViT architectures using the ImageNet $256{\times}256$ dataset. In \Tabref{tab:wo_cfg}, we report FID scores without classifier-free guidance (CFG). Under the same training schedule (400K iterations), our method consistently outperforms the original DiT and U-ViT, as well as the SiT baseline, across all model scales. We also evaluate system-level performance under CFG, as shown in \Tabref{tab:main}. Compared with strong baselines including LDM~\cite{rombach2022high}, U-ViT-H/2~\cite{bao2023all}, and SD-DiT~\cite{zhu2024sd}, our method achieves highly competitive performance, significantly narrowing the gap with the recent REPA~\cite{yu2024representation}.

\begin{figure}[htbp]
\vspace{-1.0em}
\begin{minipage}{0.415\textwidth}
\centering\small
\captionof{table}{\textbf{FID comparisons with DiTs, SiTs and U-ViT} on ImageNet 256$\times$256. We do not use classifier-free guidance (CFG). $\downarrow$ denotes lower values are better. Iter. indicates the training iteration.
}
\vspace{-0.05in}
\resizebox{\textwidth}{!}{%
\begin{tabular}{lccc}
\toprule
     Model & \#Params & Iter. & FID$\downarrow$  \\
      \midrule
     % DiT-S/2 & 33M & 400K & 68.4 \\
     %  {\textbf{Ours ($\bm{v}_{\theta}$)}} & 33M & \textbf{400K}  & \textbf{57.19}\\
     %  \midrule
     DiT-B/2 & 130M & 400K & 43.4 \\
     SiT-B/2 & 130M & 400K & 33.0 \\
      \textbf{Ours} & \textbf{130M} & \textbf{400K}  & \textbf{32.3}\\
     \midrule
     DiT-L/2 & 458M & 400K & 23.3 \\
     SiT-L/2 & 458M & 400K  & 18.8\\
      \textbf{Ours} & \textbf{458M} & \textbf{400K}  & \textbf{17.2}\\
     \midrule
      DiT-XL/2 & 675M & 400K & 19.5\\
      SiT-XL  & 675M & 400K  & 17.2\\
    \textbf{Ours} & \textbf{675M} & \textbf{400K}  & \textbf{15.0}\\
     \midrule
     U-ViT-M/2 & 131M & 400K & 27.7 \\
     % REPA & 130M & 400K  & 24.4\\
      \textbf{Ours} & \textbf{131M} & \textbf{400K}  & \textbf{19.9}\\
     \midrule
     U-ViT-L/2 & 287M & 400K & 20.4 \\
     % REPA & 458M & 400K & 9.7\\
    \textbf{Ours} & \textbf{287M} & \textbf{400K}  & \textbf{15.0}\\
     \midrule
     U-ViT-H/2 & 501M & 400K   & 13.7\\
     Min-SNR & 501M & 400K   & 11.7\\
    % REPA  & 675M & 400K  & 7.9\\
   {\textbf{Ours}} & \textbf{501M} & \textbf{400K}  & \textbf{8.9}\\
     % \arrayrulecolor{black!40}\midrule
     \arrayrulecolor{black}
     % \midrule
 \arrayrulecolor{black}\bottomrule
\end{tabular}
\label{tab:wo_cfg}
}
\end{minipage}
~~
\begin{minipage}{0.55\textwidth}

\centering\large
\captionof{table}{
\textbf{System-level comparison} on ImageNet 256$\times$256 with CFG. $\downarrow$ and $\uparrow$ indicate whether lower or higher values are better, respectively. Results that include additional CFG scheduling are marked with an asterisk (*), where the guidance interval from~\cite{kynkaanniemi2024applying} is applied for.}
\vspace{-0.06in}
\resizebox{\textwidth}{!}{%
\begin{tabular}{l c c c c c c}
\toprule
{Model} & Epochs  &  { FID$\downarrow$} & {sFID$\downarrow$} & {IS$\uparrow$} & {Pre.$\uparrow$} & Rec.$\uparrow$ \\
\arrayrulecolor{black}\midrule

\multicolumn{7}{l}{\emph{Pixel diffusion}} \\
ADM-U~\cite{dhariwal2021diffusion} &400 &  3.94 & 6.14 &  186.7 & 0.82 & 0.52 \\
VDM$++$\cite{kingma2021variational} & 560 & 2.40 & - &  225.3 & - & - \\
Simple diffusion~\cite{hoogeboom2023simple} & 800 & 2.77 & - & 211.8 & - & - \\
CDM~\cite{ho2022cascaded} & 2160 & 4.88 & - & 158.7 & - & - \\
\arrayrulecolor{black!40}\midrule

\multicolumn{7}{l}{\emph{Latent diffusion, U-Net}\vspace{0.02in}} \\
LDM-4~\cite{rombach2022high} & 200  & 3.60 & - & 247.7 & {0.87} & 0.48 \\
\arrayrulecolor{black!40}\midrule

\multicolumn{7}{l}{\emph{Latent diffusion, Transformer}} \\
MaskDiT~\cite{zheng2023fast} & 1600 &  2.28 & 5.67 & 276.6 & 0.80 & 0.61 \\ 
SD-DiT~\cite{zhu2024sd} & 480 & 3.23 & -    & -     & -    & - \\
Min-SNR*~\cite{hang2023efficient}   & 1400  &  1.57 & - & - & - & -  \\
\arrayrulecolor{black!30}\cmidrule(lr){1-7}
DiT-XL/2~\cite{peebles2023scalable}   & 1400  &    2.27 & 4.60 & 278.2 & {\textbf{0.83}} & 0.57  \\
% \arrayrulecolor{black!30}\cmidrule(lr){1-7}
SiT-XL/2~\cite{ma2024sit}  & 1400 &     2.06 & 4.50 & 270.3 & 0.82 & 0.59 \\
% REPA~\cite{yu2024representation} & 200 & 1.96 & 4.49 & 264.0 & 0.82 & 0.60 \\
% REPA~\cite{yu2024representation} & 800 & 1.80 & 4.50 & 284.0 & 0.81 & 0.61 \\
REPA*~\cite{yu2024representation} & 800 & \textbf{1.42} & 4.70 & 305.7 & 0.80 & \textbf{0.65} \\
\arrayrulecolor{black}\midrule
\multicolumn{7}{l}{\emph{Latent diffusion, Transformer + U-Net hybrid}} \\
DiffiT*~\cite{hatamizadeh2024diffit} & - & 1.73 & - &  276.5 & 0.80 & 0.62 \\
MDTv2-XL/2*~\cite{gao2023mdtv2} & 1080  & 1.58 & 4.52 & 314.7  & 0.79 & {0.65}\\
\arrayrulecolor{black!30}\cmidrule(lr){1-7}
U-ViT-H/2~\cite{bao2023all} & 400 & 2.29 & 5.68  & 263.9 & 0.82 & 0.57 \\ 
% \arrayrulecolor{black}\midrule
% Ours & 200 & 1.91 & \textbf{4.46} & 272.3 & 0.79 & 0.57 \\
Ours & 800 & 1.89 & 4.48 & 282.5 & 0.82 & 0.60 \\
Ours* & 800 & 1.45 & 4.51 & \textbf{315.4} & 0.81 & 0.62 \\
\arrayrulecolor{black}\bottomrule
\end{tabular}
}
\label{tab:main}
\end{minipage}
\vspace{-0.5em}
\end{figure}

\section{Related Work}

\textbf{Diffusion Model Training Acceleration.}
Prior work improves diffusion training efficiency mainly through resource allocation across noise levels~\cite{choi2022perception,hang2023efficient,yu2023debias,yao2024fasterdit,xu2024towards,zheng2024beta,zheng2024non,wang2024closer,nichol2021improved,hang2024improved}. Other approaches modify training dynamics or the diffusion process itself~\cite{li2023not,wu2023fastdiffusionmodel}. Despite their effectiveness, these methods are largely heuristic and do not explain the intrinsic optimization difficulty.

\textbf{Representation Alignment in Diffusion Models.}
Recent works suggest that improving representation quality can substantially accelerate diffusion training, through representation alignment and related pretraining or latent modeling paradigms~\cite{yu2024representation,leng2025repa,yao2025reconstruction,wu2025representation,jiang2025no,zheng2025diffusion,shi2025latent,chu2025usp}. These results indicate that diffusion training efficiency depends not only on noise-level allocation, but also on learned internal representations. However, existing methods mainly exploit such benefits through external priors, auxiliary objectives, or modified latent frameworks, without explaining the intrinsic role of representations in standard diffusion training. In contrast, our work reveals that standard diffusion training inherently suffers from \emph{representation degradation}.

\textbf{Theoretical Analyses of Diffusion and NTK Dynamics.}
Prior theory has studied diffusion models from variational, dynamical, and score-based perspectives~\cite{kingma2021variational,luo2022understanding,kingma2023understanding,karras2024analyzing,han2024neural}. In parallel, NTK theory and studies on spectral bias and collapse~\cite{jacot2018neural,lee2019wide,rahaman2019spectral,jing2022understanding} provide tools for characterizing optimization geometry and representation collapse. However, these lines of work have not been connected to explain why extreme-noise diffusion training leads to degraded representations. Our work bridges this gap by linking diffusion optimization to noise-dependent NTK dynamics and showing that extreme corruption induces spectral degeneration and representation collapse.
\section{Limitations \& Conclusion}

\textbf{Limitations.}
Our analysis relies on continuous-time optimization, NTK-regime, and sample-path surrogate assumptions, which may not fully capture finite-width and highly nonlinear diffusion training dynamics. In addition, ERD uses an effective-amplitude score as a tractable proxy for recoverability, rather than directly estimating the exact posterior recoverable signal.

\textbf{Conclusion.}
Representation degradation arises when weak recoverability, enlarged Bayes floors, and spectral weakening bias diffusion training toward irreducible noise. ERD mitigates this issue by reallocating optimization effort according to effective target recoverability, improving training efficiency and generation quality across architectures and prediction targets.

% \section{Conclusions}
% Representation degradation is a fundamental bottleneck in diffusion training. We show that in weakly recoverable regimes, enlarged Bayes floors, NTK spectral collapse, and cross-noise Bayes forcing steer optimization toward noise, superseding signal learning. We address this with ERD, a framework that aligns optimization effort with signal recoverability. Experiments show ERD significantly improves training efficiency and generation quality across diverse architectures.

\section*{Acknowledgment}
This research was supported by \textsc{Nvidia} grants and utilized \textsc{Nvidia} GPUs and software, including \textsc{Nvidia} A100 GPUs on Brev. The authors gratefully acknowledge the \textsc{Nvidia} Academic Grant Program for providing computational resources that enabled this work.

{
\small
\bibliography{main}

@inproceedings{sohl2015deep,
  title={Deep unsupervised learning using nonequilibrium thermodynamics},
  author={Sohl-Dickstein, Jascha and Weiss, Eric and Maheswaranathan, Niru and Ganguli, Surya},
  booktitle={International conference on machine learning},
  pages={2256--2265},
  year={2015},
  organization={PMLR}
}

@article{ho2020denoising,
  title={Denoising diffusion probabilistic models},
  author={Ho, Jonathan and Jain, Ajay and Abbeel, Pieter},
  journal={Advances in neural information processing systems},
  volume={33},
  pages={6840--6851},
  year={2020}
}

@article{song2020denoising,
  title={Denoising diffusion implicit models},
  author={Song, Jiaming and Meng, Chenlin and Ermon, Stefano},
  journal={arXiv preprint arXiv:2010.02502},
  year={2020}
}

@article{salimans2022progressive,
  title={Progressive distillation for fast sampling of diffusion models},
  author={Salimans, Tim and Ho, Jonathan},
  journal={arXiv preprint arXiv:2202.00512},
  year={2022}
}

@article{jacot2018neural,
  title={Neural tangent kernel: Convergence and generalization in neural networks},
  author={Jacot, Arthur and Gabriel, Franck and Hongler, Cl{\'e}ment},
  journal={Advances in neural information processing systems},
  volume={31},
  year={2018}
}

@article{kingma2021variational,
  title={Variational diffusion models},
  author={Kingma, Diederik and Salimans, Tim and Poole, Ben and Ho, Jonathan},
  journal={Advances in neural information processing systems},
  volume={34},
  pages={21696--21707},
  year={2021}
}

@article{li2022diffusion,
  title={Diffusion-lm improves controllable text generation},
  author={Li, Xiang and Thickstun, John and Gulrajani, Ishaan and Liang, Percy S and Hashimoto, Tatsunori B},
  journal={Advances in Neural Information Processing Systems},
  volume={35},
  pages={4328--4343},
  year={2022}
}

@article{ho2022video,
  title={Video diffusion models},
  author={Ho, Jonathan and Salimans, Tim and Gritsenko, Alexey and Chan, William and Norouzi, Mohammad and Fleet, David J},
  journal={Advances in Neural Information Processing Systems},
  volume={35},
  pages={8633--8646},
  year={2022}
}

@article{kong2020diffwave,
  title={Diffwave: A versatile diffusion model for audio synthesis},
  author={Kong, Zhifeng and Ping, Wei and Huang, Jiaji and Zhao, Kexin and Catanzaro, Bryan},
  journal={arXiv preprint arXiv:2009.09761},
  year={2020}
}

@inproceedings{zhou20213d,
  title={3d shape generation and completion through point-voxel diffusion},
  author={Zhou, Linqi and Du, Yilun and Wu, Jiajun},
  booktitle={Proceedings of the IEEE/CVF international conference on computer vision},
  pages={5826--5835},
  year={2021}
}

@inproceedings{liu2015deep,
  title={Deep learning face attributes in the wild},
  author={Liu, Ziwei and Luo, Ping and Wang, Xiaogang and Tang, Xiaoou},
  booktitle={Proceedings of the IEEE international conference on computer vision},
  pages={3730--3738},
  year={2015}
}

@inproceedings{Deng2009,
  author = {Jia Deng and Wei Dong and Richard Socher and Li-Jia Li and Kai Li and Li Fei-Fei},
  title = {ImageNet: A large-scale hierarchical image database},
  booktitle = {IEEE Conference on Computer Vision and Pattern Recognition},
  year = {2009}
}

@inproceedings{rombach2022high,
  title={High-resolution image synthesis with latent diffusion models},
  author={Rombach, Robin and Blattmann, Andreas and Lorenz, Dominik and Esser, Patrick and Ommer, Bj{\"o}rn},
  booktitle={Proceedings of the IEEE/CVF conference on computer vision and pattern recognition},
  pages={10684--10695},
  year={2022}
}

@article{van2017neural,
  title={Neural discrete representation learning},
  author={Van Den Oord, Aaron and Vinyals, Oriol and others},
  journal={Advances in neural information processing systems},
  volume={30},
  year={2017}
}

@article{karras2022elucidating,
  title={Elucidating the design space of diffusion-based generative models},
  author={Karras, Tero and Aittala, Miika and Aila, Timo and Laine, Samuli},
  journal={Advances in neural information processing systems},
  volume={35},
  pages={26565--26577},
  year={2022}
}

@article{ho2022classifier,
  title={Classifier-free diffusion guidance},
  author={Ho, Jonathan and Salimans, Tim},
  journal={arXiv preprint arXiv:2207.12598},
  year={2022}
}

@article{kynkaanniemi2024applying,
  title={Applying guidance in a limited interval improves sample and distribution quality in diffusion models},
  author={Kynk{\"a}{\"a}nniemi, Tuomas and Aittala, Miika and Karras, Tero and Laine, Samuli and Aila, Timo and Lehtinen, Jaakko},
  journal={arXiv preprint arXiv:2404.07724},
  year={2024}
}

@article{dhariwal2021diffusion,
  title={Diffusion models beat gans on image synthesis},
  author={Dhariwal, Prafulla and Nichol, Alexander},
  journal={Advances in neural information processing systems},
  volume={34},
  pages={8780--8794},
  year={2021}
}

@article{kingma2014adam,
  title={Adam: A method for stochastic optimization},
  author={Kingma, Diederik P},
  journal={arXiv preprint arXiv:1412.6980},
  year={2014}
}

@article{loshchilov2017decoupled,
  title={Decoupled weight decay regularization},
  author={Loshchilov, Ilya and Hutter, Frank},
  journal={arXiv preprint arXiv:1711.05101},
  year={2017}
}

@inproceedings{hang2023efficient,
  title={Efficient diffusion training via min-snr weighting strategy},
  author={Hang, Tiankai and Gu, Shuyang and Li, Chen and Bao, Jianmin and Chen, Dong and Hu, Han and Geng, Xin and Guo, Baining},
  booktitle={Proceedings of the IEEE/CVF International Conference on Computer Vision},
  pages={7441--7451},
  year={2023}
}

@inproceedings{ronneberger2015u,
  title={U-net: Convolutional networks for biomedical image segmentation},
  author={Ronneberger, Olaf and Fischer, Philipp and Brox, Thomas},
  booktitle={Medical image computing and computer-assisted intervention--MICCAI 2015: 18th international conference, Munich, Germany, October 5-9, 2015, proceedings, part III 18},
  pages={234--241},
  year={2015},
  organization={Springer}
}

@inproceedings{nichol2021improved,
  title={Improved denoising diffusion probabilistic models},
  author={Nichol, Alexander Quinn and Dhariwal, Prafulla},
  booktitle={International conference on machine learning},
  pages={8162--8171},
  year={2021},
  organization={PMLR}
}

@article{hang2024improved,
  title={Improved noise schedule for diffusion training},
  author={Hang, Tiankai and Gu, Shuyang},
  journal={arXiv preprint arXiv:2407.03297},
  year={2024}
}

@inproceedings{bao2023all,
  title={All are worth words: A vit backbone for diffusion models},
  author={Bao, Fan and Nie, Shen and Xue, Kaiwen and Cao, Yue and Li, Chongxuan and Su, Hang and Zhu, Jun},
  booktitle={Proceedings of the IEEE/CVF conference on computer vision and pattern recognition},
  pages={22669--22679},
  year={2023}
}

@inproceedings{peebles2023scalable,
  title={Scalable diffusion models with transformers},
  author={Peebles, William and Xie, Saining},
  booktitle={Proceedings of the IEEE/CVF International Conference on Computer Vision},
  pages={4195--4205},
  year={2023}
}

@inproceedings{ma2024sit,
  title={Sit: Exploring flow and diffusion-based generative models with scalable interpolant transformers},
  author={Ma, Nanye and Goldstein, Mark and Albergo, Michael S and Boffi, Nicholas M and Vanden-Eijnden, Eric and Xie, Saining},
  booktitle={European Conference on Computer Vision},
  pages={23--40},
  year={2024},
  organization={Springer}
}

@inproceedings{choi2022perception,
  title={Perception prioritized training of diffusion models},
  author={Choi, Jooyoung and Lee, Jungbeom and Shin, Chaehun and Kim, Sungwon and Kim, Hyunwoo and Yoon, Sungroh},
  booktitle={Proceedings of the IEEE/CVF Conference on Computer Vision and Pattern Recognition},
  pages={11472--11481},
  year={2022}
}

@article{yu2023debias,
  title={Debias the training of diffusion models},
  author={Yu, Hu and Shen, Li and Huang, Jie and Zhou, Man and Li, Hongsheng and Zhao, Feng},
  journal={arXiv preprint arXiv:2310.08442},
  year={2023}
}

@article{xu2024towards,
  title={Towards Faster Training of Diffusion Models: An Inspiration of A Consistency Phenomenon},
  author={Xu, Tianshuo and Mi, Peng and Wang, Ruilin and Chen, Yingcong},
  journal={arXiv preprint arXiv:2404.07946},
  year={2024}
}

@inproceedings{zheng2024beta,
  title={Beta-Tuned Timestep Diffusion Model},
  author={Zheng, Tianyi and Jiang, Peng-Tao and Wan, Ben and Zhang, Hao and Chen, Jinwei and Wang, Jia and Li, Bo},
  booktitle={European Conference on Computer Vision},
  year={2024}
}

@inproceedings{zheng2024non,
  title={Non-uniform Timestep Sampling: Towards Faster Diffusion Model Training},
  author={Zheng, Tianyi and Geng, Cong and Jiang, Peng-Tao and Wan, Ben and Zhang, Hao and Chen, Jinwei and Wang, Jia and Li, Bo},
  booktitle={ACM Multimedia 2024},
  year={2024}
}

@article{wang2024closer,
  title={A Closer Look at Time Steps is Worthy of Triple Speed-Up for Diffusion Model Training},
  author={Wang, Kai and Zhou, Yukun and Shi, Mingjia and Yuan, Zhihang and Shang, Yuzhang and Peng, Xiaojiang and Zhang, Hanwang and You, Yang},
  journal={arXiv preprint arXiv:2405.17403},
  year={2024}
}

@article{yao2024fasterdit,
  title={Fasterdit: Towards faster diffusion transformers training without architecture modification},
  author={Yao, Jingfeng and Cheng, Wang and Liu, Wenyu and Wang, Xinggang},
  journal={arXiv preprint arXiv:2410.10356},
  year={2024}
}

@misc{wu2023fastdiffusionmodel,
      title={Fast Diffusion Model}, 
      author={Zike Wu and Pan Zhou and Kenji Kawaguchi and Hanwang Zhang},
      year={2023},
      eprint={2306.06991},
      archivePrefix={arXiv},
      primaryClass={cs.CV},
      url={https://arxiv.org/abs/2306.06991}, 
}

@article{li2023not,
  title={Not All Steps are Equal: Efficient Generation with Progressive Diffusion Models},
  author={Li, Wenhao and Su, Xiu and You, Shan and Huang, Tao and Wang, Fei and Qian, Chen and Xu, Chang},
  journal={arXiv preprint arXiv:2312.13307},
  year={2023}
}

@article{song2020score,
  title={Score-based generative modeling through stochastic differential equations},
  author={Song, Yang and Sohl-Dickstein, Jascha and Kingma, Diederik P and Kumar, Abhishek and Ermon, Stefano and Poole, Ben},
  journal={arXiv preprint arXiv:2011.13456},
  year={2020}
}

@article{ho2022cascaded,
  title={Cascaded diffusion models for high fidelity image generation},
  author={Ho, Jonathan and Saharia, Chitwan and Chan, William and Fleet, David J and Norouzi, Mohammad and Salimans, Tim},
  journal={Journal of Machine Learning Research},
  volume={23},
  number={47},
  pages={1--33},
  year={2022}
}

@article{luo2022understanding,
  title={Understanding diffusion models: A unified perspective},
  author={Luo, Calvin},
  journal={arXiv preprint arXiv:2208.11970},
  year={2022}
}

@article{yu2024representation,
  title={Representation alignment for generation: Training diffusion transformers is easier than you think},
  author={Yu, Sihyun and Kwak, Sangkyung and Jang, Huiwon and Jeong, Jongheon and Huang, Jonathan and Shin, Jinwoo and Xie, Saining},
  journal={arXiv preprint arXiv:2410.06940},
  year={2024}
}

@article{leng2025repa,
  title={Repa-e: Unlocking vae for end-to-end tuning with latent diffusion transformers},
  author={Leng, Xingjian and Singh, Jaskirat and Hou, Yunzhong and Xing, Zhenchang and Xie, Saining and Zheng, Liang},
  journal={arXiv preprint arXiv:2504.10483},
  year={2025}
}

@inproceedings{yao2025reconstruction,
  title={Reconstruction vs. generation: Taming optimization dilemma in latent diffusion models},
  author={Yao, Jingfeng and Yang, Bin and Wang, Xinggang},
  booktitle={Proceedings of the Computer Vision and Pattern Recognition Conference},
  pages={15703--15712},
  year={2025}
}

@article{wu2025representation,
  title={Representation Entanglement for Generation: Training Diffusion Transformers Is Much Easier Than You Think},
  author={Wu, Ge and Zhang, Shen and Shi, Ruijing and Gao, Shanghua and Chen, Zhenyuan and Wang, Lei and Chen, Zhaowei and Gao, Hongcheng and Tang, Yao and Yang, Jian and others},
  journal={arXiv preprint arXiv:2507.01467},
  year={2025}
}

@article{jiang2025no,
  title={No Other Representation Component Is Needed: Diffusion Transformers Can Provide Representation Guidance by Themselves},
  author={Jiang, Dengyang and Wang, Mengmeng and Li, Liuzhuozheng and Zhang, Lei and Wang, Haoyu and Wei, Wei and Dai, Guang and Zhang, Yanning and Wang, Jingdong},
  journal={arXiv preprint arXiv:2505.02831},
  year={2025}
}

@article{zheng2025diffusion,
  title={Diffusion transformers with representation autoencoders},
  author={Zheng, Boyang and Ma, Nanye and Tong, Shengbang and Xie, Saining},
  journal={arXiv preprint arXiv:2510.11690},
  year={2025}
}

@article{shi2025latent,
  title={Latent diffusion model without variational autoencoder},
  author={Shi, Minglei and Wang, Haolin and Zheng, Wenzhao and Yuan, Ziyang and Wu, Xiaoshi and Wang, Xintao and Wan, Pengfei and Zhou, Jie and Lu, Jiwen},
  journal={arXiv preprint arXiv:2510.15301},
  year={2025}
}

@article{chu2025usp,
  title={USP: Unified Self-Supervised Pretraining for Image Generation and Understanding},
  author={Chu, Xiangxiang and Li, Renda and Wang, Yong},
  journal={arXiv preprint arXiv:2503.06132},
  year={2025}
}

@article{ahmadian2023intriguing,
  title={Intriguing properties of quantization at scale},
  author={Ahmadian, Arash and Dash, Saurabh and Chen, Hongyu and Venkitesh, Bharat and Gou, Zhen Stephen and Blunsom, Phil and {\"U}st{\"u}n, Ahmet and Hooker, Sara},
  journal={Advances in Neural Information Processing Systems},
  volume={36},
  pages={34278--34294},
  year={2023}
}

@inproceedings{hoogeboom2023simple,
  title={simple diffusion: End-to-end diffusion for high resolution images},
  author={Hoogeboom, Emiel and Heek, Jonathan and Salimans, Tim},
  booktitle={International Conference on Machine Learning},
  pages={13213--13232},
  year={2023},
  organization={PMLR}
}

@inproceedings{hatamizadeh2024diffit,
  title={Diffit: Diffusion vision transformers for image generation},
  author={Hatamizadeh, Ali and Song, Jiaming and Liu, Guilin and Kautz, Jan and Vahdat, Arash},
  booktitle={European Conference on Computer Vision},
  pages={37--55},
  year={2024},
  organization={Springer}
}

@article{gao2023mdtv2,
  title={Mdtv2: Masked diffusion transformer is a strong image synthesizer},
  author={Gao, Shanghua and Zhou, Pan and Cheng, Ming-Ming and Yan, Shuicheng},
  journal={arXiv preprint arXiv:2303.14389},
  year={2023}
}

@article{zheng2023fast,
  title={Fast training of diffusion models with masked transformers},
  author={Zheng, Hongkai and Nie, Weili and Vahdat, Arash and Anandkumar, Anima},
  journal={arXiv preprint arXiv:2306.09305},
  year={2023}
}

@inproceedings{zhu2024sd,
  title={Sd-dit: Unleashing the power of self-supervised discrimination in diffusion transformer},
  author={Zhu, Rui and Pan, Yingwei and Li, Yehao and Yao, Ting and Sun, Zhenglong and Mei, Tao and Chen, Chang Wen},
  booktitle={Proceedings of the IEEE/CVF Conference on Computer Vision and Pattern Recognition},
  pages={8435--8445},
  year={2024}
}

@article{gao2019representation,
  title={Representation degeneration problem in training natural language generation models},
  author={Gao, Jun and He, Di and Tan, Xu and Qin, Tao and Wang, Liwei and Liu, Tie-Yan},
  journal={arXiv preprint arXiv:1907.12009},
  year={2019}
}

@article{kingma2023understanding,
  title={Understanding diffusion objectives as the elbo with simple data augmentation},
  author={Kingma, Diederik and Gao, Ruiqi},
  journal={Advances in Neural Information Processing Systems},
  volume={36},
  pages={65484--65516},
  year={2023}
}

@article{han2024neural,
  title={Neural network-based score estimation in diffusion models: Optimization and generalization},
  author={Han, Yinbin and Razaviyayn, Meisam and Xu, Renyuan},
  journal={arXiv preprint arXiv:2401.15604},
  year={2024}
}

@inproceedings{rahaman2019spectral,
  title={On the spectral bias of neural networks},
  author={Rahaman, Nasim and Baratin, Aristide and Arpit, Devansh and Draxler, Felix and Lin, Min and Hamprecht, Fred and Bengio, Yoshua and Courville, Aaron},
  booktitle={International conference on machine learning},
  pages={5301--5310},
  year={2019},
  organization={PMLR}
}

@article{lee2019wide,
  title={Wide neural networks of any depth evolve as linear models under gradient descent},
  author={Lee, Jaehoon and Xiao, Lechao and Schoenholz, Samuel and Bahri, Yasaman and Novak, Roman and Sohl-Dickstein, Jascha and Pennington, Jeffrey},
  journal={Advances in neural information processing systems},
  volume={32},
  year={2019}
}

@inproceedings{jing2022understanding,
  title={Understanding Dimensional Collapse in Contrastive Self-supervised Learning},
  author={Li Jing and Pascal Vincent and Yann LeCun and Yuandong Tian},
  booktitle={International Conference on Learning Representations},
  year={2022},
  url={https://openreview.net/forum?id=YevsQ05DEN7}
}

@inproceedings{karras2024analyzing,
  title={Analyzing and improving the training dynamics of diffusion models},
  author={Karras, Tero and Aittala, Miika and Lehtinen, Jaakko and Hellsten, Janne and Aila, Timo and Laine, Samuli},
  booktitle={Proceedings of the IEEE/CVF Conference on Computer Vision and Pattern Recognition},
  pages={24174--24184},
  year={2024}
}

@article{heusel2017gans,
  title={Gans trained by a two time-scale update rule converge to a local nash equilibrium},
  author={Heusel, Martin and Ramsauer, Hubert and Unterthiner, Thomas and Nessler, Bernhard and Hochreiter, Sepp},
  journal={Advances in neural information processing systems},
  volume={30},
  year={2017}
}

@article{nash2021generating,
  title={Generating images with sparse representations},
  author={Nash, Charlie and Menick, Jacob and Dieleman, Sander and Battaglia, Peter W},
  journal={arXiv preprint arXiv:2103.03841},
  year={2021}
}

@article{salimans2016improved,
  title={Improved techniques for training gans},
  author={Salimans, Tim and Goodfellow, Ian and Zaremba, Wojciech and Cheung, Vicki and Radford, Alec and Chen, Xi},
  journal={Advances in neural information processing systems},
  volume={29},
  year={2016}
}

@article{kynkaanniemi2019improved,
  title={Improved precision and recall metric for assessing generative models},
  author={Kynk{\"a}{\"a}nniemi, Tuomas and Karras, Tero and Laine, Samuli and Lehtinen, Jaakko and Aila, Timo},
  journal={Advances in neural information processing systems},
  volume={32},
  year={2019}
}

@inproceedings{szegedy2016rethinking,
  title={Rethinking the inception architecture for computer vision},
  author={Szegedy, Christian and Vanhoucke, Vincent and Ioffe, Sergey and Shlens, Jon and Wojna, Zbigniew},
  booktitle={Proceedings of the IEEE conference on computer vision and pattern recognition},
  pages={2818--2826},
  year={2016}
}

@article{jing2021understanding,
  title={Understanding dimensional collapse in contrastive self-supervised learning},
  author={Jing, Li and Vincent, Pascal and LeCun, Yann and Tian, Yuandong},
  journal={arXiv preprint arXiv:2110.09348},
  year={2021}
}

@article{li2025understanding,
  title={Understanding representation dynamics of diffusion models via low-dimensional modeling},
  author={Li, Xiao and Zhang, Zekai and Li, Xiang and Chen, Siyi and Zhu, Zhihui and Wang, Peng and Qu, Qing},
  journal={arXiv preprint arXiv:2502.05743},
  year={2025}
}

@article{lipman2022flow,
  title={Flow matching for generative modeling},
  author={Lipman, Yaron and Chen, Ricky TQ and Ben-Hamu, Heli and Nickel, Maximilian and Le, Matt},
  journal={arXiv preprint arXiv:2210.02747},
  year={2022}
}
\bibliographystyle{ieeenat_fullname}
}
\newpage
\appendix
\onecolumn
\begin{center}
\Large
\textbf{Elucidating Representation Degradation Problem in Diffusion Model Training}\\
\textbf{Appendix}
\end{center}

\etocdepthtag.toc{mtappendix}
\etocsettagdepth{mtchapter}{none}
\etocsettagdepth{mtappendix}{subsection}
{\small \tableofcontents}

\section{Foundations and Problem Formulation}
\label{apdx:sec:foundations}

In this section, we introduce the mathematical definitions and foundational assumptions used throughout the appendix for analyzing Neural Tangent Kernel dynamics and optimization behavior in diffusion models. We also organize the later results so that the subsequent conclusions are derived explicitly from the preceding ones, rather than appearing as isolated statements.

\begin{definition}[Forward process and matrix-valued joint Neural Tangent Kernel]
\label{def:unified_definition_revised}
Let the data distribution be $q(x_0)$ on $\mathbb R^d$. The forward diffusion process yields a family of latent variables $x_\lambda$ with conditional law
\begin{align}
q(x_\lambda\mid x_0)
&=\mathcal N\big(x_\lambda;\alpha(\lambda)x_0,\sigma(\lambda)^2I\big),
\label{eq:forward_process_conditional_revised}
\end{align}
where the log signal-to-noise ratio is defined by
\begin{align}
\lambda
&=\log\!\left(\frac{\alpha(\lambda)^2}{\sigma(\lambda)^2}\right).
\label{eq:log_snr_definition_revised}
\end{align}
We assume that the diffusion schedule is chosen so that the map $t\mapsto\lambda_t$ is monotone on the domain of interest. Hence $t$ and $\lambda$ are interchangeable parameterizations of the same forward corruption process, and we may equivalently write the model as $f_{\theta,m}(x,t)$ or $f_{\theta,m}(x,\lambda)$ with $\lambda=\lambda_t$.

Let $f_{\theta,m}(x,\lambda):\mathbb R^d\times[\lambda_{\min},\lambda_{\max}]\to\mathbb R^d$ be a shared-parameter neural network estimator parameterized by $\theta\in\mathbb R^P$, where $m$ denotes the width scaling parameter. For vector-valued outputs, we adopt the operator-valued Neural Tangent Kernel formulation. The matrix-valued joint Neural Tangent Kernel on the input--noise space is defined by
\begin{align}
\Theta_{\theta,m}\big((x,\lambda),(x',\lambda')\big)
&=\big(\nabla_\theta f_{\theta,m}(x,\lambda)\big)\big(\nabla_\theta f_{\theta,m}(x',\lambda')\big)^\top
\in\mathbb R^{d\times d},
\label{eq:joint_matrix_ntk_definition_revised}
\end{align}
where $\nabla_\theta f_{\theta,m}(x,\lambda)\in\mathbb R^{d\times P}$ denotes the Jacobian of the network output with respect to the parameters. The kernel $\Theta_{\theta,m}\big((x,\lambda),(x',\lambda')\big)$ acts as a linear operator on $\mathbb R^d$ and captures both coupling across output dimensions and coupling across noise levels induced by parameter sharing. As a local special case, the fixed-noise kernel is obtained by restricting the joint kernel to $\lambda'=\lambda$, namely
\begin{align}
\Theta_{\theta,m,\lambda}(x,x')
&\triangleq\Theta_{\theta,m}\big((x,\lambda),(x',\lambda)\big).
\label{eq:fixed_noise_kernel_definition_revised}
\end{align}
\end{definition}

\begin{assumption}[Optimization setup]
\label{assum:unified_assumption_revised}
To analyze the continuous-time training dynamics, we impose the following conditions:
\begin{itemize}[leftmargin=1.5em, labelsep=0.5em]
    \item \textbf{Shared-parameter architecture:} The same parameter vector $\theta$ is used across all noise levels, so the training dynamics are governed by the joint kernel $\Theta_{\theta,m}\big((x,\lambda),(x',\lambda')\big)$ on the input--noise space.
    \item \textbf{Initialization and activation:} The network is initialized with a standard width-scaled random initialization, and the activation function is assumed to be Lipschitz continuous and at least twice differentiable.
    \item \textbf{Continuous-time dynamics:} The optimization is modeled in continuous training time $\tau$ by gradient flow, corresponding to the small-step-size limit of gradient descent.
    \item \textbf{Integrability and regularity:} All expectations and conditional expectations appearing below are well-defined, the relevant second moments are finite, measurable versions of conditional expectations exist, and differentiation may be interchanged with expectation wherever used.
    \item \textbf{Nonnegative allocation weight:} The effective allocation factor introduced later satisfies $M(\lambda)\ge 0$ on the entire log-SNR interval.
\end{itemize}
\end{assumption}

\section{Neural Tangent Kernel Dynamics in Diffusion Models}
\label{apdx:sec:ntk_dynamics}

We first establish the shared-parameter finite-width training dynamics on the joint input--noise space. This result serves as the starting point for all later ELBO-level consequences.

Let $m$ denote the width parameter of the network, or more generally the common scaling parameter controlling the hidden-layer widths in the architecture, and write the model as $f_{\theta,m}(x,\lambda)$. For each fixed finite $m$, consider the joint regression objective
\begin{align}
\mathcal L_m(\theta)
&=\frac{1}{2}\mathbb E_{\lambda\sim\rho,\;x\sim q_\lambda}\!\left[\left\|f_{\theta,m}(x,\lambda)-y(x,\lambda)\right\|_2^2\right],
\label{eq:joint_population_loss_revised}
\end{align}
where $\rho(\lambda)$ denotes the sampling distribution over noise levels and $y(x,\lambda)\in\mathbb R^d$ is the training target. For a gradient-flow trajectory $\theta_m(\tau)$, define $f_{\tau,m}(x,\lambda)\triangleq f_{\theta_m(\tau),m}(x,\lambda)$ and $e_{\tau,m}(x,\lambda)\triangleq f_{\tau,m}(x,\lambda)-y(x,\lambda)$, and write the time-dependent joint matrix-valued Neural Tangent Kernel as
\begin{align}
\Theta_{\tau,m}\big((x,\lambda),(x',\lambda')\big)
&\triangleq\big(\nabla_\theta f_{\tau,m}(x,\lambda)\big)\big(\nabla_\theta f_{\tau,m}(x',\lambda')\big)^\top
\in\mathbb R^{d\times d}.
\label{eq:time_dependent_joint_ntk_definition_revised}
\end{align}

\begin{proposition}[Finite-width training dynamics with shared parameters]
\label{prop:finite_width_joint_dynamics_revised}
Under \cref{assum:unified_assumption_revised}, for each finite width parameter $m$, if $\theta_m(\tau)$ evolves by gradient flow for the objective \Eqref{eq:joint_population_loss_revised}, then the residual field satisfies
\begin{align}
\frac{\partial}{\partial\tau}e_{\tau,m}(x,\lambda)
&=-\mathbb E_{\lambda'\sim\rho,\;x'\sim q_{\lambda'}}\!\left[\Theta_{\tau,m}\big((x,\lambda),(x',\lambda')\big)e_{\tau,m}(x',\lambda')\right].
\label{eq:joint_residual_ode_final_revised}
\end{align}
\end{proposition}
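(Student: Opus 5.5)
The identity follows from the chain rule together with the gradient-flow equation; no width limit or linearization is needed, because the kernel $\Theta_{\tau,m}$ is evaluated along the actual trajectory. The target $y(x,\lambda)$ does not depend on $\theta$ or $\tau$, so the residual's time derivative equals the output's:
\begin{align}
\frac{\partial}{\partial\tau}e_{\tau,m}(x,\lambda)=\frac{\partial}{\partial\tau}f_{\theta_m(\tau),m}(x,\lambda)=\nabla_\theta f_{\tau,m}(x,\lambda)\,\frac{\mathrm d\theta_m(\tau)}{\mathrm d\tau}.
\end{align}
This step only needs $\theta\mapsto f_{\theta,m}(x,\lambda)$ to be differentiable and $\theta_m(\cdot)$ to be absolutely continuous. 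Both hold under the twice-differentiable activation and gradient-flow clauses of \cref{assum:unified_assumption_revised}.

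The main step is computing $\nabla_\theta\mathcal L_m$ for \Eqref{eq:joint_population_loss_revised}. By the integrability and regularity clause of \cref{assum:unified_assumption_revised}, we may interchange differentiation in $\theta$ with the expectation over $\lambda'\sim\rho$ and $x'\sim q_{\lambda'}$. This gives
\begin{align}
\nabla_\theta\mathcal L_m(\theta)=\mathbb E_{\lambda'\sim\rho,\;x'\sim q_{\lambda'}}\!\left[\big(\nabla_\theta f_{\theta,m}(x',\lambda')\big)^\top\big(f_{\theta,m}(x',\lambda')-y(x',\lambda')\big)\right],
\end{align}
using $\nabla_\theta\tfrac12\|f-y\|_2^2=(\nabla_\theta f)^\top(f-y)$ with the $d\times P$ Jacobian convention. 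Along the trajectory, the gradient flow reads $\frac{\mathrm d\theta_m}{\mathrm d\tau}=-\nabla_\theta\mathcal L_m(\theta_m(\tau))$, so the bracketed residual factor is $e_{\tau,m}(x',\lambda')$.

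Substituting this into the chain-rule expression, we pull the fixed matrix $\nabla_\theta f_{\tau,m}(x,\lambda)$ inside the expectation, which is valid by linearity and integrability. The result is
\begin{align}
\frac{\partial}{\partial\tau}e_{\tau,m}(x,\lambda)=-\mathbb E\!\left[\nabla_\theta f_{\tau,m}(x,\lambda)\big(\nabla_\theta f_{\tau,m}(x',\lambda')\big)^\top e_{\tau,m}(x',\lambda')\right].
\end{align}
By \Eqref{eq:time_dependent_joint_ntk_definition_revised}, the matrix product inside the expectation is exactly $\Theta_{\tau,m}\big((x,\lambda),(x',\lambda')\big)$, which gives \Eqref{eq:joint_residual_ode_final_revised}.

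The one delicate point is the interchange of $\nabla_\theta$ with the population expectation. It needs a dominating function for $\|\nabla_\theta f_{\theta,m}\|\,\|e\|$ that is locally uniform in $\theta$. The integrability clause of \cref{assum:unified_assumption_revised} assumes exactly this. If one wanted to justify it directly, the Lipschitz activation bounds the Jacobian polynomially in $\|x'\|$, and Gaussian moments of $x_{\lambda'}$ combined with finite second moments of $y$ give integrability. Everything else in the argument is exact algebra.
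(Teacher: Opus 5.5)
Your proposal is correct and follows the paper's proof essentially step for step. Both arguments compute $\nabla_\theta\mathcal L_m$ by interchanging differentiation and expectation, substitute the gradient flow into the chain rule for $f_{\tau,m}(x,\lambda)$, move the fixed Jacobian inside the expectation to identify $\Theta_{\tau,m}$, and use that $y$ is independent of $\tau$. Your added remarks on justifying the interchange directly go slightly beyond the paper, which simply invokes the regularity clause of the assumption.
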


\begin{proof}
Under the regularity conditions in \cref{assum:unified_assumption_revised}, differentiation may be interchanged with expectation. Differentiating \Eqref{eq:joint_population_loss_revised} with respect to $\theta$ gives
\begin{align}
\nabla_\theta\mathcal L_m(\theta)
&=\nabla_\theta\left(\frac{1}{2}\mathbb E_{\lambda\sim\rho,\;x\sim q_\lambda}\!\left[\left\|f_{\theta,m}(x,\lambda)-y(x,\lambda)\right\|_2^2\right]\right) \nonumber\\
&=\mathbb E_{\lambda\sim\rho,\;x\sim q_\lambda}\!\left[\nabla_\theta\left(\frac{1}{2}\left\|f_{\theta,m}(x,\lambda)-y(x,\lambda)\right\|_2^2\right)\right] \nonumber\\
&=\mathbb E_{\lambda\sim\rho,\;x\sim q_\lambda}\!\left[\big(\nabla_\theta f_{\theta,m}(x,\lambda)\big)^\top\big(f_{\theta,m}(x,\lambda)-y(x,\lambda)\big)\right].
\label{eq:loss_gradient_inside_proof_revised}
\end{align}
Using gradient flow, we obtain
\begin{align}
\frac{\mathrm d\theta_m(\tau)}{\mathrm d\tau}
&=-\nabla_\theta\mathcal L_m(\theta_m(\tau)) \nonumber\\
&=-\mathbb E_{\lambda'\sim\rho,\;x'\sim q_{\lambda'}}\!\left[\big(\nabla_\theta f_{\tau,m}(x',\lambda')\big)^\top\big(f_{\tau,m}(x',\lambda')-y(x',\lambda')\big)\right] \nonumber\\
&=-\mathbb E_{\lambda'\sim\rho,\;x'\sim q_{\lambda'}}\!\left[\big(\nabla_\theta f_{\tau,m}(x',\lambda')\big)^\top e_{\tau,m}(x',\lambda')\right].
\label{eq:parameter_flow_inside_proof_revised}
\end{align}
Fixing $(x,\lambda)$ and applying the chain rule along the trajectory $\theta_m(\tau)$, we have
\begin{align}
\frac{\partial}{\partial\tau}f_{\tau,m}(x,\lambda)
&=\nabla_\theta f_{\tau,m}(x,\lambda)\frac{\mathrm d\theta_m(\tau)}{\mathrm d\tau} \nonumber\\
&=-\nabla_\theta f_{\tau,m}(x,\lambda)\mathbb E_{\lambda'\sim\rho,\;x'\sim q_{\lambda'}}\!\left[\big(\nabla_\theta f_{\tau,m}(x',\lambda')\big)^\top e_{\tau,m}(x',\lambda')\right] \nonumber\\
&=-\mathbb E_{\lambda'\sim\rho,\;x'\sim q_{\lambda'}}\!\left[\nabla_\theta f_{\tau,m}(x,\lambda)\big(\nabla_\theta f_{\tau,m}(x',\lambda')\big)^\top e_{\tau,m}(x',\lambda')\right] \nonumber\\
&=-\mathbb E_{\lambda'\sim\rho,\;x'\sim q_{\lambda'}}\!\left[\Theta_{\tau,m}\big((x,\lambda),(x',\lambda')\big)e_{\tau,m}(x',\lambda')\right].
\label{eq:function_flow_inside_proof_revised}
\end{align}
Since $e_{\tau,m}(x,\lambda)=f_{\tau,m}(x,\lambda)-y(x,\lambda)$ and $y(x,\lambda)$ is independent of $\tau$, it follows that
\begin{align}
\frac{\partial}{\partial\tau}e_{\tau,m}(x,\lambda)
&=\frac{\partial}{\partial\tau}f_{\tau,m}(x,\lambda) \nonumber\\
&=-\mathbb E_{\lambda'\sim\rho,\;x'\sim q_{\lambda'}}\!\left[\Theta_{\tau,m}\big((x,\lambda),(x',\lambda')\big)e_{\tau,m}(x',\lambda')\right],
\end{align}
which is exactly \Eqref{eq:joint_residual_ode_final_revised}.
\end{proof}

\section{Evidence Lower Bound Dynamics}
\label{apdx:sec:elbo_dynamics}

We now lift the joint Neural Tangent Kernel dynamics in \cref{prop:finite_width_joint_dynamics_revised} to the global ELBO objective. The key organizational principle of this section is as follows. First, we decompose the local ELBO density into an irreducible Bayes floor and an optimizable excess term. Second, we derive an exact continuous-time excess-dynamics identity relative to a formal sample-path surrogate of stochastic gradient descent. Third, all later quantitative bounds in this section are derived explicitly from that excess-dynamics identity.

We work in the log-SNR domain $\lambda\in[\lambda_{\min},\lambda_{\max}]$. For a finite-width network $f_{\theta,m}(x,\lambda)$ with width scaling parameter $m$, let the forward-corrupted input and the generalized diffusion target be parameterized by $x_\lambda=\alpha(\lambda)x_0+\sigma(\lambda)\epsilon$ and $y_\lambda=c_x(\lambda)x_0+c_\epsilon(\lambda)\epsilon$, respectively, where $x_0\sim q(x_0)$ and $\epsilon\sim\mathcal N(0,I)$ are independent. This parameterization subsumes standard diffusion targets such as noise prediction and data prediction. We express the continuous-time ELBO objective as
\begin{align}
\mathcal L_{\mathrm{ELBO},m}(\theta)
&=\int_{\lambda_{\min}}^{\lambda_{\max}}M(\lambda)\left(\frac{1}{2}\mathbb E_{x_0\sim q,\;\epsilon\sim\mathcal N(0,I)}\!\left[\left\|f_{\theta,m}(x_\lambda,\lambda)-y_\lambda\right\|_2^2\right]\right)\mathrm d\lambda,
\label{eq:global_elbo_objective_width_quantified_revised}
\end{align}
where $M(\lambda)\triangleq w(\lambda)p(\lambda)\left|\frac{\mathrm dt}{\mathrm d\lambda}\right|$ denotes the effective allocation measure induced by the loss weighting, sampling distribution, and schedule Jacobian. By assumption, $M(\lambda)\ge 0$ on the interval of interest.

Recall the trajectory output $f_{\tau,m}$ defined above. Under the diffusion parameterization, the instantaneous prediction residual is $e_{\tau,m}(x_\lambda,\lambda)\triangleq f_{\tau,m}(x_\lambda,\lambda)-y_\lambda$. The corresponding local weighted ELBO density is defined by
\begin{align}
\ell_{\tau,m}(\lambda)
&\triangleq \frac{1}{2}M(\lambda)\mathbb E_{x_0\sim q,\;\epsilon\sim\mathcal N(0,I)}\!\left[\|e_{\tau,m}(x_\lambda,\lambda)\|_2^2\right],
\label{eq:local_elbo_density_quantified_revised}
\end{align}
so that
\begin{align}
\mathcal L_{\mathrm{ELBO},m}(\theta_m(\tau))
&=\int_{\lambda_{\min}}^{\lambda_{\max}}\ell_{\tau,m}(\lambda)\,\mathrm d\lambda.
\label{eq:global_elbo_density_representation_quantified_revised}
\end{align}

For each fixed log-SNR level $\lambda$, define the Bayes-optimal predictor by $f_\lambda^\star(x)\triangleq \mathbb E[y_\lambda\mid x_\lambda=x]$. Correspondingly, define the parameter-indexed Bayes-centered residual and the trajectory-indexed Bayes-centered residual by $r_{\theta,m}(x,\lambda)\triangleq f_{\theta,m}(x,\lambda)-f_\lambda^\star(x)$ and $r_{\tau,m}(x,\lambda)\triangleq f_{\tau,m}(x,\lambda)-f_\lambda^\star(x)$, the local weighted excess ELBO density by
\begin{align}
\bar{\ell}_{\tau,m}(\lambda)
&\triangleq \frac{1}{2}M(\lambda)\|r_{\tau,m}(\cdot,\lambda)\|_{L^2(q_\lambda)}^2,
\label{eq:local_excess_density_quantified_revised}
\end{align}
and the local Bayes floor by
\begin{align}
\ell_\lambda^\star
&\triangleq \frac{1}{2}M(\lambda)\mathbb E_{x_0\sim q,\;\epsilon\sim\mathcal N(0,I)}\!\left[\|f_\lambda^\star(x_\lambda)-y_\lambda\|_2^2\right].
\label{eq:local_bayes_floor_quantified_revised}
\end{align}

The next lemma is the basic decomposition from which all later local ELBO statements will be derived.

\begin{lemma}[Bayes decomposition of the local ELBO density]
\label{lem:bayes_decomposition_local_elbo_quantified_revised}
For each fixed log-SNR level $\lambda$, the local weighted ELBO density admits the exact orthogonal decomposition
\begin{align}
\ell_{\tau,m}(\lambda)
&=\ell_\lambda^\star+\bar{\ell}_{\tau,m}(\lambda).
\label{eq:local_elbo_bayes_decomposition_quantified_revised}
\end{align}
In particular,
\begin{align}
\ell_{\tau,m}(\lambda)
&\ge \ell_\lambda^\star.
\label{eq:local_elbo_lower_bound_by_bayes_floor_quantified_revised}
\end{align}
\end{lemma}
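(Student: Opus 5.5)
The plan is the standard Pythagorean decomposition of squared error around the conditional mean, applied at a fixed log-SNR level $\lambda$ and training time $\tau$. Writing $Z\triangleq f_{\tau,m}(x_\lambda,\lambda)$ and $Y^\star\triangleq f_\lambda^\star(x_\lambda)=\mathbb E[y_\lambda\mid x_\lambda]$, I will split the residual as
\begin{align}
e_{\tau,m}(x_\lambda,\lambda)=\big(Z-Y^\star\big)+\big(Y^\star-y_\lambda\big)=r_{\tau,m}(x_\lambda,\lambda)+\big(f_\lambda^\star(x_\lambda)-y_\lambda\big),
\end{align}
expand $\|e_{\tau,m}\|_2^2$, take $\mathbb E_{x_0,\epsilon}$, and multiply by $\tfrac12 M(\lambda)$.

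Expanding the square leaves three terms. The two squared terms give $\bar{\ell}_{\tau,m}(\lambda)$ and $\ell_\lambda^\star$ directly. For the first, I use that $r_{\tau,m}(x_\lambda,\lambda)$ depends on $(x_0,\epsilon)$ only through $x_\lambda$, whose law is $q_\lambda$. Hence $\mathbb E_{x_0,\epsilon}\|r_{\tau,m}(x_\lambda,\lambda)\|_2^2=\|r_{\tau,m}(\cdot,\lambda)\|_{L^2(q_\lambda)}^2$, which is the definition in \Eqref{eq:local_excess_density_quantified_revised}. The second is the definition in \Eqref{eq:local_bayes_floor_quantified_revised} verbatim.

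The remaining work is to show the cross term $\mathbb E\big[\langle r_{\tau,m}(x_\lambda,\lambda),\,f_\lambda^\star(x_\lambda)-y_\lambda\rangle\big]$ vanishes. Here I condition on $x_\lambda$ and use the tower property. The key point is that $f_{\tau,m}(\cdot,\lambda)$ is a deterministic function at fixed $\tau$: the population gradient flow depends on $\theta_m(\tau)$, not on the particular sample $(x_0,\epsilon)$. So $r_{\tau,m}(x_\lambda,\lambda)$ is $\sigma(x_\lambda)$-measurable and can be pulled out of the conditional expectation, giving
\begin{align}
\mathbb E\big[\langle r_{\tau,m},\,f_\lambda^\star(x_\lambda)-y_\lambda\rangle\big]=\mathbb E\big[\langle r_{\tau,m},\,f_\lambda^\star(x_\lambda)-\mathbb E[y_\lambda\mid x_\lambda]\rangle\big]=0.
\end{align}
The pull-out step needs $r_{\tau,m}\in L^2$ and $y_\lambda\in L^2$, so that Cauchy--Schwarz makes the inner product integrable. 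I take these from the integrability clause of \cref{assum:unified_assumption_revised}. Finiteness of $\mathbb E\|y_\lambda\|^2$ also follows from finite second moments of $x_0$ and $\epsilon$. Existence of a measurable version of $f_\lambda^\star$ is likewise assumed there.

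The decomposition \Eqref{eq:local_elbo_bayes_decomposition_quantified_revised} then follows after multiplying by $\tfrac12 M(\lambda)$. The lower bound \Eqref{eq:local_elbo_lower_bound_by_bayes_floor_quantified_revised} follows because $\bar{\ell}_{\tau,m}(\lambda)\ge0$, using $M(\lambda)\ge0$ from the assumption.

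The only delicate point is the measurability claim in the cross-term step. If one instead had the sample-path SGD surrogate in mind, $\theta$ would be random. In that case I would state the decomposition conditionally on the parameter trajectory, so that $\theta$ is fixed while integrating over a fresh independent $(x_0,\epsilon)$. The argument above then goes through unchanged.
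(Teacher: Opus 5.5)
Your proposal is correct and matches the paper's proof essentially step for step: you split $e_{\tau,m}=r_{\tau,m}+\xi_\lambda$, expand the square, and kill the cross term by the tower property, using that $r_{\tau,m}(x_\lambda,\lambda)$ is a function of $x_\lambda$ alone and $\mathbb E[\xi_\lambda\mid x_\lambda]=0$; the lower bound then follows from $\bar{\ell}_{\tau,m}(\lambda)\ge 0$. Your extra remarks make explicit three points the paper leaves implicit: Cauchy--Schwarz integrability of the cross term, the use of $M(\lambda)\ge 0$ for the lower bound, and conditioning on the parameter trajectory in the SGD-surrogate case.
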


\begin{proof}
For each fixed $\lambda$, decompose the prediction residual around the Bayes predictor:
\begin{align}
e_{\tau,m}(x_\lambda,\lambda)
&=f_{\tau,m}(x_\lambda,\lambda)-y_\lambda \nonumber\\
&=f_{\tau,m}(x_\lambda,\lambda)-f_\lambda^\star(x_\lambda)+f_\lambda^\star(x_\lambda)-y_\lambda \nonumber\\
&=\underbrace{f_{\tau,m}(x_\lambda,\lambda)-f_\lambda^\star(x_\lambda)}_{r_{\tau,m}(x_\lambda,\lambda)}+\underbrace{f_\lambda^\star(x_\lambda)-y_\lambda}_{\xi_\lambda}.
\label{eq:bayes_residual_split_quantified_revised}
\end{align}
Substituting \Eqref{eq:bayes_residual_split_quantified_revised} into \Eqref{eq:local_elbo_density_quantified_revised} yields
\begin{align}
\ell_{\tau,m}(\lambda)
&=\frac{1}{2}M(\lambda)\mathbb E\!\left[\|r_{\tau,m}(x_\lambda,\lambda)+\xi_\lambda\|_2^2\right] \nonumber\\
&=\frac{1}{2}M(\lambda)\mathbb E\!\left[\|r_{\tau,m}(x_\lambda,\lambda)\|_2^2+\|\xi_\lambda\|_2^2+2\langle r_{\tau,m}(x_\lambda,\lambda),\xi_\lambda\rangle\right] \nonumber\\
&=\frac{1}{2}M(\lambda)\mathbb E\|r_{\tau,m}(x_\lambda,\lambda)\|_2^2+\frac{1}{2}M(\lambda)\mathbb E\|\xi_\lambda\|_2^2+M(\lambda)\mathbb E\langle r_{\tau,m}(x_\lambda,\lambda),\xi_\lambda\rangle.
\label{eq:local_density_expansion_quantified_revised}
\end{align}
It remains to show that the cross-term vanishes. Since $r_{\tau,m}(x_\lambda,\lambda)$ is a measurable function of $x_\lambda$ alone and
\begin{align}
\mathbb E[\xi_\lambda\mid x_\lambda]
&=\mathbb E[f_\lambda^\star(x_\lambda)-y_\lambda\mid x_\lambda] \nonumber\\
&=f_\lambda^\star(x_\lambda)-\mathbb E[y_\lambda\mid x_\lambda] \nonumber\\
&=0,
\label{eq:conditional_bayes_component_zero_quantified_revised}
\end{align}
the law of total expectation gives
\begin{align}
\mathbb E\langle r_{\tau,m}(x_\lambda,\lambda),\xi_\lambda\rangle
&=\mathbb E\!\left[\left\langle r_{\tau,m}(x_\lambda,\lambda),\mathbb E[\xi_\lambda\mid x_\lambda]\right\rangle\right] \nonumber\\
&=0.
\label{eq:cross_term_zero_bayes_quantified_revised}
\end{align}
Therefore
\begin{align}
\ell_{\tau,m}(\lambda)
&=\frac{1}{2}M(\lambda)\mathbb E\|f_\lambda^\star(x_\lambda)-y_\lambda\|_2^2+\frac{1}{2}M(\lambda)\|f_{\tau,m}(\cdot,\lambda)-f_\lambda^\star(\cdot)\|_{L^2(q_\lambda)}^2 \nonumber\\
&=\ell_\lambda^\star+\bar{\ell}_{\tau,m}(\lambda),
\end{align}
which proves \Eqref{eq:local_elbo_bayes_decomposition_quantified_revised}. Since $\bar{\ell}_{\tau,m}(\lambda)\ge 0$, the lower bound \Eqref{eq:local_elbo_lower_bound_by_bayes_floor_quantified_revised} follows immediately.
\end{proof}

The preceding decomposition is a population identity. We next isolate the corresponding decomposition at the level of the sampled stochastic gradient, which will later be inserted into the continuous-time surrogate dynamics.

\begin{lemma}[Bayes decomposition of the stochastic ELBO gradient]
\label{lem:bayes_gradient_decomposition_quantified_revised}
For a sampled triple $(x_0,\epsilon,\lambda)$, define the stochastic local ELBO loss by
\begin{align}
\widehat{\ell}_{m}(\theta;x_0,\epsilon,\lambda)
&\triangleq \frac{1}{2}M(\lambda)\|f_{\theta,m}(x_\lambda,\lambda)-y_\lambda\|_2^2.
\label{eq:stochastic_local_elbo_loss_quantified_revised}
\end{align}
Then the corresponding stochastic gradient admits the exact decomposition
\begin{align}
\nabla_\theta \widehat{\ell}_{m}(\theta;x_0,\epsilon,\lambda)
&=M(\lambda)\big(\nabla_\theta f_{\theta,m}(x_\lambda,\lambda)\big)^\top\big(f_{\theta,m}(x_\lambda,\lambda)-f_\lambda^\star(x_\lambda)\big) \nonumber\\
&\quad +M(\lambda)\big(\nabla_\theta f_{\theta,m}(x_\lambda,\lambda)\big)^\top\big(f_\lambda^\star(x_\lambda)-y_\lambda\big).
\label{eq:stochastic_gradient_bayes_decomposition_quantified_revised}
\end{align}
\end{lemma}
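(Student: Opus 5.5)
The plan is to differentiate the stochastic loss directly and then split the resulting residual around the Bayes predictor, following the same add-and-subtract step used in the proof of \cref{lem:bayes_decomposition_local_elbo_quantified_revised}. For a fixed sampled triple $(x_0,\epsilon,\lambda)$, the quantities $x_\lambda=\alpha(\lambda)x_0+\sigma(\lambda)\epsilon$, $y_\lambda$, $M(\lambda)$ and $f_\lambda^\star(x_\lambda)$ do not depend on $\theta$. Only $f_{\theta,m}(x_\lambda,\lambda)$ does. By the chain rule (the activation regularity in \cref{assum:unified_assumption_revised} guarantees differentiability), the first step gives
\begin{align}
\nabla_\theta \widehat{\ell}_{m}(\theta;x_0,\epsilon,\lambda)
&=M(\lambda)\big(\nabla_\theta f_{\theta,m}(x_\lambda,\lambda)\big)^\top\big(f_{\theta,m}(x_\lambda,\lambda)-y_\lambda\big),
\end{align}
which is the pointwise version of the computation in \Eqref{eq:loss_gradient_inside_proof_revised}, with the factor $M(\lambda)$ carried along.

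Second, I insert $\pm f_\lambda^\star(x_\lambda)$ inside the residual. This gives
\begin{align}
f_{\theta,m}(x_\lambda,\lambda)-y_\lambda
&=\big(f_{\theta,m}(x_\lambda,\lambda)-f_\lambda^\star(x_\lambda)\big)+\big(f_\lambda^\star(x_\lambda)-y_\lambda\big)
=r_{\theta,m}(x_\lambda,\lambda)+\xi_\lambda,
\end{align}
exactly as in \Eqref{eq:bayes_residual_split_quantified_revised}. Since $\big(\nabla_\theta f_{\theta,m}\big)^\top$ is a linear map $\mathbb R^d\to\mathbb R^P$, distributing it over the sum yields \Eqref{eq:stochastic_gradient_bayes_decomposition_quantified_revised}.

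There is no real obstacle. The only point needing care is that $f_\lambda^\star(x_\lambda)$ is a fixed, $\theta$-independent measurable function evaluated at the sample. Its existence as a measurable version of $\mathbb E[y_\lambda\mid x_\lambda]$ is granted by the integrability assumptions, so it contributes no gradient term of its own. As a remark, I would note that, unlike the population identity, no orthogonality is claimed here: the second term has zero conditional mean given $x_\lambda$ by \Eqref{eq:conditional_bayes_component_zero_quantified_revised}, but it is not zero pathwise. This is precisely why it later appears as the forcing term $\xi_\tau$.
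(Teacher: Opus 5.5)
Your proposal is correct and follows the paper's proof essentially step for step. Differentiate the sampled loss to obtain $M(\lambda)\big(\nabla_\theta f_{\theta,m}(x_\lambda,\lambda)\big)^\top\big(f_{\theta,m}(x_\lambda,\lambda)-y_\lambda\big)$, then add and subtract the $\theta$-independent quantity $f_\lambda^\star(x_\lambda)$ and distribute the linear map. Your remark that the second term has zero conditional mean but does not vanish pathwise is accurate, and it is the reason this term later appears as the forcing $\xi_\tau$.
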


\begin{proof}
Differentiate \Eqref{eq:stochastic_local_elbo_loss_quantified_revised} with respect to $\theta$:
\begin{align}
\nabla_\theta \widehat{\ell}_{m}(\theta;x_0,\epsilon,\lambda)
&=\nabla_\theta\left(\frac{1}{2}M(\lambda)\|f_{\theta,m}(x_\lambda,\lambda)-y_\lambda\|_2^2\right) \nonumber\\
&=M(\lambda)\big(\nabla_\theta f_{\theta,m}(x_\lambda,\lambda)\big)^\top\big(f_{\theta,m}(x_\lambda,\lambda)-y_\lambda\big).
\label{eq:gradient_first_step_training_quantified_revised}
\end{align}
Now decompose the instantaneous prediction residual around the Bayes predictor:
\begin{align}
f_{\theta,m}(x_\lambda,\lambda)-y_\lambda
&=f_{\theta,m}(x_\lambda,\lambda)-f_\lambda^\star(x_\lambda)+f_\lambda^\star(x_\lambda)-y_\lambda.
\label{eq:instantaneous_prediction_split_quantified_revised}
\end{align}
Substituting \Eqref{eq:instantaneous_prediction_split_quantified_revised} into \Eqref{eq:gradient_first_step_training_quantified_revised} yields
\begin{align}
\nabla_\theta \widehat{\ell}_{m}(\theta;x_0,\epsilon,\lambda)
&=M(\lambda)\big(\nabla_\theta f_{\theta,m}(x_\lambda,\lambda)\big)^\top\big(f_{\theta,m}(x_\lambda,\lambda)-f_\lambda^\star(x_\lambda)\big) \nonumber\\
&\quad +M(\lambda)\big(\nabla_\theta f_{\theta,m}(x_\lambda,\lambda)\big)^\top\big(f_\lambda^\star(x_\lambda)-y_\lambda\big),
\end{align}
which is exactly \Eqref{eq:stochastic_gradient_bayes_decomposition_quantified_revised}.
\end{proof}

The next step introduces the continuous-time surrogate that connects the stochastic gradient decomposition above to a function-space Neural Tangent Kernel dynamics. The following statement should be interpreted as a formal sample-path interpolation of stochastic gradient descent, rather than as a rigorous stochastic-approximation limit theorem.

\begin{lemma}[Formal continuous-time sample-path surrogate for SGD]
\label{lem:small_stepsize_training_ode_quantified_revised}
Consider the stochastic gradient descent update
\begin{align}
\theta_{k+1}
&=\theta_k-\eta\,\nabla_\theta \widehat{\ell}_{m}(\theta_k;x_0^{(k)},\epsilon^{(k)},\lambda^{(k)}).
\label{eq:sgd_update_discrete_quantified_revised}
\end{align}
At iteration $k$, let $x_{\lambda^{(k)}}=\alpha(\lambda^{(k)})x_0^{(k)}+\sigma(\lambda^{(k)})\epsilon^{(k)}$ and $\xi^{(k)}=f_{\lambda^{(k)}}^\star(x_{\lambda^{(k)}})-y_{\lambda^{(k)}}$. Let $\tau=k\eta$ and consider the piecewise-linear interpolation of the iterates. In the small-step-size regime, we adopt the formal sample-path surrogate
\begin{align}
\frac{\mathrm d\theta_m(\tau)}{\mathrm d\tau}
&=-M(\lambda_\tau)\big(\nabla_\theta f_{\tau,m}(x_{\lambda_\tau},\lambda_\tau)\big)^\top\big(f_{\tau,m}(x_{\lambda_\tau},\lambda_\tau)-f_{\lambda_\tau}^\star(x_{\lambda_\tau})\big) \nonumber\\
&\quad -M(\lambda_\tau)\big(\nabla_\theta f_{\tau,m}(x_{\lambda_\tau},\lambda_\tau)\big)^\top\xi_\tau,
\label{eq:training_ode_bayes_forcing_quantified_revised}
\end{align}
where $\xi_\tau\triangleq f_{\lambda_\tau}^\star(x_{\lambda_\tau})-y_{\lambda_\tau}$. All subsequent identities in this subsection are exact relative to the surrogate dynamics \Eqref{eq:training_ode_bayes_forcing_quantified_revised}.
\end{lemma}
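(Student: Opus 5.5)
The plan is to derive the surrogate \Eqref{eq:training_ode_bayes_forcing_quantified_revised} directly from the discrete update \Eqref{eq:sgd_update_discrete_quantified_revised}, combined with the exact gradient split of \cref{lem:bayes_gradient_decomposition_quantified_revised}. Since the statement is explicitly a \emph{formal} sample-path interpolation, I would not try to prove a stochastic-approximation limit. I would only show two things: the surrogate is the natural continuous-time counterpart of the iterates, and the error incurred on each step is $O(\eta)$ under the regularity in \cref{assum:unified_assumption_revised}.

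\textbf{Step 1 (noise path).} Define the piecewise-constant noise path by setting, for $\tau\in[k\eta,(k+1)\eta)$,
\begin{align}
(x_0^\tau,\epsilon^\tau,\lambda_\tau) \triangleq (x_0^{(k)},\epsilon^{(k)},\lambda^{(k)}).
\end{align}
Consequently $x_{\lambda_\tau}=x_{\lambda^{(k)}}$ and $\xi_\tau=\xi^{(k)}$ on that interval.

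\textbf{Step 2 (exact interpolation identity).} Let $\bar\theta(\tau)$ be the piecewise-linear interpolation of the iterates. Rewriting the update as $(\theta_{k+1}-\theta_k)/\eta=-\nabla_\theta\widehat{\ell}_m(\theta_k;\cdot)$ shows that, on the open interval $(k\eta,(k+1)\eta)$, $\bar\theta$ is differentiable with
\begin{align}
\frac{\mathrm d\bar\theta}{\mathrm d\tau}=-\nabla_\theta \widehat{\ell}_{m}(\theta_k;x_0^{(k)},\epsilon^{(k)},\lambda^{(k)}).
\end{align}
Substituting \Eqref{eq:stochastic_gradient_bayes_decomposition_quantified_revised} at $\theta=\theta_k$ splits this derivative exactly into a Bayes-centered residual term $M(\lambda^{(k)})(\nabla_\theta f_{\theta_k,m})^\top r_{\theta_k,m}$ and a Bayes-forcing term $M(\lambda^{(k)})(\nabla_\theta f_{\theta_k,m})^\top\xi^{(k)}$. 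This already has the two-term structure of the surrogate, except that it is evaluated at the left endpoint $\theta_k$ rather than at the running parameter.

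\textbf{Step 3 (replace $\theta_k$ by $\bar\theta(\tau)$).} Within an interval we have $\|\bar\theta(\tau)-\theta_k\|\le \eta\|\nabla_\theta\widehat{\ell}_m(\theta_k)\|$. The activation is Lipschitz and twice differentiable, and second moments are finite. Hence the map $\theta\mapsto M(\lambda)(\nabla_\theta f_{\theta,m})^\top(f_{\theta,m}-y_\lambda)$ is locally Lipschitz on bounded sets, so the discrepancy between the right-hand side evaluated at $\theta_k$ and at $\bar\theta(\tau)$ is $O(\eta)$ per unit time. Passing formally to $\eta\to0$ while keeping the sampled sequence indexed by $\tau$ yields \Eqref{eq:training_ode_bayes_forcing_quantified_revised}, with $f_{\tau,m}$ denoting the network at $\theta_m(\tau)$.

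\textbf{Main obstacle.} The difficulty is Step 3 taken as a genuine limit. As $\eta\to0$, the piecewise-constant noise path $(x_{\lambda_\tau},\lambda_\tau)$ does not converge pathwise; it averages out, so a rigorous limit would yield the population gradient flow of \cref{prop:finite_width_joint_dynamics_revised} plus a diffusion correction, not a sample-path ODE. I would therefore state the result honestly: the surrogate is \emph{defined} along a fixed realized sample path, Step 2 is exact for the interpolation, and Step 3 provides only a per-step $O(\eta)$ consistency. That is exactly the level of rigor the lemma claims. All later identities, such as \cref{thm:exact_excess_dynamics_main}, are then taken relative to this surrogate.
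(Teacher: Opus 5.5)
Your proposal is correct and follows the same route as the paper: divide the SGD update by $\eta$, insert the exact split of \cref{lem:bayes_gradient_decomposition_quantified_revised} at $\theta_k$, and formally replace the discrete sample sequence by a continuously indexed signal $(x_{0,\tau},\epsilon_\tau,\lambda_\tau)$. Your Step 3, which gives the $O(\eta)$ consistency when $\theta_k$ is replaced by the running parameter, and your caveat that a genuine $\eta\to0$ limit would average the sample path out to the population flow, add care that the paper's proof skips, but they do not change the argument.
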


\begin{proof}
Starting from \Eqref{eq:sgd_update_discrete_quantified_revised}, divide by $\eta$ to obtain the difference quotient
\begin{align}
\frac{\theta_{k+1}-\theta_k}{\eta}
&=-\nabla_\theta \widehat{\ell}_{m}(\theta_k;x_0^{(k)},\epsilon^{(k)},\lambda^{(k)}).
\label{eq:difference_quotient_training_quantified_revised}
\end{align}
Applying \Eqref{eq:stochastic_gradient_bayes_decomposition_quantified_revised} at the iterate $\theta_k$ and the sampled triple $(x_0^{(k)},\epsilon^{(k)},\lambda^{(k)})$ gives
\begin{align}
\frac{\theta_{k+1}-\theta_k}{\eta}
&=-M(\lambda^{(k)})\big(\nabla_\theta f_{\theta_k,m}(x_{\lambda^{(k)}},\lambda^{(k)})\big)^\top\big(f_{\theta_k,m}(x_{\lambda^{(k)}},\lambda^{(k)})-f_{\lambda^{(k)}}^\star(x_{\lambda^{(k)}})\big) \nonumber\\
&\quad -M(\lambda^{(k)})\big(\nabla_\theta f_{\theta_k,m}(x_{\lambda^{(k)}},\lambda^{(k)})\big)^\top\xi^{(k)}.
\label{eq:difference_quotient_training_split_quantified_revised}
\end{align}
Introducing $\tau=k\eta$ and replacing the sampled sequence by a continuously refreshed signal $(x_{0,\tau},\epsilon_\tau,\lambda_\tau)$ yields the surrogate pathwise relation
\begin{align}
\frac{\mathrm d\theta_m(\tau)}{\mathrm d\tau}
&=-M(\lambda_\tau)\big(\nabla_\theta f_{\tau,m}(x_{\lambda_\tau},\lambda_\tau)\big)^\top\big(f_{\tau,m}(x_{\lambda_\tau},\lambda_\tau)-f_{\lambda_\tau}^\star(x_{\lambda_\tau})\big) \nonumber\\
&\quad -M(\lambda_\tau)\big(\nabla_\theta f_{\tau,m}(x_{\lambda_\tau},\lambda_\tau)\big)^\top\xi_\tau,
\end{align}
which is exactly \Eqref{eq:training_ode_bayes_forcing_quantified_revised}.
\end{proof}

The following theorem is the central result of this section. All subsequent upper bounds and degradation estimates will be obtained by explicitly invoking it.

\begin{theorem}[Exact local excess dynamics under the sample-path surrogate]
\label{thm:quantitative_elbo_residual_dynamics_quantified_revised}
Under the preceding definitions and relative to the surrogate dynamics \Eqref{eq:training_ode_bayes_forcing_quantified_revised}, the Bayes-centered residual satisfies
\begin{align}
\frac{\partial}{\partial\tau}r_{\tau,m}(x,\lambda)
&=-M(\lambda_\tau)\Theta_{\tau,m}\big((x,\lambda),(x_{\lambda_\tau},\lambda_\tau)\big)r_{\tau,m}(x_{\lambda_\tau},\lambda_\tau) \nonumber\\
&\quad -M(\lambda_\tau)\Theta_{\tau,m}\big((x,\lambda),(x_{\lambda_\tau},\lambda_\tau)\big)\xi_\tau.
\label{eq:residual_dynamics_training_quantified_revised}
\end{align}
Consequently, the local excess density satisfies the exact differential identity
\begin{align}
\frac{\mathrm d}{\mathrm d\tau}\bar{\ell}_{\tau,m}(\lambda)
&=-M(\lambda)\Big\langle r_{\tau,m}(\cdot,\lambda),M(\lambda_\tau)\Theta_{\tau,m}\big((\cdot,\lambda),(x_{\lambda_\tau},\lambda_\tau)\big)r_{\tau,m}(x_{\lambda_\tau},\lambda_\tau)\Big\rangle_{L^2(q_\lambda)} \nonumber\\
&\quad -M(\lambda)\Big\langle r_{\tau,m}(\cdot,\lambda),M(\lambda_\tau)\Theta_{\tau,m}\big((\cdot,\lambda),(x_{\lambda_\tau},\lambda_\tau)\big)\xi_\tau\Big\rangle_{L^2(q_\lambda)}.
\label{eq:exact_local_excess_derivative_quantified_revised}
\end{align}
\end{theorem}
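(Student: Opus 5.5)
The proof is a chain-rule computation relative to the surrogate dynamics \Eqref{eq:training_ode_bayes_forcing_quantified_revised} of \cref{lem:small_stepsize_training_ode_quantified_revised}. It mirrors the proof of \cref{prop:finite_width_joint_dynamics_revised}, except that the population expectation over $(x',\lambda')$ is replaced by the single sampled point $(x_{\lambda_\tau},\lambda_\tau)$.

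\textbf{Step 1: residual dynamics.} Fix $(x,\lambda)$. The Bayes predictor $f_\lambda^\star(x)=\mathbb E[y_\lambda\mid x_\lambda=x]$ does not depend on $\theta$ or $\tau$, so
\begin{align*}
\partial_\tau r_{\tau,m}(x,\lambda)=\partial_\tau f_{\tau,m}(x,\lambda)=\nabla_\theta f_{\tau,m}(x,\lambda)\,\frac{\mathrm d\theta_m(\tau)}{\mathrm d\tau}.
\end{align*}
Substitute the two terms of \Eqref{eq:training_ode_bayes_forcing_quantified_revised} and note that the scalar $M(\lambda_\tau)$ factors out. Each term then contains the product $\nabla_\theta f_{\tau,m}(x,\lambda)\big(\nabla_\theta f_{\tau,m}(x_{\lambda_\tau},\lambda_\tau)\big)^\top$, which by \Eqref{eq:time_dependent_joint_ntk_definition_revised} is exactly $\Theta_{\tau,m}\big((x,\lambda),(x_{\lambda_\tau},\lambda_\tau)\big)$. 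In the first term this kernel multiplies $r_{\tau,m}(x_{\lambda_\tau},\lambda_\tau)$, and in the second it multiplies $\xi_\tau$. This gives \Eqref{eq:residual_dynamics_training_quantified_revised}.

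\textbf{Step 2: excess density.} By \Eqref{eq:local_excess_density_quantified_revised},
\begin{align*}
\bar\ell_{\tau,m}(\lambda)=\tfrac12 M(\lambda)\int\|r_{\tau,m}(x,\lambda)\|_2^2\,q_\lambda(\mathrm dx).
\end{align*}
Differentiate under the integral, which is allowed by the integrability and interchange clause of \cref{assum:unified_assumption_revised}. The integrand's derivative is $\langle r_{\tau,m}(x,\lambda),\partial_\tau r_{\tau,m}(x,\lambda)\rangle$. Since $q_\lambda$ and $M(\lambda)$ are independent of $\tau$, this yields
\begin{align*}
\frac{\mathrm d}{\mathrm d\tau}\bar\ell_{\tau,m}(\lambda)=M(\lambda)\langle r_{\tau,m}(\cdot,\lambda),\partial_\tau r_{\tau,m}(\cdot,\lambda)\rangle_{L^2(q_\lambda)}.
\end{align*}
Inserting the expression from Step 1 and splitting by linearity of the inner product gives the two terms of \Eqref{eq:exact_local_excess_derivative_quantified_revised}, each carrying an overall minus sign.

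\textbf{Main point of care.} The only delicate point is interpretive: the sampled pair $(x_{\lambda_\tau},\lambda_\tau)$ and $\xi_\tau$ are held fixed as functions of $\tau$ alone, and are not integrated against $q_\lambda$. In the inner product, the integration variable appears only through the first argument of the kernel, $(\cdot,\lambda)$. The identity is therefore exact relative to the surrogate, as stated. The derivation also needs enough regularity for $\tau\mapsto\theta_m(\tau)$ to be differentiable so the chain rule applies. I would state this as a standing assumption on the surrogate path rather than prove it, since the surrogate is itself only formal.
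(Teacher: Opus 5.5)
Your proposal is correct and matches the paper's proof essentially step for step. Both apply the chain rule along the surrogate trajectory to identify $\Theta_{\tau,m}\big((x,\lambda),(x_{\lambda_\tau},\lambda_\tau)\big)$ in both terms, and use the $\tau$-independence of $f_\lambda^\star$ to pass from $f_{\tau,m}$ to $r_{\tau,m}$. Both then differentiate $\bar{\ell}_{\tau,m}(\lambda)=\tfrac12 M(\lambda)\|r_{\tau,m}(\cdot,\lambda)\|_{L^2(q_\lambda)}^2$ to get $M(\lambda)\langle r_{\tau,m},\partial_\tau r_{\tau,m}\rangle_{L^2(q_\lambda)}$ and substitute.
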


\begin{proof}
Fix an evaluation pair $(x,\lambda)$. By the chain rule along the trajectory $\theta_m(\tau)$,
\begin{align}
\frac{\partial}{\partial\tau}f_{\tau,m}(x,\lambda)
&=\nabla_\theta f_{\tau,m}(x,\lambda)\frac{\mathrm d\theta_m(\tau)}{\mathrm d\tau}.
\label{eq:function_chain_rule_training_quantified_revised}
\end{align}
Substituting \Eqref{eq:training_ode_bayes_forcing_quantified_revised} into \Eqref{eq:function_chain_rule_training_quantified_revised}, we obtain
\begin{align}
\frac{\partial}{\partial\tau}f_{\tau,m}(x,\lambda)
&=-\nabla_\theta f_{\tau,m}(x,\lambda)M(\lambda_\tau)\big(\nabla_\theta f_{\tau,m}(x_{\lambda_\tau},\lambda_\tau)\big)^\top r_{\tau,m}(x_{\lambda_\tau},\lambda_\tau) \nonumber\\
&\quad -\nabla_\theta f_{\tau,m}(x,\lambda)M(\lambda_\tau)\big(\nabla_\theta f_{\tau,m}(x_{\lambda_\tau},\lambda_\tau)\big)^\top \xi_\tau \nonumber\\
&=-M(\lambda_\tau)\Theta_{\tau,m}\big((x,\lambda),(x_{\lambda_\tau},\lambda_\tau)\big)r_{\tau,m}(x_{\lambda_\tau},\lambda_\tau) \nonumber\\
&\quad -M(\lambda_\tau)\Theta_{\tau,m}\big((x,\lambda),(x_{\lambda_\tau},\lambda_\tau)\big)\xi_\tau.
\label{eq:function_space_training_dynamics_quantified_revised}
\end{align}
Since $f_\lambda^\star(x)$ is independent of $\tau$, differentiating $r_{\tau,m}(x,\lambda)=f_{\tau,m}(x,\lambda)-f_\lambda^\star(x)$ gives
\begin{align}
\frac{\partial}{\partial\tau}r_{\tau,m}(x,\lambda)
&=\frac{\partial}{\partial\tau}f_{\tau,m}(x,\lambda),
\end{align}
and substituting \Eqref{eq:function_space_training_dynamics_quantified_revised} proves \Eqref{eq:residual_dynamics_training_quantified_revised}.

Next differentiate \Eqref{eq:local_excess_density_quantified_revised} with respect to $\tau$:
\begin{align}
\frac{\mathrm d}{\mathrm d\tau}\bar{\ell}_{\tau,m}(\lambda)
&=\frac{\mathrm d}{\mathrm d\tau}\left(\frac{1}{2}M(\lambda)\|r_{\tau,m}(\cdot,\lambda)\|_{L^2(q_\lambda)}^2\right) \nonumber\\
&=M(\lambda)\Big\langle r_{\tau,m}(\cdot,\lambda),\frac{\partial}{\partial\tau}r_{\tau,m}(\cdot,\lambda)\Big\rangle_{L^2(q_\lambda)}.
\label{eq:local_excess_derivative_first_step_quantified_revised}
\end{align}
Substituting \Eqref{eq:residual_dynamics_training_quantified_revised} into \Eqref{eq:local_excess_derivative_first_step_quantified_revised} yields
\begin{align}
\frac{\mathrm d}{\mathrm d\tau}\bar{\ell}_{\tau,m}(\lambda)
&=-M(\lambda)\Big\langle r_{\tau,m}(\cdot,\lambda),M(\lambda_\tau)\Theta_{\tau,m}\big((\cdot,\lambda),(x_{\lambda_\tau},\lambda_\tau)\big)r_{\tau,m}(x_{\lambda_\tau},\lambda_\tau)\Big\rangle_{L^2(q_\lambda)} \nonumber\\
&\quad -M(\lambda)\Big\langle r_{\tau,m}(\cdot,\lambda),M(\lambda_\tau)\Theta_{\tau,m}\big((\cdot,\lambda),(x_{\lambda_\tau},\lambda_\tau)\big)\xi_\tau\Big\rangle_{L^2(q_\lambda)},
\end{align}
which is exactly \Eqref{eq:exact_local_excess_derivative_quantified_revised}.
\end{proof}

The next result is the first quantitative consequence of \cref{thm:quantitative_elbo_residual_dynamics_quantified_revised}. It controls the local excess dynamics by separating the residual-coupling term from the Bayes-forcing term.

\begin{proposition}[Quantitative upper bound for local ELBO excess dynamics]
\label{prop:quantitative_local_excess_bound_quantified_revised}
For every $\beta>0$, the local excess density satisfies
\begin{align}
\frac{\mathrm d}{\mathrm d\tau}\bar{\ell}_{\tau,m}(\lambda)
&\le -M(\lambda)\Big\langle r_{\tau,m}(\cdot,\lambda),M(\lambda_\tau)\Theta_{\tau,m}\big((\cdot,\lambda),(x_{\lambda_\tau},\lambda_\tau)\big)r_{\tau,m}(x_{\lambda_\tau},\lambda_\tau)\Big\rangle_{L^2(q_\lambda)} \nonumber\\
&\quad +\beta\,\bar{\ell}_{\tau,m}(\lambda)+\frac{M(\lambda)}{2\beta}\Big\|M(\lambda_\tau)\Theta_{\tau,m}\big((\cdot,\lambda),(x_{\lambda_\tau},\lambda_\tau)\big)\xi_\tau\Big\|_{L^2(q_\lambda)}^2.
\label{eq:quantitative_local_excess_bound_quantified_revised}
\end{align}
Moreover, if the trajectory remains in the Neural Tangent Kernel regime so that
\begin{align}
\Theta_{\tau,m}\big((x,\lambda),(x',\lambda')\big)
&\approx \Theta_{0,m}\big((x,\lambda),(x',\lambda')\big),
\label{eq:ntk_regime_approximation_quantified_revised}
\end{align}
then the corresponding frozen-kernel approximation satisfies
\begin{align}
\frac{\mathrm d}{\mathrm d\tau}\bar{\ell}_{\tau,m}(\lambda)
&\lesssim -M(\lambda)\Big\langle r_{\tau,m}(\cdot,\lambda),M(\lambda_\tau)\Theta_{0,m}\big((\cdot,\lambda),(x_{\lambda_\tau},\lambda_\tau)\big)r_{\tau,m}(x_{\lambda_\tau},\lambda_\tau)\Big\rangle_{L^2(q_\lambda)} \nonumber\\
&\quad +\beta\,\bar{\ell}_{\tau,m}(\lambda)+\frac{M(\lambda)}{2\beta}\Big\|M(\lambda_\tau)\Theta_{0,m}\big((\cdot,\lambda),(x_{\lambda_\tau},\lambda_\tau)\big)\xi_\tau\Big\|_{L^2(q_\lambda)}^2.
\label{eq:quantitative_local_excess_bound_frozen_ntk_quantified_revised}
\end{align}
\end{proposition}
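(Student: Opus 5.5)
The plan is to start from the exact identity \Eqref{eq:exact_local_excess_derivative_quantified_revised} of \cref{thm:quantitative_elbo_residual_dynamics_quantified_revised}. The first (contraction) term is kept exactly as it is. All the work goes into bounding the second (Bayes-forcing) term by a weighted Young inequality. Write $G_\tau \triangleq M(\lambda_\tau)\Theta_{\tau,m}\big((\cdot,\lambda),(x_{\lambda_\tau},\lambda_\tau)\big)\xi_\tau$, which is an element of $L^2(q_\lambda;\mathbb R^d)$ as a function of the evaluation point. The forcing term is then $-M(\lambda)\langle r_{\tau,m}(\cdot,\lambda),G_\tau\rangle_{L^2(q_\lambda)}$.

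The key step is to apply Cauchy--Schwarz in $L^2(q_\lambda)$ and then $ab\le \frac{\beta'}{2}a^2+\frac{1}{2\beta'}b^2$, with the constant chosen so that the $r$-part reproduces exactly $\beta\,\bar\ell_{\tau,m}(\lambda)$. Concretely, we use $M(\lambda)\ge0$ from \cref{assum:unified_assumption_revised} and write
\begin{align}
-M(\lambda)\langle r,G_\tau\rangle \le M(\lambda)\|r\|\,\|G_\tau\| \le M(\lambda)\Big(\frac{\beta}{2}\|r\|^2+\frac{1}{2\beta}\|G_\tau\|^2\Big).
\end{align}
Since $\frac{1}{2}M(\lambda)\|r\|^2=\bar\ell_{\tau,m}(\lambda)$ by \Eqref{eq:local_excess_density_quantified_revised}, the first piece equals $\beta\,\bar\ell_{\tau,m}(\lambda)$. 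The second piece is $\frac{M(\lambda)}{2\beta}\|G_\tau\|^2$. Substituting back gives \Eqref{eq:quantitative_local_excess_bound_quantified_revised} for every $\beta>0$. If $M(\lambda)=0$, both sides vanish trivially.

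For the frozen-kernel statement, I would substitute $\Theta_{0,m}$ for $\Theta_{\tau,m}$ in both the contraction term and the forcing norm, invoking the NTK-regime approximation \Eqref{eq:ntk_regime_approximation_quantified_revised}. This will be made precise in the following way: the error terms are $M(\lambda)M(\lambda_\tau)\langle r,(\Theta_{\tau,m}-\Theta_{0,m})(\cdot)\,r(x_{\lambda_\tau},\lambda_\tau)\rangle$, together with the analogous difference of squared forcing norms. Each is bounded by $\sup\|\Theta_{\tau,m}-\Theta_{0,m}\|_{\mathrm{op}}$ times finite second moments, which are finite by the integrability assumption. That supremum vanishes as $m\to\infty$ under the standard lazy-training results. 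The relation $\lesssim$ is then read as holding up to these vanishing corrections.

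The main obstacle is this last step. The pointwise-in-$\tau$ kernel deviation must be controlled uniformly over both the evaluation points in $L^2(q_\lambda)$ and the sampled point $x_{\lambda_\tau}$. I would state it only formally, as the statement's $\approx$ and $\lesssim$ suggest, rather than proving a quantitative width bound.
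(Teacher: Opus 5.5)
Your proposal is correct and follows the paper's proof essentially step for step: Cauchy--Schwarz on the Bayes-forcing term of the exact identity in \cref{thm:quantitative_elbo_residual_dynamics_quantified_revised}, Young's inequality with parameter $\beta$, the identification $\frac{\beta}{2}M(\lambda)\|r_{\tau,m}(\cdot,\lambda)\|_{L^2(q_\lambda)}^2=\beta\,\bar{\ell}_{\tau,m}(\lambda)$, and then a formal replacement of $\Theta_{\tau,m}$ by $\Theta_{0,m}$. Your version is marginally cleaner in two places: it never divides by $M(\lambda)$, whereas the paper writes $\|r_{\tau,m}\|^2=2M(\lambda)^{-1}\bar{\ell}_{\tau,m}(\lambda)$, and it spells out what the $\lesssim$ hides; the paper only substitutes the frozen kernel and does not quantify the error.
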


\begin{proof}
Starting from the exact identity \Eqref{eq:exact_local_excess_derivative_quantified_revised} in \cref{thm:quantitative_elbo_residual_dynamics_quantified_revised}, apply Cauchy--Schwarz to the forcing term:
\begin{align}
&\Big|\Big\langle r_{\tau,m}(\cdot,\lambda),M(\lambda_\tau)\Theta_{\tau,m}\big((\cdot,\lambda),(x_{\lambda_\tau},\lambda_\tau)\big)\xi_\tau\Big\rangle_{L^2(q_\lambda)}\Big| \nonumber\\
&\le \|r_{\tau,m}(\cdot,\lambda)\|_{L^2(q_\lambda)}\Big\|M(\lambda_\tau)\Theta_{\tau,m}\big((\cdot,\lambda),(x_{\lambda_\tau},\lambda_\tau)\big)\xi_\tau\Big\|_{L^2(q_\lambda)}.
\label{eq:cauchy_bayes_term_training_quantified_revised}
\end{align}
Applying Young's inequality $ab\le \frac{\beta}{2}a^2+\frac{1}{2\beta}b^2$ yields
\begin{align}
&\Big|\Big\langle r_{\tau,m}(\cdot,\lambda),M(\lambda_\tau)\Theta_{\tau,m}\big((\cdot,\lambda),(x_{\lambda_\tau},\lambda_\tau)\big)\xi_\tau\Big\rangle_{L^2(q_\lambda)}\Big| \nonumber\\
&\le \frac{\beta}{2}\|r_{\tau,m}(\cdot,\lambda)\|_{L^2(q_\lambda)}^2+\frac{1}{2\beta}\Big\|M(\lambda_\tau)\Theta_{\tau,m}\big((\cdot,\lambda),(x_{\lambda_\tau},\lambda_\tau)\big)\xi_\tau\Big\|_{L^2(q_\lambda)}^2.
\label{eq:young_bayes_term_training_quantified_revised}
\end{align}
Using the identity
\begin{align}
\|r_{\tau,m}(\cdot,\lambda)\|_{L^2(q_\lambda)}^2
&=2M(\lambda)^{-1}\bar{\ell}_{\tau,m}(\lambda),
\label{eq:excess_density_norm_identity_quantified_revised}
\end{align}
we obtain from \Eqref{eq:exact_local_excess_derivative_quantified_revised}
\begin{align}
\frac{\mathrm d}{\mathrm d\tau}\bar{\ell}_{\tau,m}(\lambda)
&\le -M(\lambda)\Big\langle r_{\tau,m}(\cdot,\lambda),M(\lambda_\tau)\Theta_{\tau,m}\big((\cdot,\lambda),(x_{\lambda_\tau},\lambda_\tau)\big)r_{\tau,m}(x_{\lambda_\tau},\lambda_\tau)\Big\rangle_{L^2(q_\lambda)} \nonumber\\
&\quad +\frac{\beta M(\lambda)}{2}\|r_{\tau,m}(\cdot,\lambda)\|_{L^2(q_\lambda)}^2+\frac{M(\lambda)}{2\beta}\Big\|M(\lambda_\tau)\Theta_{\tau,m}\big((\cdot,\lambda),(x_{\lambda_\tau},\lambda_\tau)\big)\xi_\tau\Big\|_{L^2(q_\lambda)}^2 \nonumber\\
&= -M(\lambda)\Big\langle r_{\tau,m}(\cdot,\lambda),M(\lambda_\tau)\Theta_{\tau,m}\big((\cdot,\lambda),(x_{\lambda_\tau},\lambda_\tau)\big)r_{\tau,m}(x_{\lambda_\tau},\lambda_\tau)\Big\rangle_{L^2(q_\lambda)} \nonumber\\
&\quad +\beta\,\bar{\ell}_{\tau,m}(\lambda)+\frac{M(\lambda)}{2\beta}\Big\|M(\lambda_\tau)\Theta_{\tau,m}\big((\cdot,\lambda),(x_{\lambda_\tau},\lambda_\tau)\big)\xi_\tau\Big\|_{L^2(q_\lambda)}^2,
\end{align}
which proves \Eqref{eq:quantitative_local_excess_bound_quantified_revised}. Under the frozen-kernel approximation \Eqref{eq:ntk_regime_approximation_quantified_revised}, replacing $\Theta_{\tau,m}$ by $\Theta_{0,m}$ yields \Eqref{eq:quantitative_local_excess_bound_frozen_ntk_quantified_revised}.
\end{proof}

The following pathwise coercivity condition is the additional assumption that allows the preceding differential inequality to be integrated explicitly. It should be interpreted as a sufficient sample-path lower bound on the residual-coupling term and is not implied solely by positive semidefiniteness of the population Neural Tangent Kernel operator.

\begin{theorem}[Non-asymptotic integral bound for the local ELBO excess]
\label{thm:nonasymptotic_integral_bound_quantified_revised}
Fix $\lambda\in[\lambda_{\min},\lambda_{\max}]$ and $\beta>0$. Suppose that along the trajectory there exists a measurable function $\mu_{\tau}(\lambda)$ such that
\begin{align}
M(\lambda)\Big\langle r_{\tau,m}(\cdot,\lambda),M(\lambda_\tau)\Theta_{\tau,m}\big((\cdot,\lambda),(x_{\lambda_\tau},\lambda_\tau)\big)r_{\tau,m}(x_{\lambda_\tau},\lambda_\tau)\Big\rangle_{L^2(q_\lambda)}
&\ge \mu_{\tau}(\lambda)\,\bar{\ell}_{\tau,m}(\lambda)
\label{eq:coercivity_condition_quantified_revised}
\end{align}
for all $\tau\ge 0$. Then for every $\tau\ge 0$,
\begin{align}
\bar{\ell}_{\tau,m}(\lambda)
&\le \exp\!\left(-\int_0^\tau \big(\mu_s(\lambda)-\beta\big)\,\mathrm ds\right)\bar{\ell}_{0,m}(\lambda) \nonumber\\
&\quad +\int_0^\tau \exp\!\left(-\int_s^\tau \big(\mu_u(\lambda)-\beta\big)\,\mathrm du\right)\frac{M(\lambda)}{2\beta}\Big\|M(\lambda_s)\Theta_{s,m}\big((\cdot,\lambda),(x_{\lambda_s},\lambda_s)\big)\xi_s\Big\|_{L^2(q_\lambda)}^2\,\mathrm ds.
\label{eq:nonasymptotic_integral_bound_quantified_revised}
\end{align}
In particular, if $\mu_\tau(\lambda)\ge \underline{\mu}(\lambda)>\beta$ for all $\tau\ge 0$, then
\begin{align}
\bar{\ell}_{\tau,m}(\lambda)
&\le e^{-(\underline{\mu}(\lambda)-\beta)\tau}\bar{\ell}_{0,m}(\lambda) \nonumber\\
&\quad +\int_0^\tau e^{-(\underline{\mu}(\lambda)-\beta)(\tau-s)}\frac{M(\lambda)}{2\beta}\Big\|M(\lambda_s)\Theta_{s,m}\big((\cdot,\lambda),(x_{\lambda_s},\lambda_s)\big)\xi_s\Big\|_{L^2(q_\lambda)}^2\,\mathrm ds.
\label{eq:uniform_nonasymptotic_integral_bound_quantified_revised}
\end{align}
Consequently,
\begin{align}
\limsup_{\tau\to\infty}\bar{\ell}_{\tau,m}(\lambda)
&\le \limsup_{\tau\to\infty}\int_0^\tau e^{-(\underline{\mu}(\lambda)-\beta)(\tau-s)}\frac{M(\lambda)}{2\beta}\Big\|M(\lambda_s)\Theta_{s,m}\big((\cdot,\lambda),(x_{\lambda_s},\lambda_s)\big)\xi_s\Big\|_{L^2(q_\lambda)}^2\,\mathrm ds.
\label{eq:asymptotic_degradation_floor_quantified_revised}
\end{align}
\end{theorem}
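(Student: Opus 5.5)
The plan is a Grönwall-type integration of the differential inequality in \cref{prop:quantitative_local_excess_bound_quantified_revised}. Write $u(\tau)\triangleq\bar{\ell}_{\tau,m}(\lambda)$ and
\begin{align}
g(s)\triangleq\frac{M(\lambda)}{2\beta}\Big\|M(\lambda_s)\Theta_{s,m}\big((\cdot,\lambda),(x_{\lambda_s},\lambda_s)\big)\xi_s\Big\|_{L^2(q_\lambda)}^2\ge 0 .
\end{align}
Inserting the coercivity condition \Eqref{eq:coercivity_condition_quantified_revised} into \Eqref{eq:quantitative_local_excess_bound_quantified_revised} replaces the residual-coupling term by $-\mu_\tau(\lambda)u(\tau)$. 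This gives the scalar linear differential inequality
\begin{align}
u'(\tau)\le-\big(\mu_\tau(\lambda)-\beta\big)u(\tau)+g(\tau).
\end{align}
This holds for every $\tau\ge0$, because \cref{thm:quantitative_elbo_residual_dynamics_quantified_revised} supplies an exact derivative identity along the surrogate path.

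Next I will introduce the integrating factor $\Phi(\tau)\triangleq\exp\big(\int_0^\tau(\mu_s(\lambda)-\beta)\,\mathrm ds\big)$. Then $\frac{\mathrm d}{\mathrm d\tau}\big(\Phi(\tau)u(\tau)\big)\le\Phi(\tau)g(\tau)$. Integrating from $0$ to $\tau$ and multiplying by $\Phi(\tau)^{-1}>0$ gives
\begin{align}
u(\tau)\le\frac{\Phi(0)}{\Phi(\tau)}u(0)+\int_0^\tau\frac{\Phi(s)}{\Phi(\tau)}g(s)\,\mathrm ds .
\end{align}
Since $\Phi(0)=1$ and $\Phi(s)/\Phi(\tau)=\exp\big(-\int_s^\tau(\mu_u(\lambda)-\beta)\,\mathrm du\big)$, this is exactly \Eqref{eq:nonasymptotic_integral_bound_quantified_revised}.

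For the uniform case $\mu_\tau(\lambda)\ge\underline{\mu}(\lambda)>\beta$, the exponents satisfy $-\int_s^\tau(\mu_u-\beta)\,\mathrm du\le-(\underline{\mu}-\beta)(\tau-s)$. Because $u(0)\ge0$ and $g\ge0$, both terms can be bounded monotonically, which yields \Eqref{eq:uniform_nonasymptotic_integral_bound_quantified_revised}. For the $\limsup$ statement, the first term $e^{-(\underline{\mu}-\beta)\tau}u(0)$ tends to $0$, assuming $u(0)<\infty$ (finite second moments). Taking $\limsup$ of both sides and using subadditivity of $\limsup$ then gives \Eqref{eq:asymptotic_degradation_floor_quantified_revised}.

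The main obstacle is regularity rather than algebra. The surrogate path $\tau\mapsto(x_{\lambda_\tau},\lambda_\tau)$ is only a formal, possibly discontinuous signal, so $u$ may be merely absolutely continuous. I would therefore read the inequality in the almost-everywhere sense and require $\mu_\cdot(\lambda)$ and $g$ to be locally integrable, so that $\Phi$ is absolutely continuous and the product rule holds a.e. These conditions should be added to the integrability clause of \cref{assum:unified_assumption_revised}. With them, the integration step is the standard Grönwall lemma for absolutely continuous functions.
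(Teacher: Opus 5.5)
Your proposal is correct and follows essentially the same route as the paper: insert the coercivity condition into the differential inequality of \cref{prop:quantitative_local_excess_bound_quantified_revised}, multiply by the integrating factor $\exp\!\big(\int_0^\tau(\mu_s(\lambda)-\beta)\,\mathrm ds\big)$, integrate over $[0,\tau]$, then specialize to the uniform lower bound $\underline{\mu}(\lambda)$ and take $\limsup$. Your extra points are regularity details that the paper's proof leaves implicit, not a change of method: reading the inequality almost everywhere with local integrability of $\mu_\cdot(\lambda)$ and of the forcing term, using nonnegativity of $\bar{\ell}_{0,m}(\lambda)$ and of the forcing term for the monotone comparison, and requiring $\bar{\ell}_{0,m}(\lambda)<\infty$ for the $\limsup$ step.
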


\begin{proof}
Under the coercivity condition \Eqref{eq:coercivity_condition_quantified_revised}, the differential inequality in \Eqref{eq:quantitative_local_excess_bound_quantified_revised} from \cref{prop:quantitative_local_excess_bound_quantified_revised} yields
\begin{align}
\frac{\mathrm d}{\mathrm d\tau}\bar{\ell}_{\tau,m}(\lambda)
&\le -\big(\mu_\tau(\lambda)-\beta\big)\bar{\ell}_{\tau,m}(\lambda)+\frac{M(\lambda)}{2\beta}\Big\|M(\lambda_\tau)\Theta_{\tau,m}\big((\cdot,\lambda),(x_{\lambda_\tau},\lambda_\tau)\big)\xi_\tau\Big\|_{L^2(q_\lambda)}^2.
\label{eq:scalar_differential_inequality_quantified_revised}
\end{align}
Define the integrating-factor phase $\Phi(\tau)\triangleq \int_0^\tau \big(\mu_s(\lambda)-\beta\big)\,\mathrm ds$. Multiplying \Eqref{eq:scalar_differential_inequality_quantified_revised} by $e^{\Phi(\tau)}$ gives
\begin{align}
\frac{\mathrm d}{\mathrm d\tau}\left(e^{\Phi(\tau)}\bar{\ell}_{\tau,m}(\lambda)\right)
&\le e^{\Phi(\tau)}\frac{M(\lambda)}{2\beta}\Big\|M(\lambda_\tau)\Theta_{\tau,m}\big((\cdot,\lambda),(x_{\lambda_\tau},\lambda_\tau)\big)\xi_\tau\Big\|_{L^2(q_\lambda)}^2.
\label{eq:integrating_factor_derivative_quantified_revised}
\end{align}
Integrating from $0$ to $\tau$ yields
\begin{align}
e^{\Phi(\tau)}\bar{\ell}_{\tau,m}(\lambda)-\bar{\ell}_{0,m}(\lambda)
&\le \int_0^\tau e^{\Phi(s)}\frac{M(\lambda)}{2\beta}\Big\|M(\lambda_s)\Theta_{s,m}\big((\cdot,\lambda),(x_{\lambda_s},\lambda_s)\big)\xi_s\Big\|_{L^2(q_\lambda)}^2\,\mathrm ds.
\label{eq:integrated_factor_identity_quantified_revised}
\end{align}
Multiplying both sides by $e^{-\Phi(\tau)}$ and using
\begin{align}
e^{-\Phi(\tau)}e^{\Phi(s)}
&=\exp\!\left(-\int_s^\tau \big(\mu_u(\lambda)-\beta\big)\,\mathrm du\right),
\label{eq:integrating_factor_rewrite_quantified_revised}
\end{align}
we obtain
\begin{align}
\bar{\ell}_{\tau,m}(\lambda)
&\le \exp\!\left(-\int_0^\tau \big(\mu_s(\lambda)-\beta\big)\,\mathrm ds\right)\bar{\ell}_{0,m}(\lambda) \nonumber\\
&\quad +\int_0^\tau \exp\!\left(-\int_s^\tau \big(\mu_u(\lambda)-\beta\big)\,\mathrm du\right)\frac{M(\lambda)}{2\beta}\Big\|M(\lambda_s)\Theta_{s,m}\big((\cdot,\lambda),(x_{\lambda_s},\lambda_s)\big)\xi_s\Big\|_{L^2(q_\lambda)}^2\,\mathrm ds,
\end{align}
which proves \Eqref{eq:nonasymptotic_integral_bound_quantified_revised}. The bound \Eqref{eq:uniform_nonasymptotic_integral_bound_quantified_revised} follows immediately from the lower bound $\mu_\tau(\lambda)\ge \underline{\mu}(\lambda)>\beta$. Finally, taking $\limsup_{\tau\to\infty}$ in \Eqref{eq:uniform_nonasymptotic_integral_bound_quantified_revised} yields \Eqref{eq:asymptotic_degradation_floor_quantified_revised}.
\end{proof}

\section{Recoverability Mismatch and Representation Degradation}
\label{apdx:sec:spectral_degradation}

We now combine the preceding ELBO-level consequences with the degeneration of the corrupted input distribution in the extreme-noise regime. The goal of this section is to make explicit that the degradation theorem below is not an isolated statement: it is derived by combining the Bayes decomposition in \cref{lem:bayes_decomposition_local_elbo_quantified_revised}, the non-asymptotic excess bound in \cref{thm:nonasymptotic_integral_bound_quantified_revised}, and the input degeneration phenomenon established next.

To analyze the asymptotic extreme-noise regime, extend the schedule beyond the finite training interval and consider the idealized limit $\lambda\to-\infty$. Assume further that the forward schedule admits a finite terminal noise level $\sigma_{\max}$ in this limit.

\begin{lemma}[Input distribution degeneration at the high-noise limit]
\label{lemma:input_degeneration_modified_revised}
Assume in addition that the data distribution has finite second moment $\mathbb E_{x_0\sim q}\!\left[\|x_0\|_2^2\right]<\infty$. Let the input to the network be $x_\lambda=\alpha(\lambda)x_0+\sigma(\lambda)\epsilon$, where $x_0\sim q(x_0)$ and $\epsilon\sim\mathcal N(0,I)$. As the forward process approaches the extreme-noise regime and $\lambda\to-\infty$, the marginal probability measure of the input $q_\lambda(x_\lambda)$ converges weakly to the pure isotropic Gaussian distribution $\mathcal N(0,\sigma_{\max}^2I)$. More precisely, the $2$-Wasserstein distance satisfies
\begin{align}
\mathrm{W}_2\!\big(q_\lambda,\mathcal N(0,\sigma_{\max}^2I)\big)
&\to 0 \quad \text{as }\lambda\to-\infty.
\label{eq:w2_convergence_extreme_noise_modified_revised}
\end{align}
\end{lemma}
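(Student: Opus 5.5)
The plan is a synchronous coupling combined with a second-moment estimate. I will read the hypotheses as giving $\alpha(\lambda)\to0$ and $\sigma(\lambda)\to\sigma_{\max}$ as $\lambda\to-\infty$. The first follows from $\alpha(\lambda)^2=e^{\lambda}\sigma(\lambda)^2$ together with boundedness of $\sigma$ near the limit, which the finite terminal noise level $\sigma_{\max}$ provides. The second is the meaning of the assumed terminal noise level.

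For fixed $\lambda$, draw $x_0\sim q$ and $\epsilon\sim\mathcal N(0,I)$ independently, and set $X=\alpha(\lambda)x_0+\sigma(\lambda)\epsilon$ and $Z=\sigma_{\max}\epsilon$. Then $X\sim q_\lambda$ and $Z\sim\mathcal N(0,\sigma_{\max}^2I)$, so the pair $(X,Z)$ is an admissible coupling. By the definition of $\mathrm W_2$ as an infimum over couplings,
\begin{align}
\mathrm W_2^2\big(q_\lambda,\mathcal N(0,\sigma_{\max}^2I)\big)\le \mathbb E\|X-Z\|_2^2
=\alpha(\lambda)^2\,\mathbb E\|x_0\|_2^2+\big(\sigma(\lambda)-\sigma_{\max}\big)^2 d
+2\alpha(\lambda)\big(\sigma(\lambda)-\sigma_{\max}\big)\,\mathbb E\langle x_0,\epsilon\rangle.
\end{align}
The cross term vanishes because $x_0$ and $\epsilon$ are independent and $\mathbb E\epsilon=0$. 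The finite second moment assumption keeps the first term finite. As $\lambda\to-\infty$, both remaining terms tend to zero, and this gives \Eqref{eq:w2_convergence_extreme_noise_modified_revised}. Weak convergence then follows because convergence in $\mathrm W_2$ implies weak convergence. Alternatively, one can argue directly: $X\to Z$ in $L^2$, hence in probability, hence in distribution.

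The main obstacle is extracting $\alpha\to0$ and $\sigma\to\sigma_{\max}$ from the schedule assumptions as stated. I would make these explicit as the standing interpretation of ``finite terminal noise level $\sigma_{\max}$'': $\sigma(\lambda)$ has the limit $\sigma_{\max}\in(0,\infty)$. Since $\alpha^2=e^{\lambda}\sigma^2$, this forces $\alpha\to0$ automatically. This covers variance-preserving schedules, where $\sigma_{\max}=1$. It excludes variance-exploding schedules without rescaling, since there $\sigma\to\infty$.
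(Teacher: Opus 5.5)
Your proposal is correct and is essentially the paper's proof: you use the same synchronous coupling through the shared $\epsilon$, the same second-moment expansion with a vanishing cross term, and the same limits $\alpha(\lambda)\to0$, $\sigma(\lambda)\to\sigma_{\max}$, followed by $\mathrm W_2$-convergence implying weak convergence. The only difference is that you derive $\alpha\to0$ from $\alpha^2=e^{\lambda}\sigma^2$ and the finiteness of $\sigma_{\max}$, whereas the paper takes both limits directly as properties of the extended schedule.
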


\begin{proof}
By the definition of the $2$-Wasserstein metric $W_2$, we upper bound the transport distance by choosing the natural joint coupling induced by the shared noise variable $\epsilon$. Under this coupling, we obtain
\begin{align}
\mathrm{W}_2^2\!\big(q_\lambda,\mathcal N(0,\sigma_{\max}^2I)\big)
&\le \int_{\mathbb R^d}\int_{\mathbb R^d}\left\|(\alpha(\lambda)x_0+\sigma(\lambda)\epsilon)-\sigma_{\max}\epsilon\right\|_2^2\,q(x_0)p(\epsilon)\,\mathrm dx_0\,\mathrm d\epsilon \nonumber\\
&= \mathbb E_{x_0,\epsilon}\!\left[\left\|\alpha(\lambda)x_0+(\sigma(\lambda)-\sigma_{\max})\epsilon\right\|_2^2\right] \nonumber\\
&= \mathbb E_{x_0,\epsilon}\!\left[\alpha(\lambda)^2\|x_0\|_2^2+2\alpha(\lambda)(\sigma(\lambda)-\sigma_{\max})\langle x_0,\epsilon\rangle+(\sigma(\lambda)-\sigma_{\max})^2\|\epsilon\|_2^2\right] \nonumber\\
&= \alpha(\lambda)^2\mathbb E_{x_0}\!\left[\|x_0\|_2^2\right]+2\alpha(\lambda)(\sigma(\lambda)-\sigma_{\max})\mathbb E\!\left[\langle x_0,\epsilon\rangle\right]+(\sigma(\lambda)-\sigma_{\max})^2\mathbb E_\epsilon\!\left[\|\epsilon\|_2^2\right] \nonumber\\
&= \alpha(\lambda)^2\mathbb E_{x_0}\!\left[\|x_0\|_2^2\right]+(\sigma(\lambda)-\sigma_{\max})^2 d,
\label{eq:input_degeneration_w2_bound_modified_revised}
\end{align}
where $\mathbb E[\langle x_0,\epsilon\rangle]=0$ by independence and $\mathbb E[\|\epsilon\|_2^2]=d$ for $\epsilon\sim\mathcal N(0,I)$. Since the extended forward schedule satisfies $\lim_{\lambda\to-\infty}\alpha(\lambda)=0$ and $\lim_{\lambda\to-\infty}\sigma(\lambda)=\sigma_{\max}$, the right-hand side of \Eqref{eq:input_degeneration_w2_bound_modified_revised} converges to $0$. Hence
\begin{align}
\lim_{\lambda\to-\infty}\mathrm{W}_2^2\!\big(q_\lambda,\mathcal N(0,\sigma_{\max}^2I)\big)
&=0,
\label{eq:w2_limit_extreme_noise_modified_revised}
\end{align}
which implies \Eqref{eq:w2_convergence_extreme_noise_modified_revised}. Therefore $q_\lambda$ converges weakly to $\mathcal N(0,\sigma_{\max}^2I)$ in the limit.
\end{proof}

We now combine the Bayes decomposition in \cref{lem:bayes_decomposition_local_elbo_quantified_revised}, the excess bound in \cref{thm:nonasymptotic_integral_bound_quantified_revised}, and the input degeneration phenomenon in \cref{lemma:input_degeneration_modified_revised}. This yields a quantitative characterization of degradation under recoverability mismatch.

\begin{theorem}[Quantitative degradation under recoverability mismatch]
\label{thm:quantitative_degradation_under_recoverability_mismatch_revised}
Fix $\lambda\in[\lambda_{\min},\lambda_{\max}]$ and $\beta>0$. Suppose that along the trajectory there exists a measurable function $\mu_\tau(\lambda)$ such that
\begin{align}
M(\lambda)\Big\langle r_{\tau,m}(\cdot,\lambda),M(\lambda_\tau)\Theta_{\tau,m}\big((\cdot,\lambda),(x_{\lambda_\tau},\lambda_\tau)\big)r_{\tau,m}(x_{\lambda_\tau},\lambda_\tau)\Big\rangle_{L^2(q_\lambda)}
&\ge \mu_{\tau}(\lambda)\,\bar{\ell}_{\tau,m}(\lambda)
\label{eq:degradation_local_coercivity_modified_revised}
\end{align}
for all $\tau\ge 0$. Then the local ELBO density admits the bound
\begin{align}
\ell_{\tau,m}(\lambda)
&\le \ell_\lambda^\star+\exp\!\left(-\int_0^\tau \big(\mu_s(\lambda)-\beta\big)\,\mathrm ds\right)\bar{\ell}_{0,m}(\lambda) \nonumber\\
&\quad +\int_0^\tau \exp\!\left(-\int_s^\tau \big(\mu_u(\lambda)-\beta\big)\,\mathrm du\right)\frac{M(\lambda)}{2\beta}\Big\|M(\lambda_s)\Theta_{s,m}\big((\cdot,\lambda),(x_{\lambda_s},\lambda_s)\big)\xi_s\Big\|_{L^2(q_\lambda)}^2\,\mathrm ds.
\label{eq:quantitative_local_degradation_bound_modified_revised}
\end{align}
In particular, in the high-noise regime where the Bayes floor $\ell_\lambda^\star$ increases while the effective contraction $\mu_\tau(\lambda)$ may decrease, the local ELBO is dominated by an enlarged irreducible component together with a weaker contraction of the optimizable excess. Moreover, the forcing term shows that this degradation is mediated by cross-noise kernel interactions induced by shared parameters, and hence the high-noise regime is not an autonomous single-noise subsystem.
\end{theorem}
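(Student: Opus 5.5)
The bound is obtained by combining two earlier results, so the plan is essentially a two-line assembly followed by an interpretive discussion. First, I invoke \cref{lem:bayes_decomposition_local_elbo_quantified_revised}, which gives the exact identity $\ell_{\tau,m}(\lambda)=\ell_\lambda^\star+\bar{\ell}_{\tau,m}(\lambda)$ for every $\tau\ge 0$. Since $\ell_\lambda^\star$ does not depend on $\theta$ or $\tau$, bounding $\ell_{\tau,m}(\lambda)$ reduces to bounding the excess density $\bar{\ell}_{\tau,m}(\lambda)$.

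Second, the coercivity hypothesis \Eqref{eq:degradation_local_coercivity_modified_revised} is verbatim the hypothesis \Eqref{eq:coercivity_condition_quantified_revised} of \cref{thm:nonasymptotic_integral_bound_quantified_revised}, for the same fixed $\lambda$ and $\beta>0$. I therefore apply that theorem to obtain \Eqref{eq:nonasymptotic_integral_bound_quantified_revised}, add $\ell_\lambda^\star$ to both sides, and use the Bayes identity to replace $\ell_\lambda^\star+\bar{\ell}_{\tau,m}(\lambda)$ by $\ell_{\tau,m}(\lambda)$. This yields exactly \Eqref{eq:quantitative_local_degradation_bound_modified_revised}. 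No new inequality is needed. The only point to check is that the integrating-factor argument is valid, which requires $\mu_\tau(\lambda)$ to be measurable and locally integrable; measurability is assumed, and local integrability I treat as part of the standing regularity in \cref{assum:unified_assumption_revised}.

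The qualitative "in particular" claims are where care is needed, since they are interpretive rather than strict inequalities. For the enlarged Bayes floor, I use \cref{lemma:input_degeneration_modified_revised}: as $\lambda\to-\infty$, $q_\lambda$ approaches $\mathcal N(0,\sigma_{\max}^2 I)$ and $x_\lambda$ becomes asymptotically independent of $x_0$.
\begin{itemize}
\item For targets with $c_x(\lambda)\neq 0$, this means $\mathbb E[y_\lambda\mid x_\lambda]$ loses its $x_0$-dependent component.
\item Consequently $\mathbb E\|f^\star_\lambda(x_\lambda)-y_\lambda\|^2\to c_x^2\,\mathrm{tr}\,\mathrm{Cov}(x_0)+\dots$, that is, the irreducible error approaches the full target variance along the data direction. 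So $\ell_\lambda^\star$ grows relative to the recoverable part whenever $M(\lambda)$ remains bounded below.
\end{itemize}
For the weakened contraction, I simply note that the exponential factor $\exp(-\int_0^\tau(\mu_s-\beta)\,\mathrm ds)$ is monotone decreasing in $\mu$. Hence a smaller $\mu$ enlarges both the transient term and the convolution kernel, and I state this as a monotonicity remark rather than derive a rate for $\mu$.

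Finally, the cross-noise claim follows from the structure of the forcing term. It involves $\Theta_{s,m}((\cdot,\lambda),(x_{\lambda_s},\lambda_s))$ with $\lambda_s\neq\lambda$ in general, so residuals at $\lambda$ are driven by Bayes noise sampled at other levels. The main obstacle is making the Bayes-floor growth statement precise, since it depends on the target coefficients $c_x,c_\epsilon$. I would state it conditionally, for targets whose recoverable component depends on $x_0$, and prove only the limit of the Bayes error under \cref{lemma:input_degeneration_modified_revised}, using $L^2$ convergence of conditional expectations.
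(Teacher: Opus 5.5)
Your proposal is correct and follows the paper's proof: you substitute the non-asymptotic integral bound (whose coercivity hypothesis is identical) into the Bayes decomposition $\ell_{\tau,m}(\lambda)=\ell_\lambda^\star+\bar{\ell}_{\tau,m}(\lambda)$, and, as in the paper, the ``in particular'' clauses are supported only interpretively, via the input-degeneration lemma, the dependence of the contraction on $\mu$, and the cross-noise kernel $\Theta_{s,m}((\cdot,\lambda),(x_{\lambda_s},\lambda_s))$ in the forcing term. One caution if you make the Bayes-floor limit rigorous: $W_2$ convergence of the marginal $q_\lambda$ alone does not control $\mathbb E[x_0\mid x_\lambda]$, so you should instead use the exact identity $y_\lambda-f_\lambda^\star(x_\lambda)=\big(c_x(\lambda)-c_\epsilon(\lambda)\alpha(\lambda)/\sigma(\lambda)\big)\big(x_0-\mathbb E[x_0\mid x_\lambda]\big)$ together with the zero-SNR limit of the MMSE of $x_0$, which also shows that your ``$+\dots$'' term vanishes.
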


\begin{proof}
By the Bayes decomposition established in \cref{lem:bayes_decomposition_local_elbo_quantified_revised}, we have
\begin{align}
\ell_{\tau,m}(\lambda)
&=\ell_\lambda^\star+\bar{\ell}_{\tau,m}(\lambda).
\label{eq:degradation_bayes_split_first_modified_revised}
\end{align}
By the non-asymptotic integral bound in \cref{thm:nonasymptotic_integral_bound_quantified_revised}, which is itself derived from the exact excess dynamics in \cref{thm:quantitative_elbo_residual_dynamics_quantified_revised}, the coercivity assumption \Eqref{eq:degradation_local_coercivity_modified_revised} yields
\begin{align}
\bar{\ell}_{\tau,m}(\lambda)
&\le \exp\!\left(-\int_0^\tau \big(\mu_s(\lambda)-\beta\big)\,\mathrm ds\right)\bar{\ell}_{0,m}(\lambda) \nonumber\\
&\quad +\int_0^\tau \exp\!\left(-\int_s^\tau \big(\mu_u(\lambda)-\beta\big)\,\mathrm du\right)\frac{M(\lambda)}{2\beta}\Big\|M(\lambda_s)\Theta_{s,m}\big((\cdot,\lambda),(x_{\lambda_s},\lambda_s)\big)\xi_s\Big\|_{L^2(q_\lambda)}^2\,\mathrm ds.
\label{eq:degradation_excess_integral_bound_modified_revised}
\end{align}
Substituting \Eqref{eq:degradation_excess_integral_bound_modified_revised} into \Eqref{eq:degradation_bayes_split_first_modified_revised}, we obtain
\begin{align}
\ell_{\tau,m}(\lambda)
&=\ell_\lambda^\star+\bar{\ell}_{\tau,m}(\lambda) \nonumber\\
&\le \ell_\lambda^\star+\exp\!\left(-\int_0^\tau \big(\mu_s(\lambda)-\beta\big)\,\mathrm ds\right)\bar{\ell}_{0,m}(\lambda) \nonumber\\
&\quad +\int_0^\tau \exp\!\left(-\int_s^\tau \big(\mu_u(\lambda)-\beta\big)\,\mathrm du\right)\frac{M(\lambda)}{2\beta}\Big\|M(\lambda_s)\Theta_{s,m}\big((\cdot,\lambda),(x_{\lambda_s},\lambda_s)\big)\xi_s\Big\|_{L^2(q_\lambda)}^2\,\mathrm ds,
\end{align}
which is exactly \Eqref{eq:quantitative_local_degradation_bound_modified_revised}.

It remains to explain why this bound captures recoverability mismatch. By \cref{lemma:input_degeneration_modified_revised}, in the idealized extreme-noise regime the input distribution approaches a pure isotropic Gaussian law. Therefore the corrupted input retains progressively less data-dependent structure, so the recoverable component of the target becomes weaker. When interpreted together with the Bayes decomposition \Eqref{eq:local_elbo_bayes_decomposition_quantified_revised}, this means that the irreducible Bayes floor $\ell_\lambda^\star$ becomes comparatively more dominant as the noise level increases. At the same time, the non-asymptotic excess bound shows that the optimizable excess is controlled by a competition between contraction through $\mu_\tau(\lambda)$ and forcing through the cross-noise kernel term. Hence the high-noise regime is affected simultaneously by a larger irreducible component, a weaker effective contraction of the optimizable excess, and continued cross-noise forcing through shared parameters. This proves the claimed quantitative degradation statement.
\end{proof}

The preceding theorem states the degradation mechanism at the loss level. We next show that the same Bayes-forcing mechanism appears directly in the gradient acting on learned representations.

\begin{proposition}[Quantitative Bayes-noise contamination of representation gradients]
\label{prop:quantitative_representation_degradation_revised}
Suppose the network admits a representation-readout factorization
\begin{align}
f_{\theta,m}(x,\lambda)
&=W_{\theta,m}h_{\theta,m}(x,\lambda),
\label{eq:representation_readout_factorization_revised}
\end{align}
where $h_{\theta,m}(x,\lambda)\in\mathbb R^m$ is the representation and $W_{\theta,m}\in\mathbb R^{d\times m}$ is the readout. Let $\vartheta\subseteq\theta$ denote parameters that affect $h_{\theta,m}$ but leave $W_{\theta,m}$ fixed. Then for a sampled triple $(x_0,\epsilon,\lambda)$,
\begin{align}
\nabla_{\vartheta}\widehat{\ell}_m(\theta;x_0,\epsilon,\lambda)
&=M(\lambda)\big(\nabla_{\vartheta}h_{\theta,m}(x_\lambda,\lambda)\big)^\top W_{\theta,m}^\top r_{\theta,m}(x_\lambda,\lambda) \nonumber\\
&\quad +M(\lambda)\big(\nabla_{\vartheta}h_{\theta,m}(x_\lambda,\lambda)\big)^\top W_{\theta,m}^\top \xi_\lambda.
\label{eq:quantitative_representation_gradient_split_modified_revised}
\end{align}
Consequently, the gradient norm satisfies
\begin{align}
\big\|\nabla_{\vartheta}\widehat{\ell}_m(\theta;x_0,\epsilon,\lambda)\big\|_2
&\le M(\lambda)\big\|\nabla_{\vartheta}h_{\theta,m}(x_\lambda,\lambda)\big\|_{\mathrm F}\|W_{\theta,m}\|_{\mathrm op}\Big(\|r_{\theta,m}(x_\lambda,\lambda)\|_2+\|\xi_\lambda\|_2\Big).
\label{eq:quantitative_representation_gradient_norm_bound_modified_revised}
\end{align}
Therefore, in regimes where the recoverable residual signal diminishes while the irreducible Bayes fluctuation remains non-negligible, the representation gradient becomes increasingly dominated by Bayes-noise contamination.
\end{proposition}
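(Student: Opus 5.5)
The plan is to reduce the statement to the stochastic gradient decomposition of \cref{lem:bayes_gradient_decomposition_quantified_revised} and then apply the chain rule through the factorization \Eqref{eq:representation_readout_factorization_revised}.

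\textbf{Step 1: gradient with respect to $\vartheta$.} Restrict \Eqref{eq:stochastic_gradient_bayes_decomposition_quantified_revised} to the coordinates $\vartheta\subseteq\theta$. This gives
\[
\nabla_\vartheta\widehat{\ell}_m=M(\lambda)\big(\nabla_\vartheta f_{\theta,m}(x_\lambda,\lambda)\big)^\top\big(r_{\theta,m}(x_\lambda,\lambda)+\xi_\lambda\big),
\]
where $\xi_\lambda=f_\lambda^\star(x_\lambda)-y_\lambda$.

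\textbf{Step 2: chain rule through the readout.} The parameters $\vartheta$ leave $W_{\theta,m}$ fixed by hypothesis. Hence the chain rule applied to $f_{\theta,m}=W_{\theta,m}h_{\theta,m}$ yields $\nabla_\vartheta f_{\theta,m}=W_{\theta,m}\nabla_\vartheta h_{\theta,m}$, with no term from differentiating $W$. Transposing gives $(\nabla_\vartheta f)^\top=(\nabla_\vartheta h)^\top W^\top$. Substituting this into Step 1 and distributing over $r+\xi$ gives \Eqref{eq:quantitative_representation_gradient_split_modified_revised} exactly.

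\textbf{Step 3: norm bound.} Apply the triangle inequality to the two summands. For each summand use submultiplicativity in the form $\|A^\top B^\top v\|_2\le\|A\|_{\mathrm{op}}\|B\|_{\mathrm{op}}\|v\|_2$. Then bound $\|\nabla_\vartheta h\|_{\mathrm{op}}\le\|\nabla_\vartheta h\|_{\mathrm F}$ and note $\|W^\top\|_{\mathrm{op}}=\|W\|_{\mathrm{op}}$. The assumption $M(\lambda)\ge0$ from \cref{assum:unified_assumption_revised} lets us pull $M(\lambda)$ out without absolute values. This yields \Eqref{eq:quantitative_representation_gradient_norm_bound_modified_revised}.

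\textbf{Step 4: the qualitative consequence.} The final sentence of the statement is interpretive. I would make it precise by comparing the two summands: the first has norm at most $M\|\nabla_\vartheta h\|_{\mathrm F}\|W\|_{\mathrm{op}}\|r\|_2$, which tends to zero as $\|r\|_2\to0$ while the prefactors stay bounded. The gradient therefore approaches the $\xi$-term, in the sense that the difference between them is $O(\|r\|_2)$.

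The only genuine subtlety is the interpretive claim in Step 4. A strict dominance statement would require a lower bound on $\|(\nabla_\vartheta h)^\top W^\top\xi_\lambda\|_2$, and that is not guaranteed because $\xi_\lambda$ may lie near the kernel of $(\nabla_\vartheta h)^\top W^\top$. I would therefore phrase the conclusion as a vanishing relative contribution of the signal term, assuming the projected Bayes noise is non-negligible. The algebraic parts, Steps 1 to 3, are routine.
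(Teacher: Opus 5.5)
Your proposal is correct and follows essentially the same route as the paper: split the residual $f_{\theta,m}-y_\lambda=r_{\theta,m}+\xi_\lambda$, push the chain rule through $f_{\theta,m}=W_{\theta,m}h_{\theta,m}$ with $W_{\theta,m}$ fixed, then bound each summand by the triangle inequality together with $\|(\nabla_\vartheta h_{\theta,m})^\top W_{\theta,m}^\top v\|_2\le\|\nabla_\vartheta h_{\theta,m}\|_{\mathrm F}\|W_{\theta,m}\|_{\mathrm{op}}\|v\|_2$. The paper applies the chain rule before splitting the residual instead of citing \cref{lem:bayes_gradient_decomposition_quantified_revised}, which makes no difference; your Step 4 caveat, that strict dominance would need a lower bound on $\|(\nabla_\vartheta h_{\theta,m})^\top W_{\theta,m}^\top\xi_\lambda\|_2$, is a fair sharpening of a final sentence the paper only asserts interpretively.
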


\begin{proof}
For the sampled local loss $\widehat{\ell}_m(\theta;x_0,\epsilon,\lambda)\triangleq \frac{1}{2}M(\lambda)\|f_{\theta,m}(x_\lambda,\lambda)-y_\lambda\|_2^2$, differentiate with respect to $\vartheta$. Since $M(\lambda)$ is independent of $\theta$, we obtain
\begin{align}
\nabla_{\vartheta}\widehat{\ell}_m(\theta;x_0,\epsilon,\lambda)
&=M(\lambda)\big(\nabla_{\vartheta}f_{\theta,m}(x_\lambda,\lambda)\big)^\top\big(f_{\theta,m}(x_\lambda,\lambda)-y_\lambda\big).
\label{eq:quantitative_representation_gradient_first_step_modified_revised}
\end{align}
Because $\vartheta$ affects $h_{\theta,m}$ but leaves $W_{\theta,m}$ fixed, the chain rule gives
\begin{align}
\nabla_{\vartheta}f_{\theta,m}(x_\lambda,\lambda)
&=\nabla_{\vartheta}\!\big(W_{\theta,m}h_{\theta,m}(x_\lambda,\lambda)\big) \nonumber\\
&=W_{\theta,m}\,\nabla_{\vartheta}h_{\theta,m}(x_\lambda,\lambda).
\label{eq:quantitative_representation_chain_rule_modified_revised}
\end{align}
Substituting \Eqref{eq:quantitative_representation_chain_rule_modified_revised} into \Eqref{eq:quantitative_representation_gradient_first_step_modified_revised}, we obtain
\begin{align}
\nabla_{\vartheta}\widehat{\ell}_m(\theta;x_0,\epsilon,\lambda)
&=M(\lambda)\big(\nabla_{\vartheta}h_{\theta,m}(x_\lambda,\lambda)\big)^\top W_{\theta,m}^\top\big(f_{\theta,m}(x_\lambda,\lambda)-y_\lambda\big).
\label{eq:quantitative_representation_gradient_second_step_modified_revised}
\end{align}
Now decompose the instantaneous prediction residual around the Bayes predictor:
\begin{align}
f_{\theta,m}(x_\lambda,\lambda)-y_\lambda
&=f_{\theta,m}(x_\lambda,\lambda)-f_\lambda^\star(x_\lambda)+f_\lambda^\star(x_\lambda)-y_\lambda \nonumber\\
&=r_{\theta,m}(x_\lambda,\lambda)+\xi_\lambda.
\label{eq:quantitative_representation_bayes_split_modified_revised}
\end{align}
Substituting \Eqref{eq:quantitative_representation_bayes_split_modified_revised} into \Eqref{eq:quantitative_representation_gradient_second_step_modified_revised} yields
\begin{align}
\nabla_{\vartheta}\widehat{\ell}_m(\theta;x_0,\epsilon,\lambda)
&=M(\lambda)\big(\nabla_{\vartheta}h_{\theta,m}(x_\lambda,\lambda)\big)^\top W_{\theta,m}^\top\big(r_{\theta,m}(x_\lambda,\lambda)+\xi_\lambda\big) \nonumber\\
&=M(\lambda)\big(\nabla_{\vartheta}h_{\theta,m}(x_\lambda,\lambda)\big)^\top W_{\theta,m}^\top r_{\theta,m}(x_\lambda,\lambda) \nonumber\\
&\quad +M(\lambda)\big(\nabla_{\vartheta}h_{\theta,m}(x_\lambda,\lambda)\big)^\top W_{\theta,m}^\top \xi_\lambda,
\end{align}
which proves \Eqref{eq:quantitative_representation_gradient_split_modified_revised}.

To prove the norm bound, apply the triangle inequality to \Eqref{eq:quantitative_representation_gradient_split_modified_revised}:
\begin{align}
\big\|\nabla_{\vartheta}\widehat{\ell}_m(\theta;x_0,\epsilon,\lambda)\big\|_2
&\le M(\lambda)\Big\|\big(\nabla_{\vartheta}h_{\theta,m}(x_\lambda,\lambda)\big)^\top W_{\theta,m}^\top r_{\theta,m}(x_\lambda,\lambda)\Big\|_2 \nonumber\\
&\quad +M(\lambda)\Big\|\big(\nabla_{\vartheta}h_{\theta,m}(x_\lambda,\lambda)\big)^\top W_{\theta,m}^\top \xi_\lambda\Big\|_2.
\label{eq:quantitative_representation_gradient_norm_split_modified_revised}
\end{align}
Using the operator bound
\begin{align}
\Big\|\big(\nabla_{\vartheta}h_{\theta,m}(x_\lambda,\lambda)\big)^\top W_{\theta,m}^\top v\Big\|_2
&\le \big\|\nabla_{\vartheta}h_{\theta,m}(x_\lambda,\lambda)\big\|_{\mathrm F}\|W_{\theta,m}\|_{\mathrm op}\|v\|_2
\label{eq:quantitative_representation_operator_bound_modified_revised}
\end{align}
for any $v\in\mathbb R^d$, and applying \Eqref{eq:quantitative_representation_operator_bound_modified_revised} first with $v=r_{\theta,m}(x_\lambda,\lambda)$ and then with $v=\xi_\lambda$, we obtain
\begin{align}
\big\|\nabla_{\vartheta}\widehat{\ell}_m(\theta;x_0,\epsilon,\lambda)\big\|_2
&\le M(\lambda)\big\|\nabla_{\vartheta}h_{\theta,m}(x_\lambda,\lambda)\big\|_{\mathrm F}\|W_{\theta,m}\|_{\mathrm op}\|r_{\theta,m}(x_\lambda,\lambda)\|_2 \nonumber\\
&\quad +M(\lambda)\big\|\nabla_{\vartheta}h_{\theta,m}(x_\lambda,\lambda)\big\|_{\mathrm F}\|W_{\theta,m}\|_{\mathrm op}\|\xi_\lambda\|_2 \nonumber\\
&= M(\lambda)\big\|\nabla_{\vartheta}h_{\theta,m}(x_\lambda,\lambda)\big\|_{\mathrm F}\|W_{\theta,m}\|_{\mathrm op}\Big(\|r_{\theta,m}(x_\lambda,\lambda)\|_2+\|\xi_\lambda\|_2\Big),
\end{align}
which is exactly \Eqref{eq:quantitative_representation_gradient_norm_bound_modified_revised}. The final interpretation follows directly from the decomposition together with the degradation picture in \cref{thm:quantitative_degradation_under_recoverability_mismatch_revised}: when the recoverable component becomes small relative to the irreducible Bayes fluctuation, the representation gradient is increasingly dominated by the Bayes-noise term.
\end{proof}

Together, \cref{thm:quantitative_degradation_under_recoverability_mismatch_revised,prop:quantitative_representation_degradation_revised} establish a unified quantitative picture: input degeneration reduces recoverable signal, the ELBO develops a non-uniform irreducible floor with weakened contraction, and the same Bayes-induced noise propagates through the shared-parameter architecture to directly contaminate representation gradients.

Finally, under an additional fixed-noise surrogate closure, the preceding local excess mechanism admits a modal refinement in the eigenbasis of the frozen fixed-noise kernel operator. This last theorem should be read as a local spectral refinement of the earlier degradation mechanism, not as a direct spectral decomposition of the full joint-noise dynamics.

\begin{theorem}[Spectral local ELBO bound under a fixed-noise surrogate]
\label{thm:spectral_local_elbo_bound}
Fix $\lambda\in[\lambda_{\min},\lambda_{\max}]$. Assume that the frozen-kernel approximation \Eqref{eq:ntk_regime_approximation_quantified_revised} holds on the time interval of interest and that, for the purpose of local spectral analysis at the fixed noise level $\lambda$, the Bayes-centered residual satisfies
\begin{align}
\frac{\partial}{\partial\tau}r_{\tau,m}(\cdot,\lambda)
&=-M(\lambda)\int \Theta_{0,m}\big((\cdot,\lambda),(x',\lambda)\big)r_{\tau,m}(x',\lambda)\,q_\lambda(x')\,\mathrm dx' \nonumber\\
&\quad -M(\lambda_\tau)\Theta_{0,m}\big((\cdot,\lambda),(x_{\lambda_\tau},\lambda_\tau)\big)\xi_\tau.
\label{eq:spectral_surrogate_dynamics}
\end{align}
Assume moreover that
\begin{align}
-M(\lambda_\tau)\Theta_{0,m}\big((\cdot,\lambda),(x_{\lambda_\tau},\lambda_\tau)\big)\xi_\tau
&\in L^2(q_\lambda;\mathbb R^d)
\label{eq:spectral_forcing_l2_assumption}
\end{align}
for all $\tau\ge 0$. Assume further that there exists a complete orthonormal system $\{\phi_j^\lambda\}_{j\ge 1}\subset L^2(q_\lambda;\mathbb R^d)$ with nonnegative eigenvalues $\{\kappa_j^\lambda\}_{j\ge 1}$ such that
\begin{align}
\int \Theta_{0,m}\big((x,\lambda),(x',\lambda)\big)\phi_j^\lambda(x')\,q_\lambda(x')\,\mathrm dx'
&=\kappa_j^\lambda\phi_j^\lambda(x), \qquad j\ge 1.
\label{eq:spectral_eigen_relation}
\end{align}
Write, in $L^2(q_\lambda;\mathbb R^d)$,
\begin{align}
r_{\tau,m}(x,\lambda)
&=\sum_{j\ge 1}a_{\tau,j}^\lambda\phi_j^\lambda(x), \qquad
a_{\tau,j}^\lambda=\langle r_{\tau,m}(\cdot,\lambda),\phi_j^\lambda\rangle_{L^2(q_\lambda)},
\label{eq:spectral_residual_expansion}
\end{align}
and define
\begin{align}
\eta_{\tau,j}^\lambda
&=\Big\langle -M(\lambda_\tau)\Theta_{0,m}\big((\cdot,\lambda),(x_{\lambda_\tau},\lambda_\tau)\big)\xi_\tau,\phi_j^\lambda\Big\rangle_{L^2(q_\lambda)}.
\label{eq:spectral_modal_forcing_definition}
\end{align}
Then for every $\gamma>0$,
\begin{align}
\ell_{\tau,m}(\lambda)
&\le \ell_\lambda^\star+\frac{M(\lambda)}{2}\sum_{j\ge 1}e^{-(2M(\lambda)\kappa_j^\lambda-\gamma)\tau}|a_{0,j}^\lambda|^2 \nonumber\\
&\quad +\frac{M(\lambda)}{2\gamma}\sum_{j\ge 1}\int_0^\tau e^{-(2M(\lambda)\kappa_j^\lambda-\gamma)(\tau-s)}|\eta_{s,j}^\lambda|^2\,\mathrm ds.
\label{eq:spectral_local_elbo_bound}
\end{align}
In particular, for modes satisfying $2M(\lambda)\kappa_j^\lambda>\gamma$, the corresponding contribution decays exponentially, whereas modes associated with smaller eigenvalues are more weakly contracted and hence more susceptible to persistent Bayes forcing.
\end{theorem}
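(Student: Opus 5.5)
Project the surrogate dynamics \Eqref{eq:spectral_surrogate_dynamics} onto each eigenfunction $\phi_j^\lambda$ to obtain decoupled scalar ODEs for the modal coefficients. Then control each mode with the same Young-inequality and integrating-factor argument used in \cref{prop:quantitative_local_excess_bound_quantified_revised} and \cref{thm:nonasymptotic_integral_bound_quantified_revised}. Finally, sum over modes and add the Bayes floor using \cref{lem:bayes_decomposition_local_elbo_quantified_revised}.

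\emph{Modal ODE.} Take the $L^2(q_\lambda)$ inner product of \Eqref{eq:spectral_surrogate_dynamics} with $\phi_j^\lambda$ and interchange $\partial_\tau$ with the inner product, which the regularity assumptions permit. The frozen kernel integral operator is self-adjoint, since $\Theta_{0,m}((x,\lambda),(x',\lambda))^\top=\Theta_{0,m}((x',\lambda),(x,\lambda))$. Combined with \Eqref{eq:spectral_eigen_relation}, this gives
\begin{align*}
\frac{\mathrm d}{\mathrm d\tau}a_{\tau,j}^\lambda=-M(\lambda)\kappa_j^\lambda a_{\tau,j}^\lambda+\eta_{\tau,j}^\lambda .
\end{align*}
The forcing coefficient $\eta_{\tau,j}^\lambda$ is well defined by assumption \Eqref{eq:spectral_forcing_l2_assumption}. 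If the coefficients are complex, work with real parts throughout.

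\emph{Energy inequality per mode.} Next, differentiate $|a_{\tau,j}^\lambda|^2$:
\begin{align*}
\frac{\mathrm d}{\mathrm d\tau}|a_{\tau,j}^\lambda|^2=-2M(\lambda)\kappa_j^\lambda|a_{\tau,j}^\lambda|^2+2\,\mathrm{Re}\big(\overline{a_{\tau,j}^\lambda}\,\eta_{\tau,j}^\lambda\big).
\end{align*}
Young's inequality gives $2|a||\eta|\le\gamma|a|^2+\gamma^{-1}|\eta|^2$. Hence
\begin{align*}
\frac{\mathrm d}{\mathrm d\tau}|a_{\tau,j}^\lambda|^2\le-(2M(\lambda)\kappa_j^\lambda-\gamma)|a_{\tau,j}^\lambda|^2+\gamma^{-1}|\eta_{\tau,j}^\lambda|^2 .
\end{align*}
Multiply by the integrating factor $e^{(2M(\lambda)\kappa_j^\lambda-\gamma)\tau}$ and integrate from $0$ to $\tau$, exactly as in the proof of \cref{thm:nonasymptotic_integral_bound_quantified_revised}. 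This yields
\begin{align*}
|a_{\tau,j}^\lambda|^2\le e^{-(2M\kappa_j-\gamma)\tau}|a_{0,j}^\lambda|^2+\gamma^{-1}\int_0^\tau e^{-(2M\kappa_j-\gamma)(\tau-s)}|\eta_{s,j}^\lambda|^2\,\mathrm ds ,
\end{align*}
with $M=M(\lambda)$ and $\kappa_j=\kappa_j^\lambda$. The Gr\"onwall step is valid whatever the sign of the rate, so modes with small $\kappa_j^\lambda$ are allowed and simply grow by at most $e^{\gamma\tau}$.

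\emph{Summation.} Completeness and orthonormality of $\{\phi_j^\lambda\}$ give Parseval's identity, so
\begin{align*}
\bar\ell_{\tau,m}(\lambda)=\frac{M(\lambda)}{2}\sum_j|a_{\tau,j}^\lambda|^2 .
\end{align*}
Multiply the per-mode bound by $M(\lambda)/2$ and sum over $j$. Tonelli's theorem lets us exchange the sum with the time integral, because all terms are nonnegative. Adding $\ell_\lambda^\star$ via \cref{lem:bayes_decomposition_local_elbo_quantified_revised} gives \Eqref{eq:spectral_local_elbo_bound}. The closing remark on decay of modes with $2M(\lambda)\kappa_j^\lambda>\gamma$ then follows by reading off the exponents.

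\emph{Main obstacle.} The hard step is justifying the termwise differentiation and the infinite sum, since the right-hand side may be infinite. I would first establish the modal ODE in its integrated (mild) form, which needs only $a_{\tau,j}^\lambda$ to be absolutely continuous in $\tau$. That property follows from the $L^2$ forcing assumption together with the boundedness of the frozen kernel operator. The summed bound is trivially true whenever the right-hand side diverges, so no further convergence argument is needed.
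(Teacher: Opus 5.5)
Your proposal is correct and follows the paper's proof essentially step for step. You project onto $\phi_j^\lambda$ to get $\frac{\mathrm d}{\mathrm d\tau}a_{\tau,j}^\lambda=-M(\lambda)\kappa_j^\lambda a_{\tau,j}^\lambda+\eta_{\tau,j}^\lambda$, apply Young's inequality with parameter $\gamma$ and an integrating factor per mode, and conclude with Parseval and the Bayes decomposition. The only difference is cosmetic: you obtain the modal kernel term from self-adjointness of the frozen fixed-noise kernel operator, whereas the paper expands $r_{\tau,m}(\cdot,\lambda)$ in the eigenbasis and passes the kernel integral through the series. Your remarks on Tonelli and the integrated (mild) form add rigor that the paper leaves implicit.
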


\begin{proof}
We begin from the fixed-noise surrogate dynamics \Eqref{eq:spectral_surrogate_dynamics}. By the definition of the modal coefficients in \Eqref{eq:spectral_residual_expansion}, we have
\begin{align}
a_{\tau,j}^\lambda
&=\langle r_{\tau,m}(\cdot,\lambda),\phi_j^\lambda\rangle_{L^2(q_\lambda)}.
\label{eq:modal_definition_recalled}
\end{align}
Differentiating \Eqref{eq:modal_definition_recalled} with respect to $\tau$ yields
\begin{align}
\frac{\mathrm d}{\mathrm d\tau}a_{\tau,j}^\lambda
&=\Big\langle \frac{\partial}{\partial\tau}r_{\tau,m}(\cdot,\lambda),\phi_j^\lambda\Big\rangle_{L^2(q_\lambda)}.
\label{eq:modal_derivative_start}
\end{align}
Substituting \Eqref{eq:spectral_surrogate_dynamics} into \Eqref{eq:modal_derivative_start}, we obtain
\begin{align}
\frac{\mathrm d}{\mathrm d\tau}a_{\tau,j}^\lambda
&=\Big\langle -M(\lambda)\int \Theta_{0,m}\big((\cdot,\lambda),(x',\lambda)\big)r_{\tau,m}(x',\lambda)\,q_\lambda(x')\,\mathrm dx',\phi_j^\lambda\Big\rangle_{L^2(q_\lambda)} \nonumber\\
&\quad +\Big\langle -M(\lambda_\tau)\Theta_{0,m}\big((\cdot,\lambda),(x_{\lambda_\tau},\lambda_\tau)\big)\xi_\tau,\phi_j^\lambda\Big\rangle_{L^2(q_\lambda)} \nonumber\\
&=-M(\lambda)\Big\langle \int \Theta_{0,m}\big((\cdot,\lambda),(x',\lambda)\big)r_{\tau,m}(x',\lambda)\,q_\lambda(x')\,\mathrm dx',\phi_j^\lambda\Big\rangle_{L^2(q_\lambda)}+\eta_{\tau,j}^\lambda.
\label{eq:modal_derivative_after_substitution}
\end{align}
Using the residual expansion \Eqref{eq:spectral_residual_expansion}, we compute
\begin{align}
\int \Theta_{0,m}\big((x,\lambda),(x',\lambda)\big)r_{\tau,m}(x',\lambda)\,q_\lambda(x')\,\mathrm dx'
&=\int \Theta_{0,m}\big((x,\lambda),(x',\lambda)\big)\Big(\sum_{k\ge 1}a_{\tau,k}^\lambda\phi_k^\lambda(x')\Big)q_\lambda(x')\,\mathrm dx' \nonumber\\
&=\sum_{k\ge 1}a_{\tau,k}^\lambda\int \Theta_{0,m}\big((x,\lambda),(x',\lambda)\big)\phi_k^\lambda(x')\,q_\lambda(x')\,\mathrm dx' \nonumber\\
&=\sum_{k\ge 1}a_{\tau,k}^\lambda\kappa_k^\lambda\phi_k^\lambda(x),
\label{eq:kernel_action_on_residual}
\end{align}
where the last line follows from \Eqref{eq:spectral_eigen_relation}. Taking the $L^2(q_\lambda)$ inner product of \Eqref{eq:kernel_action_on_residual} with $\phi_j^\lambda$ and using orthonormality, we get
\begin{align}
\Big\langle \int \Theta_{0,m}\big((\cdot,\lambda),(x',\lambda)\big)r_{\tau,m}(x',\lambda)\,q_\lambda(x')\,\mathrm dx',\phi_j^\lambda\Big\rangle_{L^2(q_\lambda)}
&=\Big\langle \sum_{k\ge 1}a_{\tau,k}^\lambda\kappa_k^\lambda\phi_k^\lambda,\phi_j^\lambda\Big\rangle_{L^2(q_\lambda)} \nonumber\\
&=\sum_{k\ge 1}a_{\tau,k}^\lambda\kappa_k^\lambda\langle \phi_k^\lambda,\phi_j^\lambda\rangle_{L^2(q_\lambda)} \nonumber\\
&=a_{\tau,j}^\lambda\kappa_j^\lambda.
\label{eq:kernel_projection_identity}
\end{align}
Substituting \Eqref{eq:kernel_projection_identity} into \Eqref{eq:modal_derivative_after_substitution}, we conclude that
\begin{align}
\frac{\mathrm d}{\mathrm d\tau}a_{\tau,j}^\lambda
&=-M(\lambda)\kappa_j^\lambda a_{\tau,j}^\lambda+\eta_{\tau,j}^\lambda.
\label{eq:modal_ode}
\end{align}

We next derive a mode-wise energy bound. Differentiating $|a_{\tau,j}^\lambda|^2$ and applying \Eqref{eq:modal_ode}, we obtain
\begin{align}
\frac{\mathrm d}{\mathrm d\tau}|a_{\tau,j}^\lambda|^2
&=2a_{\tau,j}^\lambda\frac{\mathrm d}{\mathrm d\tau}a_{\tau,j}^\lambda \nonumber\\
&=2a_{\tau,j}^\lambda\Big(-M(\lambda)\kappa_j^\lambda a_{\tau,j}^\lambda+\eta_{\tau,j}^\lambda\Big) \nonumber\\
&=-2M(\lambda)\kappa_j^\lambda|a_{\tau,j}^\lambda|^2+2a_{\tau,j}^\lambda\eta_{\tau,j}^\lambda.
\label{eq:modal_energy_identity}
\end{align}
Applying Young's inequality with parameter $\gamma>0$, we have
\begin{align}
2|a_{\tau,j}^\lambda\eta_{\tau,j}^\lambda|
&\le \gamma|a_{\tau,j}^\lambda|^2+\gamma^{-1}|\eta_{\tau,j}^\lambda|^2.
\label{eq:modal_young}
\end{align}
Combining \Eqref{eq:modal_energy_identity} and \Eqref{eq:modal_young} gives
\begin{align}
\frac{\mathrm d}{\mathrm d\tau}|a_{\tau,j}^\lambda|^2
&\le -\big(2M(\lambda)\kappa_j^\lambda-\gamma\big)|a_{\tau,j}^\lambda|^2+\gamma^{-1}|\eta_{\tau,j}^\lambda|^2.
\label{eq:modal_energy_inequality}
\end{align}
Applying the integrating-factor argument to \Eqref{eq:modal_energy_inequality}, we obtain
\begin{align}
|a_{\tau,j}^\lambda|^2
&\le e^{-(2M(\lambda)\kappa_j^\lambda-\gamma)\tau}|a_{0,j}^\lambda|^2+\int_0^\tau e^{-(2M(\lambda)\kappa_j^\lambda-\gamma)(\tau-s)}\gamma^{-1}|\eta_{s,j}^\lambda|^2\,\mathrm ds.
\label{eq:modal_integral_bound}
\end{align}

Since $\{\phi_j^\lambda\}_{j\ge 1}$ is a complete orthonormal system and the expansion \Eqref{eq:spectral_residual_expansion} holds in $L^2(q_\lambda;\mathbb R^d)$, Parseval's identity gives
\begin{align}
\|r_{\tau,m}(\cdot,\lambda)\|_{L^2(q_\lambda)}^2
&=\sum_{j\ge 1}|a_{\tau,j}^\lambda|^2.
\label{eq:parseval_identity}
\end{align}
Recalling that
\begin{align}
\bar{\ell}_{\tau,m}(\lambda)
&=\frac{1}{2}M(\lambda)\|r_{\tau,m}(\cdot,\lambda)\|_{L^2(q_\lambda)}^2,
\label{eq:local_excess_definition}
\end{align}
we obtain from \Eqref{eq:modal_integral_bound} and \Eqref{eq:parseval_identity} that
\begin{align}
\bar{\ell}_{\tau,m}(\lambda)
&=\frac{M(\lambda)}{2}\sum_{j\ge 1}|a_{\tau,j}^\lambda|^2 \nonumber\\
&\le \frac{M(\lambda)}{2}\sum_{j\ge 1}e^{-(2M(\lambda)\kappa_j^\lambda-\gamma)\tau}|a_{0,j}^\lambda|^2 +\frac{M(\lambda)}{2\gamma}\sum_{j\ge 1}\int_0^\tau e^{-(2M(\lambda)\kappa_j^\lambda-\gamma)(\tau-s)}|\eta_{s,j}^\lambda|^2\,\mathrm ds.
\label{eq:local_excess_bound}
\end{align}
Finally, by the Bayes decomposition in \cref{lem:bayes_decomposition_local_elbo_quantified_revised},
\begin{align}
\ell_{\tau,m}(\lambda)
&=\ell_\lambda^\star+\bar{\ell}_{\tau,m}(\lambda).
\label{eq:bayes_decomposition}
\end{align}
Substituting \Eqref{eq:local_excess_bound} into \Eqref{eq:bayes_decomposition} yields \Eqref{eq:spectral_local_elbo_bound}.
\end{proof}

\section{Unified Effective-Amplitude-Aware Loss Weight}
\label{apdx:sec:unified_weight}

The preceding analysis shows that optimization imbalance arises when different target parameterizations expose different amounts of usable signal across noise levels. This suggests that the allocation rule should not be tied merely to the total target energy, nor to a sign-sensitive linear alignment quantity, but rather to the effective amplitude with which each independent source component is expressed through the corrupted input. We now formalize this principle and derive a unified prototype weight that applies to the canonical diffusion and flow-matching targets in a single formula.

\begin{proposition}[Unified effective-amplitude-aware weight]
\label{prop:unified_target_adaptive_weight}
Consider the generalized target parameterization
\begin{align}
x_\lambda &= \alpha(\lambda)x_0+\sigma(\lambda)\epsilon, \qquad y_\lambda=c_x(\lambda)x_0+c_\epsilon(\lambda)\epsilon,
\label{eq:generalized_diffusion_target_unified_weight}
\end{align}
where $x_0\perp\epsilon$, $\mathbb E[x_0]=0$, $\mathrm{Cov}(x_0)=I$, and $\epsilon\sim\mathcal N(0,I)$. Define the effective allocation by
\begin{align}
M_y(\lambda) &= w_y(\lambda)p(\lambda)\left|\frac{\mathrm dt}{\mathrm d\lambda}\right|.
\label{eq:allocation_operator_unified_weight}
\end{align}
Under fixed $p(\lambda)$ and fixed $\left|\frac{\mathrm dt}{\mathrm d\lambda}\right|$, define the effective-amplitude score by
\begin{align}
\omega_y(\lambda) &= \sqrt{\frac{1}{d}\,\mathbb E\!\left[\left\|c_x(\lambda)\alpha(\lambda)x_0+c_\epsilon(\lambda)\sigma(\lambda)\epsilon\right\|_2^2\right]}.
\label{eq:effective_amplitude_score_unified_weight}
\end{align}
Then the prototype weight is
\begin{align}
w_y^\star(\lambda) &\propto \omega_y(\lambda)=\sqrt{\big(c_x(\lambda)\alpha(\lambda)\big)^2+\big(c_\epsilon(\lambda)\sigma(\lambda)\big)^2}.
\label{eq:unified_weight_general_rule}
\end{align}
Consequently, by substitution into \Eqref{eq:unified_weight_general_rule}, the canonical targets satisfy
\begin{align}
w_y^\star(\lambda) &\propto
\begin{cases}
\alpha(\lambda), & y_\lambda=x_0, \\[4pt]
\sigma(\lambda), & y_\lambda=\epsilon, \\[4pt]
\sqrt{\alpha(\lambda)^2+\sigma(\lambda)^2}, & y_\lambda=u_\lambda=\epsilon-x_0, \\[4pt]
\alpha(\lambda)\sigma(\lambda), & y_\lambda=v_\lambda=\alpha(\lambda)\epsilon-\sigma(\lambda)x_0,
\end{cases}
\label{eq:unified_weight_master_variance}
\end{align}
where $u_\lambda=\epsilon-x_0$ and $v_\lambda=\alpha(\lambda)\epsilon-\sigma(\lambda)x_0$, up to a common normalization factor.
\end{proposition}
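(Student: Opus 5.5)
The proof is essentially a computation, and I would organize it in three steps: compute the score in closed form, turn it into the weight, then substitute the four canonical targets. Expand the squared norm inside \Eqref{eq:effective_amplitude_score_unified_weight}. Writing $a=c_x(\lambda)\alpha(\lambda)$ and $b=c_\epsilon(\lambda)\sigma(\lambda)$, we have $\|a x_0+b\epsilon\|_2^2=a^2\|x_0\|_2^2+2ab\langle x_0,\epsilon\rangle+b^2\|\epsilon\|_2^2$.

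We then take expectations term by term. The cross term vanishes, since independence $x_0\perp\epsilon$ together with $\mathbb E[\epsilon]=0$ gives $\mathbb E\langle x_0,\epsilon\rangle=\langle\mathbb E x_0,\mathbb E\epsilon\rangle=0$. The remaining terms follow from the covariance assumptions: $\mathbb E\|x_0\|_2^2=\mathrm{tr}\,\mathrm{Cov}(x_0)+\|\mathbb E x_0\|^2=d$, and $\mathbb E\|\epsilon\|_2^2=d$. Dividing by $d$ and taking the square root gives $\omega_y(\lambda)=\sqrt{a^2+b^2}$, which is the closed form in \Eqref{eq:unified_weight_general_rule}.

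Next comes the prototype-weight statement. I would present this as a definitional step rather than a derived optimality claim. With $p(\lambda)$ and $|\mathrm dt/\mathrm d\lambda|$ held fixed, \Eqref{eq:allocation_operator_unified_weight} shows that $M_y(\lambda)$ is a fixed positive multiple of $w_y(\lambda)$. Requiring the allocation to track recoverable amplitude, $M_y\propto\omega_y\,p\,|\mathrm dt/\mathrm d\lambda|$, is therefore the same as setting $w_y^\star\propto\omega_y$. The proportionality constant is left free and absorbed into the normalization that preserves the average loss scale, as in \cref{subsec:elucidated_representation_diffusion}. The main obstacle here is interpretive rather than technical. Nothing earlier in the paper yields $w^\star\propto\omega_y$ as the solution of an optimization problem, so I would state clearly that the proposition fixes this as the prototype rule, motivated by \cref{prop:quantitative_representation_degradation_revised}, and not claim more.

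Finally, substitute the target coefficients; the algebra for each case is immediate.
For $y=x_0$: $(c_x,c_\epsilon)=(1,0)$, so $\omega=|\alpha|=\alpha$ (using $\alpha\ge0$).
For $y=\epsilon$: $(c_x,c_\epsilon)=(0,1)$, so $\omega=\sigma$.
For $u=\epsilon-x_0$: $(c_x,c_\epsilon)=(-1,1)$, so $\omega=\sqrt{\alpha^2+\sigma^2}$.
For $v=\alpha\epsilon-\sigma x_0$: $(c_x,c_\epsilon)=(-\sigma,\alpha)$, so $\omega=\sqrt{2}\,\alpha\sigma$. The factor $\sqrt2$ is a constant and is absorbed into the common normalization, giving the stated $\alpha\sigma$.

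The only care needed in this last step is the sign convention $\alpha,\sigma\ge0$, which lets $|\cdot|$ be dropped. I would also remark that the RMS form is sign-invariant, so the negative coefficients in $u$ and $v$ cause no cancellation.
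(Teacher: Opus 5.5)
Your proposal is correct and follows the paper's proof essentially step for step. Like the paper, you expand the squared norm of the channel-scaled target, kill the cross term by independence and zero mean, use $\mathbb E\|x_0\|_2^2=\mathbb E\|\epsilon\|_2^2=d$, take the weight rule itself as the defining prototype rather than deriving it, and substitute the four coefficient pairs; you also make explicit two points the paper leaves implicit in its ``$\propto$'', namely the $\sqrt{2}$ factor in the $v$ case and the convention $\alpha,\sigma\ge 0$.
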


\begin{proof}
The guiding principle is to measure the effective scale with which each independent source component contributes through the corrupted input $x_\lambda$. Since
\begin{align}
x_\lambda &= \alpha(\lambda)x_0+\sigma(\lambda)\epsilon,
\end{align}
the $x_0$-channel is expressed with amplitude $\alpha(\lambda)$ and the $\epsilon$-channel is expressed with amplitude $\sigma(\lambda)$. Therefore, for the generalized target
\begin{align}
y_\lambda &= c_x(\lambda)x_0+c_\epsilon(\lambda)\epsilon,
\end{align}
the corresponding channel-scaled effective target is
\begin{align}
\widetilde y_\lambda &\triangleq c_x(\lambda)\alpha(\lambda)x_0+c_\epsilon(\lambda)\sigma(\lambda)\epsilon.
\label{eq:effective_scaled_target_unified_weight}
\end{align}
By definition,
\begin{align}
\omega_y(\lambda) &= \sqrt{\frac{1}{d}\,\mathbb E\!\left[\|\widetilde y_\lambda\|_2^2\right]} \nonumber\\
&= \sqrt{\frac{1}{d}\,\mathbb E\!\left[\left\|c_x(\lambda)\alpha(\lambda)x_0+c_\epsilon(\lambda)\sigma(\lambda)\epsilon\right\|_2^2\right]}.
\label{eq:effective_score_first_step_unified_weight}
\end{align}
Expanding the squared norm gives
\begin{align}
\omega_y(\lambda)^2 &= \frac{1}{d}\,\mathbb E\!\left[\left\|c_x(\lambda)\alpha(\lambda)x_0+c_\epsilon(\lambda)\sigma(\lambda)\epsilon\right\|_2^2\right] \nonumber\\
&= \frac{1}{d}\,\mathbb E\!\left[c_x(\lambda)^2\alpha(\lambda)^2\|x_0\|_2^2+c_\epsilon(\lambda)^2\sigma(\lambda)^2\|\epsilon\|_2^2+2c_x(\lambda)c_\epsilon(\lambda)\alpha(\lambda)\sigma(\lambda)\langle x_0,\epsilon\rangle\right].
\label{eq:effective_score_expansion_unified_weight}
\end{align}
Using $x_0\perp\epsilon$, $\mathbb E[x_0]=0$, $\mathrm{Cov}(x_0)=I$, and $\epsilon\sim\mathcal N(0,I)$, we have
\begin{align}
\mathbb E\langle x_0,\epsilon\rangle &= 0, \nonumber\\
\mathbb E\|x_0\|_2^2 &= \mathrm{Tr}\!\big(\mathrm{Cov}(x_0)\big)=\mathrm{Tr}(I)=d, \nonumber\\
\mathbb E\|\epsilon\|_2^2 &= \mathrm{Tr}\!\big(\mathrm{Cov}(\epsilon)\big)=\mathrm{Tr}(I)=d.
\label{eq:basic_moment_identities_unified_weight}
\end{align}
Substituting \Eqref{eq:basic_moment_identities_unified_weight} into \Eqref{eq:effective_score_expansion_unified_weight} yields
\begin{align}
\omega_y(\lambda)^2 &= \frac{1}{d}\left(c_x(\lambda)^2\alpha(\lambda)^2 d+c_\epsilon(\lambda)^2\sigma(\lambda)^2 d\right) \nonumber\\
&= c_x(\lambda)^2\alpha(\lambda)^2+c_\epsilon(\lambda)^2\sigma(\lambda)^2.
\label{eq:effective_score_squared_unified_weight}
\end{align}
Therefore,
\begin{align}
\omega_y(\lambda) &= \sqrt{\big(c_x(\lambda)\alpha(\lambda)\big)^2+\big(c_\epsilon(\lambda)\sigma(\lambda)\big)^2},
\end{align}
which proves \Eqref{eq:unified_weight_general_rule}. The canonical-target formulas in \Eqref{eq:unified_weight_master_variance} follow immediately by substituting the corresponding coefficient pairs $(c_x,c_\epsilon)$ into \Eqref{eq:unified_weight_general_rule}.
\end{proof}

\begin{remark}[Reduction from allocation operator to weight]
Under the standard training configuration considered in this work, the log-SNR variable is sampled uniformly, so $p(\lambda)=\mathrm{const.}$, and the continuous-time schedule $t(\lambda)$ is parameterized linearly with respect to $\lambda$, so $\left|\frac{\mathrm dt}{\mathrm d\lambda}\right|=\mathrm{const.}$. Therefore,
\begin{align}
M_y(\lambda) = w_y(\lambda)p(\lambda)\left|\frac{\mathrm dt}{\mathrm d\lambda}\right| \propto w_y(\lambda).
\label{eq:allocation_reduction_unified_weight}
\end{align}
Consequently, in this regime, the allocation principle reduces directly to a target-adaptive loss weighting rule, up to an overall normalization constant.
\end{remark}

\begin{remark}[Interpretation]
The prototype score $A_y(\lambda)$ is not based on a sign-sensitive linear alignment quantity such as $c_x(\lambda)\alpha(\lambda)+c_\epsilon(\lambda)\sigma(\lambda)$. Instead, it measures the effective RMS amplitude contributed by the two independent source channels after scaling by the amplitudes with which those channels appear in the corrupted input. As a result, the rule is invariant to sign flips of the target coefficients and does not spuriously collapse for targets such as velocity, where the two channels contribute with opposite signs.
\end{remark}

\section{Toy Experiments on 2D GMM Dataset} 
\label{sec:toy_experiments}

To empirically validate our theoretical findings on Representation Degradation, NTK spectral decay, and gradient contamination, we design controlled experiments on a 2D Gaussian Mixture Model (GMM). This simplified, structured setting allows us to visually and quantitatively analyze representation dynamics, empirical NTK, and phase space trajectories across noise levels, avoiding the confounding factors of complex architectures.

\subsection{Experimental Setup} 
\label{subsec:toy_setup}

\textbf{Dataset Construction.} We synthesize a 2D GMM with 4 symmetric clusters at $\mu \in \{(\pm 2.0, \pm 2.0)\}$ as our initial data distribution $q(x_0)$. Samples are generated by uniformly selecting a center and injecting Gaussian noise ($\sigma_0 = 0.3$), ensuring topologically distinct clusters with intra-cluster variance. 

\textbf{Diffusion Processes and Schedules.} For clean data $x_0$, the perturbed data marginal at continuous time $t \sim \mathcal{U}(0, 1)$ is $x_t = \alpha_t x_0 + \sigma_t \epsilon$, where $\epsilon \sim \mathcal{N}(0, I)$. To demonstrate the universality of our findings across diverse Signal-to-Noise Ratio (SNR) trajectories~\cite{kingma2021variational,kingma2023understanding}, we evaluate three schedule interpolants:
\begin{itemize}[leftmargin=0.2in]
    \item \textit{Linear (Flow Matching)}~\cite{lipman2022flow}: $\alpha_t = 1 - t$, $\sigma_t = t$.
    \item \textit{Variance Preserving (VP)}~\cite{song2020score}: Standard Score SDE schedule mapped to $t \in [0, 1]$, with a linear noise schedule $\beta(t) = \beta_{\text{min}} + t(\beta_{\text{max}} - \beta_{\text{min}})$, where $\beta_{\text{min}} = 0.1$ and $\beta_{\text{max}} = 20.0$.
    \item \textit{Generalized Variance Preserving (GVP)}~\cite{ma2024sit}: A spherical interpolant defined by $\alpha_t = \cos(t\pi/2)$ and $\sigma_t = \sin(t\pi/2)$.
\end{itemize}

\textbf{Prediction Target Formulations.} To verify that gradient contamination and degradation are universal regardless of output parameterization, we evaluate four prediction targets. The network $f_\theta(x_t, t)$ predicts either the noise $\epsilon$ (standard DDPM~\cite{ho2020denoising}), clean data $x_0$, velocity~\cite{salimans2022progressive} $v_t = \alpha_t \epsilon - \sigma_t x_0$, or flow matching vector field~\cite{lipman2022flow} $u_t = \epsilon - x_0$.

\textbf{Network Architecture.} The prediction network is a time-conditioned MLP mapping 2D inputs $(x_t, t)$ to 2D outputs. Continuous time $t$ is projected into a 64-dimensional sinusoidal positional embedding and concatenated with $x_t$. The feature extractor is a 3-layer MLP (hidden dimension $m=256$, SiLU activations), followed by a linear readout head. 

\textbf{Training Details.} We train separate models across the four target formulations ($\epsilon, x_0, v, u$) using the Mean Squared Error (MSE) objective. Models are optimized with Adam (learning rate $1 \times 10^{-3}$, batch size $512$) for $2,000$ iterations to ensure convergence. Alongside global models trained over $t \in [0, 1]$, we train piecewise independent models on restricted time bins (\eg, $t \in [0.0, 0.2]$) to isolate local learning dynamics and establish empirical Bayes baselines.

\subsection{Gradient Contamination and the Bayes Floor Gap}
\label{subsec:toy_gradient_contamination}

We first empirically validate the local ELBO decomposition and the subsequent degradation mechanisms established in \cref{apdx:sec:elbo_dynamics} and \cref{apdx:sec:spectral_degradation}. 

\textbf{The Bayes Floor Gap and Loss Domination.} 
Recall from \cref{lem:bayes_decomposition_local_elbo_quantified_revised} that the local loss decomposes into an irreducible Bayes floor $\ell_\lambda^\star$ and an optimizable excess $\bar{\ell}_{\tau,m}(\lambda)$. To validate \cref{thm:quantitative_degradation_under_recoverability_mismatch_revised}, \Figref{fig:toy_bayes_floor} (top rows) reveals a striking loss domination phenomenon in the globally trained model: the overall optimization process is hijacked by regions exhibiting the largest absolute loss. Counterintuitively, the excess gap is minimal where the Bayes floor is large, but massive where the floor is inherently low (\eg, $t \to 1$ for $\epsilon_\theta$). 

Crucially, partitioning the trajectory into independent piecewise bins (bottom rows) significantly mitigates this bottleneck, allowing local models to closely approximate the Bayes floor. Isolating optimization to narrower noise scales relieves the network from the burden of reconciling conflicting gradients. This clearly corroborates our derivation: the massive excess in theoretically simpler regimes is not a capacity deficit, but a fundamental consequence of weakened effective contraction and cross-noise interference under global parameter sharing.

\begin{figure}[htbp]
    \centering
    \vspace{-0.5em}
    % Top row: epsilon and start_x predictions
    \begin{subfigure}{0.49\textwidth}
        \centering
        \includegraphics[width=\textwidth]{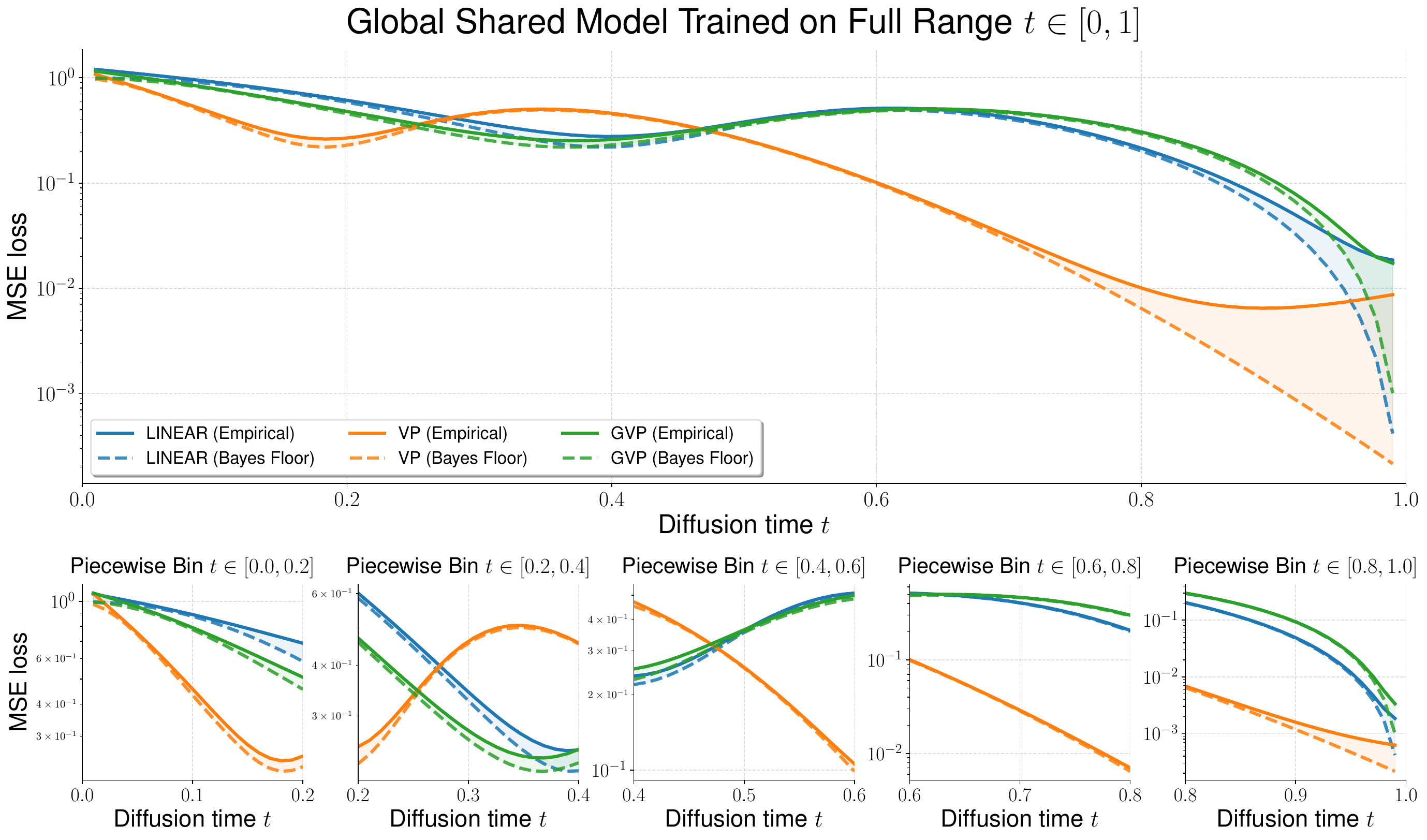}
        \caption{Empirical Loss vs. Bayes Floor ($\epsilon_\theta$)}
        \label{fig:toy_bayes_floor_eps}
    \end{subfigure}
    \hfill
    \begin{subfigure}{0.49\textwidth}
        \centering
        \includegraphics[width=\textwidth]{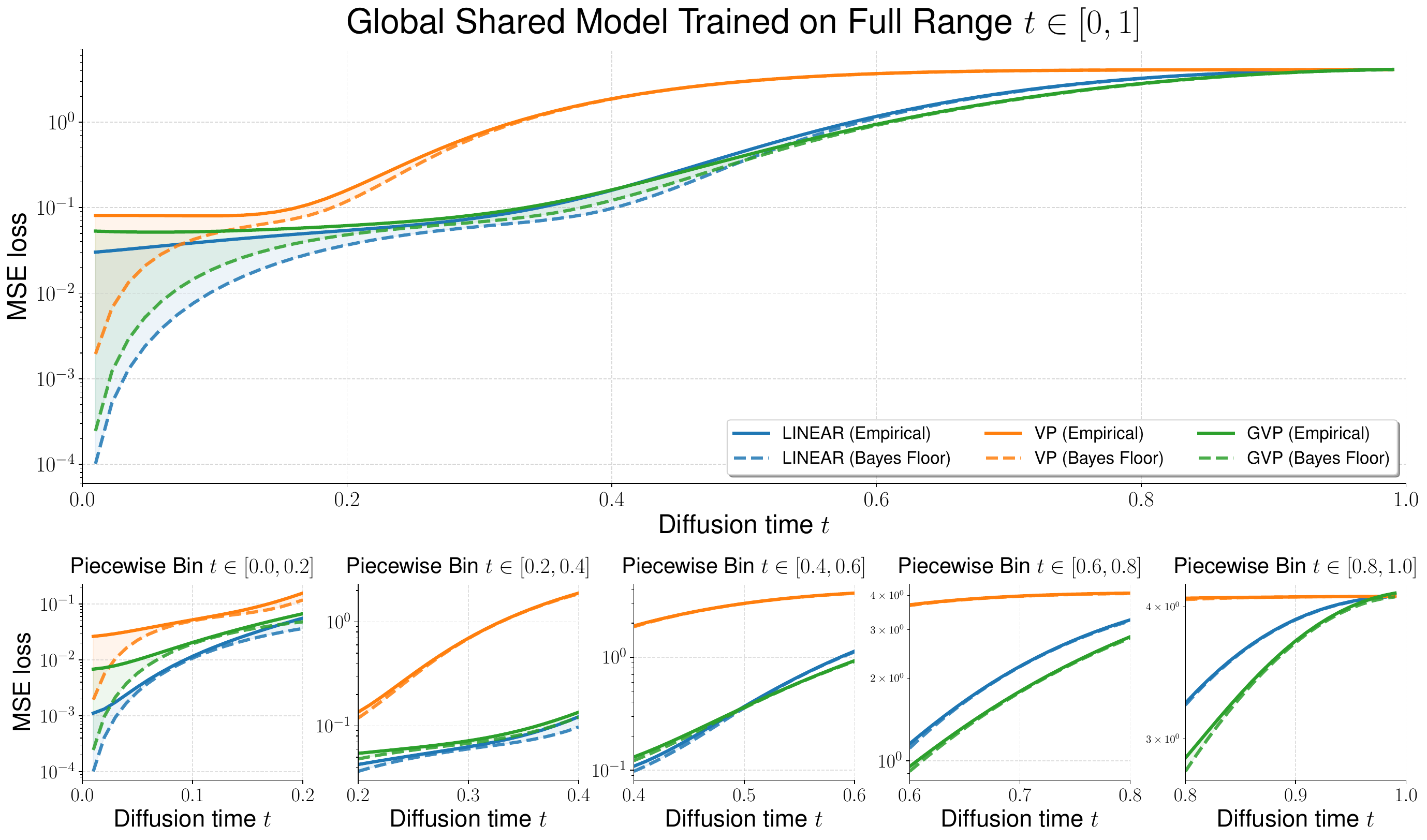}
        \caption{Empirical Loss vs. Bayes Floor ($x_\theta$)}
        \label{fig:toy_bayes_floor_x0}
    \end{subfigure}
    
    \vspace{1.0em}
    
    % Bottom row: velocity and vector predictions
    \begin{subfigure}{0.49\textwidth}
        \centering
        \includegraphics[width=\textwidth]{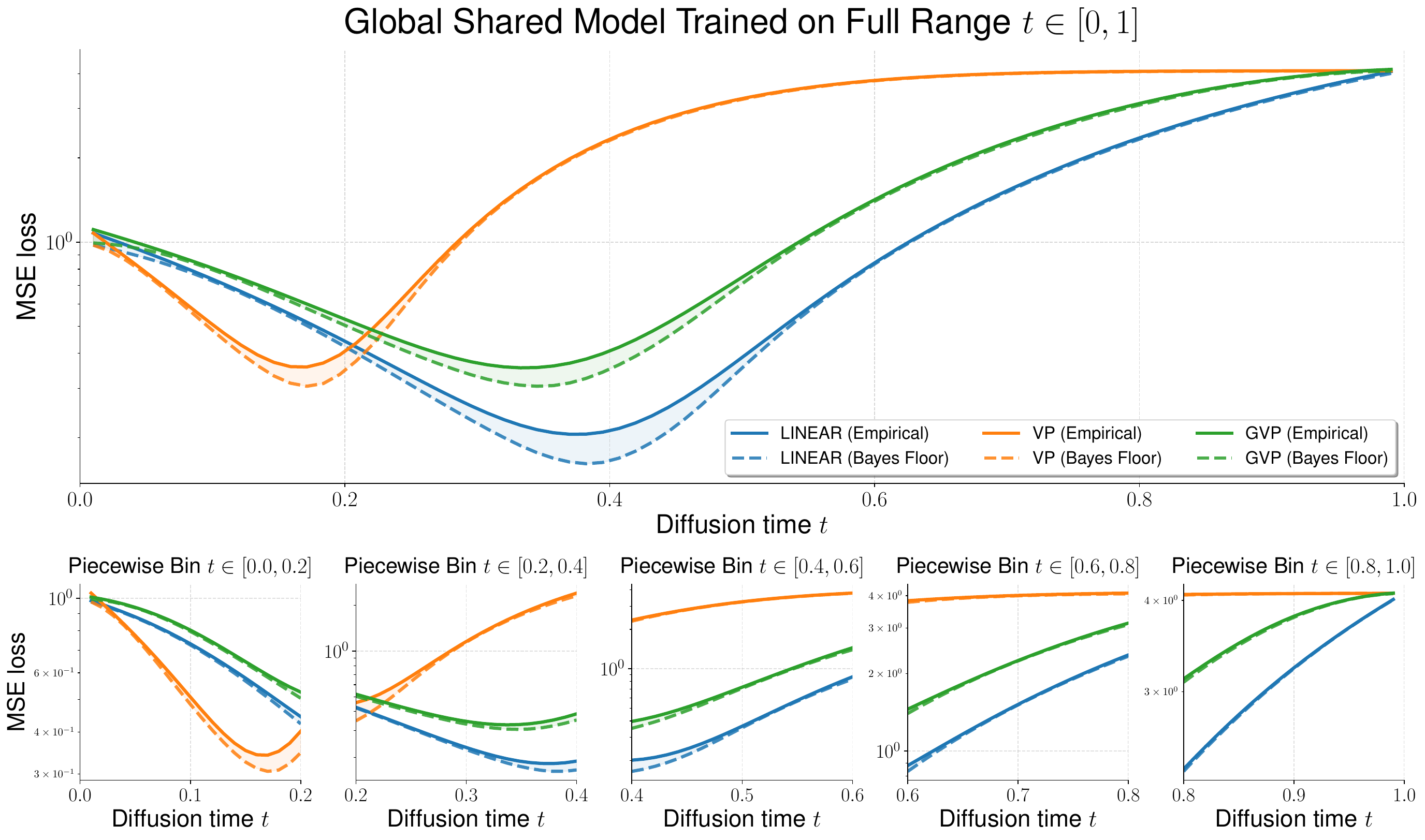}
        \caption{Empirical Loss vs. Bayes Floor ($v_\theta$)}
        \label{fig:toy_bayes_floor_v}
    \end{subfigure}
    \hfill
    \begin{subfigure}{0.49\textwidth}
        \centering
        \includegraphics[width=\textwidth]{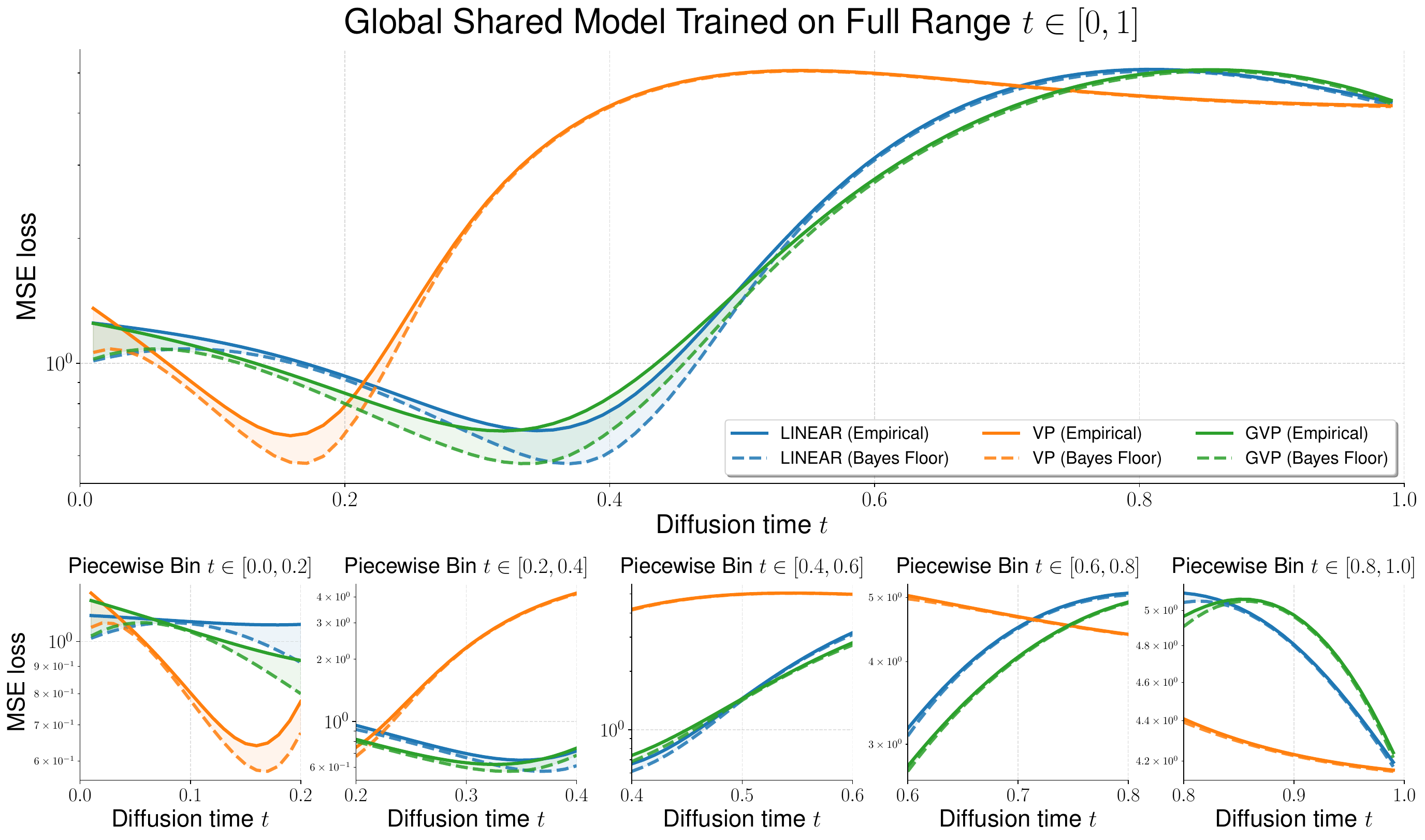}
        \caption{Empirical Loss vs. Bayes Floor ($u_\theta$)}
        \label{fig:toy_bayes_floor_u}
    \end{subfigure}
    %\vspace{-0.5em}
    \caption{\textbf{Empirical Loss and Analytical Bayes Floor across Different Targets.} We evaluate the globally trained shared models (top row of each subfigure) and piecewise independent models (bottom row of each subfigure) across the full diffusion time $t \in [0, 1]$.} 
    \label{fig:toy_bayes_floor}
    \vspace{-0.5em}
\end{figure}

\textbf{Phase Space Trajectories and Gradient Contamination.} 
To trace this optimization bottleneck, we analyze the learning signals. \cref{prop:quantitative_representation_degradation_revised} establishes that as recoverable signal diminishes, parameter gradients become increasingly dominated by the irreducible Bayes-noise term $\xi_\lambda$. To empirically visualize this, we map local prediction targets into a 2D phase space spanned by their recoverable signal and unrecoverable noise norms. As \Figref{fig:gradient_decomposition_all} illustrates, degradation trajectories are strictly \emph{target-dependent}. Severe contamination crossing into the noise-dominated regime occurs exclusively at \emph{low} noise levels ($t \to 0$) for $\epsilon_\theta$, but at \emph{high} noise levels ($t \to 1$) for $x_\theta$. Mixed targets like $v_\theta$ and $u_\theta$ exhibit more balanced trajectories, avoiding single-pole degradation.

This provides direct empirical evidence for our framework. The network is bombarded by pure Bayes-noise gradients exactly where a target's irreducible Bayes floor is highest. Under global parameter sharing, these contaminated gradients propagate throughout the architecture, fundamentally disrupting the network's capacity to minimize excess risk even in simpler regimes.

\begin{figure}[htbp]
    \centering
    \vspace{-4em}
    % Row 1: Left - epsilon-prediction, Right - x-prediction
    \begin{subfigure}{0.49\textwidth}
        \centering
        \includegraphics[width=\linewidth]{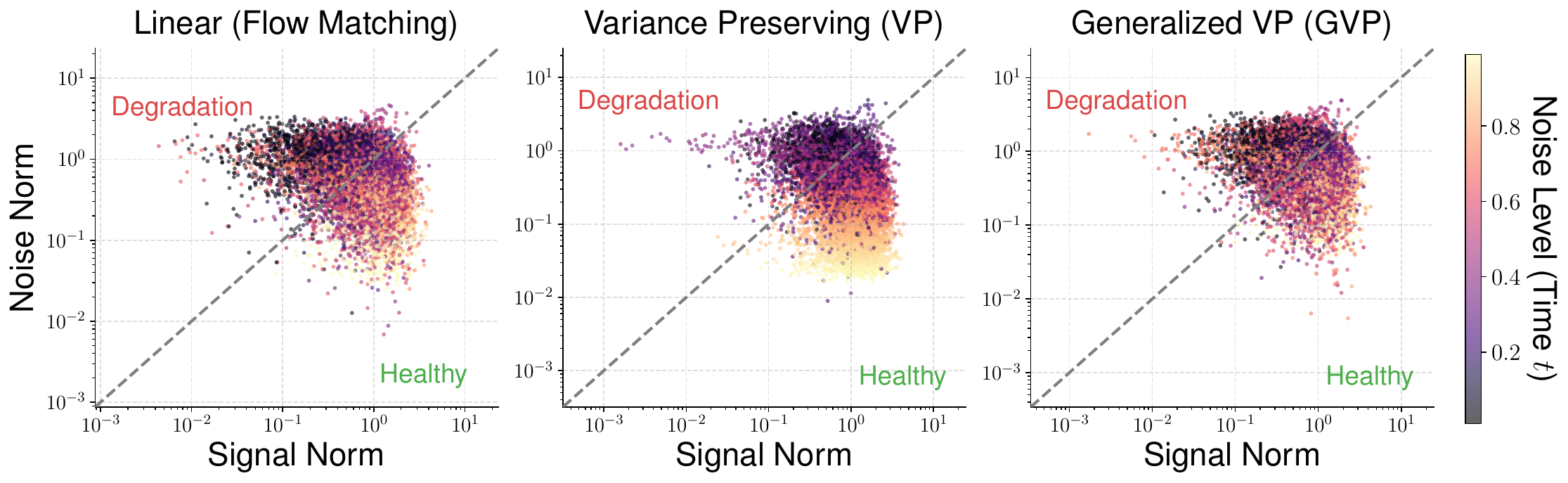}
        \vspace{-1.5em}
        \caption{$\epsilon_\theta$-prediction phase space}
        \label{fig:gradient_decomposition_eps}
    \end{subfigure}
    \hfill
    \begin{subfigure}{0.49\textwidth}
        \centering
        \includegraphics[width=\linewidth]{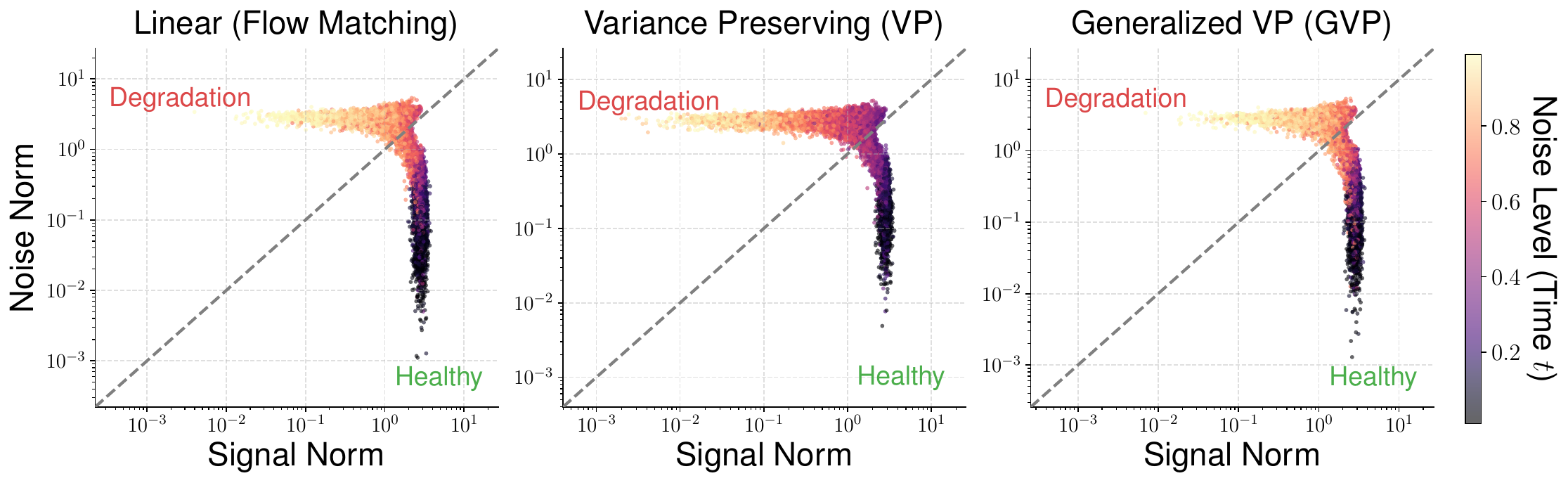}
        \vspace{-1.5em}
        \caption{$x_\theta$-prediction phase space}
        \label{fig:gradient_decomposition_x0}
    \end{subfigure}
    
    \vspace{0.5em} 
    
    % Row 2: Left - velocity-prediction, Right - vector-prediction
    \begin{subfigure}{0.49\textwidth}
        \centering
        \includegraphics[width=\linewidth]{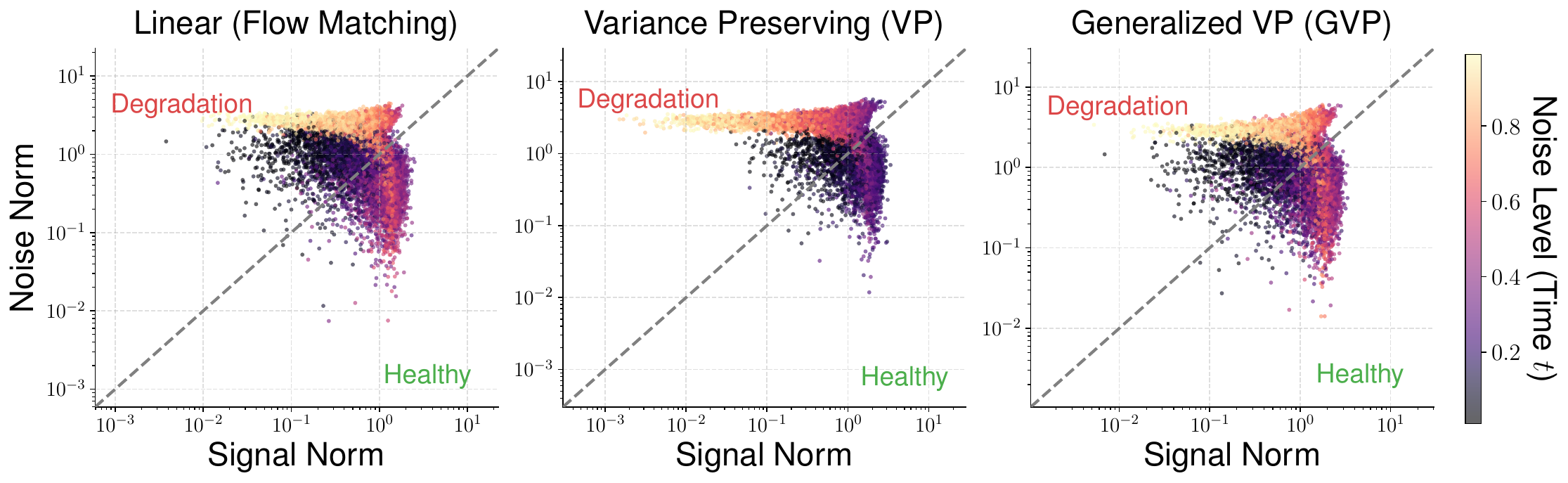}
        \vspace{-1.5em}
        \caption{$v_\theta$-prediction phase space}
        \label{fig:gradient_decomposition_v}
    \end{subfigure}
    \hfill
    \begin{subfigure}{0.49\textwidth}
        \centering
        \includegraphics[width=\linewidth]{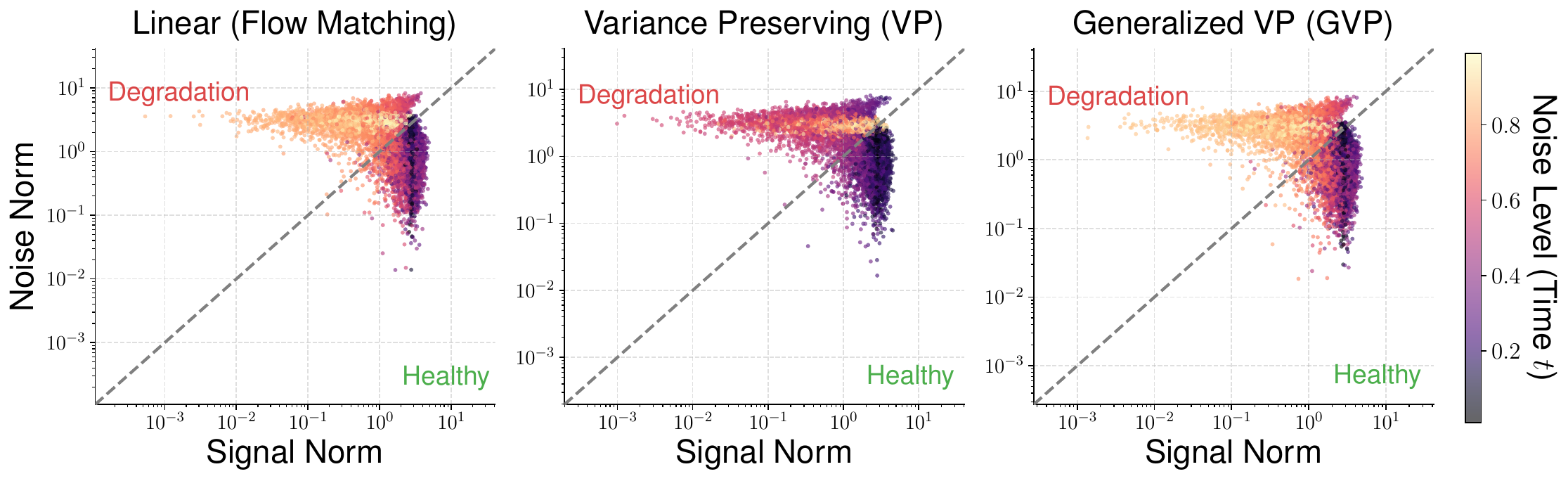}
        \vspace{-1.5em}
        \caption{$u_\theta$-prediction phase space}
        \label{fig:gradient_decomposition_u}
    \end{subfigure}
    \vspace{-0.5em}
    \caption{\textbf{Phase Space Trajectories of Gradient Contamination.} The samples universally migrate from a healthy signal-dominated regime to a severely degraded noise-dominated regime as $t$ increases. The diagonal dashed line represents the 50\% contamination boundary.}
    \label{fig:gradient_decomposition_all}
    \vspace{-1.5em}
\end{figure}

\subsection{Representation Collapse in Feature Space.}
\label{subsec:toy_representation_collapse}

In \cref{thm:spectral_local_elbo_bound}, we theoretically proved that under the fixed-noise surrogate, the shared-parameter architecture suffers from a spectral decay, wherein modes associated with the fixed-noise kernel are weakly contracted and susceptible to persistent Bayes forcing. To geometrically elucidate the consequence of this theoretical spectral decay, we track the temporal evolution of the deep internal representations. 

We extract the 256-dimensional hidden representations $h_{\theta,m}(x_t, t)$ immediately preceding the readout head at four uniformly sampled diffusion times $t \in \{0.1, 0.4, 0.7, 0.9\}$. To ensure a geometrically rigorous comparison, we fit a PCA solely on the healthy low-noise representations at $t=0.1$ and project all subsequent higher-noise representations onto this exact same principal subspace.

As vividly illustrated in \Figref{fig:toy_pca_evolution}, the representation topology undergoes a continuous and catastrophic collapse. At $t=0.1$, the representations perfectly preserve the structural boundaries. However, as $t \to 0.9$, these distinct clusters physically shrink and aggressively migrate towards the origin, collapsing into a tightly entangled unstructured singular manifold. This geometric visualization provides direct empirical substantiation of the spectral collapse predicted by \cref{thm:spectral_local_elbo_bound}.

\begin{figure}[!h]
\vspace{-0.5em}
%\begin{figure}[htbp]
\centering
% ---- Row 1 ----
\begin{adjustbox}{width=0.85\textwidth}
\begin{subfigure}{0.49\textwidth}
    \centering
    \includegraphics[width=\textwidth]{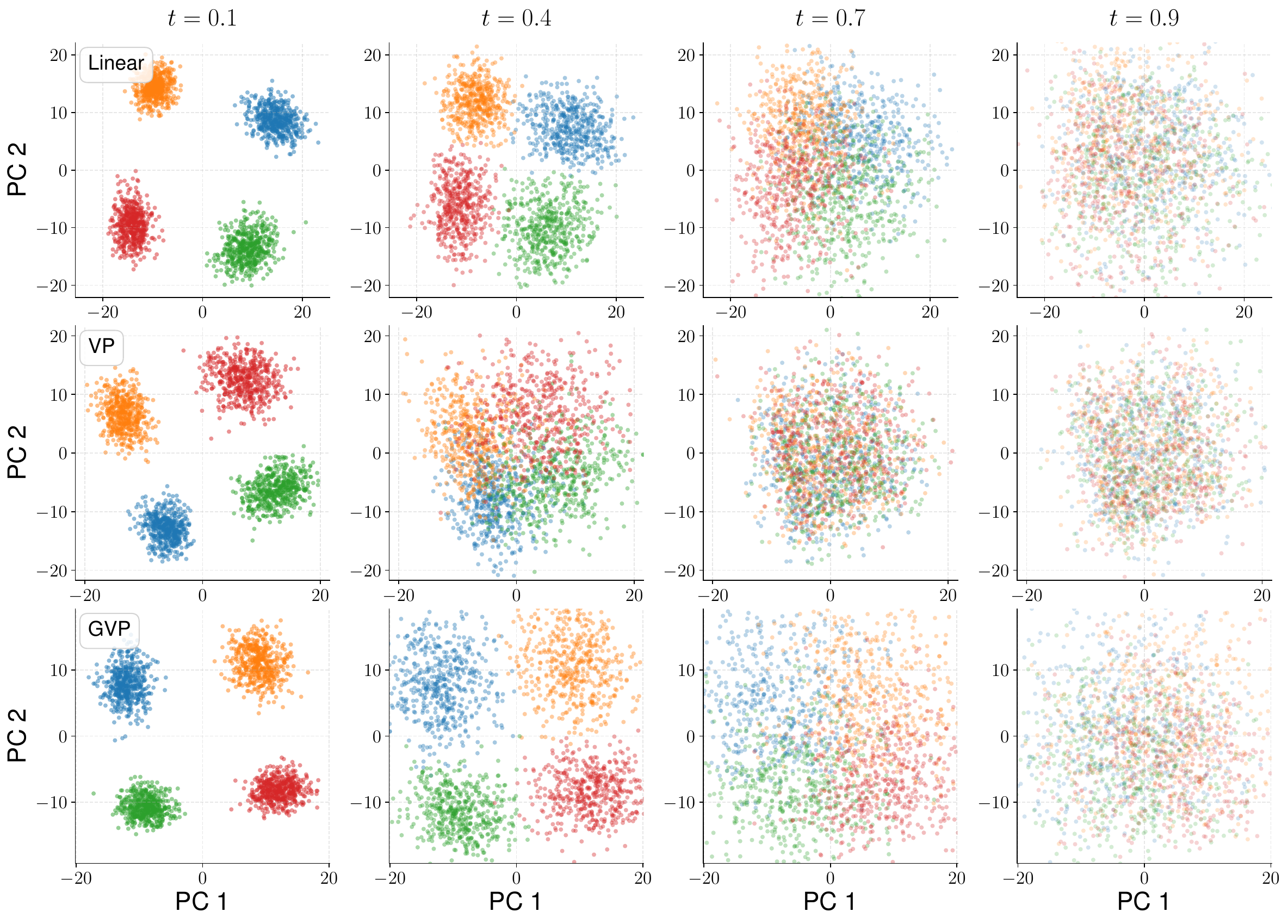}
    \vspace{-1.5em}
    \caption{$\epsilon_\theta$-prediction}
\end{subfigure}
\hfill
\begin{subfigure}{0.49\textwidth}
    \centering
    \includegraphics[width=\textwidth]{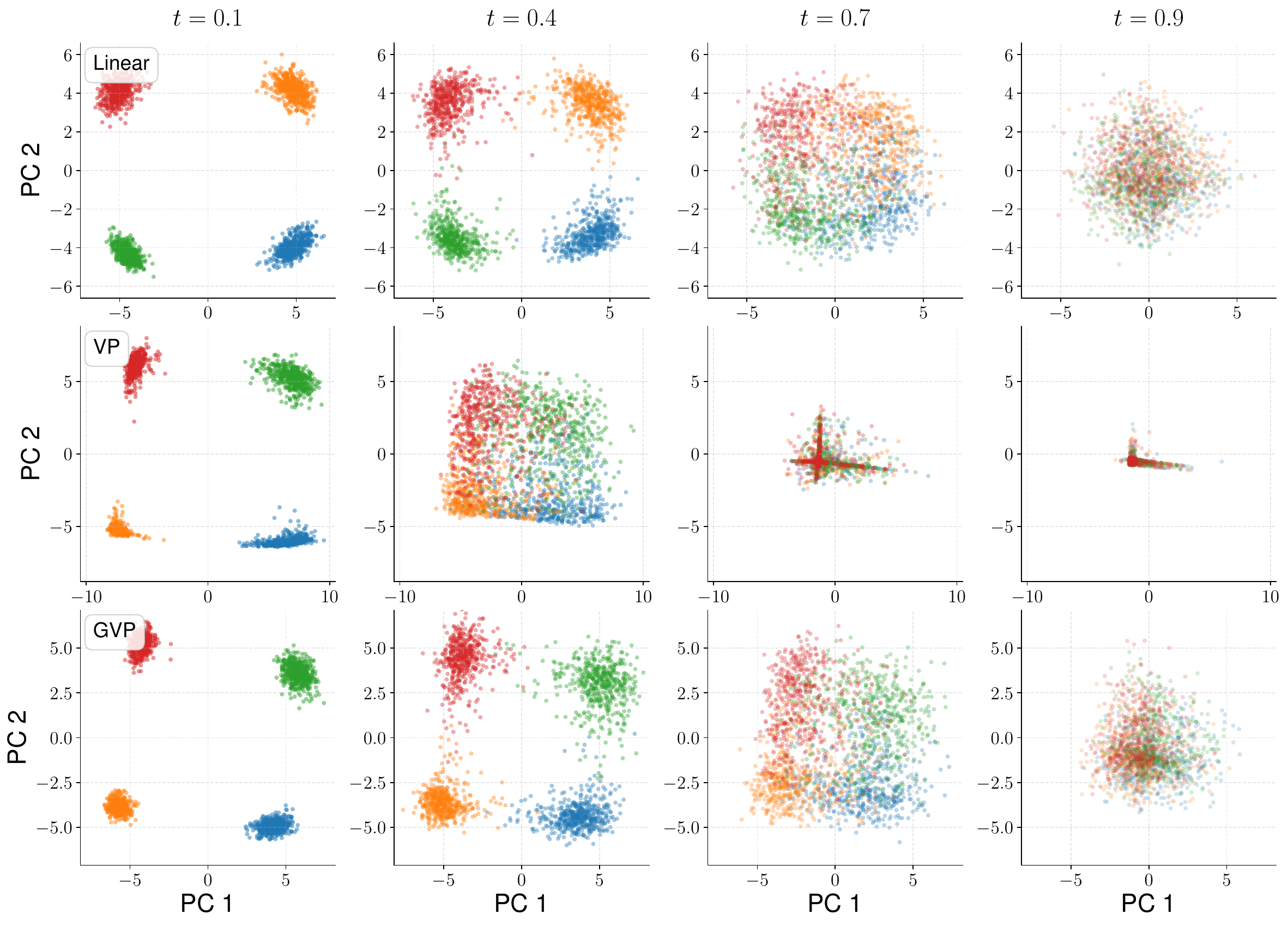}
    \vspace{-1.5em}
    \caption{$x_\theta$-prediction}
\end{subfigure}
\end{adjustbox}

\vspace{0.5em}

% ---- Row 2 ----
\begin{adjustbox}{width=0.85\textwidth}
\begin{subfigure}{0.49\textwidth}
    \centering
    \includegraphics[width=\textwidth]{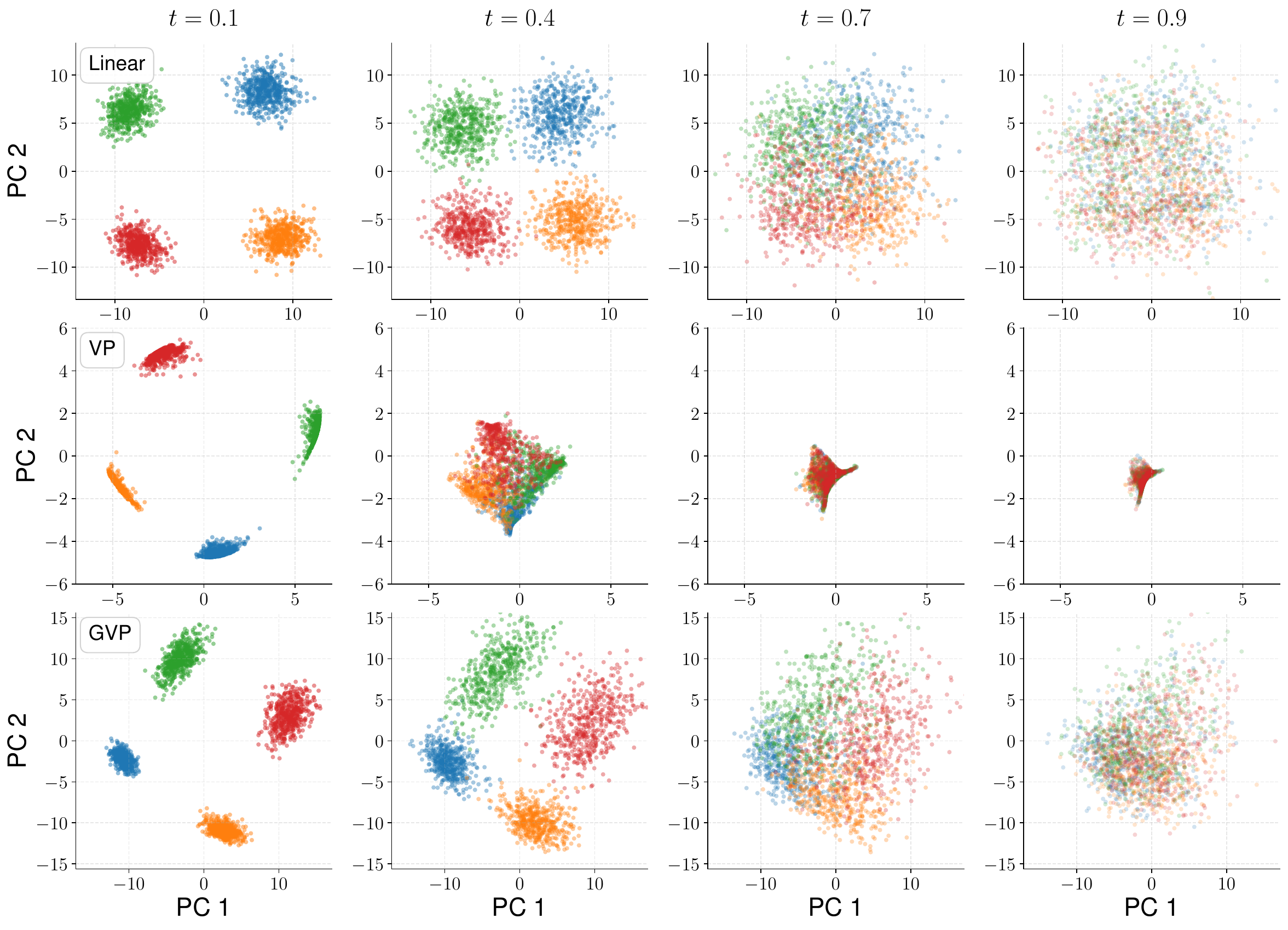}
    \vspace{-1.5em}
    \caption{$v_\theta$-prediction}
\end{subfigure}
\hfill
\begin{subfigure}{0.49\textwidth}
    \centering
    \includegraphics[width=\textwidth]{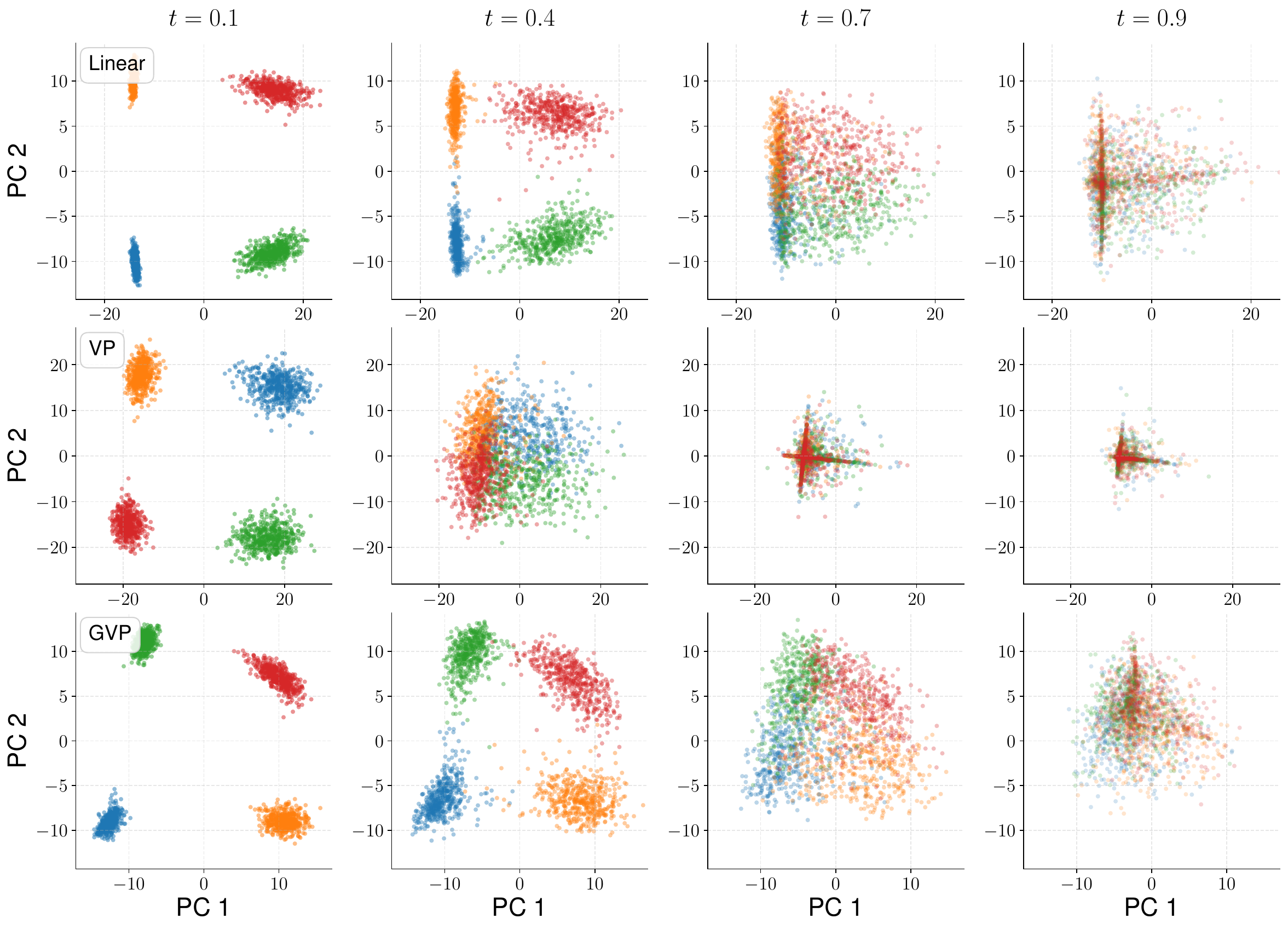}
    \vspace{-1.5em}
    \caption{$u_\theta$-prediction}
\end{subfigure}
\end{adjustbox}
\vspace{-0.5em}
\caption{\textbf{Representation Collapse in Global Shared Models.} 
The deep hidden representations $h_{\theta,m}(x_t, t)$ are tracked across four increasing diffusion times. 
Projected onto the PCA basis fitted at $t=0.1$, the representations catastrophically collapse from cleanly separable data clusters into an entangled, unstructured mass as $t \to 1$.}
\label{fig:toy_pca_evolution}
\vspace{-2em}
\end{figure}

\subsection{Neural Tangent Kernel Analysis}
\label{sec:appendix_extended_ntk}

To unequivocally establish that the capacity degeneration formalized in \cref{thm:quantitative_degradation_under_recoverability_mismatch_revised} and \cref{thm:spectral_local_elbo_bound} is a universal phenomenon intrinsic to global parameter sharing, rather than an artifact of a specific output parameterization, we provide the comprehensive analysis across all four prediction targets ($\epsilon, x_0, v$, and $u$) in this section. 

% ==========================================
% 1. Spectrum (Text + Figure)
% ==========================================
\textbf{Eigenvalue Spectral Decay.} To explicitly quantify the weakened effective contraction in \cref{thm:spectral_local_elbo_bound}, we compute the top eigenvalues ($\kappa_1, \kappa_2, \kappa_3$) of the Neural Tangent Kernel over normalized time. As \Figref{fig:appendix_ntk_spectrum} demonstrates, the spectrum exhibits severe yet highly target-dependent attenuation. For $x_\theta$ and $u_\theta$, eigenvalues suffer a massive monotonic drop spanning several orders of magnitude. Conversely, the $\epsilon_\theta$ spectrum exhibits a U-shape and increases at high noise levels to indicate retained optimization capacity. Within our theoretical framework, these vanishing eigenvalues translate directly to a diminished contraction rate. The network loses its optimization driving force in specific degraded regimes, leaving learning dynamics vulnerable to persistent Bayes forcing and mathematically confirming our geometric observations.

\begin{figure}[htbp]
\vspace{-0.5em}
    \centering
    \begin{subfigure}{0.49\textwidth}
        \centering
        \includegraphics[width=\textwidth]{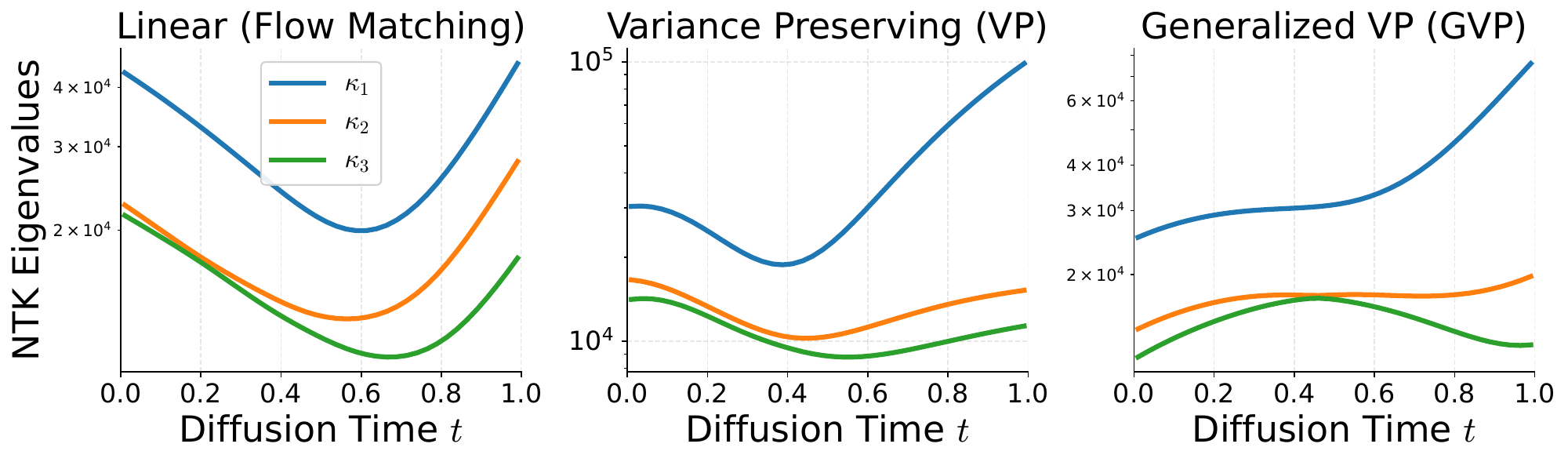}
        \vspace{-1.5em}
        \caption{$\epsilon_\theta$-prediction}
    \end{subfigure}\hfill
    \begin{subfigure}{0.49\textwidth}
        \centering
        \includegraphics[width=\textwidth]{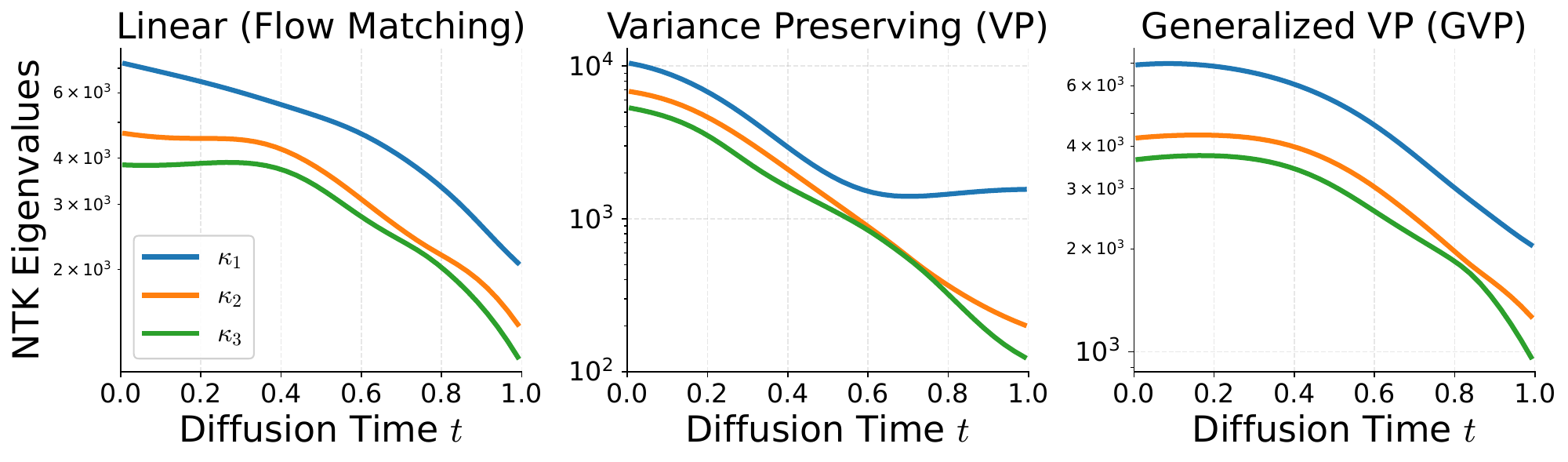}
        \vspace{-1.5em}
        \caption{$x_\theta$-prediction}
    \end{subfigure}
    
    \vspace{0.5em}
    
    \begin{subfigure}{0.49\textwidth}
        \centering
        \includegraphics[width=\textwidth]{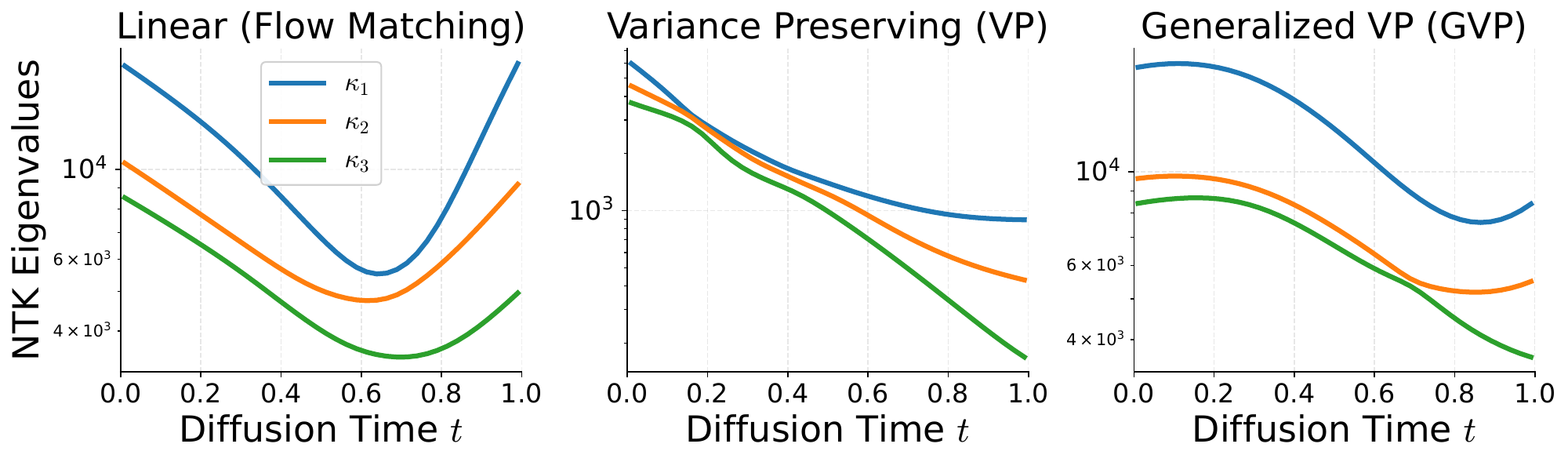}
        \vspace{-1.5em}
        \caption{$v_\theta$-prediction}
    \end{subfigure}\hfill
    \begin{subfigure}{0.49\textwidth}
        \centering
        \includegraphics[width=\textwidth]{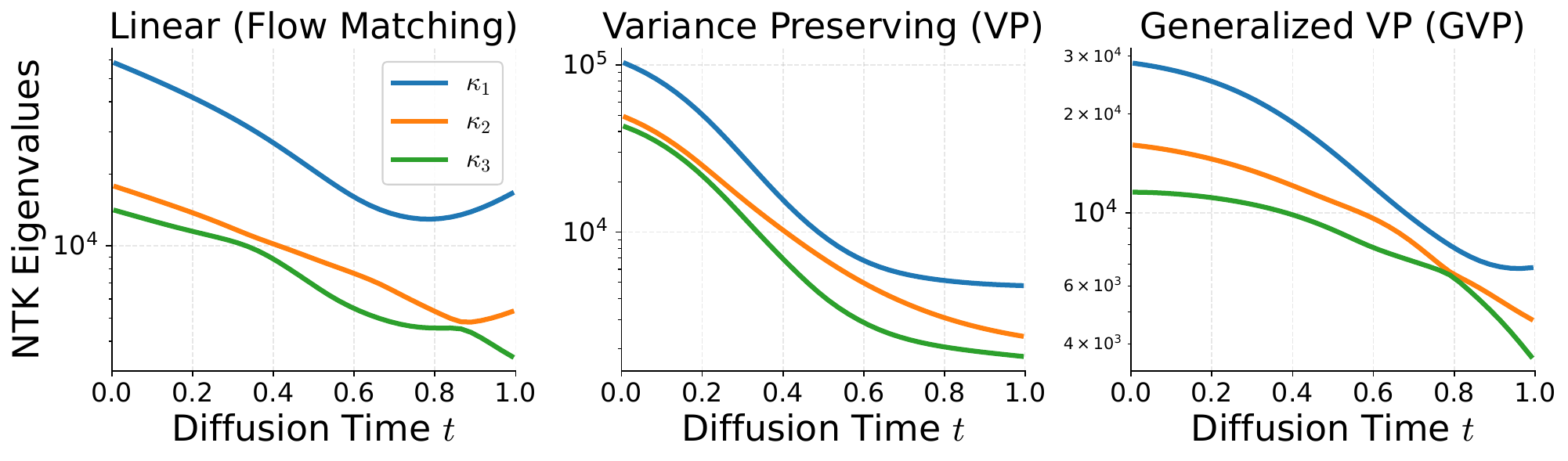}
        \vspace{-1.5em}
        \caption{$u_\theta$-prediction}
    \end{subfigure}
    %\vspace{-0.5em}
    \caption{\textbf{Eigenvalue Spectral Decay.} The aggressive attenuation of the top-3 eigenvalues is a universal bottleneck, empirically supporting the bounds in \cref{thm:spectral_local_elbo_bound}.}
    \label{fig:appendix_ntk_spectrum}
\end{figure}

% ==========================================
% 2. Rank (Text + Figure)
% ==========================================
\textbf{Effective Rank Collapse.} To further assess the global structural integrity of the Neural Tangent Kernel spectrum, we evaluate the effective rank based on Shannon entropy. While tracking individual top eigenvalues reveals absolute magnitude decay, the effective rank quantifies the overall dimensionality and richness of the learned functional space. As illustrated in \Figref{fig:appendix_ntk_rank}, the effective rank undergoes a severe plummet toward near singularity limits at the extremes of the diffusion process, particularly as $t \to 1$. This widespread rank collapse dictates that the entire eigenvalue distribution becomes heavily skewed and highly correlated. The network fundamentally loses its functional degrees of freedom, forcing its representational manifold to collapse into a constrained low-dimensional subspace. This global spectral degeneration offers definitive mathematical confirmation of the representation collapse induced by persistent Bayes noise contamination.

\begin{figure}[htbp]
    \centering
    \begin{subfigure}{0.49\textwidth}
        \centering
        \includegraphics[width=\textwidth]{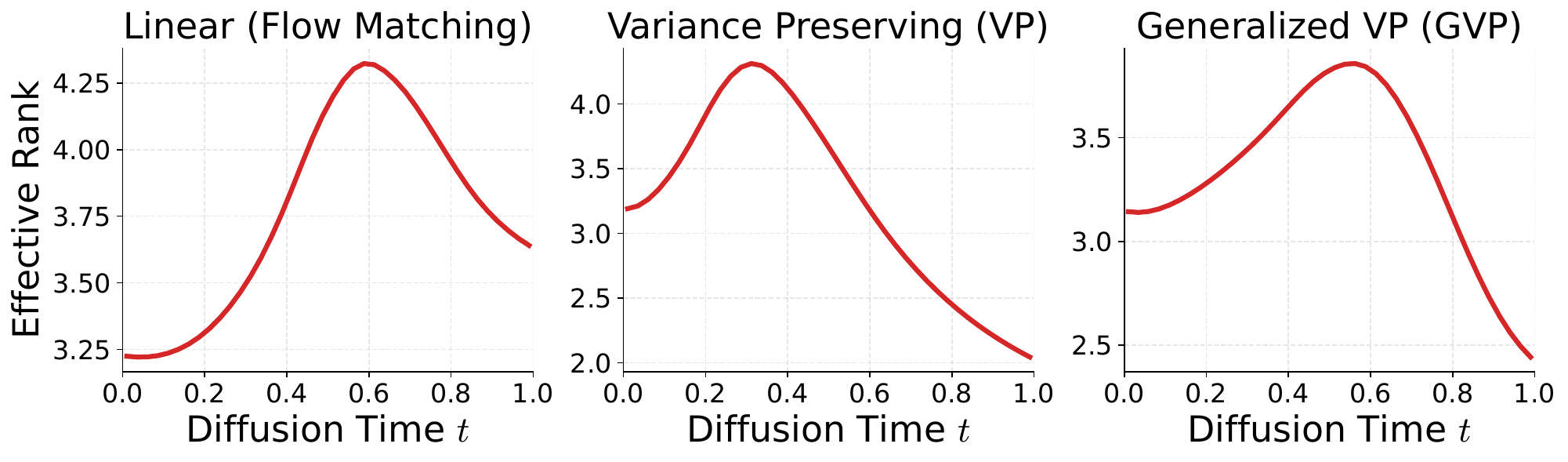}
        \vspace{-1.5em}
        \caption{$\epsilon_\theta$-prediction}
    \end{subfigure}\hfill
    \begin{subfigure}{0.49\textwidth}
        \centering
        \includegraphics[width=\textwidth]{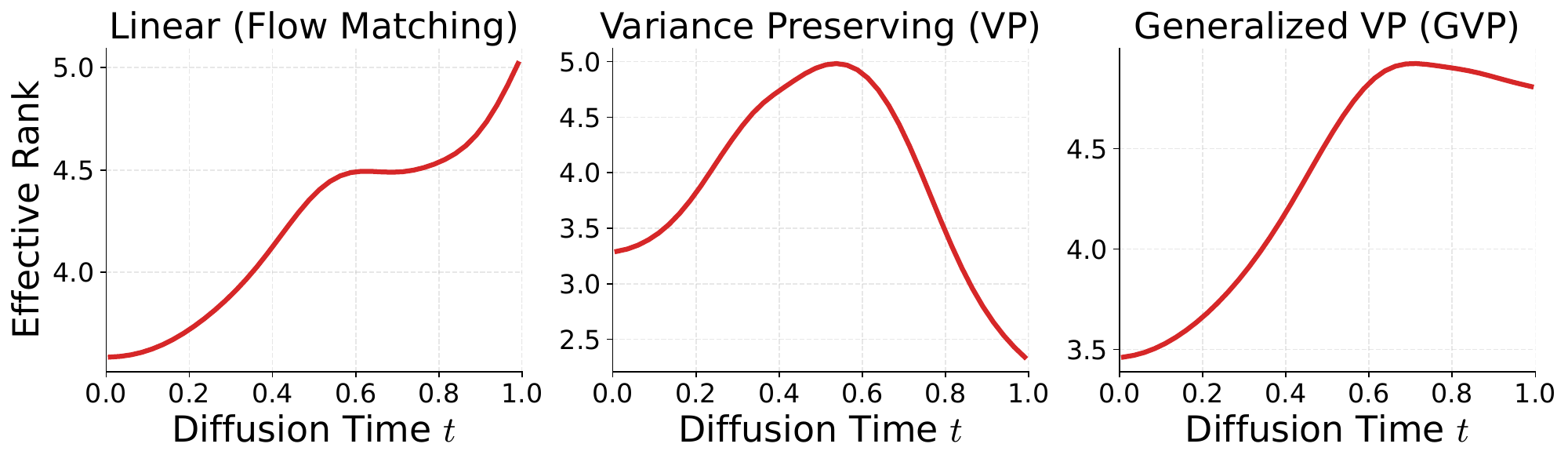}
        \vspace{-1.5em}
        \caption{$x_\theta$-prediction}
    \end{subfigure}
    
    \vspace{0.5em}
    
    \begin{subfigure}{0.49\textwidth}
        \centering
        \includegraphics[width=\textwidth]{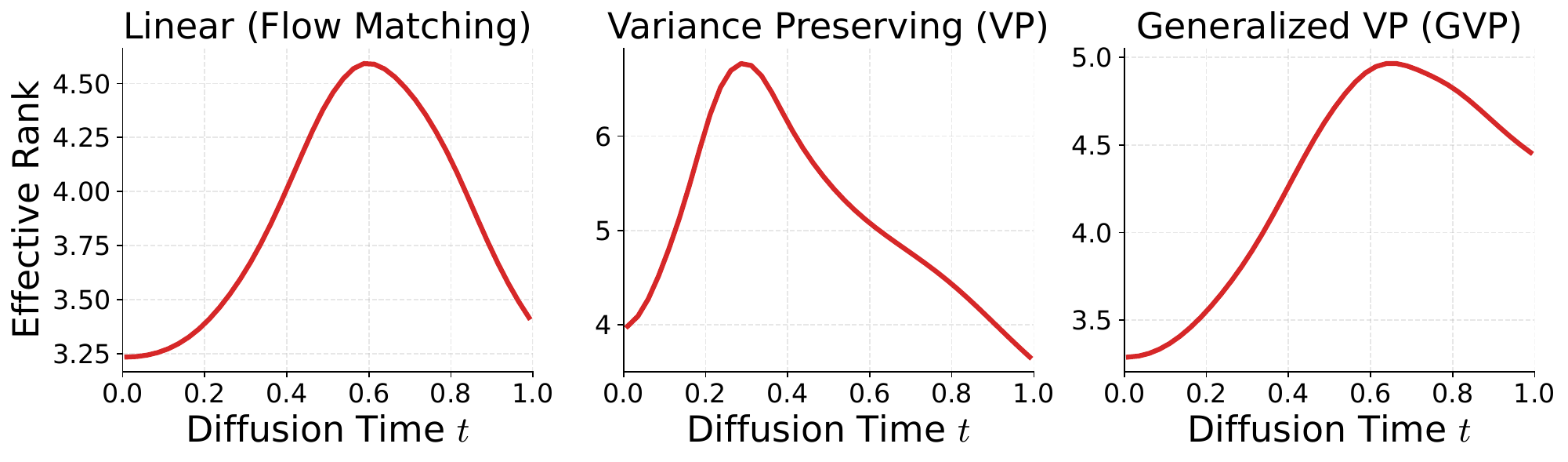}
        \vspace{-1.5em}
        \caption{$v_\theta$-prediction}
    \end{subfigure}\hfill
    \begin{subfigure}{0.49\textwidth}
        \centering
        \includegraphics[width=\textwidth]{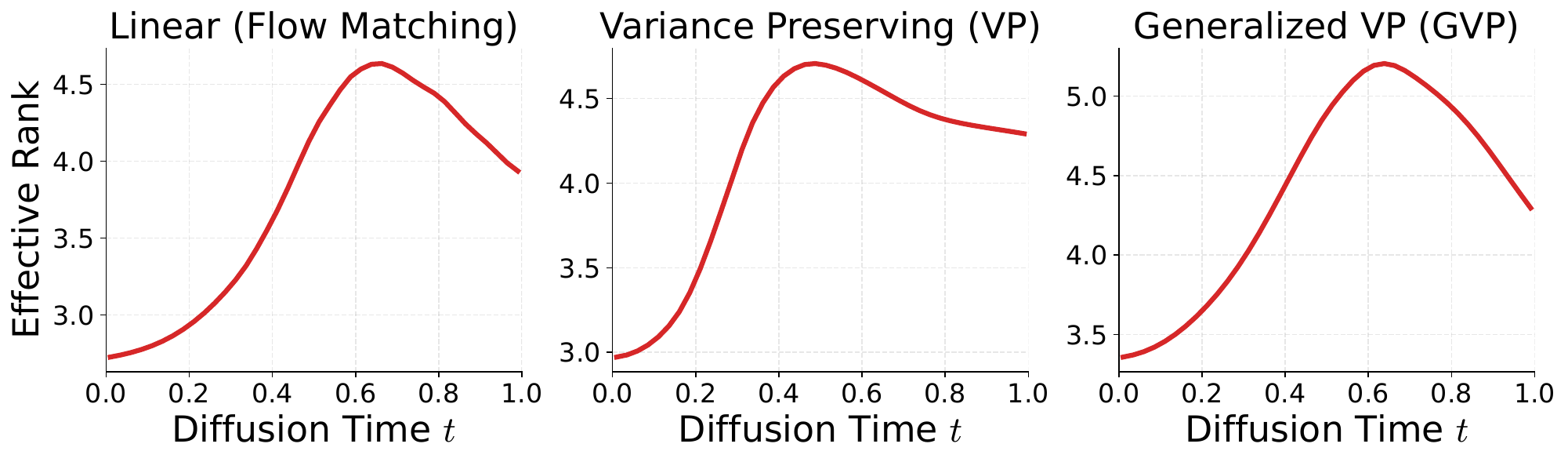}
        \vspace{-1.5em}
        \caption{$u_\theta$-prediction}
    \end{subfigure}
    
    \caption{\textbf{Effective Rank Collapse.} The Shannon entropy-based effective rank severely plummets to near-singularity limits as $t \to 1$.}
    \label{fig:appendix_ntk_rank}
    \vspace{-1em}
\end{figure}

% ==========================================
% 3. Heatmaps (Text + Figure)
% ==========================================
\textbf{NTK Homogenization.} A direct spatial consequence of spectral collapse appears in \Figref{fig:appendix_ntk_heatmaps} via normalized NTK Gram matrices. Across all formulations the network maintains a discriminative block-diagonal feature space at low noise levels ($t=0.05$). In this healthy state inputs from the same cluster share aligned gradients while distinct clusters remain orthogonal. The space homogenizes into an entangled uniform matrix at high noise ($t=0.95$) confirming predicted capacity loss. Geometrically this implies the kernel degenerates toward a rank-one projection discarding sample-specific topology. Homogenization reveals the network processes inputs identically and loses data-conditional discrimination. Consequently the network applies near-identical updates to distinct data modes. Global parameter sharing makes these correlated updates wash out orthogonal boundaries established in cleaner regimes. Gradient space entanglement uncovers the mechanism driving cross-noise interference and validates degradation bounds governed by Bayes forcing.

% \textbf{NTK Homogenization.} 
% As a direct spatial consequence of the spectral collapse, \Figref{fig:appendix_ntk_heatmaps} displays the normalized NTK Gram matrices. Across every formulation and schedule interpolant, the network maintains a highly discriminative block-diagonal feature space at low noise levels ($t=0.05$). In this healthy state, inputs originating from the same ground-truth cluster share strongly aligned gradients while distinct clusters remain orthogonal. The space catastrophically homogenizes into an entangled uniform matrix at high noise levels ($t=0.95$) visually confirming the predicted capacity loss. The homogenized state reveals the network processes all inputs identically and completely loses data-conditional discrimination. Under global parameter sharing these highly correlated updates inevitably wash out the refined cluster boundaries.

\begin{figure}[htbp]
\vspace{-0.5em}
    \centering
    \begin{subfigure}{\textwidth}
        \centering
        \includegraphics[width=\textwidth]{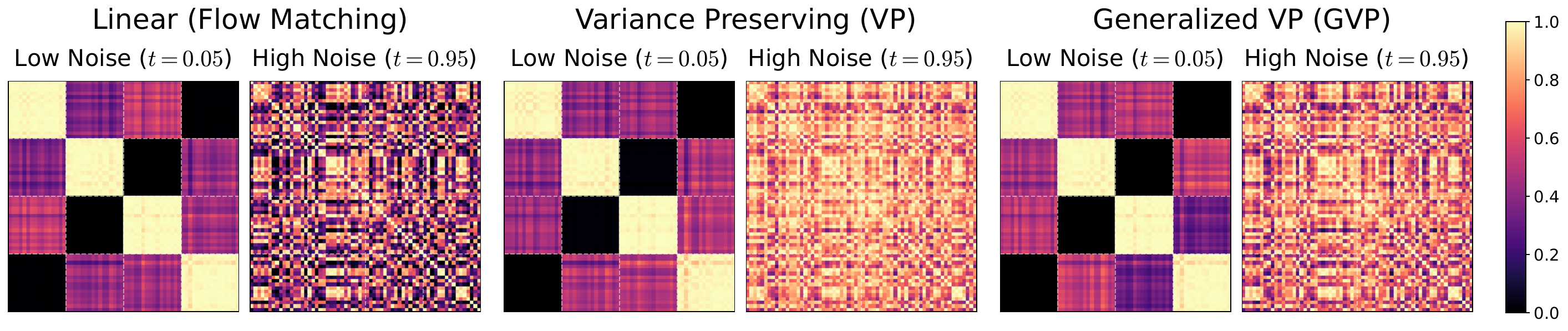}
        \vspace{-1.5em}
        \caption{$\epsilon_\theta$-prediction formulation.}
    \end{subfigure}
    
    \vspace{0.5em}
    \begin{subfigure}{\textwidth}
        \centering
        \includegraphics[width=\textwidth]{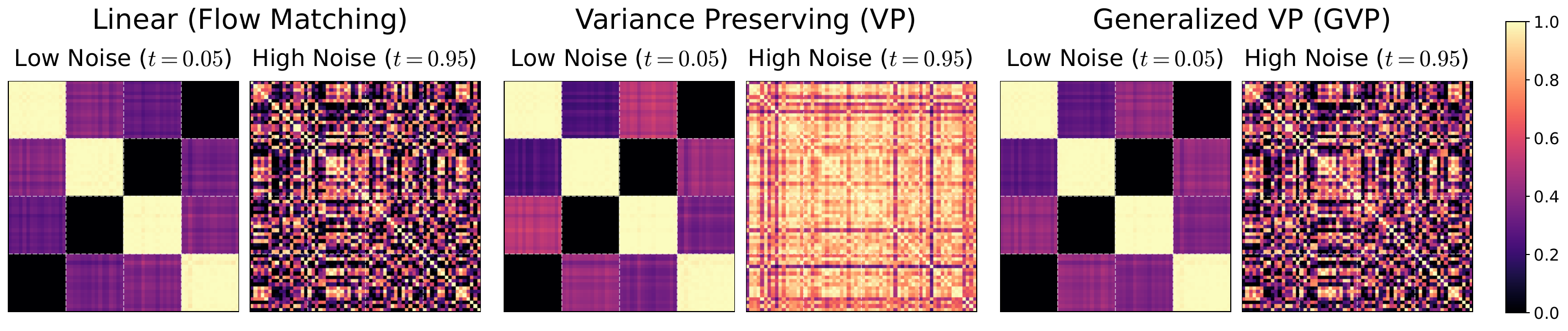}
        \vspace{-1.5em}
        \caption{$x_\theta$-prediction formulation.}
    \end{subfigure}
    
    \vspace{0.5em}
    \begin{subfigure}{\textwidth}
        \centering
        \includegraphics[width=\textwidth]{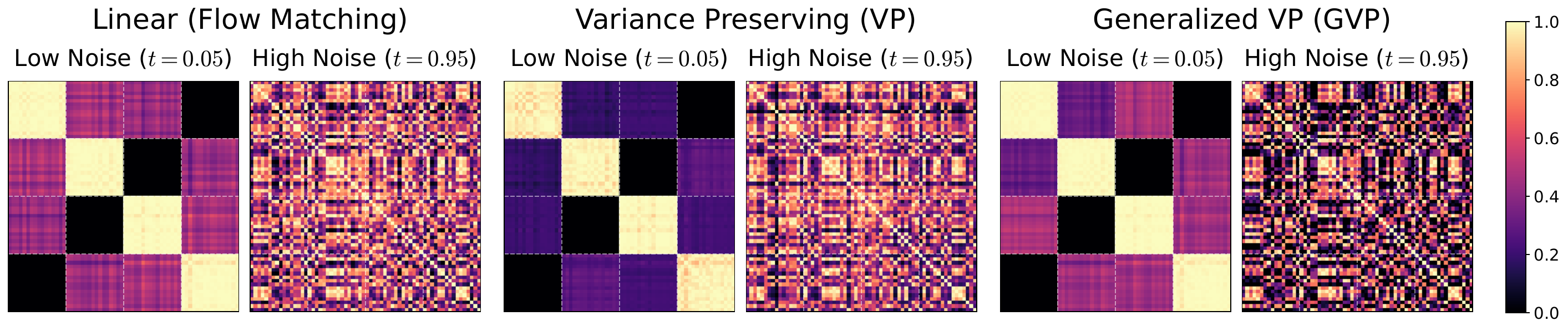}
        \vspace{-1.5em}
        \caption{$v_\theta$-prediction formulation.}
    \end{subfigure}
    
    \vspace{0.5em}
    \begin{subfigure}{\textwidth}
        \centering
        \includegraphics[width=\textwidth]{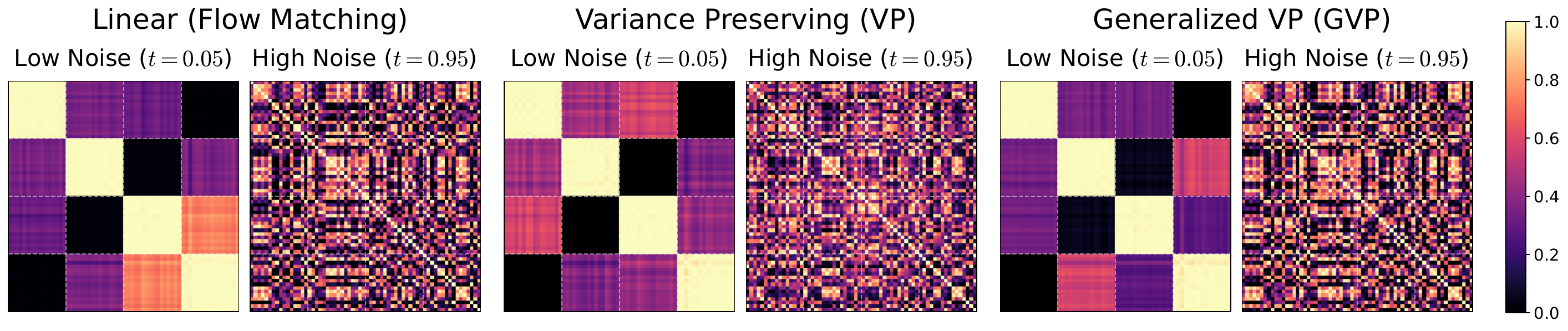}
        \vspace{-1.5em}
        \caption{$u_\theta$-prediction formulation.}
    \end{subfigure}
    
    \caption{\textbf{NTK Gram Matrix Evolution.} The structural collapse from cleanly separated data clusters to an entangled homogeneous state occurs across all parameterizations.}
    \label{fig:appendix_ntk_heatmaps}
    \vspace{-1em}
\end{figure}

\section{Experimental Supplement}
\label{app:sec:exp_sup}

\subsection{Further Implementation Details.}
In this section, we provide additional implementation details, including training configurations, architecture choices, and sampling procedures.  A complete list of hyperparameters is provided in Table~\ref{tab:hyperparam}. Here \textbf{Convolution} represents whether to add a 3$\times$3 convolutional block before output. 

\textbf{Basic setting.}
All experiments were conducted with 8 NVIDIA A100 GPUs (80GB) with a global batch size of 256. To accelerate training, we employ mixed-precision training (fp16). We adopt the AdamW optimizer~\cite{kingma2014adam, loshchilov2017decoupled} with a constant learning rate of $1 \times 10^{-4}$ without weight decay. 

Following common practice in diffusion-based image generation, we pre-compute latent representations from raw images using the Stable Diffusion VAE~\cite{rombach2022high}, and all experiments are performed in the latent space. 
As a result, no data augmentation is applied, which we find has negligible impact on performance, consistent with observations in EDM2~\cite{karras2024analyzing}. For image decoding, we use the \texttt{stabilityai/sd-vae-ft-ema} decoder. 

\textbf{Diffusion setting.}
We adopt a discrete-time formulation where the forward process is defined over $t \in [0,1000)$, utilizing the cosine noise schedule following IDDPM~\cite{nichol2021improved}. During training, to enable classifier-free guidance (CFG) at inference time, we randomly drop out the class conditioning labels with a probability of $p_{\mathrm{uncond}} = 0.1$, replacing them with a learned null token. Following standard practice to stabilize optimization and improve sample quality, we maintain an Exponential Moving Average (EMA) of the model weights with a decay rate of $0.9999$ for all evaluations. For sampling, we use the deterministic EDM Heun sampler~\cite{karras2022elucidating} with $N=50$ integration steps.

\textbf{Backbone architecture.}
To validate the universality of our method, we use DiT~\cite{peebles2023scalable} and U-ViT~\cite{bao2023all} as the primary backbone networks. Compared with the homogeneous sequence processing of DiT, U-ViT adopts a specialized symmetric encoder--decoder architecture. This topology incorporates a designated middle block and long-range skip connections seamlessly bridging corresponding encoder and decoder layers, which significantly improves multi-scale feature interaction across different depths. Following the canonical design, hierarchical encoder features are sequentially concatenated with the corresponding decoder features and then projected by a dense linear layer.

\begin{table}[!ht]
% \vspace{-1.5em}
\centering%\small
\caption{Hyperparameter setup.}
\begin{adjustbox}{width=\textwidth}
\begin{tabular}{lccccccc}
    \toprule
    \textbf{Related result} & \multicolumn{6}{c}{\cref{tab:wo_cfg}} & \cref{tab:main} \\
    \midrule
    \textbf{Architecture} \\
    Model & DiT-B/2 & DiT-L/2 & DiT-XL/2 & U-ViT-M/2 & U-ViT-L/2 & U-ViT-H/2 & U-ViT-H/2 \\
    Input dim. & 32$\times$32$\times$4 & 32$\times$32$\times$4 & 32$\times$32$\times$4 & 32$\times$32$\times$4 & 32$\times$32$\times$4 & 32$\times$32$\times$4 & 32$\times$32$\times$4 \\
    Num. layers & 12 & 24 & 28 & 17 & 21 & 29 & 29 \\
    Hidden dim. & 768 & 1,024 & 1,152 & 768 & 1,024 & 1,152 & 1,152 \\
    Num. heads & 12 & 16 & 16 & 12 & 16 & 16 & 16 \\
    Convolution & $\varnothing$ & $\varnothing$ & $\varnothing$ & \ding{51} & \ding{51} & \ding{55} & \ding{55} \\
    Params (M) & 130 & 458 & 675 & 131 & 287 & 501 & 501 \\
    \midrule
    \textbf{Optimization} \\
    Training iteration / epoch & 400K & 400K & 400K & 400K & 400K & 400K & 800 \\
    Batch size & 256 & 256 & 256 & 256 & 256 & 256 & 256 \\
    Optimizer & AdamW & AdamW & AdamW & AdamW & AdamW & AdamW & AdamW \\
    Learning rate & 1e-4 & 1e-4 & 1e-4 & 1e-4 & 1e-4 & 1e-4 & 1e-4 \\
    $(\beta_1, \beta_2)$ & (0.9, 0.999) & (0.9, 0.999) & (0.9, 0.999) & (0.99, 0.99) & (0.99, 0.99) & (0.99, 0.99) & (0.99, 0.99) \\
    \midrule
    \textbf{Diffusion settings} \\
    Noise schedule & cosine & cosine & cosine & cosine & cosine & cosine & cosine \\
    Training objective & $\epsilon$-prediction & $\epsilon$-prediction & $\epsilon_\theta$-prediction & $\epsilon_\theta$-prediction & $\epsilon_\theta$-prediction & $\epsilon_\theta$-prediction & $\epsilon_\theta$-prediction \\
    Sampler & EDM Heun & EDM Heun & EDM Heun & EDM Heun & EDM Heun & EDM Heun & EDM Heun \\
    Sampling steps & 50 & 50 & 50 & 50 & 50 & 50 & 50 \\
    Guidance scale & - & - & - & - & - & - & 1.35 \\
    \bottomrule
\end{tabular}
\end{adjustbox}
\label{tab:hyperparam}
% \vspace{-1.5em}
\end{table}

\subsection{Evaluation Details}
\label{appen:eval}

We adopt the same evaluation protocol as ADM~\cite{dhariwal2021diffusion}, and use the identical reference batches provided in their official implementation.\footnote{\url{https://github.com/openai/guided-diffusion/tree/main/evaluations}} For efficiency, we enable mixed-precision (fp16) during sample generation, and empirically observe no noticeable degradation compared to full fp32 precision. Below, we briefly describe the evaluation metrics used throughout our experiments.
\vspace{-0.5em}
\begin{itemize}[leftmargin=0.2in]
\item \textit{Fréchet Inception Distance (FID)}~\cite{heusel2017gans} quantifies the discrepancy between real and generated image distributions in a learned feature space. 
The features are extracted using an Inception-v3 network~\cite{szegedy2016rethinking}, and the distance is computed under the assumption that both distributions follow multivariate Gaussian statistics.

\item \textit{Spatial FID (sFID)}~\cite{nash2021generating} extends FID by leveraging intermediate spatial feature maps of Inception-v3, allowing it to better reflect spatial consistency in generated images.

\item \textit{Inception Score (IS)}~\cite{salimans2016improved} is also based on the Inception-v3 classifier, but evaluates samples using the predicted class logits. 
Concretely, it computes the Kullback--Leibler divergence between the marginal label distribution and the conditional label distribution obtained after softmax normalization.

\item \textit{Precision and Recall}~\cite{kynkaanniemi2019improved} assess the quality and coverage of generated samples, respectively. 
Precision reflects the proportion of generated images that are visually realistic, while recall measures how well the generated distribution captures the diversity of the training data manifold.
\end{itemize}
\vspace{-0.5em}

\subsection{Detailed Quantitative Results}
\cref{tab:dit_comparison,tab:uvit_comparison} present the quantitative comparisons on DiT and U-ViT backbones, respectively. 

\begin{table}[!htbp]
\vspace{-1em}
\centering
\caption{Detailed results on DiT models of different sizes, without classifier-free guidance.}
\resizebox{0.85\linewidth}{!}{
\begin{tabular}{lccccccc}
\toprule
     Model & \#Params & Iter. & FID$\downarrow$ & sFID$\downarrow$ & IS$\uparrow$ & Prec.$\uparrow$ & Rec.$\uparrow$ \\
     \midrule
     \rowcolor{orange!10}
     SiT-B/2~\cite{ma2024sit} & 130M & 400K 
     & 33.0 & 6.5 & 43.7 & 0.53 & 0.63 \\
     \rowcolor{cyan!8}
     DiT-B/2~\cite{peebles2023scalable} & 130M & 400K
     & 43.4 & - & - & - & - \\
     + Ours & 130M & 100K 
     & 60.0 & 8.4 & 21.8 & 0.40 & 0.59 \\
     + Ours & 130M & 200K
     & 43.6 & 7.6 & 32.6 & 0.47 & 0.64 \\
     + Ours & 130M & 300K
     & 36.8 & 7.3 & 40.5 & 0.51 & 0.66 \\
     + Ours & 130M & 400K
     & 32.3 & 7.0 & 46.6 & 0.53 & 0.66 \\
     \midrule
     \rowcolor{orange!10}
     SiT-L/2~\cite{ma2024sit} & 458M & 400K 
     & 18.8 & 5.3 & 72.0 & 0.64 & 0.64 \\
     \rowcolor{cyan!8}
     DiT-L/2~\cite{peebles2023scalable} & 458M & 400K
     & 23.3 & - & - & - & - \\
     + Ours & 458M & 100K 
     & 39.9 & 7.7 & 33.9 & 0.50 & 0.64 \\
     + Ours & 458M & 200K 
     & 25.2 & 7.0 & 55.8 & 0.58 & 0.65 \\
     + Ours & 458M & 300K 
     & 20.0 & 6.8 & 69.6 & 0.61 & 0.66 \\
     + Ours & 458M & 400K
     & 17.2 & 6.5 & 78.9 & 0.62 & 0.67 \\
     \midrule
     \rowcolor{orange!10}
     SiT-XL/2~\cite{ma2024sit} & 675M & 400K
     & 17.2 & - & - & - & - \\
     \rowcolor{cyan!8}
     DiT-XL/2~\cite{peebles2023scalable} & 675M & 400K
     & 19.5 & - & - & - & - \\
     + Ours & 675M & 100K
     & 35.9 & 7.5 & 37.4 & 0.53 & 0.64 \\
     + Ours & 675M & 200K
     & 21.7 & 6.6 & 62.6 & 0.60 & 0.65 \\
     + Ours & 675M & 300K
     & 17.3 & 6.4 & 77.4 & 0.62 & 0.67 \\
     + Ours & 675M & 400K 
     & 15.0 & 6.4 & 86.4 & 0.64 & 0.66 \\
\bottomrule
\end{tabular}
}
\label{tab:dit_comparison}
\vspace{-0.5em}
\end{table}

\begin{table}[!htbp]
\vspace{-1.0em}
\centering
\caption{Detailed results on U-ViT models of different sizes, without classifier-free guidance.}
\resizebox{0.85\linewidth}{!}{
\begin{tabular}{lccccccc}
\toprule
Model & \#Params & Iter. & FID$\downarrow$ & sFID$\downarrow$ & IS$\uparrow$ & Prec.$\uparrow$ & Rec.$\uparrow$ \\
\midrule
\rowcolor{cyan!8}
U-ViT-M/2 & 131M & 400K & 27.7 & 7.0 & 50.9 & 0.55 & 0.66 \\
+ Ours & 131M & 100K & 38.4 & 8.8 & 37.0 & 0.52 & 0.63 \\
+ Ours & 131M & 200K & 26.7 & 8.1 & 54.3 & 0.57 & 0.65 \\
+ Ours & 131M & 300K & 22.3 & 7.5 & 64.1 & 0.59 & 0.66 \\
+ Ours & 131M & 400K & 19.9 & 7.0 & 70.8 & 0.60 & 0.66 \\
\midrule
\rowcolor{cyan!8}
U-ViT-L/2 & 287M & 400K & 20.4 & 6.6 & 68.0 & 0.60 & 0.67 \\
+ Ours & 287M & 100K & 31.9 & 7.4 & 44.3 & 0.56 & 0.63 \\
+ Ours & 287M & 200K & 20.1 & 7.2 & 67.3 & 0.62 & 0.64 \\
+ Ours & 287M & 300K & 16.6 & 6.8 & 79.2 & 0.63 & 0.66 \\
+ Ours & 287M & 400K & 15.0 & 6.7 & 86.3 & 0.64 & 0.65 \\
\midrule

\rowcolor{green!10}
Min-SNR & 501M & 400K & 11.7 & 6.6 & 99.7 & 0.66 & 0.65 \\
\rowcolor{cyan!8}
U-ViT-H/2 & 501M & 400K & 13.7 & 5.6 & 87.9 & 0.65 & 0.65 \\
+ Ours & 501M & 100K & 21.6 & 6.3 & 60.6 & 0.63 & 0.60 \\
+ Ours & 501M & 200K & 12.0 & 5.6 & 92.9 & 0.67 & 0.64 \\
+ Ours & 501M & 300K & 9.8  & 5.5 & 106.8 & 0.68 & 0.64 \\
+ Ours & 501M & 400K & 8.9  & 5.4 & 114.3 & 0.68 & 0.66 \\
\bottomrule
\end{tabular}
}
\label{tab:uvit_comparison}
\vspace{-0.5em}
\end{table}

To analyze training efficiency, we visualize the convergence curves on different models in \Figref{fig:convergence_curve}.

\begin{figure}[!htbp]
\centering
\begin{subfigure}{0.32\textwidth}
    \centering
    \includegraphics[width=\linewidth]{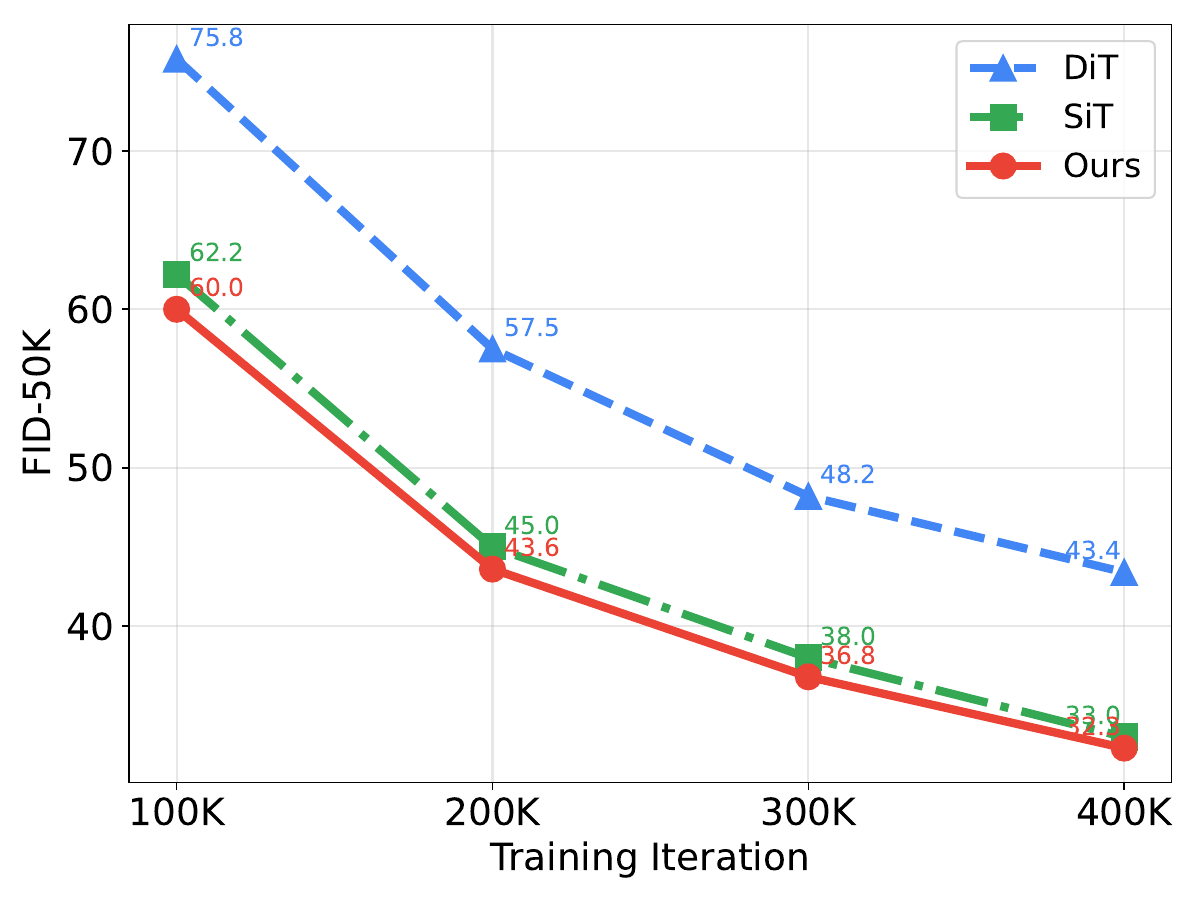}
    \vspace{-1.0em}
    \caption{DiT-B/2}
    \label{fig:conv_dit_b2}
\end{subfigure}
\hfill
\begin{subfigure}{0.32\textwidth}
    \centering
    \includegraphics[width=\linewidth]{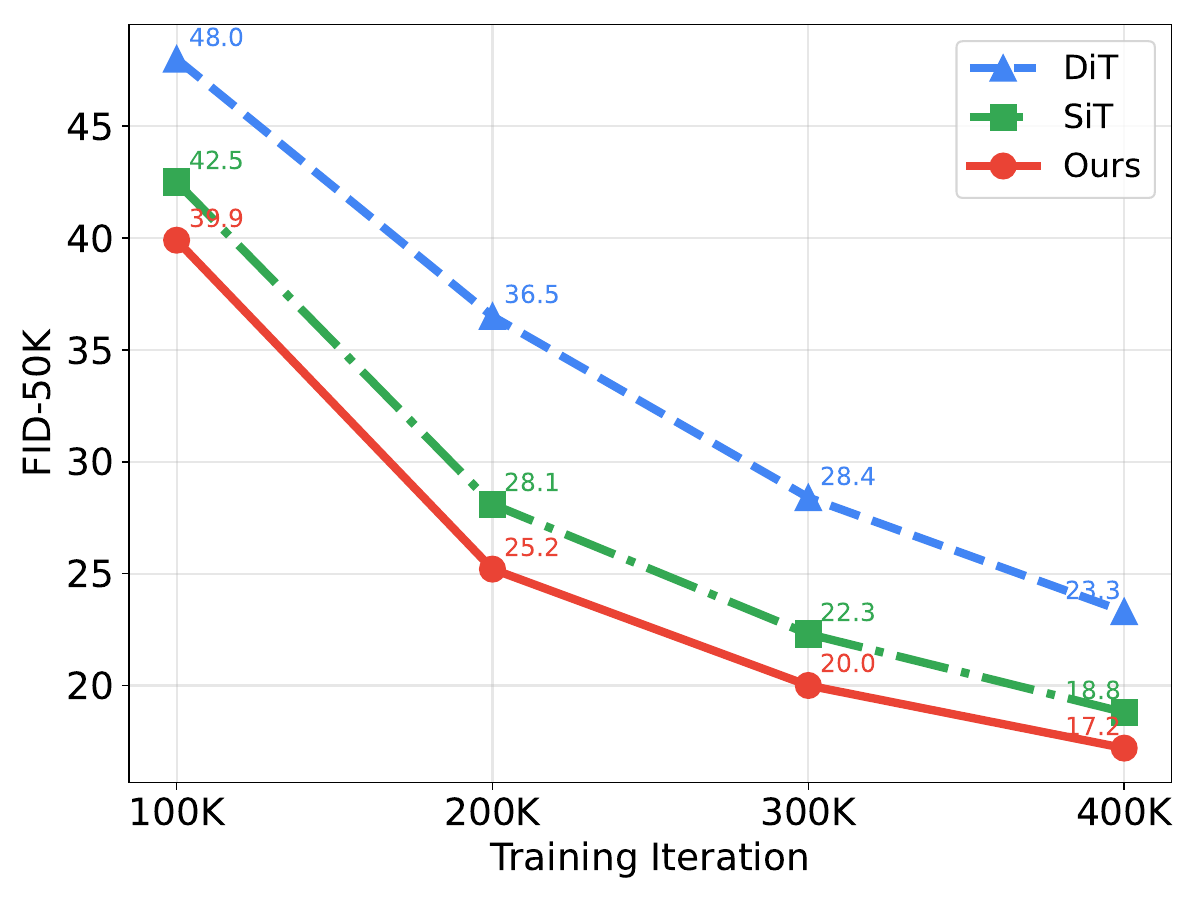}
    \vspace{-1.0em}    
    \caption{DiT-L/2}
    \label{fig:conv_dit_l2}
\end{subfigure}
\hfill
\begin{subfigure}{0.32\textwidth}
    \centering
    \includegraphics[width=\linewidth]{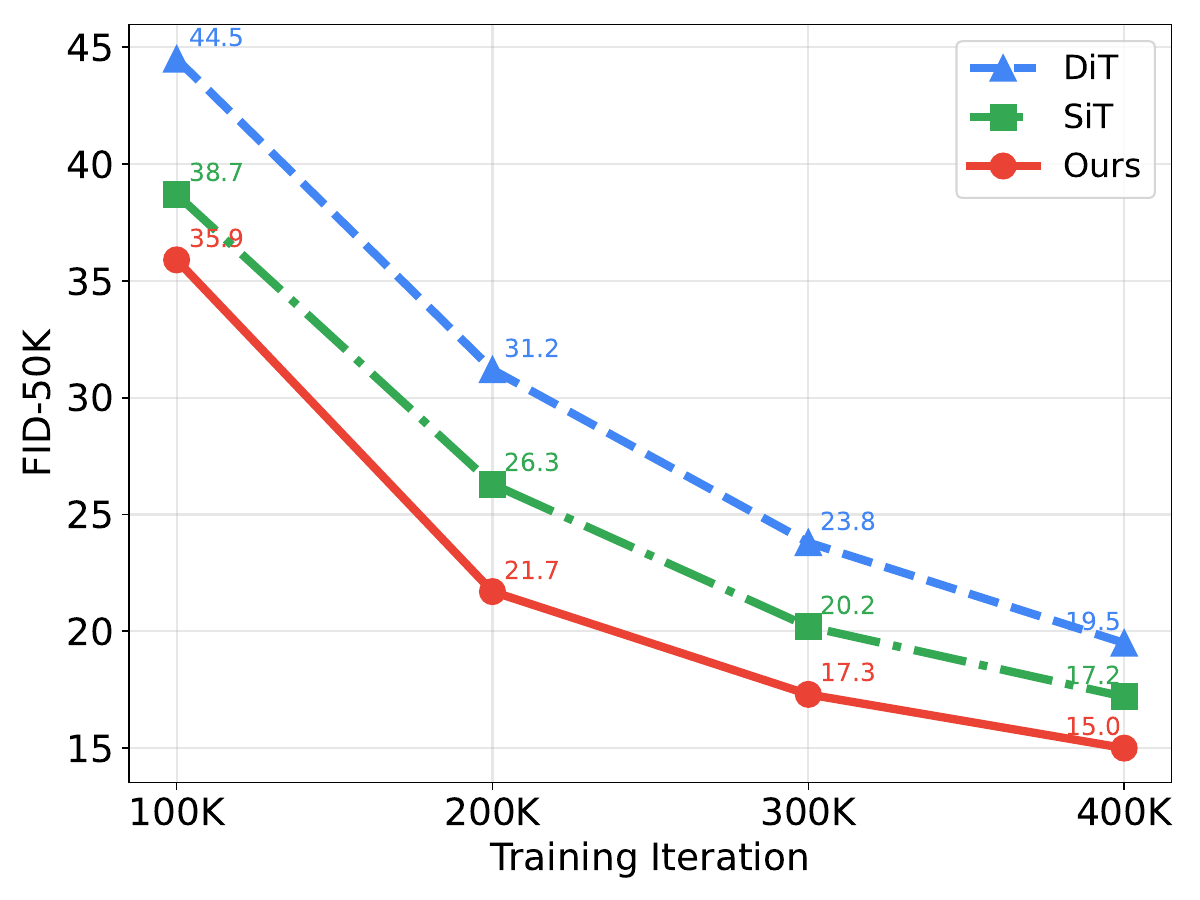}
    \vspace{-1.0em}    
    \caption{DiT-XL/2}
    \label{fig:conv_dit_xl2}
\end{subfigure}
\vspace{0.5em}
\begin{subfigure}{0.32\textwidth}
    \centering
    \includegraphics[width=\linewidth]{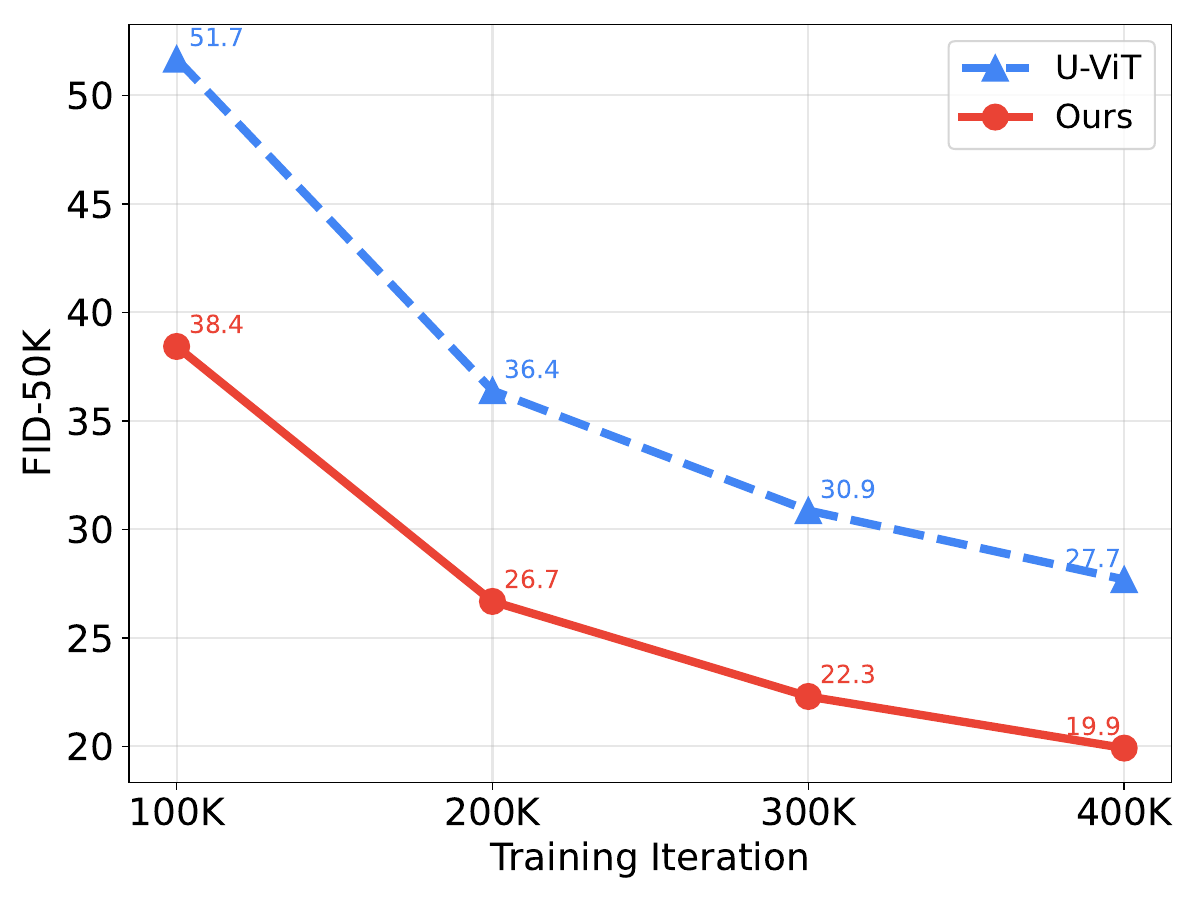}
    \vspace{-1.0em}
    \caption{U-ViT-M/2}
    \label{fig:conv_uvit_m2}
\end{subfigure}
\hfill
\begin{subfigure}{0.32\textwidth}
    \centering
    \includegraphics[width=\linewidth]{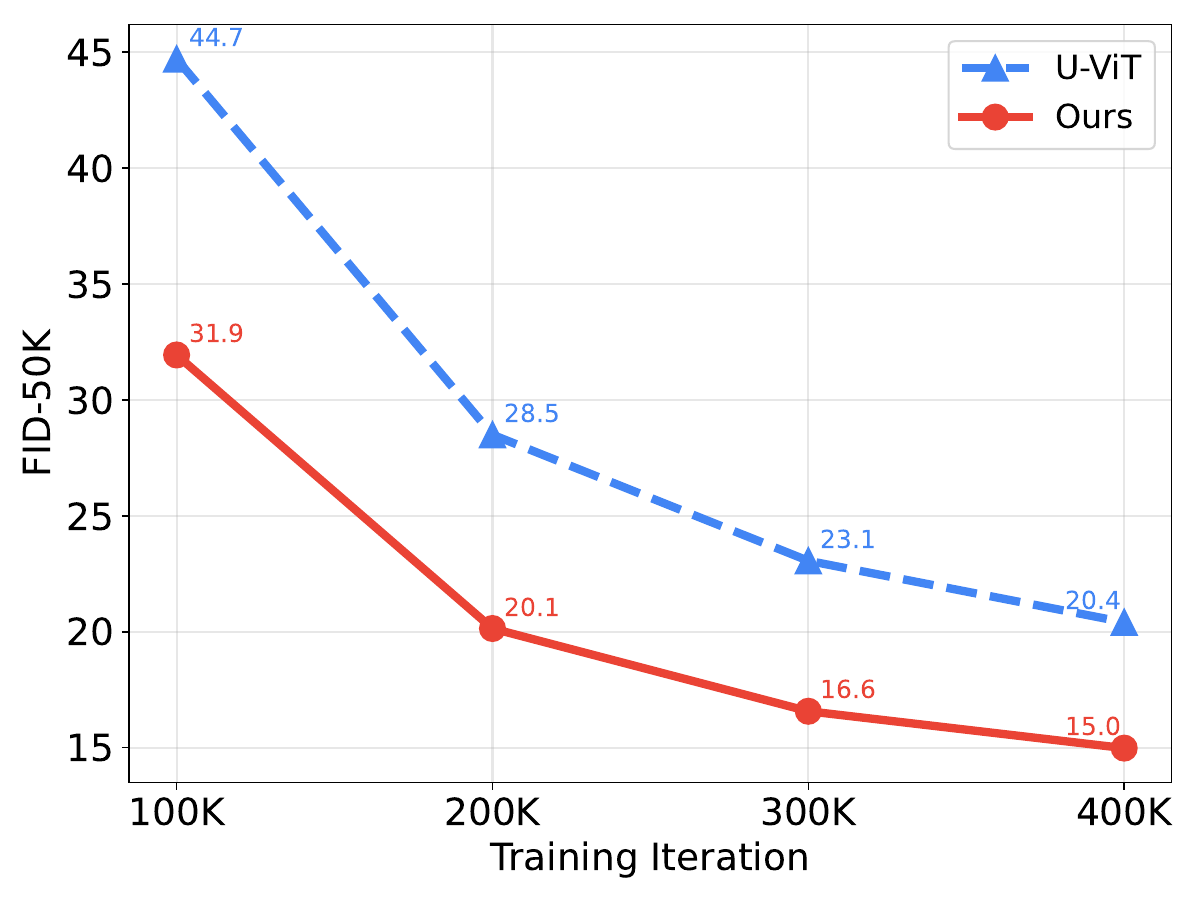}
    \vspace{-1.0em}
    \caption{U-ViT-L/2}
    \label{fig:conv_uvit_l2}
\end{subfigure}
\hfill
\begin{subfigure}{0.32\textwidth}
    \centering
    \includegraphics[width=\linewidth]{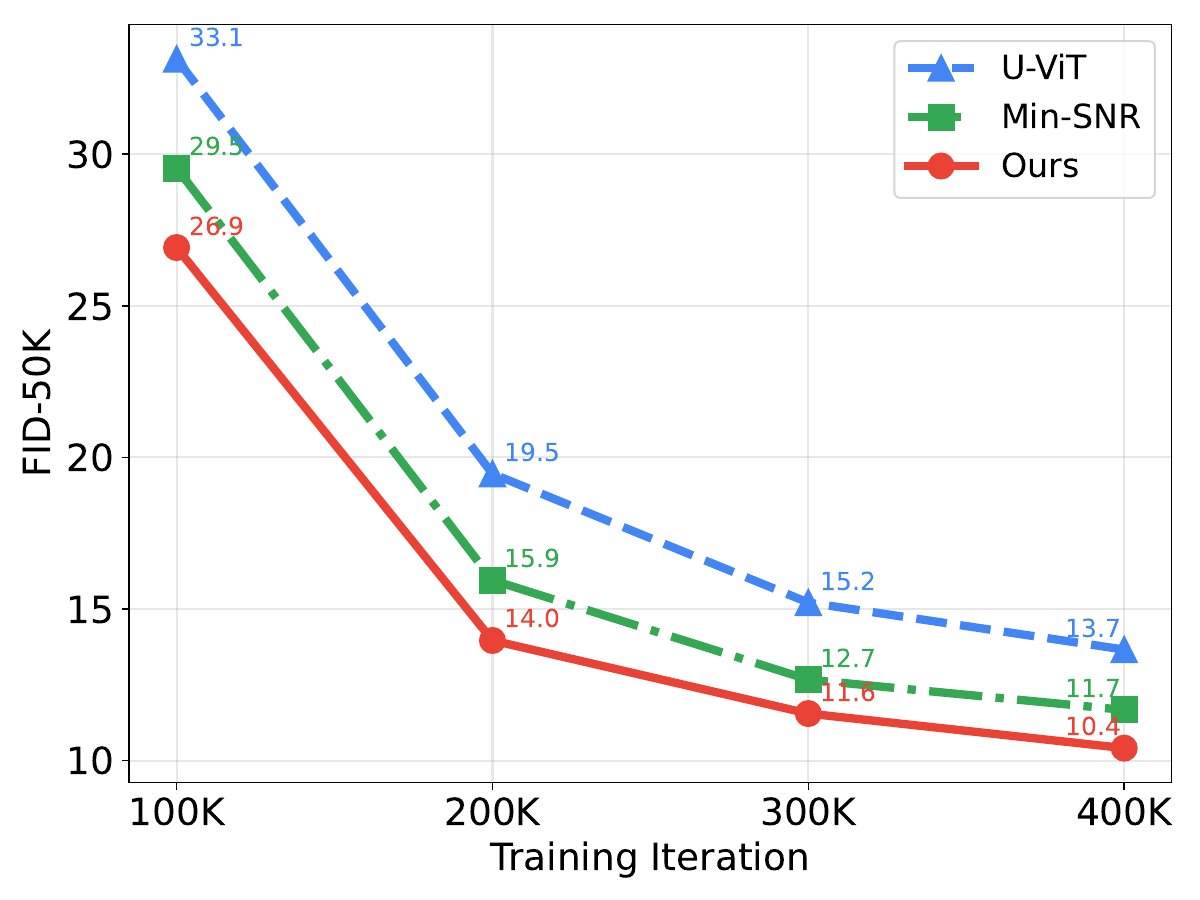}
    \vspace{-1.0em}
    \caption{U-ViT-H/2}
    \label{fig:conv_uvit_h2}
\end{subfigure}
\caption{Convergence curves on ImageNet 256$\times$256. Intermediate FID values of DiT are approximated from the training curves~\cite{ma2024sit,jiang2025no} for visualization only.}
\label{fig:convergence_curve}
\vspace{-1em}
\end{figure}

We also provide our U-ViT-H/2 at 4M iteration with classifier-free guidance with different class-free guidance scales. Moreover, we provide the results with the guidance interval~\cite{kynkaanniemi2024applying}.

\begin{table}[!htbp]
\vspace{-1em}
\centering%\small
\caption{Results with different classifier-free guidance scale $w$.}
\begin{adjustbox}{width=1.0\textwidth}
\begin{tabular}{lccccccccc}
\toprule
     Model & \#Params & Iter. & $w$ & FID$\downarrow$ & sFID$\downarrow$ & IS$\uparrow$ & Prec.$\uparrow$ & Rec.$\uparrow$  \\
     \midrule
     \rowcolor{gray!15}
     DiT-XL/2~\cite{peebles2023scalable} & 675M  & 7M & 1.500& 2.27 & 4.60 & 278.24 & 0.83 & 0.57 \\
     \rowcolor{gray!15}
     SiT-XL/2~\cite{ma2024sit} & 675M  & 7M  & 1.500 & 2.06 & 4.50 & 270.3 & 0.82 & 0.59  \\
     \rowcolor{gray!15}
     U-ViT-H/2~\cite{bao2023all} & 501M  & 2M  & 1.500 & 2.29 & 5.68 & 263.88 & 0.82 & 0.57  \\
     Ours & 501M & {4M} & 1.300 & 2.08 & 4.62 & 255.6 & 0.79 & 0.64 \\
     Ours & 501M & {4M} & 1.350 & 2.00 & 4.57 & 266.9 & 0.80 & 0.63 \\
     Ours & 501M & {4M} & 1.400 & 1.94 & 4.53 & 273.8 & 0.81 & 0.62 \\
     Ours & 501M & {4M} & 1.450 & 1.91 & 4.50 & 279.3 & 0.82 & 0.61 \\
     Ours & 501M & {4M} & 1.500 & \textbf{1.89} & \textbf{4.48} & \textbf{282.5} & \textbf{0.82} & 0.60\\
     \bottomrule
\end{tabular}
\end{adjustbox}
\label{tab:detailed_quantitative_cfg}
%\vspace{-1em}
\end{table}

\begin{table}[!htbp]
\vspace{-1em}
\centering%\small
\caption{Different classifier-free guidance and guidance interval~\cite{kynkaanniemi2024applying}.}
\begin{adjustbox}{width=1.0\textwidth}
\begin{tabular}{lccccccccc}
\toprule
     Model & \#Params & Iter. & Interval & $w$ & FID$\downarrow$ & sFID$\downarrow$ & IS$\uparrow$ & Prec.$\uparrow$ & Rec.$\uparrow$  \\
     \midrule
     Ours & 501M & {4M} & [0, 0.9] & 2.10 & 1.71 & 4.43 & 326.8 & 0.83 & 0.59 \\
     Ours & 501M & {4M} & [0, 0.85] & 2.10 & 1.56 & 4.47 & 319.5 & 0.82 & 0.60 \\
     Ours & 501M & {4M} & [0, 0.8] & 2.00 & 1.48 & 4.54 & 304.8 & 0.81 & 0.63 \\
     Ours & 501M & {4M} & [0, 0.8] & 2.10 & \textbf{1.45} & \textbf{4.51} & 312.4 & 0.81 & 0.62 \\
     Ours & 501M & {4M} & [0, 0.8] & 2.20 & 1.49 & 4.53 & 311.7 & 0.82 & 0.61 \\
     Ours & 501M & {4M} & [0, 0.75] & 2.10 & 1.52 & 4.66 & 300.3 & 0.80 & 0.64 \\
     Ours & 501M & {4M} & [0, 0.7] & 2.10 & 1.60 & 4.83 & 289.6 & 0.79 & 0.65 \\
     \bottomrule
\end{tabular}
\end{adjustbox}
\label{tab:detailed_quantitative_cfg_interval}
%\vspace{-1em}
\end{table}

\subsection{Qualitative Results}

\begin{figure}[!htbp]
    \centering
    \begin{minipage}[t]{0.48\linewidth}
        \centering
        \includegraphics[width=\linewidth,height=0.45\textheight,keepaspectratio]{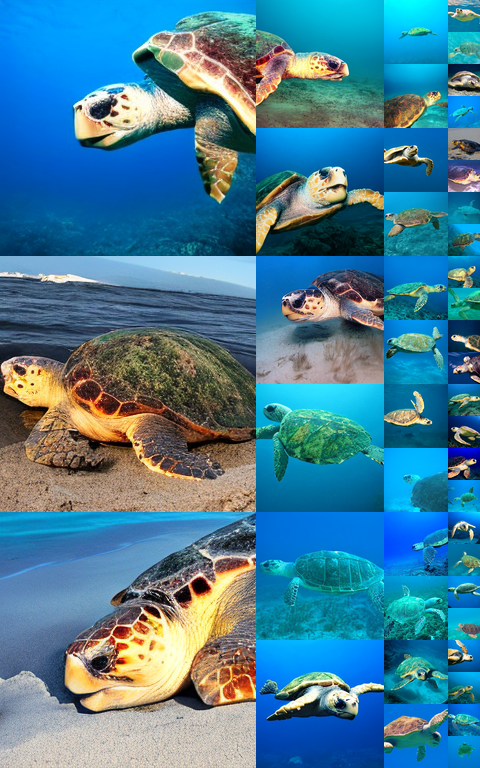}
        \caption{\small We use classifier-free guidance with $w = 4.0$. Class label = ``loggerhead sea turtle'' (33).}
        \label{fig:label33}
    \end{minipage}
    \hfill
    \begin{minipage}[t]{0.48\linewidth}
        \centering
        \includegraphics[width=\linewidth,height=0.45\textheight,keepaspectratio]{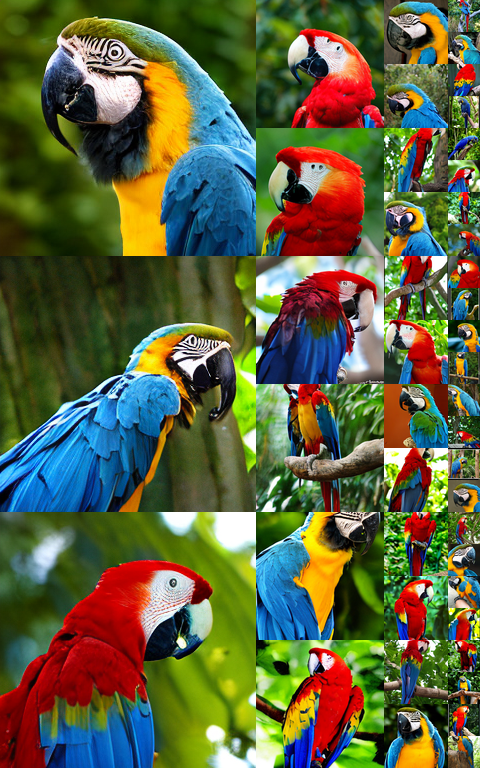}
        \caption{\small We use classifier-free guidance with $w = 4.0$. Class label = ``macaw'' (88).}
        \label{fig:label88}
    \end{minipage}
    
    \vspace{0.5em}
    
    \begin{minipage}[t]{0.48\linewidth}
        \centering
        \includegraphics[width=\linewidth,height=0.45\textheight,keepaspectratio]{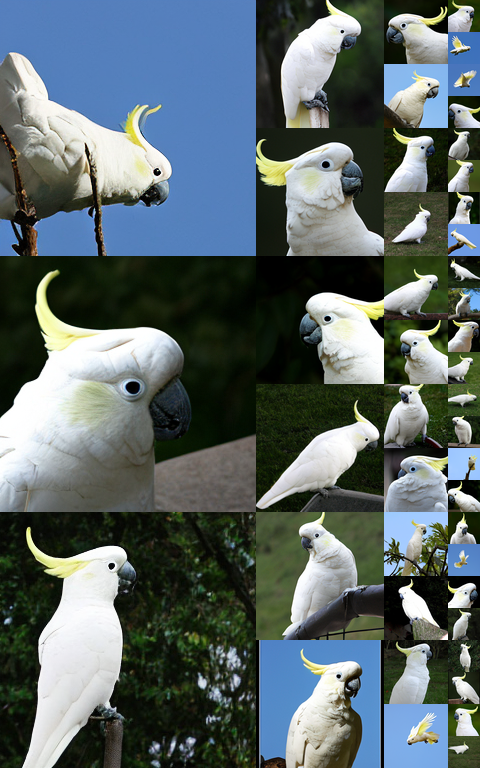}
        \caption{\small We use classifier-free guidance with $w = 4.0$. Class label = ``sulphur-crested cockatoo'' (89).}
        \label{fig:label89}
    \end{minipage}
    \hfill
    \begin{minipage}[t]{0.48\linewidth}
        \centering
        \includegraphics[width=\linewidth,height=0.45\textheight,keepaspectratio]{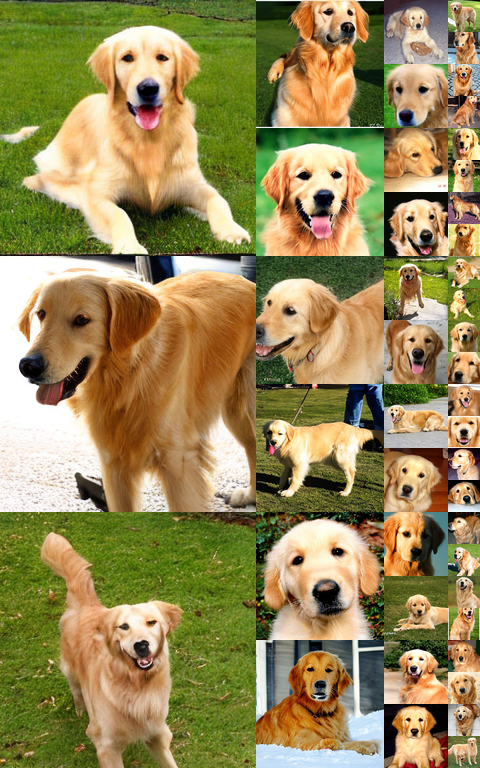}
        \caption{\small We use classifier-free guidance with $w = 4.0$. Class label = ``golden retriever'' (207).}
        \label{fig:label207}
    \end{minipage}
\end{figure}

 \begin{figure}[!htbp]
    \centering
    \begin{minipage}[t]{0.48\linewidth}
        \centering
        \includegraphics[width=\linewidth,height=0.45\textheight,keepaspectratio]{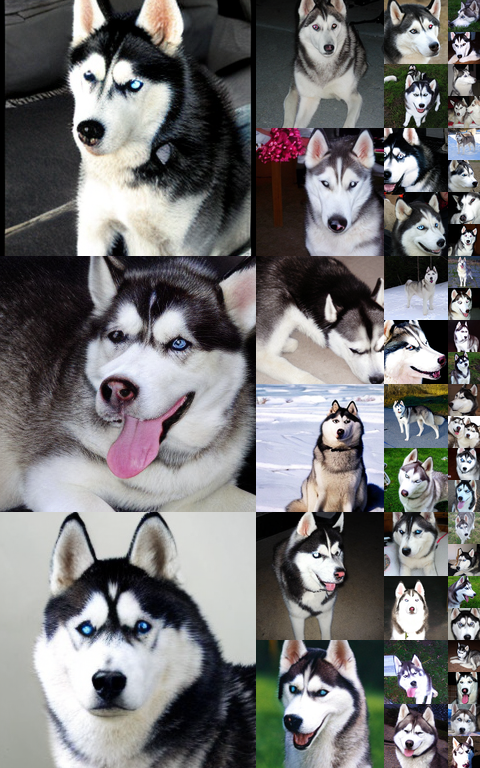}
        \caption{\small We use classifier-free guidance with $w = 4.0$. Class label = ``husky'' (250).}
        \label{fig:label250}
    \end{minipage}
    \hfill
    \begin{minipage}[t]{0.48\linewidth}
        \centering
        \includegraphics[width=\linewidth,height=0.45\textheight,keepaspectratio]{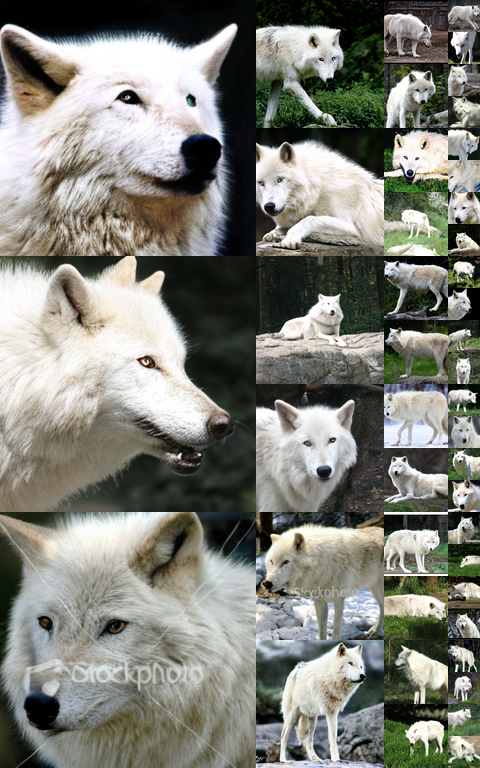}
        \caption{\small We use classifier-free guidance with $w = 4.0$. Class label = ``arctic wolf'' (270).}
        \label{fig:label270}
    \end{minipage}

    \vspace{0.5em}

    \begin{minipage}[t]{0.48\linewidth}
        \centering
        \includegraphics[width=\linewidth,height=0.45\textheight,keepaspectratio]{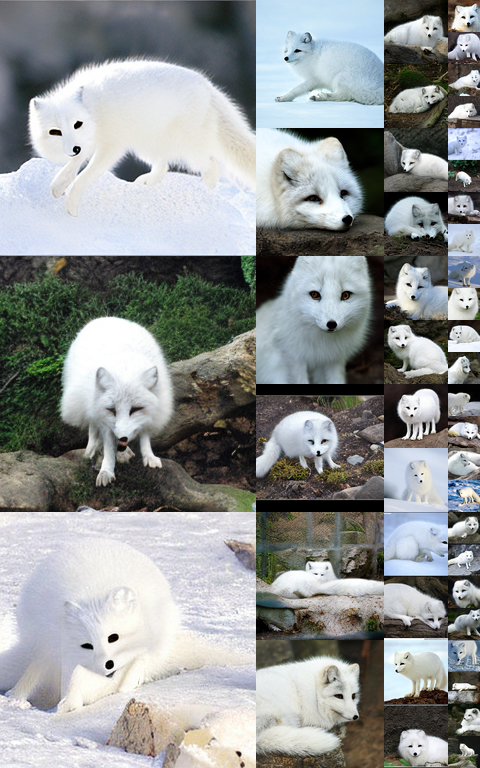}
        \caption{\small We use classifier-free guidance with $w = 4.0$. Class label = ``arctic fox'' (279).}
        \label{fig:labe279}
    \end{minipage}
    \hfill
    \begin{minipage}[t]{0.48\linewidth}
        \centering
        \includegraphics[width=\linewidth,height=0.45\textheight,keepaspectratio]{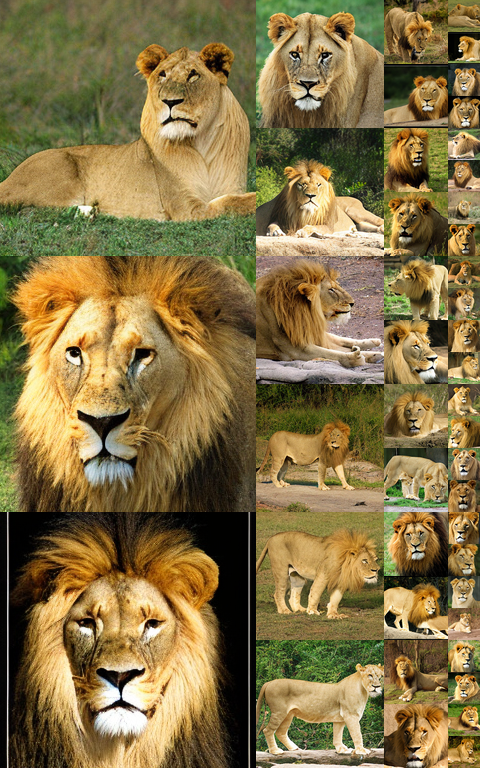}
        \caption{\small We use classifier-free guidance with $w = 4.0$. Class label = ``lion'' (291).}
        \label{fig:label291}
    \end{minipage}
\end{figure}

\begin{figure}[!htbp]
    \centering
    \begin{minipage}[t]{0.48\linewidth}
        \centering
        \includegraphics[width=\linewidth,height=0.45\textheight,keepaspectratio]{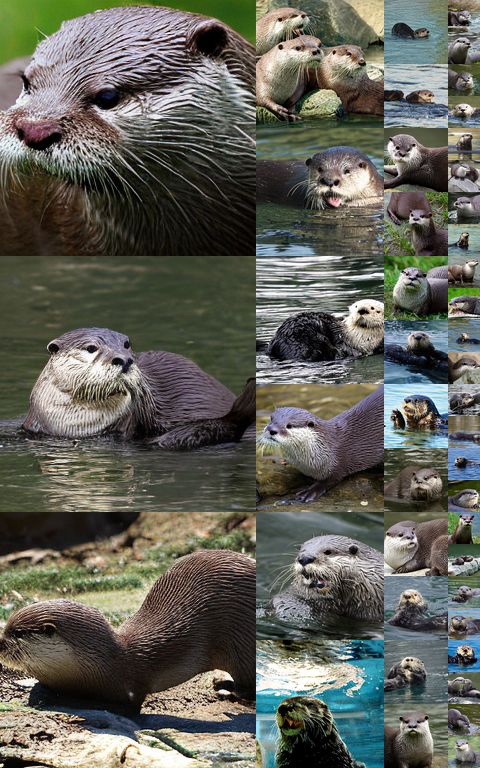}
        \caption{\small We use classifier-free guidance with $w = 4.0$. Class label = ``otter'' (360).}
        \label{fig:label360}
    \end{minipage}
    \hfill
    \begin{minipage}[t]{0.48\linewidth}
        \centering
        \includegraphics[width=\linewidth,height=0.45\textheight,keepaspectratio]{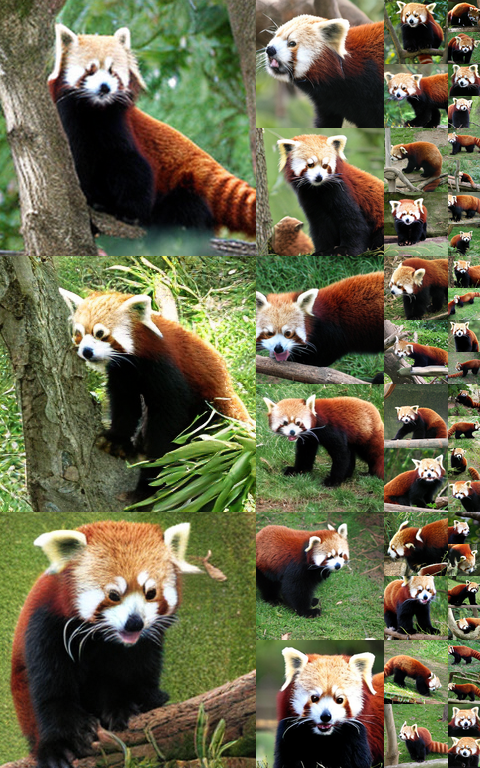}
        \caption{\small We use classifier-free guidance with $w = 4.0$. Class label = ``red panda'' (387).}
        \label{fig:label387}
    \end{minipage}

    \vspace{0.5em}

    \begin{minipage}[t]{0.48\linewidth}
        \centering
        \includegraphics[width=\linewidth,height=0.45\textheight,keepaspectratio]{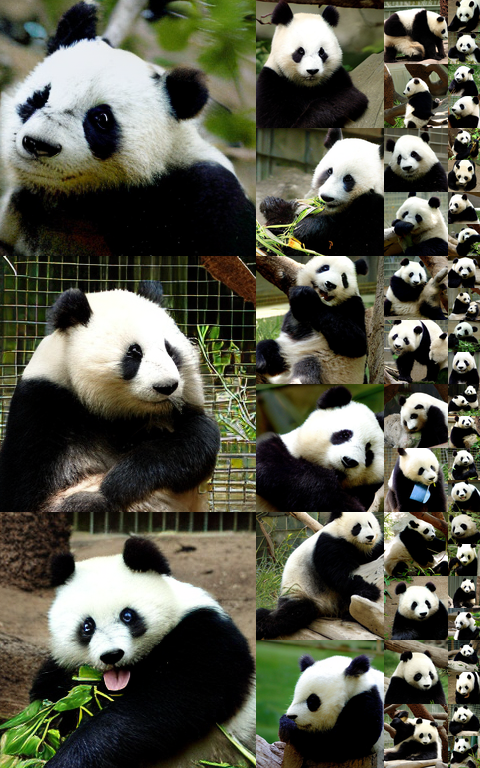}
        \caption{\small We use classifier-free guidance with $w = 4.0$. Class label = ``panda'' (388).}
        \label{fig:label388}
    \end{minipage}
    \hfill
    \begin{minipage}[t]{0.48\linewidth}
        \centering
        \includegraphics[width=\linewidth,height=0.45\textheight,keepaspectratio]{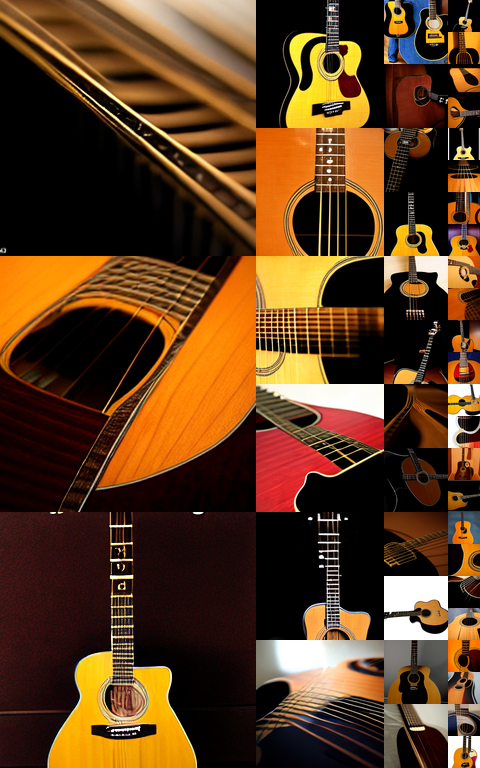}
        \caption{\small We use classifier-free guidance with $w = 4.0$. Class label = ``acoustic guitar'' (402).}
        \label{fig:label402}
    \end{minipage}
\end{figure}

\begin{figure}[!htbp]
    \centering
    \begin{minipage}[t]{0.48\linewidth}
        \centering
        \includegraphics[width=\linewidth,height=0.45\textheight,keepaspectratio]{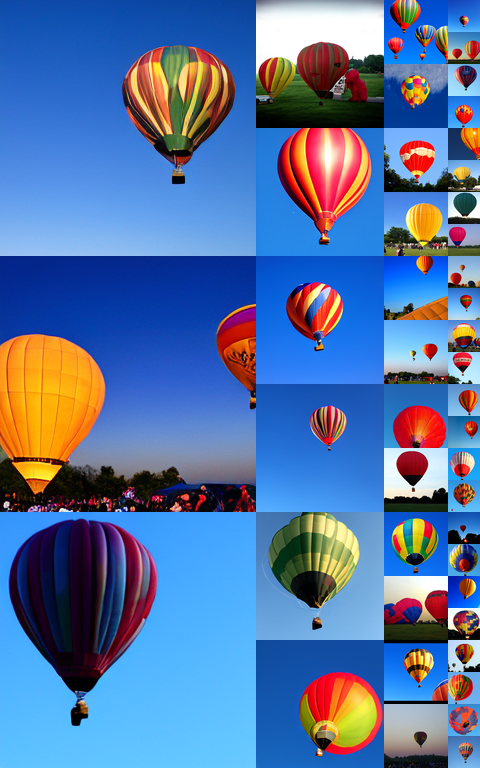}
        \caption{\small We use classifier-free guidance with $w = 4.0$. Class label = ``balloon'' (417).}
        \label{fig:label417}
    \end{minipage}
    \hfill
    \begin{minipage}[t]{0.48\linewidth}
        \centering
        \includegraphics[width=\linewidth,height=0.45\textheight,keepaspectratio]{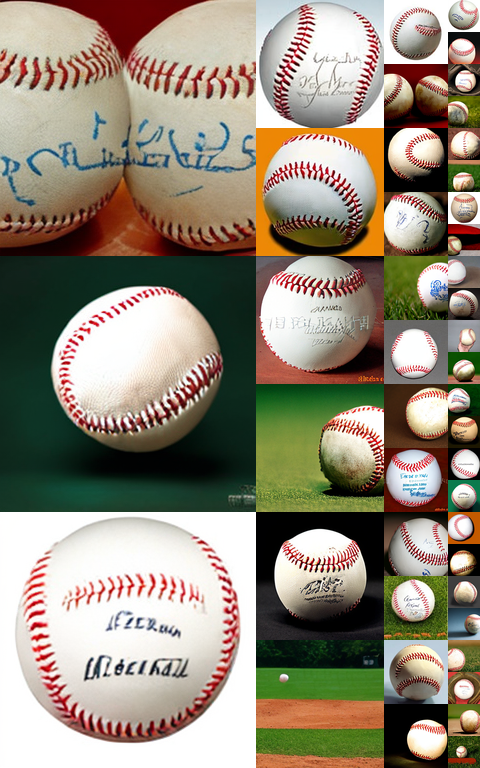}
        \caption{\small We use classifier-free guidance with $w = 4.0$. Class label = ``baseball'' (429).}
        \label{fig:label429}
    \end{minipage}

    \vspace{0.5em}

    \begin{minipage}[t]{0.48\linewidth}
        \centering
        \includegraphics[width=\linewidth,height=0.45\textheight,keepaspectratio]{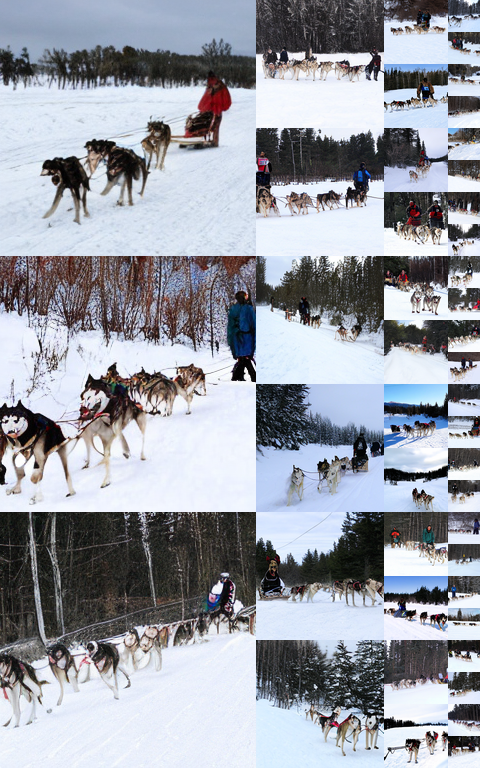}
        \caption{\small We use classifier-free guidance with $w = 4.0$. Class label = ``dog sled'' (537).}
        \label{fig:label537}
    \end{minipage}
    \hfill
    \begin{minipage}[t]{0.48\linewidth}
        \centering
        \includegraphics[width=\linewidth,height=0.45\textheight,keepaspectratio]{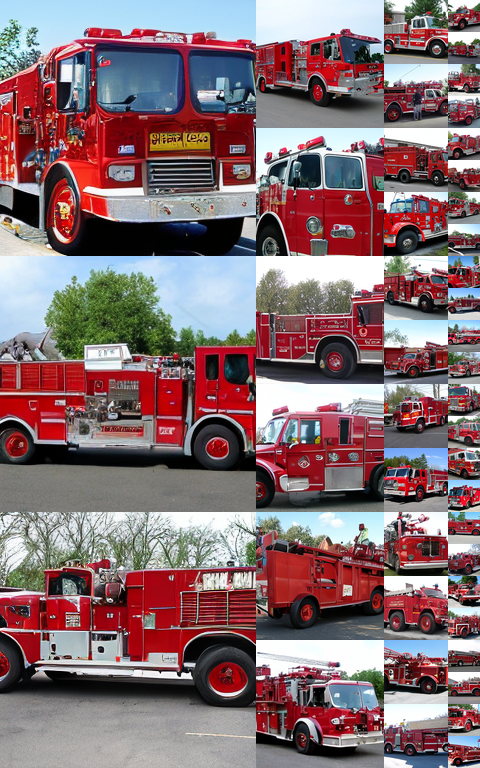}
        \caption{\small We use classifier-free guidance with $w = 4.0$. Class label = ``fire truck'' (555).}
        \label{fig:label555}
    \end{minipage}
\end{figure}

\begin{figure}[!htbp]
    \centering
    \begin{minipage}[t]{0.48\linewidth}
        \centering
        \includegraphics[width=\linewidth,height=0.45\textheight,keepaspectratio]{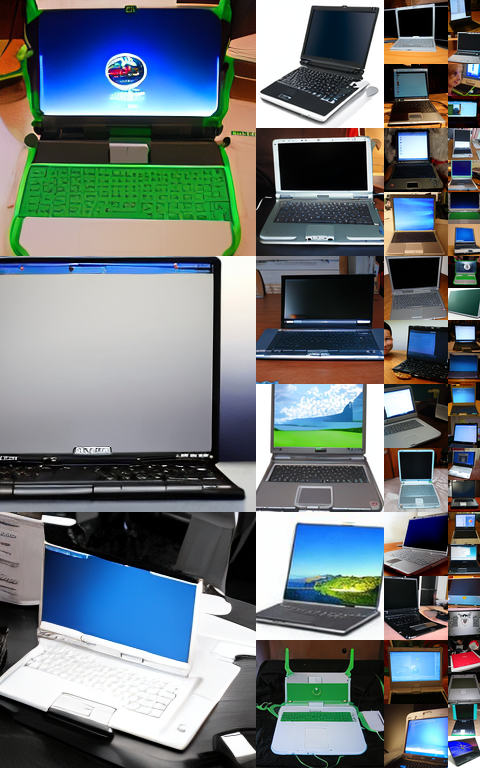}
        \caption{\small We use classifier-free guidance with $w = 4.0$. Class label = ``laptop'' (620).}
        \label{fig:label620}
    \end{minipage}
    \hfill
    \begin{minipage}[t]{0.48\linewidth}
        \centering
        \includegraphics[width=\linewidth,height=0.45\textheight,keepaspectratio]{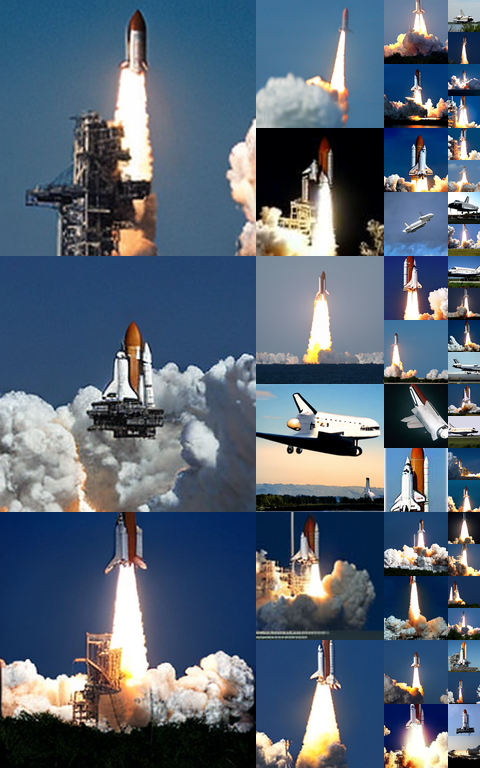}
        \caption{\small We use classifier-free guidance with $w = 4.0$. Class label = ``space shuttle'' (812).}
        \label{fig:label812}
    \end{minipage}

    \vspace{0.5em}

    \begin{minipage}[t]{0.48\linewidth}
        \centering
        \includegraphics[width=\linewidth,height=0.45\textheight,keepaspectratio]{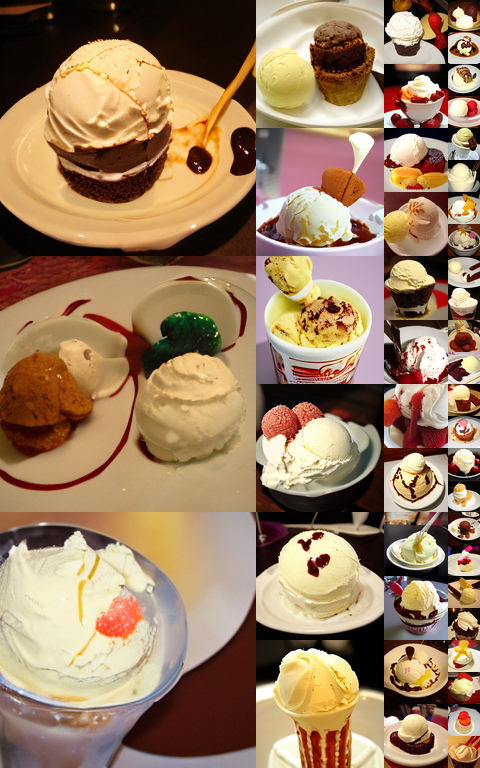}
        \caption{\small We use classifier-free guidance with $w = 4.0$. Class label = ``ice cream'' (928).}
        \label{fig:label928}
    \end{minipage}
    \hfill
    \begin{minipage}[t]{0.48\linewidth}
        \centering
        \includegraphics[width=\linewidth,height=0.45\textheight,keepaspectratio]{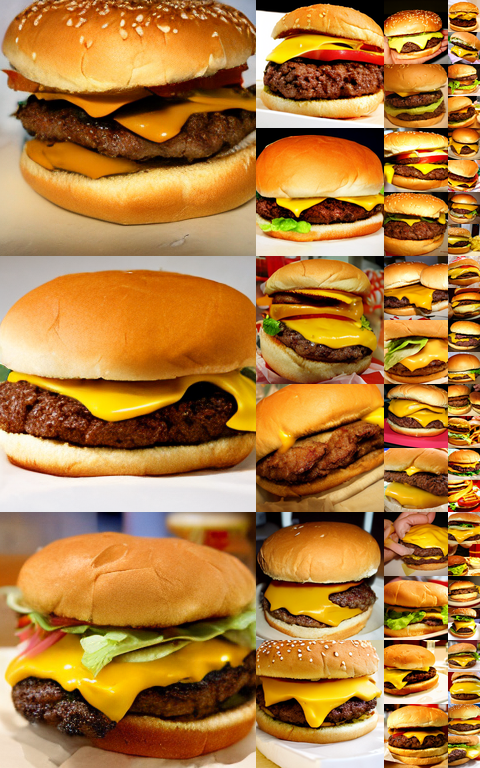}
        \caption{\small We use classifier-free guidance with $w = 4.0$. Class label = ``cheeseburger'' (933).}
        \label{fig:label933}
    \end{minipage}
\end{figure}

\begin{figure}[!htbp]
    \centering
    \begin{minipage}[t]{0.48\linewidth}
        \centering
        \includegraphics[width=\linewidth,height=0.45\textheight,keepaspectratio]{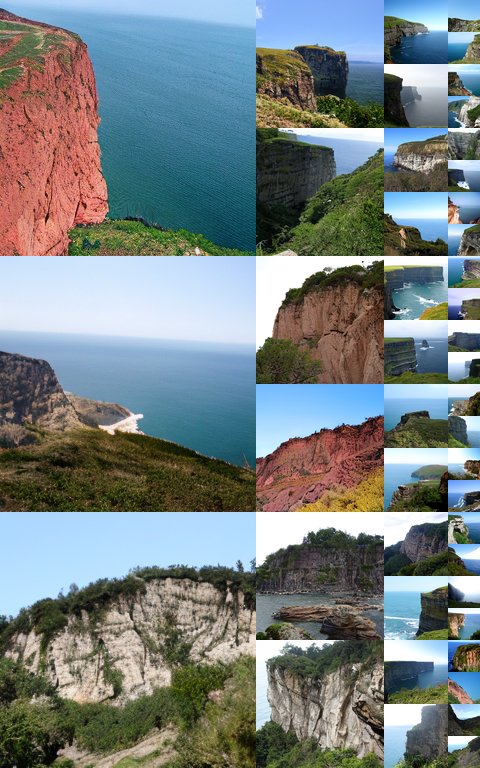}
        \caption{\small We use classifier-free guidance with $w = 4.0$. Class label = ``cliff drop-off'' (972).}
        \label{fig:label972}
    \end{minipage}
    \hfill
    \begin{minipage}[t]{0.48\linewidth}
        \centering
        \includegraphics[width=\linewidth,height=0.45\textheight,keepaspectratio]{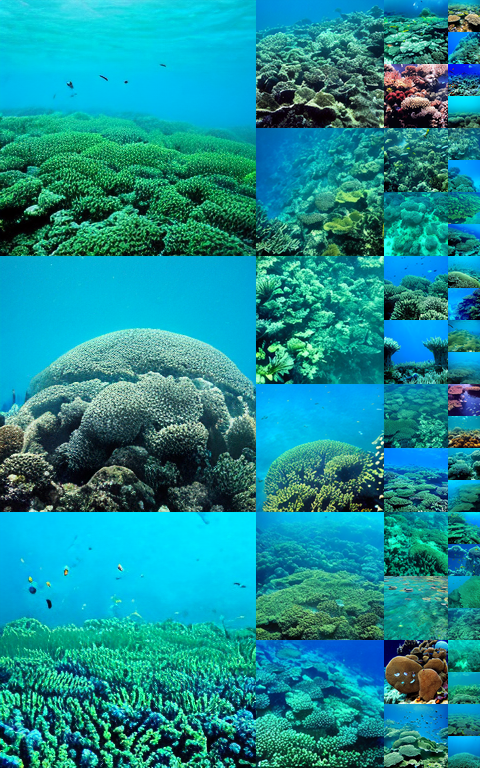}
        \caption{\small We use classifier-free guidance with $w = 4.0$. Class label = ``coral reef'' (973).}
        \label{fig:label973}
    \end{minipage}

    \vspace{0.5em}

    \begin{minipage}[t]{0.48\linewidth}
        \centering
        \includegraphics[width=\linewidth,height=0.45\textheight,keepaspectratio]{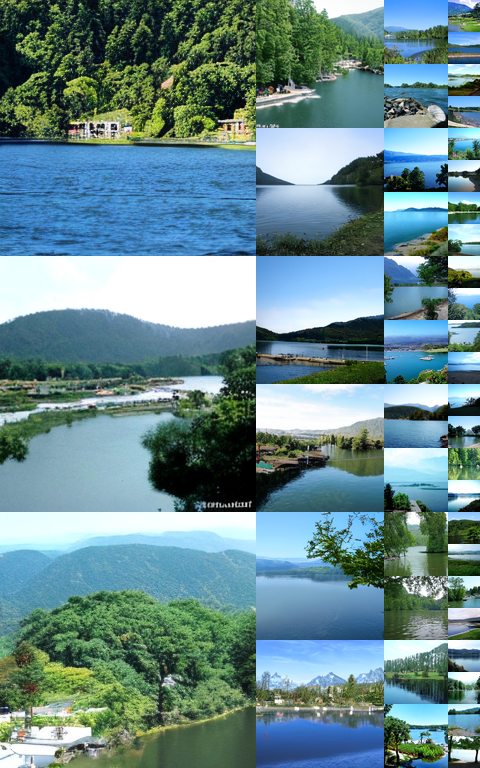}
        \caption{\small We use classifier-free guidance with $w = 4.0$. Class label = ``lake shore'' (975).}
        \label{fig:label975}
    \end{minipage}
    \hfill
    \begin{minipage}[t]{0.48\linewidth}
        \centering
        \includegraphics[width=\linewidth,height=0.45\textheight,keepaspectratio]{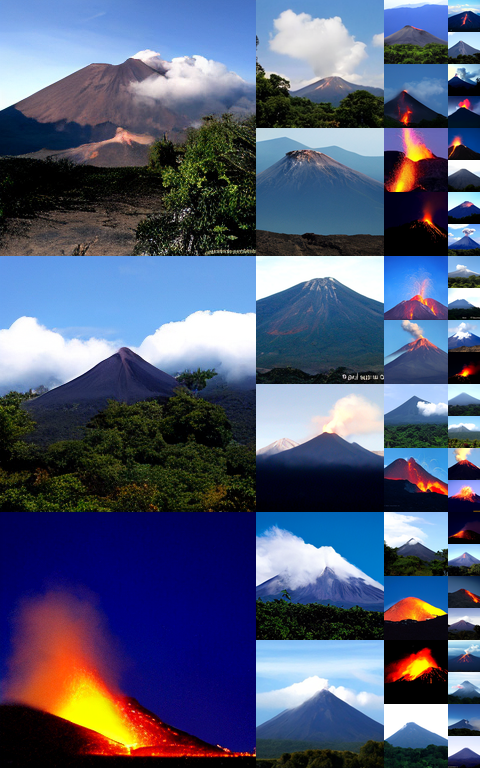}
        \caption{\small We use classifier-free guidance with $w = 4.0$. Class label = ``volcano'' (980).}
        \label{fig:label980}
    \end{minipage}
\end{figure}

% \newpage
% \input{Sections/Checklist}

\end{document}